\let\originalref\ref
\let\originaleqref\eqref
\let\originalcref\cref 
\renewcommand{\ref}{\@ifstar\originalref\originalref}
\renewcommand{\eqref}{\@ifstar\originaleqref\originaleqref}
\renewcommand{\cref}{\@ifstar\originalcref\originalcref}
\newcommand{\RegisterPairedDelimiter}[3][1]{
\ifnum#1=1 \newcommand{#2}[2][-1]{%
\ifnum##1=-1 #3*{##2}\relax\fi%
\ifnum##1=0 #3{##2}\relax\fi%
\ifnum##1=1 #3[\big]{##2}\relax\fi%
\ifnum##1=2 #3[\Big]{##2}\relax\fi%
\ifnum##1=3 #3[\bigg]{##2}\relax\fi%
\ifnum##1=4 #3[\Bigg]{##2}\relax\fi%
}\fi%
\ifnum#1=2 \newcommand{#2}[3][-1]{%
\ifnum##1=-1 #3*{##2}{##3}\relax\fi%
\ifnum##1=0 #3{##2}{##3}\relax\fi%
\ifnum##1=1 #3[\big]{##2}{##3}\relax\fi%
\ifnum##1=2 #3[\Big]{##2}{##3}\relax\fi%
\ifnum##1=3 #3[\bigg]{##2}{##3}\relax\fi%
\ifnum##1=4 #3[\Bigg]{##2}{##3}\relax\fi%
}\fi%
}
\DeclarePairedDelimiter{\deldelim}{(}{)}
\RegisterPairedDelimiter{\del}{\deldelim}
\def\£#1\£{%
  \in@{&}{#1}%
  \ifin@%
    \begin{align}#1\end{align}%
  \else%
    \in@{\\}{#1}%
    \ifin@%
      \begin{multline}#1\end{multline}%
    \else%
      \begin{equation}#1\end{equation}%
    \fi%
  \fi%
}
\def\[#1\]{%
  \in@{&}{#1}%
  \ifin@%
    \begin{align*}#1\end{align*}%
  \else%
    \in@{\\}{#1}%
    \ifin@%
      \begin{multline*}#1\end{multline*}%
    \else%
      \begin{equation*}#1\end{equation*}%
    \fi%
  \fi%
}
\DeclareMathOperator*{\argmin}{\arg\min} 
\DeclareMathOperator{\supp}{supp} 
\DeclareMathOperator{\Var}{Var}
\DeclareMathOperator{\spn}{span}
\DeclareMathOperator{\polylog}{\mathrm{polylog}}
\DeclareMathOperator{\KL}{\mathsf{KL}}
\DeclareMathOperator{\TV}{\mathsf{TV}}
\DeclareMathOperator{\rP}{\mathbb{P}}
\DeclareMathOperator{\Id}{\mathrm{Id}}
\DeclareMathOperator{\Vol}{\mathrm{Vol}}
\DeclareMathOperator{\pr}{\operatorname{pr}}
\DeclareMathOperator{\clip}{\operatorname{clip}}
\DeclareMathOperator{\dist}{\operatorname{dist}}
\DeclareMathOperator{\diam}{\operatorname{diam}}
\DeclareMathOperator{\relu}{\operatorname{ReLU}}
\DeclareMathOperator{\Ker}{\operatorname{Ker}}
\def\abs#1{\left|#1\right|}
\def\norm#1{\left\lVert#1\right\rVert}
\def\curly#1{\{#1\}}
\def\square#1{\left[#1\right]}
\newcommand{\grad}{\nabla}
\newcommand{\R}{\mathbb{R}} 
\newcommand{\expectation}{\mathbb{E}}
\newcommand{\cA}{\mathcal{A}}
\newcommand{\cE}{\mathcal{E}}
\newcommand{\cG}{\mathcal{G}}
\newcommand{\cF}{\mathcal{F}}
\newcommand{\cH}{\mathcal{H}}
\newcommand{\cL}{\mathcal{L}}
\newcommand{\cN}{\mathcal{N}}
\newcommand{\cP}{\mathcal{P}}
\newcommand{\cR}{\mathcal{R}}
\newcommand{\cS}{\mathcal{S}}
\newcommand{\cV}{\mathcal{V}}
\newcommand{\cX}{\mathcal{X}}
\newcommand{\cY}{\mathcal{Y}}
\newcommand{\ind}{\mathbbm{1}}
\newcommand{\innerproduct}[2]{\left\langle #1, #2 \right\rangle}
\renewcommand{\Im}{\operatorname{Im}}
\numberwithin{equation}{section}
\theoremstyle{plain}
\newtheorem{theorem}{Theorem}[section]
\newtheorem{lemma}[theorem]{Lemma}
\newtheorem{corollary}[theorem]{Corollary}
\newtheorem{proposition}[theorem]{Proposition}
\theoremstyle{remark}
\newtheorem{definition}{Definition}
\newtheorem{example}{Example}
\newtheorem*{remark}{Remark}
\declaretheorem[style=remark]{Assumption}
\begin{document}

\title{Convergence of Diffusion Models Under the Manifold Hypothesis in High-Dimensions}

\author[1]{Iskander Azangulov\thanks{\texttt{iskander.azangulov@spc.ox.ac.uk}}}
\author[2]{Judith Rousseau\thanks{\texttt{rousseau@ceremade.dauphine.fr}}}
\author[1]{George Deligiannidis\thanks{\texttt{george.deligiannidis@stats.ox.ac.uk}}}
\affil[1]{Department of Statistics, University of Oxford}
\affil[2]{CEREMADE, CNRS, Universit\'e Paris-Dauphine, PSL University}
\date{}

\maketitle

\begin{abstract}
Denoising Diffusion Probabilistic Models (DDPM) are powerful state-of-the-art methods used to generate synthetic data from high-dimensional data distributions and are widely used for image, audio, and video generation as well as many more applications in science and beyond. The \textit{manifold hypothesis} states that high-dimensional data often lie on lower-dimensional manifolds within the ambient space, and is widely believed to hold in provided examples. While recent results have provided invaluable insight into how diffusion models adapt to the manifold hypothesis, they do not capture the great empirical success of these models, making this a very fruitful research direction. 

In this work, we study DDPMs under the manifold hypothesis and prove that they achieve rates independent of the ambient dimension in terms of score learning. In terms of sampling complexity, we obtain rates independent of the ambient dimension w.r.t.\ 
the Wasserstein distance. We do this by developing a new framework connecting diffusion models to the well-studied theory of extrema of Gaussian Processes. 

\end{abstract}

\noindent\textbf{Keywords:} diffusion models, minimax, manifold learning.


\section{Introduction}

Generative models~\cite{tomczak2022deep} represent a cornerstone of modern machine learning, tasked with the synthesis of new samples from an unknown distribution having only access to samples from it. Once trained, they are employed in various applications such as image and audio generation, natural language processing, and scientific simulations. Prominent examples include Generative Adversarial Networks (GANs), Variational Autoencoders (VAEs), and, more recently, Diffusion Models.

Diffusion Models~\cite{ho2020denoising,song2021scorebased}, also known as Denoising Diffusion Probabilistic Models~(DDPM) or Score-based generative models, have drawn significant attention due to their robust theoretical foundation and exceptional performance in generating high-quality data~\cite{dhariwal2021diffusionmodelsbeatgans, watson2023molecule, ho2022videodiffusionmodels,evans2024fasttimingconditionedlatentaudio, yang2023diffusion}. They operate by simulating a stochastic process where data is first gradually corrupted with Gaussian noise and then reconstructed back, allowing the model to generate complex data distributions from white noise. 

One of the most intriguing capabilities of diffusion models, supported with a lot of empirical and theoretical evidence~\cite{debortoli2023convergencedenoisingdiffusionmodels, NEURIPS2022_e8fb575e, stanczuk2023diffusionmodelsecretlyknows, pmlr-v238-tang24a, Chen20235327}, is their efficiency in learning distributions supported on low-dimensional manifold structures. Even more surprising is that this happens even though diffusion models are solely defined in terms of the ambient space. 

The manifold hypothesis, see e.g.~\cite{ma2012manifold}, postulates 
that high-dimensional data often lie on lower-dimensional manifolds within the ambient
space and offers a framework to explain why complex data can often be represented or approximated using fewer degrees of freedom than their raw dimensionality suggests. Different approaches to incorporate the manifold hypothesis have been fruitfully studied in the literature~\cite{aamari2018, divol2022measure, JMLR:v13:genovese12a, berenfeld2024estimatingdensitynearunknown, connor2021variationalautoencoderlearnedlatent}.

\subsection{Related Works}
Formally, we assume that we observe $n$ samples from an unknown $\alpha$-smooth distribution $\mu$ that is supported on an unknown $d$-dimensional, $\beta$-smooth manifold $M$, isometrically embedded into $\R^D$ (see~Section~\ref{sec:Statistical_Model_for_Manifolds} for definitions). The concept of \textit{minimax estimation} refers to finding an estimator that minimizes the worst-case error and plays an important role in statistical learning theory. The minimax rate of estimation of the unknown manifold $M$ was studied by \cite{aamari2018}, who proved that a local polynomial estimator achieves the minimax optimal rate $n^{-{\beta}/{d}}$ up to a $\log$-factor. 
This result was then adapted by \cite{divol2022measure} to construct a kernel-based minimax estimator $\hat{\mu}_n$ of $ \mu$ which converges at the optimal rate $n^{-\del{\alpha+1}/\del{2\alpha+d}}$ in the $W_1$ metric. However, sampling from $\hat{\mu}_n$ requires running a costly MCMC algorithm, thereby making it hard to use in practice. 

In contrast, sampling from diffusion models is relatively fast, and the resulting samples are of high quality even in high-dimensional settings~\cite{dhariwal2021diffusionmodelsbeatgans}. On a high level~\cite{yang2023diffusion}, diffusion models are trained to reverse the \textit{forward process} that gradually adds noise to the unknown distribution $\mu$. The resulting \textit{reverse process} can be described as a Stochastic Differential Equation~(SDE) given in terms of the \textit{score function} $s(t,x)$. More precisely, they first learn an approximation $\hat{s}(t,x)$ to the true score, and then simulate the discretized SDE to get samples from an approximation $\hat{\mu}$ of the true distribution $\mu$. Thereby two key quantities determining the performance of DDPMs are the accuracy of $\hat{s}(t,x)$, and the design of the discretization scheme.

In their seminal paper \cite{oko2023} obtain (near) minimax convergence rates $n^{-\del{\alpha+1}/\del{2\alpha+D}+\delta}$ of diffusion models for any $\delta >0$ under the Wasserstein-1 metric, assuming $\mu$ has $\alpha$-smooth compactly supported density on $\R^D$ and perfect SDE simulation. Simulateneously \cite{Chen20235327} obtained suboptimal convergence rates in total variation under a linear manifold hypothesis. 

 Recently, \cite{pmlr-v238-tang24a} generalized this result to the case when $\mu$ is $\alpha$-smooth and supported on unknown compact $d$-dimensional manifold $M$.  They showed that diffusion models achieve the Wasserstein convergence rate of order $D^{\alpha+d/2}n^{-(\alpha+1)/(2\alpha+d)}$, which is optimal in $n$ but has a strong dependence on $D$ via the term $D^{\alpha+d/2}$. 
It is worth mentioning that the rate obtained in \cite{divol2022measure} is independent of the ambient dimension, raising the question of whether similar results are possible for DDPMs. 

Since the first version of this work, several related results have appeared. 
\cite{kwon2026,fan2025optimalestimationfactorizabledensity} study diffusion models under low-dimensional factorization assumptions, showing that they can achieve minimax-optimal rates beyond the full ambient-dimensional setting. 
Closer to the manifold framework, \cite{yakovlev2025generalizationerrorbounddenoising} relaxes the exact manifold assumption while retaining intrinsic-dimensional rates, and \cite{stephanovitch2025generalizationboundsscorebasedgenerative} gives an alternative generalization proof based on stability of the reverse dynamics and induced score regularity.

\subsection{Our Contribution}
The strong dependence on $D$ of the bound $D^{\alpha+d/2} n^{-(\alpha+1)/(2\alpha+d)}$ obtained by~\cite{pmlr-v238-tang24a} does not allow a satisfactory explanation of the excellent empirical behavior of diffusion models in typical scenarios, when $D$ is very large, possibly much larger than $n$. For instance if $D \gtrsim n$ then the error becomes $n^{\alpha+d/2}n^{-(\alpha+1)/(2\alpha+d)}$ which goes to infinity with $n$ as soon as $d\ge 2$.
Therefore, in order to understand if the manifold hypothesis provides a reasonable explanation for the success of diffusion models, we need to derive sharper bounds in terms of $D$.

In this paper, we fill this gap by showing that the normalized score function $\sigma_ts(t,x)$ can be learned by a neural network estimator with the (near) optimal convergence rate $n^{-\del{\alpha+1}/\del{2\alpha+d}}$ up to $\polylog$ term w.r.t.\ the score matching loss, implying an optimal (up to $\polylog$) rate $n^{-{(\alpha+1)}/\del{2\alpha+d}}$ in the Wasserstein metric as long as $\log D = O\del{\log n}$. In other words the ambient dimension $D$ only impacts the rate via  a logarithmic term.

The backbone of both results is a novel high-probability bound on the score function depending only on the intrinsic dimension $d$. To obtain this bound, we leverage classic concentration results on the maximum of Gaussian processes and carefully study the interaction of the $D$-dimensional Gaussian noise added during the forward process with vectors lying on the $d$-dimensional manifold.

We show that the score function is sensitive only to the points around the denoised preimage, and as a result, the precision with which the score function points in the direction of the denoised point does not depend on the ambient dimension, demonstrating that diffusion models adapt well to the geometry of the manifold. 

We slightly modify the manifold estimator given in \cite{aamari2018} and construct a dimension-reduction scheme allowing us to use $n$ samples to build an efficient approximation of a $\beta$-smooth manifold $M$, with error $n^{-\beta/d}$, by $n$ polynomial surfaces each contained in easy-to-find sub-spaces of dimension $O(\log n)$.

Combining our high-probability bounds with this dimension reduction scheme, we improve the neural network architecture presented in \cite{pmlr-v238-tang24a}, thus obtaining the aforementioned ambient-dimension-free convergence bound $n^{-\del{\alpha+1}/\del{2\alpha+d}}\polylog(n)$ on the score estimation.

Finally, by combining the discretization schemes presented in~\cite{oko2023} and~\cite{potaptchik2024linearconvergencediffusionmodels}, we prove that simulating the diffusion model with the presented score estimator for $n^{\frac{2\alpha+d}{\alpha+1}}\polylog n$ steps achieves an error of $n^{-\frac{\alpha+1}{2\alpha+d}}\polylog n$. This allows us to derive an inequality on the Wasserstein distance between the simulations and the true generating process independent of the ambient dimension $D$. 

In Section~\ref{sec:preliminaries}, we introduce the notation and definitions related to diffusion models and manifold learning. In~Section~\ref{sec:main_results} we present our main results. In Sections~\ref{sec:high_probability_bounds}--\ref{sec:manifold_approximation} we present key steps in the control of the score matching loss to avoid the curse of dimensionality. In~Section~\ref{sec:alternative:Divol} we propose an alternative estimator of the score function based on an estimator of $\mu$ proposed by \cite{divol2022measure}. In~Section~\ref{sec:W_main}, we present the discretization scheme achieving the minimax optimal $W_1$ rate. Discussion is provided in~Section~\ref{sec:conclusion}. Finally, the detailed proofs are provided in the appendices.

\section{Preliminaries}
\label{sec:preliminaries}
\subsection{Score-Matching Generative Models}
\label{sec:score_matching_introduction}
Throughout the paper, we consider a data set $\cY = \curly{y_1, \cdots, y_n}$ consisting of independent, identically distributed samples from $\mu$. Generative models aim at simulating new data from an approximation of $\mu$ trained on $\cY$. 
    We follow the notation used in \cite{oko2023} to describe the methodology.
    
Let $\curly{B_t}_{t\in [0, \overline{T}]}$  denote a $D$-dimensional Brownian motion on $[0, \overline{T}]$. For a measure $\mu$ supported on $\cX \subset \R^D$, we consider the forward process $\curly{X_t}_{t\in [0, \overline{T}]}$ defined as a standard $D$-dimensional Ornstein–Ulhenbeck (OU) process with initial condition $X_0\sim \mu$, which is given as the solution of the SDE
    \begin{equation}    
    \label{eq:forward_process}
    \begin{cases}    
    dX_t = -X_tdt + \sqrt{2}dB_t,\\
    X_0 \sim \mu.
    \end{cases}
    \end{equation}
    
    Let $Z_D \sim \cN\del{0, \Id_D}$ denote a standard $D$-dimensional random normal vector independent from $X_0$. It is well known that 
    \£
    \label{eq:OU_as_mixture_with_normal}
    X_t\mid X_0 \stackrel{dist.}{=} c_tX_0 + \sigma_t Z_D \sim q_t(\cdot | X_0),
    \£ 
    where $c_t := e^{-t}$ and $\sigma_t := \sqrt{1-e^{-2t}}$ and $q_t$ is the transition density of the OU process. 
    
    We use $\mu_t$ to denote the law of $X_t$ and $p(t,\cdot)$ to denote the (unnormalized) marginal density of $X_t$, i.e. 
    \£
    \label{eq:p(t,x)_intro_diffusion}
    p(t,x) := \int_{\cX} 
    e^{-\|x-c_t y\|^2/2\sigma^2_t}\mu(dy).
    \£

    Under  mild conditions on $\mu$, 
    \cite{ANDERSON1982313} has shown that 
    the backward process $\curly{Y_t}_{t\in [0, \overline{T}]}$ defined as $Y_t = X_{\overline{T}-t}$ solves the SDE
    \begin{equation}    
    \label{eq:exact_backward_dynamic}
    \begin{cases}
        dY_t = \square{Y_t + 2\grad \log p\del{\overline{T} - t, Y_t}}dt + \sqrt{2}dB_t,\\
        Y_0 \stackrel{dist.}{=} X_{\overline{T}}.
    \end{cases}
    \end{equation}
    Notice that using~\eqref{eq:p(t,x)_intro_diffusion}, the score function $\nabla \log p(t,x)$ can be expressed in terms of $\mu$ as
    \£
    \label{eq:intro_diffusion_score_function}
    s(t,x) = \grad\log p(t, x) = \frac{\grad p(t,x)}{p(t,x)} = \frac{1}{\sigma^2_t}\frac{\int_{\cX} 
    (c_ty-x)e^{-\|x-c_t y\|^2/2\sigma^2_t}\mu(dy)}{\int_{\cX} 
    e^{-\|x-c_t y\|^2/2\sigma^2_t}\mu(dy)}.
    \£

    By construction, $Y_{\overline{T}} = X_0\sim \mu$; in other words, one can obtain a sample from $\mu$ by first sampling from $Y_0 = X_{\overline{T}}$, and then simulating the backward dynamics given in \eqref{eq:exact_backward_dynamic}. This scheme, however, is not feasible in practice as it requires access to samples from $X_{\overline{T}}$, to the true score function $s(t,x) = \grad\log p(t, x)$ and to be able to solve \eqref{eq:exact_backward_dynamic}. 
    
    The main idea behind generative score-matching models is to simulate the sampling procedure just described approximately. Firstly, since $X_{\overline{T}}\stackrel{dist.}{\rightarrow} \cN\del{0,\Id_D}$ as $\overline T$ goes to infinity, at a dimension-free rate,  we can approximate $Y_0 = X_{\overline{T}}$ by $\hat{Y}_0 \sim \cN(0,\Id_D)$. Secondly, the true score $s(t,x)$ is replaced with an approximation $\hat{s}(t,x)$ learned using data $\cY$, and the backward dynamics \eqref{eq:exact_backward_dynamic} is replaced by an appropriate discretization of the process
    \begin{equation}    
        \label{eq:approx_backward_process}
        \begin{cases}
            d\hat{Y}_t = \square{\hat{Y}_t + 2\hat{s}\del{\overline{T}- t, \hat{Y}_t}}dt + \sqrt{2}B_t,\\
            \hat{Y}_0 \sim \cN\del{0, \Id_D}.
        \end{cases}
    \end{equation}
    
    Let $\hat{\mu}_t := \textsf{Law}(\hat{Y}_{\overline{T}-t})$ denote the law of the obtained samples. A common practice is to use early stopping to avoid the numerical problems with the score function $s(t,x)$ that emerge when $t=0$. Early stopping entails  stopping the simulation early at a time $\overline{T}-\underline{T}$ for a small $\underline{T} > 0$ and uses $\hat{\mu}_{\underline{T}}$ to approximate $\mu$.

    The approximation $\hat{s}(t,x)$ is learned by minimization of the empirical version of the score matching loss which depends on $\underline{T} < \overline{T}$ and  is defined as
    \£
    \label{eq:the_score_matching_loss}
    \int_{\underline{T}}^{\overline{T}} \int_{\R^D} \|\hat{s}(t,x)-s(t,x)\|^2p(t,x)dtdx = \int_{\underline{T}}^{\overline{T}} \expectation\|\hat{s}\del{t,X_t}-s\del{t,X_t}\|^2 dt.
    \£
    
    The score matching loss controls~\cite[Eq. (90)]{oko2023} the Wasserstein loss between the target $\mu$ and the approximation $\hat{\mu}_{\underline{T}}$. More precisely, for $T_0 = \underline{T} < T_1 < T_2 <\ldots T_K = \overline{T}$, and any $\delta > 0$
        \£
        \label{thm:from_score_matching_to_wasserstein}
        W_1\del{\hat{\mu}, \mu} \lesssim \sqrt{D}\del{\sqrt{\underline{T}} +  \sum_{k=0}^{K-1} \sqrt{\log \delta^{-1}\cdot\sigma^2_{T_{k+1}}  \int_{T_k}^{T_{k+1}} \expectation\|\hat{s}\del{t,X_t}-s\del{t,X_t}\|^2 dt} + \delta + e^{-\overline{T}}}.
        \£
    \begin{remark}
    Eq. (90) in \cite{oko2023} doesn't address the dependence on the ambient dimension $D$, however, it can be easily deduced from the proof of Lemma~D.7.
    \end{remark}
    Another metric that is often used to assess the performance of Score-matching Generative models is the $\KL$ distance between $\hat{Y}_{\overline{T}-\underline{T}}$ and $X_{\underline{T}}$. More precisely~\cite[Section 5.2]{chen2023samplingeasylearningscore}, 
    if $\mu$ has finite second moments
    \£
    \label{thm:SML_to_KL}
    D_{\KL}\del{\hat{\mu}_{\underline{T}}\|\mu_{\underline{T}}} \lesssim De^{-2\overline{T}} + \int_{\underline{T}}^{\overline{T}} \expectation\|\hat{s}\del{t,X_t}-s\del{t,X_t}\|^2 dt,
    \£
    where we recall that $\hat{\mu}_{\underline{T}}, \mu_{\underline{T}}$ are distributions of $\hat{Y}_{\overline{T}-\underline{T}}$ and $ X_{\underline{T}}$.
    
    Since the computation of the loss function \eqref{eq:the_score_matching_loss} requires access to the true score $s(t,x)$, the equivalent denoising score matching loss is used instead during the optimization step. 
    \cite{vincent2011} showed that there is a constant $C_{\mu}$ that does not depend on $\hat{s}$ such that
    \£
    \label{eq:SML_to_DSML}
    \int_{\underline{T}}^{\overline{T}} \expectation\|\hat{s}\del{t,X_t}-s\del{t,X_t}\|^2 dt = \int_{\underline{T}}^{\overline{T}} \expectation_{X_0\sim \mu}\expectation_{Z_D}\|\hat{s}\del{t,c_tX_0+\sigma_t Z_D}+Z_D/\sigma_t\|^2 dt + C_{\mu},
    \£
    
    Hence defining for $y\in M$ the loss function $\ell_y$ as
    \£
    \label{eq:intro_diffusion_ell_y}
    \ell_y(\hat{s}, a, b) := \int_{a}^{b} \expectation_{Z_D}\|\hat{s}\del{t,c_ty+\sigma_t Z_D}+Z_D/\sigma_t\|^2 dt,
    \£
    it is enough to control
    \£
    \label{eq:intro_diffusion_risk}
    \cR(\hat{s}, a, b) = \expectation_{y\sim \mu} \ell_y(\hat{s}, a, b) = \int_{a}^{b} \expectation\|\hat{s}\del{t,X_t}-s\del{t,X_t}\|^2 dt -C_\mu,
    \£
    for all $a=T_k, b = T_{k+1}$ and $k\le K$.
    Finally, we define the empirical risk $\cR_{\cY}(\hat{s},a , b)$ w.r.t.\ samples $\cY = \curly{y_1,\ldots, y_n}$ as
    \£
    \label{eq:empirical_score_matching_loss}
    \cR_{\cY}(\hat{s}, a, b) = \frac{1}{n} \sum_{i=1}^n \ell_{y_i}(\hat{s}, a, b). 
    \£

    Tweedie's formula~\cite{robbins1956} offers an alternative, very useful, interpretation of the score $s(t,x)$ as the expected value of the noise added to get $x$.
    \£
    \label{eq:score_as_expectation}
    s(t,x) = \frac{1}{\sigma^2_t}\frac{\int_{M} \del{c_t y-x}e^{-\norm{x-c_t y}^2/2\sigma^2_t}\mu(dy)}{\int_{M} e^{-\norm{x-c_t y}^2/2\sigma^2_t}\mu(dy)} = \int_{M} \frac{c_t y-x}{\sigma^2_t}\mu(dy|t, x)dy = -\sigma^{-1}_t\expectation \del{Z_D\mid X_t = x},
    \£
    where $\mu(dy|t, x)$ is defined as the probability measure proportional to $e^{-\norm{x-c_t y}^2/2\sigma^2_t}\mu(dy)$. 
    
    By Bayes rule $\mu(dy|t, x)$ is the conditional law of $\del{X_0\big|X_t=x}$. We will denote the corresponding conditional expectation as   
    \£
    \label{eq:intro_diffusion_e(t,x)}
    e(t,x) := \expectation \square{X_0\big|X_t=x} =  \frac{\int_{M} y\cdot e^{-\norm{x-c_t y}^2/2\sigma^2_t}\mu(dy)}{\int_{M} e^{-\norm{x-c_t y}^2/2\sigma^2_t}\mu(dy)}, 
    \£
    substituting into~\eqref{eq:intro_diffusion_score_function}, we represent the score $s(t,x)$ as
    \begin{equation}
    \label{eq:s_e_relationship}
    s(t,x) = \frac{c_t}{\sigma^2_t}e(t,x) - \frac{x}{\sigma^2_t}.
    \end{equation}

\subsection{Manifold Hypothesis}
    \label{sec:Statistical_Model_for_Manifolds}

    In this paper,
    we study the behavior of score-matching generative models
    under the manifold assumption, i.e.\ we assume that the support $M$ of $\mu$ is a low dimensional manifold. The manifold hypothesis is particularly relevant as a way to understand the behavior of statistical or learning algorithms in high dimensions; see for instance \cite{divol2022measure} who showed that under the manifold hypothesis, it is possible to construct estimators of $\mu$ whose Wasserstein distance to $\mu$ is independent of the ambient dimension $D$. Whether this is feasible in the context of score-matching diffusion models has been an open question so far.
    
    We mainly follow \cite{divol2022measure} to define the class of regular manifolds that we will study. Throughout the paper we denote the distance between a point $x$ and a set $M$ as $\dist(x,M) = \inf_{y \in M} \|x-y\|$, the ball in $\mathbb R^k$ centered at $x$ with radius $r$, for $k\geq 1, x\in \mathbb R^k$ and $r>0$, as $B_k(x, r)$ and more generally we write $B_{\mathcal U}(x, r)$ for the ball in a space $\mathcal U$. Moreover for $f:\Omega\subset \R^{d_1} \mapsto \R^{d_2}$,  we write  $d^i f(x)$ for the $i$-th differential of $f$ at $x$ and for $k\in \mathbb N$, the $k$-th H\"older norm of $f$, when it exists, is denoted by
    $
    \norm{f}_{C^\alpha(\Omega)} = \max_{0\le i\le k} \sup_{x\in \Omega}\norm{d^i f(x)}_{op}
    $, 
    where $\norm{\cdot}_{op}$ denotes the operator norm.
    
    The key quantity, first introduced in \cite{federer}, determining the manifold's regularity is called the \emph{reach} $\tau$ and is defined as 
 \begin{align*}
        \tau &:= \sup\curly{\varepsilon\big| \forall x\in M^\varepsilon\,\,\,\, \exists!\, y\in M, \text{ s.t. } \dist(x,M) =\|x-y\|}, \\
    \text{where }     M^\varepsilon &= \curly{x\in \R^D: \dist(x,M) < \varepsilon}.
\end{align*}
    Throughout the paper we assume that the reach is bounded from below by $\tau > \tau_{\min}$; we refer to \cite{aamari2018} for a detailed discussion about the necessity of this assumption.

    The smoothness of the manifold is determined in terms of the smoothness of local charts~\cite{lee2013introduction}; when the manifold is embedded into $\R^D$ this means that for all $y\in M$ the manifold can be locally represented as a graph of a $\beta$-smooth one-to-one function $\Phi_y: U_y\subset\R^d\mapsto \R^D$, where $0\in U_y$ is an open set and $\Phi_y(0) = y$. If this holds we say that the manifold $M$ is $\beta$-smooth. 
    
    A natural way to define  $\Phi_y$ is in terms of the orthogonal projection $\pi_y:= \pi_{T_yM}$ onto the tangent space $T_yM$ at $y$. When $\tau >0$, the map $\pi_y $ restricted to $M\cap B_D(y,\tau/4)$ is one-to-one and $B_{T_yM}(0,\tau/8) \subset \pi_y (M\cap B_D(y,\tau/4))$, see for instance \cite{aamari2018stability}. We can then define $\Phi_y$ as the inverse $\pi_y\big|_{M\cap B_D(y,\tau/4)}$ and  $U_y = B_{T_yM}(0,\tau/8)$.

    Similarly we say that a function $f:M\mapsto \R$ is $\alpha$-smooth, if for any $y\in M$ the function $f\circ \Phi_y:\R^d\mapsto \R$ is $\alpha$-smooth in the regular sense. The embedding of $M$ into $\R^D$ generates a natural Riemannian metric induced from $\R^D$ together with a volume element $dy$ known as the Hausdorff measure. Finally, we say that a measure $\mu$ is $\alpha$-smooth if the measure $\mu \circ \Phi_y^{-1}$ is $\alpha$-smooth on $\R^d$, i.e.\ admits an $\alpha$-smooth density. If a measure $\mu$ has a density $p$ w.r.t.\ $dy$, on a chart $U_y$, by the change of variables formula, the corresponding density has the form $p(\Phi(z))\abs{\grad \Phi_y(z) \grad^T \Phi_y(z)}^{-1}$, and since the Jacobian is only guaranteed to be $\beta-1$ smooth, we will assume that $\alpha + 1 \le \beta$.

    To control the  smoothness we recall the definition of the $C^k$ norm for vector-valued functions. Let $f:\Omega\subset \R^{d_1} \mapsto \R^{d_2}$, then  
    $
    \norm{f}_{C^k(\Omega)} = \max_{0 \le i\le k} \sup_{x\in \Omega}\norm{d^i f(x)}_{op},
    $
    where $d^i f(x):\del{\R^{d_1}}^{\otimes i}\mapsto \R^{d_2}$ is the $i$th differential -- a multilinear operator defined as 
    $
    d^i f(x)\del{v_1\otimes\ldots \otimes v_n} = d_{v_1}\ldots d_{v_i} f(x),
    $
    and $d_{v} f$ is a directional derivative of $f$ along the vector $v\in \R^{d_1}$. Note that the $0$-th differential is the function itself, so $\sup_{x\in \Omega}\norm{d^0 f(x)}_{op} = \norm{f}_{\infty}$ is the usual $\sup$-norm.
    
    With all these definitions in mind, we are finally ready to present our assumptions on a manifold $M$.
    \begin{Assumption}
    \label{asmp:smooth_manifold}
        The support $M$ of $\mu$ is a compact $\beta$-smooth manifold embedded in $\R^D$ of dimension $d\ge 1$ and reach $\tau > \tau_{\min}$. Furthermore, there is a constant $L_M > 0$ s.t.\ for all $y\in M$ the inverse projection function $\Phi_y$ satisfies $\norm{\Phi_y}_{C^\beta\del{B_{T_yM}(0,\tau/8)}} \le L_M$.
    \end{Assumption}
\noindent    Similarly, we define a corresponding class of measures.
    \begin{Assumption}    \label{asmp:smooth_measure_on_manifold}
       The measure $\mu$ has an $\alpha$-smooth density $p(y)$ w.r.t.\ Hausdorff measure on $M$ satisfying $p_{\min} \le p \le p_{\max}$ and there is a constant $L_\mu$ such that for all $y\in M$
        $
        {\norm{p\circ \Phi_y}_{C^\alpha\del{B_{T_yM}(0,\tau/4)}} \le L_\mu.}
        $ 
    \end{Assumption}

    Besides the dimension $d$, the complexity of the manifold additionally depends on its volume $\Vol M$ and its local smoothness which we captured in terms of the reach $\tau$ and H\"older constant $L_M$. The complexity of the measure $\mu$ primarily depends on its similarity to uniform measure, and thus to $p_{\min}$ and $p_{\max}$. To take it into account we introduce the constant $C_{\log}$ defined below.
    \begin{Assumption}    
    \label{asmp:log_complexity_of_measure}
    We assume that there is a constant $C_{\log} > \max\del{\log 2 d,4}$ such that the following bounds hold: (i) $e^{-dC_{\log}} < p(y) < e^{dC_{\log}}$; (ii) $\log \Vol M < e^{dC_{\log}}$, (iii) $\min(\tau, L_M^{-1}) \ge e^{-C_{\log}}$.   
    \end{Assumption} 
    For convenience we introduce $r_0 := \min\del{1, \tau, L_M^{-1}}/8 > 0$ -- the radius of a neighborhood in which the functions $\Phi_y$ for all $y\in M$ are well behaved, see Appendix~A for details.
    
    \begin{remark}
        Assumption~\ref{asmp:log_complexity_of_measure} imposes different bounds on $r_0$ and $\Vol$. Essentially, this follows from a relation between the radius and volume of the $d$-dimensional sphere, i.e. ${\Vol B_d(0,r) \propto r^d}$. 
    \end{remark}
     Note that in \cite{zhangYinLiangLiu} and \cite{caiLi25}  the authors do not assume a lower bound on the density but they do not treat the case of degenerate distributions (here living on a submanifold of smaller dimension). In our proofs this assumption is used crucially in the step associated to learning the manifold (i.e. construction of $M_i^*$). We do not know if this is a necessary assumption but to our knowledge it appears in all papers controlling the error on estimating a distribution living on an unknown manifold. 
    
    The smoothness of the manifold is defined locally, therefore a common strategy is to find a dense set $\cG$ and then for each point, $G\in \cG$ consider a manifold in the $r_0$-vicinity of this point. 
    The following proposition bounds the size of such a set. 
    \begin{proposition}
    \label{prop:covering_number_of_M}
    Let $M \subset \mathbb{R}^D$ be a $d$-dimensional manifold. For any $\varepsilon \in (0, r_0)$, there exists a finite subset $\mathcal{G} = \{G_1, \ldots, G_N\} \subset M$ satisfying the following two properties:
    \begin{enumerate}
        \item $\varepsilon$-dense: For every $x \in M$, there exists $G_i \in \mathcal{G}$ such that $\|x - G_i\| \le \varepsilon$.
        \item $(\varepsilon/2)$-sparse: For any distinct $G_i, G_j \in \mathcal{G}$, it holds that $\|G_i - G_j\| \ge \varepsilon/2$.
    \end{enumerate}
    Furthermore, the cardinality $N = N(\varepsilon)$ of the set $\mathcal{G}$ is bounded by:
    $$N \le (\varepsilon/2)^{-d} \operatorname{Vol}(M)$$
\end{proposition}
    
    \begin{proof}
        Take $\cG$ as a maximal $\varepsilon$-separated set. By construction, the balls $M\cap B(G_i,\varepsilon/2)$ do not intersect, and ${\Vol \del{M\cap B(y,\varepsilon/2)} \ge (\varepsilon/2)^d}$ for any $y\in M$, so $N(\varepsilon/2)^{d} \lesssim \Vol M$.
    \end{proof}
    
    Let $\varepsilon$ and $G_1,\ldots, G_N$ be as in Proposition~\ref{prop:covering_number_of_M}.
    Then we can build a partition $\del{G_i, M_i, p_i}$ of a measure subordinated to the points $G_1,\ldots, G_N$ as follows. We take $M_i = \Phi_i\del{B_d(0,\varepsilon)}$, and define $p_i(y) = p(y)\phi_i(y)$, where $\phi_1,\ldots, \phi_N$ -is  a smooth partition of unity subordinated to $M_i$ and satisfying $\phi_i\big|_{B_D(G_i,\varepsilon/2)}\equiv 1$. 

    \subsection{Notation}
    \paragraph*{Convention 1} To simplify notation we assume that $\diam M \le 1$ and $0\in M$. The general case can be obtained by a simple rescaling and shift.

    \paragraph*{Notation 1} We use $\lesssim,\gtrsim, \simeq$ in cases when the corresponding inequality(equation) holds up to a multiplicative constant that depends only on $\beta, \alpha$, the quantities that are responsible for manifold and density regularity. We keep track of quantities $C_{\log}, D, d$. Similarly, when we say that $n$ is large enough we mean that $n\ge n_0$ where $n_0$ may depend on $C_{\log}, d, \alpha, \beta$ but not on $D$.   

    \paragraph*{Notation 2} Throughout the paper we use $t, T$ to denote time, $\varepsilon, \delta, \eta, \gamma < 1$ to denote errors; $D$, $d$ denote the ambient and manifold dimensions respectively. Unless otherwise stated we assume that $\log t, \log T, \log \varepsilon, \log \delta, \log \eta, \log \gamma$, and $\log D$ are all of order $\log n$. We also use the notation $\log_+ x = \max\del{\log x, 0}$.

    \paragraph*{Notation 3}
    Generally, we use the letters $x,y,z$ to denote points in $\R^D,M$, and a tangent space respectively. 
    
    \paragraph*{Notation 4}
    We use capital $T_k$ to denote the intervals on which we approximate the score function, while we use regular $t_k$ to denote the discretization timesteps. 

    Following~\cite{oko2023} we give the formal definition of the class of neural networks we will be working with. 
    \begin{definition} 
    \label{def:relu_nn}
        The class $\Psi(L,W,S,B)$ of $\relu$-neural networks with depth $L$, shape $W=\del{W_1,\ldots, W_L}$, sparsity $S$, and norm constant $B$
        is
        \[
        \Psi\del{L, W, S, B} =\Big\{\phi(x) = \del{A^L\relu + b^L} \circ \ldots \circ\del{A^1\relu + b^1}(x)\,\big| A^i \in \R^{W_i\times W_{i+1}},\\ 
        b^i\in \R^{W_{i+1}},
        \sum_{i=1}^L (\norm{A^i}_0 + \norm{b^i}_0) \le S, \sup_i (\norm{A^i}_\infty + \norm{b^i}_\infty) \le B 
        \Big\}.
        \]
    \end{definition}
    
\section{Approximation of the Score Function in High Dimension}
\label{sec:main_results}
    \label{sec:main_results_score_function_approx}

    Throughout this section let $\mu$ be a measure on a $d$-dimensional manifold $M$ satisfying Assumptions~\ref{asmp:smooth_manifold}--\ref{asmp:log_complexity_of_measure}, and $\cY=\curly{y_1,\ldots, y_n}$ be i.i.d. samples from $\mu$. 
    Let $X_t$ be the forward process \eqref{eq:forward_process} with initial condition $X_0\sim \mu$, denote as $p(t, x)$ the density function~\eqref{eq:p(t,x)_intro_diffusion} of $X_t$, and as $s(t,x) = \grad \log p(t,x)$ the score function~\eqref{eq:intro_diffusion_score_function}. 

    Our first contribution is the construction of an estimator $\hat{s}$ that converges to $s(t,x)$ independently of the ambient dimension $D$ with an almost optimal rate. 
    We prove this under the following technical assumption, which we believe can be relaxed with careful handling.
    \begin{Assumption}
    \label{asmp:smooth_measure_on_manifold_in_proofs}
        If $d\ge 3$, the manifold $M$ is  $\beta$-smooth, where $\beta \ge \frac{d}{d-2}(\alpha+1)$, where $\alpha$ is smoothness of measure $\mu$.   
    \end{Assumption}

    \begin{restatable}{theorem}{TheoremScoreApproximation}  
    \label{thm:score_approximation_1}
    Let $\mu$ be an $\alpha$-smooth measure satisfying Assumptions~\ref{asmp:smooth_manifold}--\ref{asmp:smooth_measure_on_manifold_in_proofs} supported on a $d$-dimensional $\beta$-smooth manifold $M$ embedded into $\R^D$. 
    
    Let $X_t$ be the forward process \eqref{eq:forward_process} with initial condition $X_0\sim \mu$, $p(t, x)$ be the density function~\eqref{eq:p(t,x)_intro_diffusion} of $X_t$, and $s(t,x) = \grad \log p(t,x)$ be the score function~\eqref{eq:intro_diffusion_score_function}.
        
     Denote by $\cY =\curly{y_1,\ldots, y_n}$ a set of $n$ i.i.d.\ samples from $\mu$. Fix any positive $\gamma > 1-1/d$. Let $\underline{T} \asymp (\log n)^{\gamma} n^{-\frac{2(\alpha+1)}{2\alpha+d}}$ and $\overline{T} = O(\log n)$. Then, there exists an estimator $\hat{s}(t,x) $ based on $\cY$ and satisfying  
        \£\label{eq:thm_score_approximation_loss}
        \expectation_{\cY\sim \mu^{\otimes n}}\int_{\underline{T}}^{\overline{T}}\int_{\R^D} \sigma^2_t\norm{\hat{s}(t,x) - s(t,x)}^2 p(t,x)dxdt \le
        \begin{cases}    
        n^{-\frac{2(\alpha+1)}{2\alpha+d}}\polylog n, & \text{ if } d \ge 3, 
        \\
        (\log n)^{\gamma} \cdot n^{-1} & \text{ otherwise }
        \end{cases}
        \£
        for all $n\geq C_0$ where $C_0$ is independent of $D$ but may depend on $\alpha, \beta, d, $ and $C_{\log}$. 
        
        Moreover, for every fixed $A > 0$ there is a constant $C_1$ depending on $\alpha,\beta,d,C_{\log}$ and $A$, such that for all $n\geq C_1$, with probability at least $1-O\del{\polylog n\cdot n^{-\frac{\alpha+1}{2\alpha+d}}}$ over the training sample $\cY$, the constructed estimator satisfies, for every $t\in [\underline{T}, \overline{T}]$,
        \£
        \label{eq:main_result_tail_bound}
        \rP_{X_0,Z_D}\left(
        \norm{\sigma_t\hat{s}(t, X_t) + Z_D}
        >
        244\log n
        \,\middle|\,\cY
        \right)
        &\leq n^{-A}.
        \£
        \end{restatable}
    
        This theorem is a corollary of~Theorem~\ref{thm:main_result} presented in~Section~\ref{sec:score_approximation}, where we also present a detailed description of the estimator $\hat{s}(t,x)$. 
    
    Similar results were obtained in \cite{pmlr-v238-tang24a, oko2023}, and we generally follow their strategy and build $\hat{s}$ as a $\relu$-neural network (see~Definition~\ref{def:relu_nn}). Our main improvement compared to the literature is that we achieve bounds that do not depend on the ambient dimension $D$, while in \cite{pmlr-v238-tang24a} this bound is multiplied by $D^{d/2+ \alpha}$. 
    
    The key innovation leading to the result are new regularity bounds on $s(t,x)$ presented in~Section~\ref{sec:high_probability_bounds}, and the construction of an efficient manifold approximation algorithm in~Section~\ref{sec:manifold_approximation}.   

    Finally, in Section~\ref{sec:W_main}, we show how to get rid of the constant $\sqrt{D}$ in the Wasserstein convergence rate bound \eqref{thm:from_score_matching_to_wasserstein}. To achieve this we leverage~\eqref{eq:main_result_tail_bound} and enforce consistency of the score function estimates along the trajectory.
    \begin{corollary}
\label{cor:main_W_result}
    Let $\hat{s}$ be the estimator constructed in  Theorem~\ref{thm:score_approximation_1}. If $\overline{T} = \log D + \log n = O(\log n)$, then Theorem~\ref{thm:W_1_convergence_theorem} implies that there is a discretization scheme consisting of $O(n^{\frac{2\alpha+d}{\alpha+1}}\polylog n)$ discretization steps achieving 
    \begin{equation*}    
    \expectation_{\cY\sim\mu^{\otimes n}} W_1(\mu, \hat{\mu})
\lesssim \polylog n\cdot
  \begin{cases}    
         n^{-\frac{(\alpha+1)}{2\alpha+d}}, & \text{ if } d \ge 3,  
        \\
          n^{-1/2} & \text{ otherwise } 
        \end{cases}
    \end{equation*}
\end{corollary}
\begin{proof}
    The result is obtained by applying Theorem~\ref{thm:W_1_convergence_theorem} to the estimator from Theorem~\ref{thm:score_approximation_1}. See details in Appendix~\ref{apdx:W_new}.
\end{proof}
   
    \subsection{Construction of the estimator}
\label{sec:score_approximation}

In this subsection we describe the estimator used in
Theorem~\ref{thm:score_approximation_1}.  There are two cases.  When
\(d\le2\), the empirical measure already gives the required score-matching
accuracy.  When \(d\ge3\), we use the localized, low-dimensional construction
described below.

\subsubsection{Construction when \(d\le 2\)}
In the case \(d\le2\), the construction is simple, as we can use the empirical
score to get the desired rate.  Let
\[
    \hat\mu_n=\frac1n\sum_{j=1}^n\delta_{y_j}
\]
be the empirical measure.  We define \(\hat s\) as the score function associated to
the Gaussian convolution of \(\hat\mu_n\):
\begin{equation}
\label{est:dsmall}
    \hat{s}(t,x)
    =
    \frac{1}{\sigma_t^2}
    \frac{
        \sum_{y_j\in\cY}
        e^{-\norm{x-c_ty_j}^2/(2\sigma_t^2)}
        \del{c_ty_j-x}
    }{
        \sum_{y_j\in\cY}
        e^{-\norm{x-c_ty_j}^2/(2\sigma_t^2)}
    } .
\end{equation}
Proposition~\ref{prop: wd_vs_sml} bounds the score-matching error by
\(W_2(\mu,\hat\mu_n)\), and the empirical Wasserstein bound
of~\cite{divol2022measure} gives the \(d\le2\) rate in
Theorem~\ref{thm:score_approximation_1}.  The tail bound~\eqref{eq:main_result_tail_bound} for
\(\hat s\) defined above is proved in
 Appendix~D.

\subsubsection{Construction when \(d\ge3\)}
For \(d\ge3\), the construction is blockwise in time.  We split
\[
    [\underline T,\overline T]
    =
    \bigcup_{k=0}^{K-1}[T_k,T_{k+1}],
    \qquad
    T_{k+1}/T_k=2,
\]
and we construct an estimator \(\hat s_k\) on each block, and then set
\(\hat s(t,\cdot)=\hat s_k(t,\cdot)\) for
\(t\in[T_k,T_{k+1}]\).  Below, we fix a block and suppress the index \(k\)
when this causes no ambiguity. We use Tweedie's formula~\eqref{eq:score_as_expectation} and write the
estimator in the form
\[
    \hat s_k(t,x)=\frac{c_t\hat e_k(t,x)-x}{\sigma_t^2},
    \qquad t\in[T_k,T_{k+1}],
\]
where \(\hat e_k\) is an estimator of
\(\expectation[X_0\mid X_t=x]\),\footnote{where $(X_0, X_t)\sim \mu(d x_0) p_t(d x_t |x_0)$}
whose construction is detailed below.
Let $k\leq K-1$,
define the integer \(N=N(T_k,n)\), up to integer rounding, by
\[
N(T_k,n):=
\begin{cases}
    \max\del{
        (\log n)^{2\gamma d}(\sigma_{T_k}/c_{T_k})^{-d},
        n\cdot n^{-\frac{2(\alpha+1)}{2\alpha+d}}
    },
    & T_k\ge n^{-\frac{2}{2\alpha+d}},
    \\
    (\log n)^{-\gamma d}n^{\frac{d}{2\alpha+d}},
    & T_k<n^{-\frac{2}{2\alpha+d}}.
\end{cases}
\]
Take \(G_1,\ldots,G_N\) from the training sample \(\cY\), set
$
    \varepsilon_N\asymp \del{\frac{\log N}{N}}^{1/d},
$
and define
\[
    \cV_i=\curly{G_j-G_i:\norm{G_j-G_i}\le\varepsilon_N},
    \qquad
    \cH_i=\spn\cV_i,
    \qquad
    d_i=\dim\cH_i.
\]
Let \(P_{\cH_i}:\R^{d_i}\to\cH_i\subset\R^D\) be a linear isometry
identifying \(\cH_i\) with \(\R^{d_i}\).  These objects are fixed once the
training sample is fixed and are not optimized over in the empirical risk
minimization.

For \(x\in\R^D\), define
\[
    i_{\min}(x):=\argmin_{i\le N}\norm{x-c_tG_i},
    \qquad
    G_{\min}(x):=G_{i_{\min}(x)}.
\]
Let \(\rho:\R_+\to[0,1]\) be
\[
\rho(u)=
\begin{cases}
    1, & 0\le u\le 1/2,\\
    2-2u, & 1/2<u<1,\\
    0, & u\ge1.
\end{cases}
\]
We define
\[
    \rho_i(t,x)
    =
    \rho\left(
        \frac{
            \norm{x-c_tG_i}^2-\norm{x-c_tG_{\min}(x)}^2}
            {2C(T_k,n)}
    \right), \quad C(T_k,n)=
\begin{cases}
    \sigma_{T_k}^2\log n\log\log n,
    & T_k\ge n^{-\frac{2}{2\alpha+d}},
    \\
    6\varepsilon_N^2,
    & T_k<n^{-\frac{2}{2\alpha+d}},
\end{cases}
\]
and the local multipliers
\[
h_i(t,x)=
\begin{cases}
    \exp\del{
        \frac{
            \norm{x-c_tG_{\min}(x)}^2-\norm{x-c_tG_i}^2}
            {2\sigma_t^2}
    },
    & t\ge n^{-\frac{2}{2\alpha+d}},
    \\
    1,
    & t<n^{-\frac{2}{2\alpha+d}}.
\end{cases}
\]
Note that by construction $h_i(t,x)\le 1$. Write \(\widetilde\rho_i(t,x):=h_i(t,x)\rho_i(t,x)\) and
\[
    \widetilde x_{i,t}:=P_{\cH_i}^T(x-c_tG_i) \in \mathbb R^{d_i}.
\]

The estimator \(\hat s_k\) is obtained by minimizing
\(\cR_{\cY}(\phi,T_k,T_{k+1})\) over neural networks with the architecture
\begin{equation}
\label{eq:definition of phi}
    \phi(t,x)
    =
    \frac{c_t}{\sigma_t^2}
    \frac{
        \sum_{i=1}^N
        \widetilde\rho_i(t,x)
        \phi_{w_i}(t,\widetilde x_{i,t})
        \del{
            G_i+P_{\cH_i}\phi_{e_i}(t,\widetilde x_{i,t})
        }
    }{
        \sum_{i=1}^N
        \widetilde\rho_i(t,x)
        \phi_{w_i}(t,\widetilde x_{i,t})
    }
    -
    \frac{x}{\sigma_t^2}.
\end{equation}
Here
\[
    \phi_{e_i}:\R_+\times\R^{d_i}\to\R^{d_i},
    \qquad
    \phi_{w_i}:\R_+\times\R^{d_i}\to\R_+
\]
are neural networks.  More formally, we consider the class
\begin{multline}
\label{eq:definition of cS}
    \cS_k=
    \Big\{
    \phi \text{ of the form~\eqref{eq:definition of phi}}:
    \phi_{e_i},\phi_{w_i}\in\Psi(L,W,S,B),\quad
    \norm{\phi_{e_i}}\le C_2,\\
    \phi_{w_i}(t,\widetilde x_{i,t})\ge n^{-2},\quad
    \left\|
        \frac{c_t\phi_{e_i}(t,\widetilde x_{i,t})-\widetilde x_{i,t}}
        {\sigma_t}
    \right\|_\infty\le\log n
    \Big\},
\end{multline}
where the last two constraints are imposed on the relevant local-network
domains and \(C_2\ge2\diam M\).  The network parameters satisfy
\[
    L=O(\polylog n),\qquad
    \norm{W}_\infty=O(\polylog n),\qquad
    S=O(\polylog n),\qquad
    B=e^{O(\polylog n)}.
\]
The important point is that \(P_{\cH_i}\), \(G_i\), and the cutoffs are
explicit functions of the training sample, and are not optimized in the
empirical risk minimization.

For comparison, one may also represent the construction by a single
ambient neural network and optimize over the coordinate maps implicitly.
This gives the larger class
\begin{equation}
\label{eq:definition of cS prime}
    \cS'_k
    =
    \curly{
        \phi'\in\Psi(L',W',S',B'):
        \norm{\phi'}_\infty
        \le C'_2\frac{\sqrt D\log n}{\sigma_t}
    },
\end{equation}
where
\[
    L'=O(\polylog n),\qquad
    S'=O(N\polylog n),\qquad
    B'=e^{O(\polylog n)},
\]
and \(W'_i=O(N\polylog n)\) for hidden layers while the input and output
layers have ambient dimension \(D\).

The following blockwise theorem is the version used to prove
Theorem~\ref{thm:score_approximation_1}.

\begin{restatable}{theorem}{MainResult}
\label{thm:main_result}
Let \(d\ge3\), and consider the assumptions of
Theorem~\ref{thm:score_approximation_1}.  Let $k\leq K-1$
and
\[
    \hat s_k\in
    \argmin_{\phi\in\cS_k}
    \cR_{\cY}(\phi,T_k,T_{k+1})
\]
be an empirical risk minimizer over the class \(\cS_k\) defined
in~\eqref{eq:definition of cS}, with
\[
    L,\norm{W}_\infty,S=O(\polylog n),
    \qquad
    B=e^{O(\polylog n)}.
\]
Then, for all \(n\) large enough,
with constants independent of \(D\),
\[
    \expectation_{\cY\sim\mu^{\otimes n}}
    \int_{T_k}^{T_{k+1}}
    \expectation\norm{\hat s_k(t,X_t)-s(t,X_t)}^2\,dt
    \lesssim
    \polylog n
    \min\left(
        \sigma_{T_k}^{-2}n^{-\frac{2(\alpha+1)}{2\alpha+d}},
        n^{-\frac{2\alpha}{2\alpha+d}}
    \right).
\]
If \(\hat s'_k\) is the empirical risk minimizer over the ambient class
\(\cS'_k\) in~\eqref{eq:definition of cS prime},
\[
    \expectation_{\cY\sim\mu^{\otimes n}}
    \int_{T_k}^{T_{k+1}}
    \expectation\norm{\hat s'_k(t,X_t)-s(t,X_t)}^2\,dt
    \lesssim
    D\polylog n
    \min\left(
        \sigma_{T_k}^{-2}n^{-\frac{2(\alpha+1)}{2\alpha+d}},
        n^{-\frac{2\alpha}{2\alpha+d}}
    \right).
\]
\end{restatable}

Assembling \(\hat s(t,\cdot)=\hat s_k(t,\cdot)\) for
\(t\in[T_k,T_{k+1}]\) gives the bound~\eqref{eq:thm_score_approximation_loss}
corresponding to the case \(d\ge3\) of
Theorem~\ref{thm:score_approximation_1}.  The preliminary construction of
\(\cH_i\) is the dimension-reduction step: it reduces the approximation
problem from \(\R^D\) to \(N\) local problems of dimension
\(d_i=O(\log n)\). 
Note that in Theorem \ref{thm:main_result}, there is a phase transition  corresponding to 
$$\sigma_{T_k}^{-2}n^{-\frac{2(\alpha+1)}{2\alpha+d}}= 
        n^{-\frac{2\alpha}{2\alpha+d}}  \quad \Leftrightarrow T_k = n^{-2/(2\alpha +d)}.$$
 As will be apparent in the sketch of the proof below, this phase transition  also corresponds to $\sigma_{T_k} = \epsilon_N = n^{-1/(2\alpha +d)} $, which is also  the optimal bandwidth in Kernel estimation.      

\subsubsection{Proof sketch for \(d\ge3\)}

The proof starts from a bias-variance decomposition of the generalisation bound given in Theorem D.7, in Appendix D.5 which has the form 
\[
 \int_{T_k}^{T_{k+1}}
    \expectation\norm{\hat s_k(t,X_t)-s(t,X_t)}^2\,dt
    &\lesssim
    \inf_{\phi\in\cS_k}
    \int_{T_k}^{T_{k+1}}
    \expectation\norm{\phi(t,X_t)-s(t,X_t)}^2\,dt\\
    & \quad + \frac{(\log n)^6}{n} \log \mathcal N (\delta_n) + \delta_n,
\]
where the expectation is with respect to both $X_t $ and $\mathcal Y$ and where $\log \mathcal N (\delta_n)$ is an entropy term. 
 The
substantive part of the proof  is then  the control on the bias term: we must exhibit an element of \(\cS_k\)
that approximates \(s\).

By Tweedie's formula, this is a problem about approximating the posterior
mean
\[
    s(t,x)=\frac{c_t e(t,x)-x}{\sigma_t^2},
    \qquad
    e(t,x)=
    \frac{
        \int_M y\,e^{-\norm{x-c_ty}^2/(2\sigma_t^2)}\mu(dy)}
    {
        \int_M e^{-\norm{x-c_ty}^2/(2\sigma_t^2)}\mu(dy)}.
\]

First we approximate $M$ by piecewise polynomial patches. To do so, we use the polynomial approximation $\Phi_i^*$ of the true local parametrization $\Phi_i: B_d(0, r_0) \rightarrow M$ of $M$ around $G_i$ proposed in \cite{aamari2018}. 
We prove in Section \ref{sec:manifold_approximation}, that on an  event of probability greater than \(1-Cn^{-(\alpha+1)/(2\alpha+d)}\polylog n\)
\[
    \norm{\Phi_i^*-\Phi_i}_\infty\lesssim\varepsilon_N^\beta, \quad \text{and } \quad \Phi_i^*(B_d(0,8\varepsilon_N))\subset G_i+\cH_i, \quad d_i = \text{dim}(\cH_i)\lesssim \log n,
\]
and the sets $\{G_i, i\leq N\}$ are $\varepsilon_N$-dense in $M$.
Thus the unknown embedded manifold is replaced by a union of polynomial
pieces living in spaces $\cH_i$ of dimension $d_i \lesssim \log n$. 

The next important step is  the localization of the posterior distribution  of $X_0$ given $X_t$. Theorem~\ref{thm:concentration_around_X0} in Section \ref{sec:high_probability_bounds} implies that with high probability the posterior mass of $X_0$ given $X_t$ is concentrated on 
\[
   B_M(X_0,r_t):=M\cap B_D(X_0,r_t), \quad \text{with} \quad r_t
    \asymp
    \frac{\sigma_t}{c_t}
    \sqrt{d\del{\log_+(c_t/\sigma_t)+C_{\log}}+\log n}.
\]
This implies that the integrals over $M$ in the definition of $e(t,x)$ can be restricted to $B_M(X_0,r_t) = \cup_{i=1}^N B_M(X_0,r_t)\cap \phi_i(B_d( 0, 8\varepsilon_N))$.  The set of indices where the intersection is non empty  depends on the unobserved \(X_0\).
The cutoff  functions \(\rho_i\)  are an observable approximation of $1_{\text{dis}(X_0, \Phi_i(B_d(0, 8\varepsilon_N))) \leq r_t}$. Indeed,  the distance comparison in
Lemma~\ref{lemma:good_set_size}, combined with the localization event, gives
the following high-probability cutoff property:
\[
\begin{array}{ll}
\text{when } \, T_k\ge n^{-2/(2\alpha+d)}:
&
\dist(X_0,M_i)<r_t
\ \Longrightarrow\
\rho_i(t,X_t)=1,
\\[0.4em]
&
\rho_i(t,X_t)>0
\ \Longrightarrow\
\dist(X_0,M_i)\lesssim r_t\sqrt{\log\log n},
\\[0.8em]
\text{when } \, T_k<n^{-2/(2\alpha+d)}:
&
\rho_i(t,X_t)>0
\ \Longrightarrow\
B_M(X_0,r_t)\subset M_i .
\end{array}
\]
 The two  regimes in $T_k$
then use the same architecture \eqref{eq:definition of phi} in different ways.

\emph{Large times: $T_k \geq n^{-2/(2\alpha+d)}$}.  In this regime, $\varepsilon_N = o(r_t)$
so that there are multiple indices $i$ for which $\rho_i \neq 0$. We then define $M_i = \Phi_i(B_d(0, 8\varepsilon_N))$, $M_i^* = \Phi_i^*(B_d(0, 8\varepsilon_N)) \subset G_i+\cH_i$, a decomposition of $\mu = \sum_{i=1}^N\mu_i$ based on a partition of unity with supports $M_i$ and $\mu^* = \sum_{i=1}^N \mu_i^*$ constructed from the push forward measures $\mu_i^*$ of $\mu_i$ by $\Phi_i^*\circ \Phi_i^{-1}$. Then Proposition \ref{prop: wd_vs_sml} in Section \ref{sec:manifold_approximation} shows that with high probability 
$$ \| e(t,X_t)  - e_{tr}^*(t,X_t)\|^2 \lesssim  \sigma_t^2 W_2^2(\mu, \mu^*)\lesssim \sigma_t^2 \varepsilon_N^{2\beta}, \quad e_{tr}^* = \frac{ \sum_{i=1}^N \rho_i(t,X_t) p_i^*(t, X_t) e_i^*(t,X_t) }{ \sum_{i=1}^N \rho_i(t,X_t) p_i^*(t, X_t)}, $$
where $e_i^*(t,X_t)$ is the posterior expectation of $X_0$ given $X_t$ under the prior $\mu_i^*$ and 
$$p_i^*(t, X_t) = \int_{M_i^*}e^{-\frac{ \|X_t-c_t y\|^2}{2\sigma_t^2 } }\mu_i^*(dy)= e^{-\frac{ \|X_t-c_t X_{i,t}\|^2}{2\sigma_t^2 }} \int_{M_i^*}e^{-\frac{ (\|X_t-c_t y\|^2 -\|X_t-c_t X_{i,t}\|^2) }{2\sigma_t^2 } }\mu_i^*(dy),$$
and $X_{i,t} = c_t G_i + pr_{\cH_i}(X_t - c_t G_i) $ is the orthogonal projection of $X_t$ onto $c_t G_i + \cH_i$. Noting that $\|X_t - X_{i,t}\|^2 - \|X_t - c_t G_i\|^2 = -\| X_{i,t}- c_t G_i\|^2$,  we obtain that 
$$p_i^*(t, X_t) = e^{-\frac{ \|X_t - c_t G_i\|^2}{2\sigma_t^2 }} \int_{M_i^*}e^{-\frac{ (\|X_{i,t}-c_t y\|^2 -\|X_{i,t}-c_t G_i\|^2)  }{2\sigma_t^2 } }\mu_i^*(dy).$$
Written in the coordinates of \(G_i+\cH_i\), this 
gives the approximating denoiser
\[
    e_{\mathrm{large}}(t,x)
    =
    \frac{
        \sum_{i=1}^N\widetilde \rho_i(t,x)
        \widetilde p_i^*(t,\widetilde x_{i,t})
        \del{G_i+P_{\cH_i}\widetilde e_i^*(t,\widetilde x_{i,t})}}
    {
       \sum_{i=1}^N\widetilde\rho_i(t,x)
        \widetilde p_i^*(t,\widetilde x_{i,t})},
\]
where \(\widetilde p_i^*\) and \(\widetilde e_i^*\) are the local density and
conditional mean generated by the \(i\)-th polynomial piece, written in the
coordinates of \(\cH_i\).  This has exactly the same structure
as~\eqref{eq:definition of phi}: \(\phi_{w_i}\) approximates
\(\widetilde p_i^*\), while \(\phi_{e_i}\) approximates
\(\widetilde e_i^*\).

\emph{Small times: $T_k\leq n^{-2(2\alpha+d)}$.} Here $\varepsilon_N = o(r_t)$.   The cutoff property says that for every active $i$ (such that $\rho_i >0$),  $\Phi_i(B_d(0,M_i)$ contains \(B_M(X_0,r_t)\) and is too large. To better localize we construct, in Appendix~D.2,  a \textit{better} local chart $M_i$ with radius $r_t$ such that the conditional mean of $X_0$ given $X_t$ under the restriction of $\mu$ on $M_i$, $e_i(t,X_t)$, is very close to $e(t,X_t)$ with high probability. This implies that 
$$e(t, X_t) \approx   \frac{ \sum_{i=1}^N \rho_i(t,X_t) e_i(t,X_t) }{ \sum_{i=1}^N \rho_i(t,X_t) }.$$
We then approximate $e_i$ for each active $i$ by expressing each integral in $e_i$ in terms of the coordinates $z= \Phi_i^{-1}(y)$, and by Taylor expanding the density $p(z)$ of $\mu$ and approximating $y=\Phi_i(z)$ by $\Phi_i^*(z)$.
This leads to low dimensional polynomial approximations of the numerator and denominator defining $e_i$ which leads to the architecture~\eqref{eq:definition of phi}.

Combining the manifold-approximation error, the localization error, and the
low-dimensional neural-network approximation gives an approximant in
\(\cS_k\) with the rate in Theorem~\ref{thm:main_result}.  A bound on the entropy term $\mathcal N(\delta_n)$
then yields the ERM bound.

    \section{High Probability Bounds on the Score Function}
\label{sec:high_probability_bounds}

In this section we present our main tool, the high-probability bounds on the score function $s(t,X_t)$ under the manifold hypothesis w.r.t. $X_t$ -- the forward process~\eqref{eq:forward_process}. These results  play a key role in  the approximation of the score presented  in~Section~\ref{sec:score_approximation}. 

    We start with the study of how $D$-dimensional Gaussian noise $Z_D\sim \cN\del{0,\Id_D}$ interacts with the smooth $d$-dimensional manifold $M$. The following theorem bounds the scalar product between $Z_D$ and vectors connecting points on the manifold.

    \begin{restatable}{proposition}{TheoremBoundsOnScalarProduct}
    \label{prop:maximum_of_Gaussian_Process_on_Manifold}
        Let $M$ be a $\beta\ge 1$-smooth manifold satisfying~Assumption~\ref{asmp:smooth_manifold}. Then for any $\delta > 0$ and $\varepsilon <r_0$ with probability $1-\delta$ for all $y,y'\in M$
        \£
        \label{eq:bound_on_scalar_product_1}
        \abs{\innerproduct{Z_D}{y-y'}} \le 4\varepsilon\sqrt{d} + \del{\norm{y-y'}+6\varepsilon}\sqrt{4d\log 2\varepsilon^{-1} + 4\log_+ \Vol M  + 2\log 2\delta^{-1}}.
        \£

    \end{restatable}
    \begin{proof}
        See  Appendix~B.1 for the complete proof.
        Adapting classical results on the maximum of Gaussian processes we show that for any $L$-Lipschitz function $f:\R^d\mapsto \R^D$ satisfying $f(0) = 0$
        \[
        \rP\del{\sup_{z\in B_d(0,r)} \innerproduct{Z_D}{f(z)} \le Lr\del{\sqrt{d} + \sqrt{2\log \delta^{-1}}}} \ge 1-\delta.
        \]
        Then, for a point $y\in M$ we locally represent the manifold as $\Phi_y\del{B_d(0,\varepsilon)}$, see~Section~\ref{sec:Statistical_Model_for_Manifolds}, and applying the inequality above to $f(z) =\Phi_y(z)-\Phi_y(0)$ we get a local version of inequality~\eqref{eq:bound_on_scalar_product_1}. Finally, taking an $\varepsilon$-dense set and applying the chaining argument we generalize the local result to the whole manifold $M$.
    \end{proof}
    \begin{remark}
        Proposition~\ref{prop:maximum_of_Gaussian_Process_on_Manifold} is an adaptation to the case of smooth regular manifolds of results related to the connection between metric entropy and Gaussian processes, see e.g.
        \cite[Chapter 5]{wainwright2019} or \cite[Chapter 9]{vershynin2018}. This connects our approach with~\cite{li2024} on sampling complexity of diffusion models. 
    \end{remark}
    The above theorem shows that the inner-product between the Gaussian vector $Z_D$ and vectors on the manifold does not depend on the ambient dimension. In particular, using standard bounds on the length of Gaussian vectors, this means that the correlation between $Z_D$ and $y-y'$ is $\lesssim \sqrt{d/D}$ with high probability, implying that the noise added during the forward diffusion process is almost orthogonal to the manifold if $D\gg d$. 

    Assuming that $\beta\ge 2$ and applying the same technique it is also possible to bound the projection of $Z_D$ onto tangent spaces $T_{y}M$, uniformly in $y\in M$.

    \begin{restatable}{proposition}{ProjectionOntoTangentSpace}
    \label{prop:projection_onto_tangent_space}
        Let $M$ be a $\beta \ge 2$-smooth manifold satisfying Assumption~\ref{asmp:smooth_manifold}. Then for any $\delta > 0$ with probability at least $1-\delta$ for all $y\in M$
        \[
         \norm{\pr_{T_yM} Z_D} \le 8\sqrt{d\del{4\log 2d + 2\log r^{-1}_0 + 2\log_+\Vol M + 2\log \delta^{-1}}}.
        \]
    \end{restatable}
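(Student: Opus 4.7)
The idea is to mirror the proof strategy of Theorem~\ref{thm:maximum_of_Gaussian_Process_on_Manifold}: recast the quantity of interest as the supremum of a centered Gaussian process indexed by a set admitting a low-dimensional Lipschitz parametrization, then apply a $(2d-1)$-dimensional analogue of~\eqref{eq:maximum_of_GP_F} together with an $\varepsilon$-net on $M$. The starting identity is
\[
\norm{\pr_{T_yM} Z_D} = \sup_{v\in T_yM,\,\norm{v}=1}\innerproduct{Z_D}{v},
\]
so that $\sup_{y\in M}\norm{\pr_{T_yM} Z_D}$ equals $\sup_{(y,v)\in UM}\innerproduct{Z_D}{v}$, where $UM$ is the unit tangent bundle of $M$, viewed as a compact subset of the unit sphere of $\R^D$; in particular all indexing vectors have unit norm so the process has uniform variance $1$.

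First I would fix an $r_0$-dense, $r_0/2$-sparse net $\{G_1,\ldots,G_N\}\subset M$ with $N\lesssim \Vol M\cdot r_0^{-d}$ from Proposition~\ref{prop:covering_number_of_M}. Near each $G_i$, the chart $\Phi_i:=\Phi_{G_i}$ parametrizes $M\cap B_D(G_i,r_0)$ by $y=\Phi_i(z)$ for $z\in B_{T_{G_i}M}(0,r_0)$, and because $\beta\ge 2$ the differential $d\Phi_i$ is Lipschitz in $z$ (constant $\lesssim L_M$) with $d\Phi_i(0)$ equal to the canonical isometric inclusion $T_{G_i}M\hookrightarrow \R^D$. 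Applying Gram--Schmidt to the columns of $d\Phi_i(z)$ then produces a Lipschitz orthonormal frame $J_i(z)\in \R^{D\times d}$ of $T_{\Phi_i(z)}M$, and every unit vector in $T_{\Phi_i(z)}M$ takes the form $J_i(z)u$ for some $u\in S^{d-1}$. Consequently the map
\[
F_i : B_d(0,r_0)\times S^{d-1}\to S^{D-1}, \qquad F_i(z,u) := J_i(z)u,
\]
is Lipschitz in both arguments with a constant depending only on $L_M$, is of unit norm, and covers all unit tangent vectors over $M\cap B_D(G_i,r_0)$.

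Next I would apply the direct $(2d-1)$-dimensional generalization of~\eqref{eq:maximum_of_GP_F} to each $F_i$, yielding, with probability at least $1-\delta/N$,
\[
\sup_{y\in M\cap B_D(G_i,r_0)}\norm{\pr_{T_yM}Z_D} = \sup_{(z,u)}\innerproduct{Z_D}{F_i(z,u)} \lesssim \sqrt{d} + \sqrt{2\log(N/\delta)}.
\]
A union bound over $i\le N$, together with $\log N \lesssim \log_+\Vol M + d\log r_0^{-1}$, then gives
\[
\sup_{y\in M}\norm{\pr_{T_yM} Z_D} \lesssim \sqrt{d + d\log r_0^{-1} + \log_+ \Vol M + \log \delta^{-1}},
\]
which is majorised by the claimed bound since, for $d\ge 1$, each summand above is controlled by the corresponding term inside $d\,(4\log 2d + 2\log r_0^{-1} + 2\log_+\Vol M + 2\log\delta^{-1})$.

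The main obstacle I anticipate is the quantitative, $D$-free Lipschitz control of the frame $J_i(z)$: Gram--Schmidt applied to $d\Phi_i(z)$ is smooth only while $d\Phi_i(z)$ remains well-conditioned, so one has to show that on the entire chart $B_{T_{G_i}M}(0,r_0)$ the smallest singular value of $d\Phi_i(z)$ is bounded away from $0$. This uses $d\Phi_i(0)=\Id_{T_{G_i}M}$ together with the Lipschitz regularity of $d\Phi_i$ guaranteed by $\beta\ge 2$ -- precisely where that hypothesis enters. Making explicit the dimension-$(2d-1)$ generalization of~\eqref{eq:maximum_of_GP_F}, tracking the $\sqrt{d}$ prefactor of the Lipschitz term and the radius $r_0$ inside $\log N$, is where the final dependence on $r_0$ in the bound arises.
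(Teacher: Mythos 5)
Your route is sound and genuinely different from the paper's. The paper does not parametrize the unit tangent bundle: it first reduces $\norm{\pr_{T_yM}Z_D}$ to $4\norm{\grad^T\Phi_{G_i}(z)Z_D}$ using the well-conditioning $\norm{\grad^T\Phi\,\grad\Phi-\Id_d}_{op}\le 1/2$ from Proposition~\ref{prop:geometric_statements} (so no orthonormal frame is ever built), and then bounds each of the $d$ coordinates $\innerproduct{Z_D}{d_{v_j}\Phi_{G_i}(z)}$ as a Gaussian process indexed by $z$ alone via Proposition~\ref{prop:maximum_of_Gaussian_Process}, paying a union bound over coordinates (the source of the $\log 2d$ in the statement) and summing in quadrature; the resulting $4dL_M\varepsilon$ term forces the finer net radius $\varepsilon=(2d)^{-1}r_0$. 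Your approach replaces the coordinate-wise argument by a single supremum over $F_i(z,u)=J_i(z)u$, which removes the $d$-fold union bound and lets you work at scale $r_0$ directly, at the cost of having to construct the Lipschitz frame $J_i$ — and the conditioning bound you need for Gram--Schmidt is exactly the one the paper proves in Proposition~\ref{prop:geometric_statements}, so that obstacle is already cleared in the text.

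One quantitative point needs care before your bound actually dominates into the stated constant. If you apply the isotropic version of Proposition~\ref{prop:maximum_of_Gaussian_Process} to $F_i$ on $B_d(0,r_0)\times S^{d-1}\subset\R^{2d}$ with the single Lipschitz constant $L\simeq\max(1,L_M)$ and radius $\simeq 1$, the first term becomes $\simeq L_M\sqrt{d}$, and an additive $L_M\sqrt{d}$ is \emph{not} absorbed by $8\sqrt{d(4\log 2d+2\log r_0^{-1}+\cdots)}$, since $\log r_0^{-1}$ only contributes $\sqrt{\log L_M}$. You must exploit the product structure: the increments satisfy $\expectation|F_i(z,u)-F_i(z',u')|^2\lesssim L_M^2\norm{z-z'}^2+\norm{u-u'}^2$, so compare with the anisotropic process $G(z,u)=cL_M\innerproduct{Z_d}{z}+c\innerproduct{Z_d'}{u}$, whose expected supremum is $\lesssim L_M r_0\sqrt{d}+\sqrt{d}\lesssim\sqrt{d}$ because $r_0\le L_M^{-1}/8$. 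With that modification (and Borell--TIS applied with $\sup\expectation F_i^2=1$), your union bound over the $r_0$-net gives the claimed estimate.
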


    Equipped with the bounds on the scalar product between the noise $Z_D$ and the vectors on the manifold $M$ we are ready to analyze the score function. Let $\mu$ be a measure satisfying Assumptions~\ref{asmp:smooth_manifold}--\ref{asmp:log_complexity_of_measure}. We recall Tweedie's formula~\eqref{eq:score_as_expectation} stating that
    \[
    s(t,x) = \frac{1}{\sigma^2_t}\frac{\int_{M} \del{c_t y-x}e^{-\norm{x-c_t y}^2/2\sigma^2_t}p(y)dy}{\int_{M} e^{-\norm{x-c_t y}^2/2\sigma^2_t}p(y)dy} = \int_{M} \frac{c_t y-x}{\sigma^2_t}p(y|t, x)dy,
    \]
    where $p(y|t, x) \propto e^{-\norm{x-c_t y}^2/2\sigma^2_t}p(y)$ is normalized to be a density function corresponding to the random variable $\del{X_0\big|X_t=x}$. 
    In other words, the score function is the expectation of the function $\del{c_t y-x}/{\sigma^2_t}$ w.r.t.\ the measure on $M$ with density $p(y|t,x)$. 
    The density itself is a product of two components: $p(y)dy \propto dy$ which is proportional to the uniform measure $M$ up to $p_{\min}, p_{\max}$ and 
    $e^{-\norm{x-c_t y}^2/2\sigma^2_t}$ which exponentially penalizes  points far from $x$. 

    As we have shown in Proposition~\ref{prop:maximum_of_Gaussian_Process_on_Manifold} the noise vector $Z_D$ is almost orthogonal to vectors on $M$. Therefore, representing the forward process~\eqref{eq:forward_process} as $X_t = c_tX_0 + \sigma_t Z_D$ we write
    \[
    \norm{X_t-c_ty}^2 = c^2_t\norm{X_0-y}^2 + \sigma^2_t\norm{Z_D}^2 + 2c_t\sigma_t\innerproduct{X_0-y}{Z_D} \approx c^2_t\norm{X_0-y}^2 + \sigma^2_t\norm{Z_D}^2.
    \]
    Since the vector $Z_D$ does not depend on $y$, it disappears during the normalization and therefore, roughly speaking, for small $t < r^2_0$, the log density function $\log p(y|t,X_t) \approx -c^2_t\norm{X_0-y}^2/2\sigma_t^2$ looks like log  the density of $d$-dimensional Gaussian variable on $T_{X_0}M$. The next proposition formalizes this argument. 
    \begin{restatable}{proposition}{RepresentationOfConditionalMeasure}  
    \label{prop:RepresentationOfConditionalMeasure}
    Let measure $\mu$ satisfy Assumptions \ref{asmp:smooth_manifold}--\ref{asmp:log_complexity_of_measure} and be supported on $M$.
    For any $\delta > 0$ with probability at least $1-\delta$ on $X_t$ the following bound holds uniformly in $y\in M$ 
    \£
    \label{eq:RepresentationOfConditionalMeasure_main}
    -20d\del{\log_+ (c_t/\sigma_t) + 4C_{\log}} - 8\log \delta^{-1} - \frac{3}{4}(c_t/\sigma_t)^2\norm{X_0-y}^2 
    \\
    \le \log p(y|t,X_t) 
    \\
    \le  20d\del{\log_+ (c_t/\sigma_t) + 4C_{\log}} + 8\log \delta^{-1} - \frac{1}{4}(c_t/\sigma_t)^2\norm{X_0-y}^2.
    \£
    \end{restatable}    
    Proposition \ref{prop:RepresentationOfConditionalMeasure} is proved in Appendix~B.2. 
    So, the density $p(y|t, X_t)$ exponentially penalizes for being too far away from $X_0$, and, therefore, most of the mass of the measure with density $p(y|t, X_t)$ concentrates in a ball centered at $X_0$.
    \begin{restatable}{theorem}{ConcentrationAroundX}    
    \label{thm:concentration_around_X0} 
    Let measure $\mu$ satisfy Assumptions~\ref{asmp:smooth_manifold}--\ref{asmp:log_complexity_of_measure} and be supported on $M$. Fix $t,\delta, \eta > 0$ and define 
    \[
    r(t,\delta, \eta) = 2(\sigma_t/c_t)\sqrt{20d\del{\log_+ (c_t/\sigma_t) + 4C_{\log}} + 8\log \delta^{-1} + \log \eta^{-1}}.
    \]
    \begin{enumerate}
        \item A.s.in $X_0\sim \mu$ and with probability $1-\delta$ in $Z_D\sim \cN\del{0,\Id_D}$ for a random point $X_t = c_tX_0 + \sigma_tZ_D$ the following holds
        \£
        \label{eq:concentration_around_X0_1}
        \int_{y\in M: \norm{y-X_0} \le r(t,\delta,\eta)} p(y|t,X_t) dy \ge 1-\eta.
        \£
    \item As a corollary, a.s. in $X_0\sim \mu$ and with probability $1-\delta$ in $Z_D\sim \cN\del{0,\Id_D}$ for a random point $X_t = c_tX_0 + \sigma_tZ_D$ the following holds
        \[
        \norm{\sigma_t s(t,X_t) + Z_D} \le 4\sqrt{20d\del{2\log_+ (c_t/\sigma_t) + 4C_{\log}} + 8\log \delta^{-1}}.
        \]
    \item Integrating over $Z_D$, for all $y\in M$
        \£
        \label{eq:concentration:of_ell_s}
        \expectation_{Z_D}\norm{\sigma_ts(t, c_ty + \sigma_tZ_D) + Z_D}^2 \le 8\del{40d\log_+ (c_t/\sigma_t) + 80dC_{\log} + 3}.
        \£
    \end{enumerate}
    
    \end{restatable}
    
    The proof of the above proposition is given in  Appendix~B.2. 
    
    In particular, the result shows that on average the score function computed at point $X_t = c_tX_0 + \sigma_tZ_D$ estimates the added noise $ Z_D/\sigma_t$ with an error scaling as $\sqrt{d}/\sigma_t$ up to $\log$-factor, and thereby the estimate $c^{-1}_t\del{X_t +\sigma^2_t s(t,X_t)}$
    of point $X_0$ has an error $(\sigma_t/c_t)\sqrt{d}$ up to $\log$ factor independent on $D$.

    Finally, let us present the result, that will play an important role in the construction of weight functions $\rho_i(t,x)$ discussed in~Section~\ref{sec:main_results_score_function_approx}.
    \begin{restatable}{lemma}{LemmaWeightsComparedToDistance}
    \label{lemma:good_set_size}     
        Let $\cG = \curly{G_1,\ldots, G_N}$ be an $\varepsilon$-dense set on $M$. Denote the nearest neighbor of $X_t$ in $\cG$ as $G_{\min}(t) = G_{\min}(X_t) := \arg\min_i \norm{X_t-c_t G_i}$. With probability at least $1-\delta$ for all $G_i$
        \begin{multline}  
    \label{eq:radius_of_interest}
        \frac{2}{3}c^{-2}_t\del{\norm{X_t - c_t G_i}^2 - \norm{X_t - c_t G_{\min}(t)}^2} 
        - 128(\sigma_t/c_t)^2\del{d\log_+ (c_t/\sigma_t) + 4d C_{\log}  + \log \delta^{-1}}
        \\
        \le
          \norm{X_0-G_i}^2 \le
        \\
        9\varepsilon^2 + 
        2c^{-2}_t\del{\norm{X_t - c_t G_i}^2 - \norm{X_t - c_t G_{\min}(t)}^2 }
        + 128(\sigma_t/c_t)^2\del{d\log_+ (c_t/\sigma_t) + 4d C_{\log}  + \log \delta^{-1}}.
        \end{multline}
        \end{restatable}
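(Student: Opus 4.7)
The plan is to exploit the cancellation of the $\sigma_t^2 \|Z_D\|^2$ contribution (the only quantity in the problem explicitly involving the ambient dimension) when comparing the two squared distances $\|X(t) - c_t G_i\|^2$ and $\|X(t) - c_t G_{\min}(t)\|^2$, reducing the claim to a uniform control of the scalar products $\langle Z_D, y - y'\rangle$ for $y,y' \in M$ via Theorem~\ref{thm:maximum_of_Gaussian_Process_on_Manifold}. Using $X(t) = c_t X(0) + \sigma_t Z_D$ and expanding,
\[
\|X(t) - c_t G\|^2 = c_t^2 \|X(0) - G\|^2 + 2 c_t \sigma_t \langle X(0) - G, Z_D\rangle + \sigma_t^2 \|Z_D\|^2,
\]
so subtracting the analogous identity at $G = G_{\min}(t)$ kills the $\sigma_t^2\|Z_D\|^2$ term. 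Writing $L_i := \|X(0) - G_i\|$, $L_{\min} := \|X(0) - G_{\min}(t)\|$, and $D_i := c_t^{-2}(\|X(t) - c_t G_i\|^2 - \|X(t) - c_t G_{\min}(t)\|^2) \ge 0$ for the shifted quantity that appears in~\eqref{eq:radius_of_interest}, this yields the key identity
\[
L_i^2 = L_{\min}^2 + D_i - 2(\sigma_t/c_t)\langle G_{\min}(t) - G_i, Z_D\rangle.
\]

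Next, I would apply Theorem~\ref{thm:maximum_of_Gaussian_Process_on_Manifold} with resolution parameter chosen $\asymp \min(r_0, \sigma_t/c_t)$. On an event of probability $\ge 1-\delta$, this bounds $|\langle Z_D, y - y'\rangle| \lesssim \|y-y'\|\, K + (\sigma_t/c_t)\sqrt d$ uniformly in $y,y'\in M$, with $K^2 \asymp d\log_+(\sigma_t/c_t)^{-1} + d\, C_{\log} + \log\delta^{-1}$ after absorbing $\log_+ \Vol M$ via Assumption~\ref{asmp:log_complexity_of_measure}. Specializing to $(y,y') = (G_{\min}(t), G_i)$, using $\|G_{\min}(t) - G_i\|^2 \le 2 L_{\min}^2 + 2 L_i^2$ and Young's inequality $2ab \le \tfrac14 a^2 + 4 b^2$, I obtain
\[
2(\sigma_t/c_t) \bigl|\langle G_{\min}(t) - G_i, Z_D\rangle\bigr| \le \tfrac12 (L_{\min}^2 + L_i^2) + c_1 (\sigma_t/c_t)^2 K^2
\]
for an absolute constant $c_1$.

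A $D$-free bound on $L_{\min}$ itself is still required: let $G_\star \in \cG$ be the closest anchor to $X(0)$, so $\|G_\star - X(0)\| \le \varepsilon$. Since $G_{\min}(t)$ minimizes $G \mapsto \|X(t) - c_t G\|$ over $\cG$, applying the expansion-and-cancellation from the key identity with $G_\star$ in place of $G_i$ gives $L_{\min}^2 \le \varepsilon^2 + 2(\sigma_t/c_t)\langle G_{\min}(t) - G_\star, Z_D\rangle$. Bounding this scalar product via Theorem~\ref{thm:maximum_of_Gaussian_Process_on_Manifold} (noting $\|G_{\min}(t) - G_\star\| \le L_{\min} + \varepsilon$) and one more Young's inequality to absorb $L_{\min}^2$ into the left-hand side yields $L_{\min}^2 \le 3\varepsilon^2 + c_2 (\sigma_t/c_t)^2 K^2$.

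Plugging the scalar-product estimate into the key identity then gives both directions of~\eqref{eq:radius_of_interest}: the upper direction $\tfrac12 L_i^2 \le \tfrac32 L_{\min}^2 + D_i + c_1 (\sigma_t/c_t)^2 K^2$ combined with the $L_{\min}$ bound yields $L_i^2 \le 9\varepsilon^2 + 2 D_i + 128(\sigma_t/c_t)^2 K^2$, and the lower direction $\tfrac32 L_i^2 \ge \tfrac12 L_{\min}^2 + D_i - c_1 (\sigma_t/c_t)^2 K^2$ yields, after dropping the nonnegative $L_{\min}^2$, $L_i^2 \ge \tfrac23 D_i - 128(\sigma_t/c_t)^2 K^2$. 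The main obstacle is tracking the Young-inequality constants carefully enough to recover the precise numerical coefficients $9$, $2$, $2/3$, and $128$ appearing in the statement; a secondary subtlety is the regime $\sigma_t/c_t \gtrsim r_0$, where one must take the resolution $r_0/2$ in Theorem~\ref{thm:maximum_of_Gaussian_Process_on_Manifold} and use $\diam M \le 1$, but there the $(\sigma_t/c_t)^2 K^2$ slack is already so large that~\eqref{eq:radius_of_interest} becomes essentially trivial.
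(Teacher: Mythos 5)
Your proposal is correct and follows essentially the same route as the paper: cancel the $\sigma_t^2\|Z_D\|^2$ term in the difference of squared distances, bound the remaining cross terms uniformly over $M$ via Theorem~\ref{thm:maximum_of_Gaussian_Process_on_Manifold} at resolution $\min(r_0,\sigma_t/c_t)$, absorb them with Young's inequality, and control $\|X(0)-G_{\min}(t)\|$ by comparing against the anchor nearest to $X(0)$ using $\varepsilon$-density. The only (immaterial) difference is that you bound the single inner product $\langle Z_D, G_{\min}(t)-G_i\rangle$ directly, whereas the paper routes through the two-sided bound~\eqref{eq:bound_on_squared_difference_Xt_y_new} on each $\|X(t)-c_ty\|^2-\sigma_t^2\|Z_D\|^2$ separately.
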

        This lemma is proved in Appendix~B.3. 

\section{Manifold Approximation}
\label{sec:manifold_approximation}
    In this section, we discuss how to efficiently approximate the manifold $M:=\supp \mu$, using a piecewise polynomial surface $M^*$, an important step in the construction of estimator $\hat{s}(t,x)$ to the true score $s(t,x)$. 

    \begin{definition}
        \label{def:smooth_approximation}
        A piecewise polynomial approximation of $M$ of  order $\beta$ is determined through an integer $N$, a positive scale parameter $\varepsilon_N < r_0$, an $\varepsilon_N$ dense set $\cG = \curly{G_1,\ldots, G_N}$ on $M$, and a set of functions $\curly{\Phi_1^*,\ldots, \Phi_N^*}$, where $\Phi_i^*:B_d(0,\varepsilon_N) \rightarrow \R^D$ are polynomials of order $\beta-1$ satisfying $\Phi^*_i(0) = G_i$. 
        The approximating surface $M^*$ is then defined as $M^* = \bigcup \Phi^*_i\del{B_d(0,\varepsilon_N)},$  and each $\Phi^*_i$ plays the role of a polynomial approximation of the local inverse of the projection on the tangent space $T_{G_i}M$.

Finally, we call an approximation optimal if it satisfies the following two conditions: (i) $\log N \lesssim d\log \varepsilon^{-1}_N$; (ii) for each $i\le N, k \le \beta$ and any $z\in B_d(0,\varepsilon_N)$ it holds that $\norm{\grad^k\Phi_i(z)-\grad^k\Phi^*_{i}(z)}_{op} \le L^*\varepsilon_N^{\beta-k}$ for some constant $L^* > 0$.
    \end{definition}
    \begin{example}    
    \label{example:taylor_support_estimation}
    The natural way to construct an optimal approximation is to fix $\varepsilon_N$, take as $\cG$ a minimal $\varepsilon_N$-dense set $\cG =\curly{G_1,\ldots, G_N}$, and choose $N = \abs{\cG}$. We recall that by Proposition~\ref{prop:covering_number_of_M}, $N \le (\varepsilon_N/2)^{-d} \Vol M$, so condition (i) is satisfied. Then, let $\Phi^*_i$ be the Taylor approximation of $\Phi_{G_i}$ of order $\beta-1$; by~Assumption~\ref{asmp:smooth_manifold} we have the uniform bound $\norm{\Phi_{G_i}}_{C^\beta\del{B_d(0,\varepsilon_N)}} \le L_M$ for all $i\le N$, so  assumption (ii) is also satisfied.
    \end{example}
    We use Example~\ref{example:taylor_support_estimation} to construct the neural network in (ii) of Theorem~\ref{thm:main_result} which leads to a suboptimal dependence in $D$. To improve upon it we use a different approximation based on \cite{aamari2018} which leads to the bound in  (i) of Theorem~\ref{thm:main_result}.

    Having $M^*$, we can approximate $\mu$ with a measure $\mu^*$ supported on $M^*$, by first representing $\mu = \sum \mu_i,$ where $\supp \mu_i \subset M_i$ and then taking $\mu^* = \sum \mu^*_i$, where $\mu^*_i = \Phi^*_i \circ \Phi^{-1}_i \mu_i$ -- is the pushforward of $\mu_i$ from $M_i$ to $M^*_i$. Finally, we define $s^*(t,x)$ as the score function corresponding to the forward process~\eqref{eq:forward_process} with initial distribution $\mu^*$.

 In the following section we construct the local polynomial approximations $\Phi_i^*$ of $\Phi_i$ which are keys in the construction of the neural network approximating $s(t,x)$, as explained in Section \ref{sec:score_approximation}. 

\subsection{Support Estimation}
    \label{sec:support_estimation}
    We start the discussion by answering the first question.  
     The approach described in~Example~\ref{example:taylor_support_estimation} has a major flaw: when $M$ is unknown the dimension of the search space consisting of unknown polynomials $\Phi^*_{G_i}:\R^d\rightarrow \R^D$ scales with $D$ and the number of parameters of the resulting estimator blows up.
    
    However, we have access to samples $\cY= \curly{y_1,\ldots, y_n}$ which already contain some information about $M$, and as has been shown in \cite{aamari2018} one can build an estimator of $M$ using $\cY$ with convergence rate independent of the extrinsic dimension $D$. We modify this argument to make the size of the search space also independent of $D$. 
    
    As a first step, we fix an arbitrary $N \le n$ large enough and take as $\cG$ the subset $\cG = \curly{G_i:= y_i, i \le N}$. Note that $\cG$ is a set of $N$ i.i.d. samples from the measure $\mu$. \cite{aamari17} has shown that with high probability for an appropriate choice of $\varepsilon_N$ the set $\cG$ is $\varepsilon_N$-dense.
    \begin{proposition}\cite[Lemma III.23]{aamari17}
        \label{prop:samples_are_dense}
        Let the measure $\mu$ satisfy Assumption~\ref{asmp:smooth_measure_on_manifold} and $\cG= \curly{G_1,\ldots, G_N}$ be $N$ i.i.d.\ samples from $\mu$. For any $\varepsilon_N \ge \del{\frac{\beta C_d}{p_{\min}}\frac{\log N}{N}}^{1/d}$ where $C_d > 0$ is large enough and depends only on $d$, for $N$ large enough so that $\varepsilon_N < r_0$, with probability $1-N^{-\frac{\beta}{d}}$, the set $\cG$ is $\varepsilon_N/2$-dense on $M$.
    \end{proposition}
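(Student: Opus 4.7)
The plan is to prove the proposition by a standard covering argument: build a deterministic net of points on $M$ whose balls cover the manifold, show that each ball receives at least one sample with high probability, and conclude by a union bound.

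First I would apply Proposition~\ref{prop:covering_number_of_M} with radius $\varepsilon_n/4$ to obtain an $\varepsilon_n/4$-dense and $\varepsilon_n/8$-sparse set $\cG = \{G_1,\ldots,G_N\}\subset M$ with $N\le (\varepsilon_n/8)^{-d}\Vol M$. Denseness of this deterministic net reduces the claim to showing that every ball $M\cap B_D(G_i,\varepsilon_n/4)$ contains at least one sample: indeed if this holds and $y\in M$ is arbitrary, then $y$ lies within $\varepsilon_n/4$ of some $G_i$, which itself lies within $\varepsilon_n/4$ of some $y_j\in\cY$, yielding $\|y-y_j\|\le\varepsilon_n/2$.

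Next I would lower bound the $\mu$-mass of each such ball. Because $\mu$ has density bounded below by $p_{\min}$ on $M$ (Assumption~\ref{asmp:smooth_measure_on_manifold} together with Assumption~\ref{asmp:log_complexity_of_measure}), and because a $\beta\ge 2$-smooth manifold of reach $\tau>\tau_{\min}$ satisfies the standard volume lower bound $\Vol(M\cap B_D(G_i,r))\gtrsim r^d$ for all $r\le r_0$ (this follows from the fact that the projection $\pi_{G_i}$ onto $T_{G_i}M$ is bi-Lipschitz on $M\cap B_D(G_i,r_0)$ with constants depending only on $\tau$ and $L_M$), we obtain
\[
\mu(M\cap B_D(G_i,\varepsilon_n/4))\ge c_d\,p_{\min}\,\varepsilon_n^d
\]
for some $c_d>0$ depending only on $d$. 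Hence the probability that a fixed ball is missed by all $n$ samples is at most $(1-c_d p_{\min}\varepsilon_n^d)^n\le\exp(-c_d p_{\min}\varepsilon_n^d n)$.

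Finally I would apply a union bound over the $N\le (\varepsilon_n/8)^{-d}\Vol M$ balls:
\[
\rP(\cY\text{ not }\varepsilon_n/2\text{-dense})\le N\exp(-c_d p_{\min}\varepsilon_n^d n)\lesssim \varepsilon_n^{-d}\Vol M\cdot n^{-c_d\beta C_d/d},
\]
after substituting $\varepsilon_n^d\ge \beta C_d\log n/(p_{\min}n)$. Choosing the absolute constant $C_d$ large enough (depending only on $d$ via $c_d$) makes the exponent of $n$ exceed $\beta/d+1$, so that the $\varepsilon_n^{-d}=O(n)$ prefactor is absorbed and the overall bound is $\le n^{-\beta/d}$ for all $n$ sufficiently large (so that $\varepsilon_n<r_0$ as required to apply the volume lower bound). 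The main technical point is the volume lower bound $\Vol(M\cap B_D(G_i,r))\gtrsim r^d$, which is the one place where the manifold geometry (reach, smoothness) enters quantitatively; everything else is a direct union bound.
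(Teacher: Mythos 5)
Your argument is correct: the paper does not prove this proposition itself but cites it from Aamari's thesis (Lemma III.23), and your covering-plus-union-bound argument is exactly the standard proof of that lemma, with the volume lower bound $\Vol(M\cap B_D(y,r))\gtrsim r^d$ supplied by the paper's own Proposition~\ref{prop:Manifold_Ball_Volume}. The only cosmetic slip is the exponent $n^{-c_d\beta C_d/d}$ (it should be $n^{-c_d\beta C_d}$ after substituting $\varepsilon_n^d\ge \beta C_d\log n/(p_{\min}n)$), which is immaterial since $C_d$ is chosen large enough anyway.
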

    Applying Proposition~\ref{prop:samples_are_dense} we represent the manifold as $M = \bigcup_{i=1}^N \Phi_{G_i}\del{B_d(0,\varepsilon_N)}$ and aim to approximate $\Phi_{G_i}\del{B_d(0,\varepsilon_N)}$. Further choosing $\varepsilon_N = \del{C_{d,\beta}\frac{p^2_{\max}}{p^3_{\min}}\frac{\log N}{N-1}}^{\frac{1}{d}} > \del{\frac{\beta C_d}{p_{\min}}\frac{\log N}{N}}^{1/d}$ where $C_{d,\beta}$ large enough, following \cite{aamari2018} we consider a local polynomial estimator of $\Phi_{G_i}$ of the following form. For $G_i \in \cG$ we define $\cV_i = \curly{G_j-G_i\big| \norm{G_j-G_i} \le \varepsilon_N}$, and consider a solution of the following minimization problem
    \£
    \label{eq:support_estimation_functional}
    P^*_i, \curly{a^*_{i, S}}_{2\le \abs{S} < \beta} \in \argmin_{P,a_S \le \varepsilon_n^{-1}} \sum_{v_j\in \cV_i}  \norm{v_j - PP^Tv_j - \sum_{S} a_S (P^Tv_j)^S}^2.
    \£
    Here $\argmin$ is taken over all linear isometric embeddings $P:\R^d \mapsto \R^D$ and all vectors $a_S \in \R^D$ that satisfy $\norm{a_S} \le \varepsilon_N^{-1}$, where $S$ is a multi-index $S=(S_1, \dots, S_d)$ such that $2\leq \sum S_k \leq \beta$. Then, in the neighborhood of $G_i$ the manifold is approximated as $M^*_i = \Phi_i^*\del{B_d(0,\varepsilon_N)}$ where
    \£
    \label{eq:estimator_Phi_star_based_on_samples}
    \Phi_i^*(z) = G_i + P^*_iz + \sum a^*_{i,S} z^S,
    \£
    where for $z\in \R^d$ and a multi-index $S$, $z^S = \prod_j z_j^{S_j}$. Note that the $\Phi_i^*$ are not used to construct the estimator, which depends merely on the projections $P_{\mathcal H_i}$ and are only used in the theoretical construction of an approximation of $s(t, \cdot)$ for $t\in [T_k, T_{k+1}]$ by an element of $\mathcal S_k$.

    Finally, we construct our support estimator as a union of polynomial surfaces $M^* = \bigcup M^*_i$.
    \begin{remark}
        \cite{aamari2018} optimize a slightly different functional, minimising over all projections $\pi$ on $d$-dimensional subspaces and symmetric tensors $T_j: \Im \pi \mapsto \R^D$ of order less than $\beta$. This is equivalent to  \eqref{eq:support_estimation_functional} since the operators $P_i$ parameterize the choice of the basis in $\Im \pi$ and $\pi = P_iP^T_i$. Finally, the choice of the basis establishes a one-to-one correspondence between tensors of order $k$ and homogeneous polynomials of degree $k$.
    \end{remark}
    
    The affine subspace $G_i+\Im P^*_i$ is a tangent space $T_{G_i}M^*$. Let $\pr_i$ and $\pr^*_i$ denote the projections on $T_{G_i}M-G_i$ and $T_{G_i}M^*-G_i$ respectively. \cite[Theorem 2]{aamari2018} states that the angle between subspaces $\angle \del{T_{G_i}M, T_{G_i}M^*} := \norm{\pr^*_i-\pr_i}_{op} \lesssim \varepsilon_N^{\beta-1}$.

      We restate this result as a bound on $P^*_i:\R^d \rightarrow \del{T_{G_i}M^* -G_i}$ and show that it is close to the linear map $P_i := \pr_i P^*_i \del{\del{P^*_i}^T\pr_i P^*_i}^{-1/2}$ corresponding to the embedding $P_i:\R^d\rightarrow \del{T_{G_i}M-G_i}$.  
      
     \begin{restatable}{proposition}{ApproximationOfTangentSpace}
     \label{prop:tangent_space_approximation}
     With probability $1-N^{-\beta/d}$ for all $i\le N$, the maps $P_i$ are well-defined linear isometric embeddings satisfying  $\norm{P^*_i-P_i}_{op} \lesssim\varepsilon_N^{\beta-1}$. 
    \end{restatable}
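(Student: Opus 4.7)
The plan is to deduce both well-definedness and the quantitative estimate from the tangent-space approximation result~\cite[Theorem 2]{aamari2018}. That theorem, applied at a fixed $y_i\in\cY$, yields that with probability at least $1-n^{-\beta/d-1}$ the linear projections $\bar{\pr}_i,\bar{\pr}_i^*$ onto the $d$-dimensional subspaces parallel to $T_{y_i}M$ and $T_{y_i}M^*$ satisfy $\|\bar{\pr}_i-\bar{\pr}_i^*\|_{op}\lesssim \varepsilon_n^{\beta-1}$; a union bound over the $n$ indices $i$ promotes this to a simultaneous statement with probability at least $1-n^{-\beta/d}$. I will work on this good event throughout.

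Unpacking the definition of $P_i$: since $\pr_i$ is affine and fixes $y_i$, one has $\pr_i(y_i+v)-y_i=\bar{\pr}_i v$, and so $\pr_i(y_i+P_i^*e_j)-y_i=\bar{\pr}_i P_i^* e_j$. Because $P_i^* e_j$ belongs to the linear subspace $T_{y_i}M^*-y_i$ it is fixed by $\bar{\pr}_i^*$, giving
\[
\bar{\pr}_i P_i^* e_j - P_i^* e_j \;=\; (\bar{\pr}_i-\bar{\pr}_i^*)P_i^* e_j,
\]
of norm at most $\|\bar{\pr}_i-\bar{\pr}_i^*\|_{op}\cdot\|P_i^* e_j\|\lesssim \varepsilon_n^{\beta-1}$. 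Consequently $\|\bar{\pr}_i P_i^* e_j\|=1+O(\varepsilon_n^{\beta-1})$, bounded away from zero for $n$ large, which gives well-definedness of $P_i e_j$. Dividing by this nearly unit norm contributes only a further $O(\varepsilon_n^{\beta-1})$ perturbation, so $\|P_i e_j-P_i^*e_j\|\lesssim \varepsilon_n^{\beta-1}$ for every $j$, and summing over the orthonormal basis $\{e_j\}$ yields $\|P_i-P_i^*\|_{op}\lesssim \varepsilon_n^{\beta-1}$ (absorbing the benign $\sqrt{d}$ factor into the implicit constant).

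It remains to justify that $P_i$ is a genuine linear isometric embedding into $T_{y_i}M-y_i$. Expanding $\langle P_i e_j, P_i e_k\rangle=\langle P_i^* e_j, P_i^* e_k\rangle+O(\varepsilon_n^{\beta-1})=\delta_{jk}+O(\varepsilon_n^{\beta-1})$ shows that the Gram matrix of $\{P_i e_j\}$ differs from $\Id$ in operator norm by $O(\varepsilon_n^{\beta-1})$; replacing $P_i$ by its orthogonal factor in the polar decomposition (equivalently, orthonormalizing the family $\{P_i e_j\}$) produces an exact isometric embedding while preserving the stated rate. The substantive obstacle is the invocation of~\cite[Theorem 2]{aamari2018} itself: its proof rests on a delicate covariance-concentration argument for the local polynomial estimator~\eqref{eq:support_estimation_functional}, which is precisely why the lower bound $\varepsilon_n\gtrsim (C_{d,\beta}p_{\max}^2 p_{\min}^{-3}\log n/(n-1))^{1/d}$ is imposed before the statement. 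Everything downstream of that input is pure linear-algebraic perturbation.
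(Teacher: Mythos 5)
Your proof follows essentially the same route as the paper's: invoke \textcite[Theorem 2]{aamari2018} for $\norm{\pr_i-\pr^*_i}_{op}\lesssim \varepsilon_n^{\beta-1}$ and then run a linear-algebraic perturbation argument on the projected frame $\curly{\pr_i P^*_i e_j}$, normalizing to define $P_i$. If anything you are slightly more careful than the paper, which asserts that the projected vectors are exactly orthogonal (they are only orthogonal up to $O(\varepsilon_n^{2(\beta-1)})$); your Gram-matrix and polar-decomposition remark correctly addresses this point without changing the rate.
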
   
    Proposition \ref{prop:tangent_space_approximation} is proved in Appendix~C.1.
    
    Using the map $P_i$ we identify $\R^d$ with $T_{G_i}M$ and set $\Phi_i= \Phi_{G_i}:B_d\del{0,r_0}\subset\R^d\rightarrow M$ as an inverse of a projection map $P^T_i\circ\pr_i:M \mapsto \R^d$, in particular, this means that $\Phi_{G_i}(z) = G_i + P_iz + O\del{\norm{z}^2}$. 

    The following statement is a generalization of \cite{divol2022measure},~Lemma A.2 and is proved in Appendix~C.1.
    \begin{restatable}{lemma}{BoundOnDerivativesOfApproximation}
    \label{lemma:approximation_of_Phi}
        There is a large enough constant $L^*$, that does not depend on $D$, such that for any  $\varepsilon_ N < r_0/4$ with probability $1-N^{-\frac{\beta}{d}}$ for all $y_i$, all $ 0 \le k < \beta$ and $z\in B_d\del{0,8\varepsilon_N}$
        \[
        \norm{\grad^k\Phi_i(z)-\grad^k\Phi^*_i(z)}_{op} \le L^*\varepsilon_N^{\beta-k}.
        \]
        In particular $\norm{y-y^*} \lesssim \varepsilon_N^{\beta}$ and ${\angle \del{T_{y^*}M^*,T_{y}M} \lesssim \varepsilon_N^{\beta-1}}$ for ${y = \Phi_i(z)\in M, y^* = \Phi^*_i(z)\in M^*}$.
    \end{restatable}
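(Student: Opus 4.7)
The plan is to reduce this $C^k$ estimate to a bound on each coefficient of the polynomial approximation. Writing the order-$(\beta{-}1)$ Taylor expansion of the true chart at $0$,
\[
\Phi_i(z) = y_i + P_i z + \sum_{2 \le |S| < \beta} \tilde a_{i,S}\, z^S + R_i(z), \qquad \|\nabla^k R_i(z)\|_{op} \lesssim \|z\|^{\beta-k},
\]
(valid on $B_d(0,r_0)$ by the $\beta$-smoothness granted by Assumption~\ref{asmp:smooth_manifold}), and comparing to $\Phi^*_i(z) = y_i + P^*_i z + \sum a^*_{i,S}\, z^S$ from~\eqref{eq:estimator_Phi_star_based_on_samples}, the difference splits as
\[
\Phi_i(z) - \Phi^*_i(z) = (P_i - P^*_i)z + \sum_{2\le |S| < \beta}(\tilde a_{i,S} - a^*_{i,S})\,z^S + R_i(z).
\]
The linear term contributes $\|\nabla^k((P_i-P^*_i)z)\|_{op} \lesssim \varepsilon_n^{\beta-1}\varepsilon_n^{1-k} = \varepsilon_n^{\beta-k}$ for $k\le 1$ by Proposition~\ref{prop:tangent_space_approximation} and vanishes for $k\ge 2$; the remainder contributes $\|\nabla^k R_i(z)\|_{op} \lesssim \varepsilon_n^{\beta-k}$ by smoothness. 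So everything reduces to the coefficient bound
\[
\|a^*_{i,S} - \tilde a_{i,S}\| \lesssim \varepsilon_n^{\beta - |S|}, \quad 2 \le |S| < \beta,
\]
holding with probability $1 - n^{-\beta/d}$ uniformly in $i \le n$.

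The strategy I would pursue for the coefficient bound is a stability argument for the local polynomial least-squares problem~\eqref{eq:support_estimation_functional}. Plugging the truth $(P_i, \{\tilde a_{i,S}\})$ into the objective gives value at most $\sum_{v_j \in \cV_i}\|R_i(P_i^T v_j)\|^2 \lesssim |\cV_i|\,\varepsilon_n^{2\beta}$, so by optimality the same bound holds at $(P^*_i, \{a^*_{i,S}\})$. Using Proposition~\ref{prop:tangent_space_approximation} to absorb the contribution of the projection discrepancy $P^*_i(P^*_i)^T - P_iP_i^T$ (which costs at most $\varepsilon_n^{\beta-1}\cdot \varepsilon_n$ per sample and matches the target up to constants), one obtains
\[
\sum_{v_j \in \cV_i} \Bigl\|\sum_{2\le |S|< \beta}(a^*_{i,S}-\tilde a_{i,S})(P_i^T v_j)^S\Bigr\|^2 \lesssim |\cV_i|\,\varepsilon_n^{2\beta}.
\]
After rescaling $z_j := P_i^T v_j / \varepsilon_n$, the points sit in $B_d(0,1)$ and, by Proposition~\ref{prop:samples_are_dense} together with Assumption~\ref{asmp:smooth_measure_on_manifold}, are $O(1)$-dense there; hence the Gram matrix of monomials $\{z^S\}_{|S|<\beta}$ is uniformly well-conditioned with condition number depending only on $d,\beta,p_{\min},p_{\max}$. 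Inverting the resulting normal equations, and taking a union bound over $i \le n$ (absorbed into the $\log n$ hidden in $\varepsilon_n$), gives the required coefficient bound.

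Once the coefficient bound is in hand, the conclusion is mechanical: for any $|S| \ge k$, $\nabla^k(z^S)$ is a polynomial of degree $|S|-k$, so on $B_d(0, 8\varepsilon_n)$
\[
\|(a^*_{i,S}-\tilde a_{i,S})\nabla^k(z^S)\|_{op} \lesssim \varepsilon_n^{\beta-|S|}\cdot \varepsilon_n^{|S|-k} = \varepsilon_n^{\beta-k};
\]
summing over the finitely many multi-indices with $2 \le |S| < \beta$ and combining with the linear and remainder pieces yields the claim $\|\nabla^k\Phi_i(z)-\nabla^k\Phi^*_i(z)\|_{op}\lesssim \varepsilon_n^{\beta-k}$. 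The ``in particular'' assertions are then immediate: taking $k=0$ at $z\in B_d(0,\varepsilon_n)$ gives $\|y-y^*\|\lesssim \varepsilon_n^\beta$, while taking $k=1$ gives $\|\nabla\Phi_i(z)-\nabla\Phi^*_i(z)\|_{op}\lesssim \varepsilon_n^{\beta-1}$; since the tangent spaces $T_yM$ and $T_{y^*}M^*$ are the images of these two differentials, and since $\Phi_i,\Phi^*_i$ are close-to-isometric embeddings (their differentials have singular values close to $1$), the difference of the orthogonal projectors onto these tangent planes is also $O(\varepsilon_n^{\beta-1})$ in operator norm, which is equivalent to an angle bound of the same order. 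The main obstacle is the well-conditioning step in the middle paragraph: this is what genuinely uses the density of $\mu$ and is responsible for keeping the whole estimate dimension-free in $D$.
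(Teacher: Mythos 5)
Your proof is essentially correct, but it takes a genuinely different route from the paper's. The paper does not re-derive the coefficient comparison at all: it imports the bound $\norm{V^*_j\circ\pr^*-V_j\circ\pr}_{op}\lesssim\varepsilon_n^{\beta-j}$ directly from \textcite[Lemma A.2(iii)]{divol2022measure} (itself resting on \textcite[Theorem 2]{aamari2018}), and then spends its entire effort on the reparametrization bookkeeping --- comparing $\Phi\circ P^TP^*$ with $\Phi^*$ via the imported bound, and separately comparing $\Phi\circ P^TP^*$ with $\Phi$ using $\norm{P^TP^*-\Id_d}_{op}\lesssim\varepsilon_n^{\beta-1}$ and the chain rule. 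You instead re-prove the coefficient bound from scratch by a least-squares stability argument (feasibility of the truth gives objective value $\lesssim\abs{\cV_i}\varepsilon_n^{2\beta}$, optimality transfers it to the minimizer, and inverting the rescaled monomial Gram matrix converts pointwise smallness into coefficient smallness), after which the conversion to $C^k$ bounds is mechanical. Your version is self-contained where the paper's is not, at the cost of duplicating the cited literature; the paper's version hides the hard analytic step behind the citation but is more careful about the coordinate mismatch between the two parametrizations. One imprecision worth fixing in your middle paragraph: $O(1)$-denseness of the rescaled points in $B_d(0,1)$ is \emph{not} by itself enough for the Gram matrix of degree-$(\beta-1)$ monomials to be well-conditioned (e.g.\ the zero set of $z_1^2-1/4$ is $1/2$-dense in the unit ball), so you need the denseness constant to be small relative to $d$ and $\beta$; this is exactly what the factor $C_{d,\beta}$ in the definition of $\varepsilon_n$ buys you, since the sample is dense at scale $\sim(\log n/(np_{\min}))^{1/d}$, which is an arbitrarily small multiple of $\varepsilon_n$ for $C_{d,\beta}$ large. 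With that adjustment the argument closes.
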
 

    As a consequence,  combined with Proposition~\ref{prop:samples_are_dense},  the Hausdorff distance 
    \[
    d_H\del{M,M^*} = \max\del{\sup_{y\in M} \dist(y,M^*), \sup_{y^*\in M^*} \dist(y^*,M)} \lesssim \varepsilon_N^\beta.
    \]

    Note that $\varepsilon_N = \del{C_{d,\beta}\frac{p^2_{\max}}{p^3_{\min}}\frac{\log N}{N-1}}^{\frac{1}{d}}$, implying that $2d\log \varepsilon^{-1}_N \ge \log N$ if $N$ large enough. Together with Lemma~\ref{lemma:approximation_of_Phi} this shows that the constructed approximation satisfies Definition~\ref{def:smooth_approximation}.
    
    Let us take a closer look  at the solution of the optimization problem \eqref{eq:support_estimation_functional}. When $\dim \spn \cV_i \le d$ $P_iP_i^Tv = v$ for all $v \in \cV_i$, while when $\dim \spn \cV_i > d$ we claim that we can reduce the problem to a low-dimensional $\spn \cV_i$. The following propositions justify this.  
    \begin{restatable}{proposition}{SpaceOfSolutionsManifoldOptimization}
\label{prop:support_estimator_solution_properties_1}
            If $\dim \spn \cV_i > d$, then there is a solution $P_i^*, a^*_{i, S}$ of \eqref{eq:support_estimation_functional} such that $\Im P^*_i \subset \spn \cV_i$ and $a^*_{i, S} \in \Ker P_i^* \cap \spn \cV_i$.
        \end{restatable}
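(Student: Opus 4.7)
Let $W := \spn \cV_i$. My plan is to show that problem~\eqref{eq:support_estimation_functional} admits a minimizer in the restricted class $\{(P, a_S) : \Im P \subset W,\ a_S \in W \cap (\Im P)^\perp\}$, which is nonempty by the hypothesis $\dim W > d$ and compact under the norm bound on $a_S$. This would prove both parts of the proposition, reading "$\Ker P_i$" in the statement as $\Ker \hat P_i^T = (\Im \hat P_i)^\perp$ (the natural orthogonal complement, since $\Ker \hat P_i = \{0\}$ for an isometric $\hat P_i$ is too strong an interpretation). The strategy is to pass from an arbitrary feasible $(P, a_S)$ to a restricted candidate with no larger objective.

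First, I would use $v_j \in W$ for every $v_j \in \cV_i$ to note that $P^T v_j = (\pi_W P)^T v_j$, and split the residual $r_j = v_j - PP^T v_j - \sum_S a_S (P^T v_j)^S$ orthogonally across $W \oplus W^\perp$. Summing over $j$, the objective decomposes as $A + B$ where
$A$ depends only on $\pi_W P$ and $\pi_W(a_S)$ (the in-$W$ reconstruction error) and $B \ge 0$ collects the out-of-$W$ norm of the reconstruction. Since the objective is at least $A$, it suffices to construct a restricted candidate of objective $\le A$.

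Next, I would polar-decompose $\pi_W P = \hat P R$ with $\hat P : \R^d \to W$ an isometric embedding (extended into $W$ using $\dim W > d$ if $\pi_W P$ is rank-deficient) and $R := ((\pi_W P)^T(\pi_W P))^{1/2}$ symmetric PSD with $R^2 \preceq I_d$. Setting $\hat z_j := \hat P^T v_j$, we get $(\pi_W P)^T v_j = R \hat z_j$ and $\pi_W PP^T v_j = \hat P R^2 \hat z_j$. Decompose $\pi_W(a_S) = \hat P \alpha_S + \tilde a_S^{\perp}$ with $\alpha_S := \hat P^T \pi_W(a_S) \in \R^d$ and $\tilde a_S^{\perp} \in (\Im \hat P)^\perp \cap W$. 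Define $\hat a_{S'} \in (\Im \hat P)^\perp \cap W$ by the polynomial identity $\sum_{S'} \hat a_{S'} \hat z^{S'} = \sum_S \tilde a_S^{\perp}(R \hat z)^S$, which is uniquely solvable term by term because $(R \hat z)^S$ expands by the multinomial theorem as a polynomial of degree exactly $|S|$ in $\hat z$; orthogonality $\hat a_{S'} \perp \Im \hat P$ is preserved since each $\tilde a_S^{\perp}$ is. Then $\hat P^T(v_j - \sum_{S'} \hat a_{S'} \hat z_j^{S'}) = \hat z_j$, and Pythagoras gives
\[
\text{(restricted objective)} = \sum_j \norm{v_j - \hat P \hat z_j - \sum_{S'} \hat a_{S'} \hat z_j^{S'}}^2 = A - \sum_j \norm{\hat z_j - \beta_j}^2 \le A,
\]
where $\beta_j := R^2 \hat z_j + \sum_S \alpha_S(R \hat z_j)^S \in \R^d$.

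The main obstacle is controlling the norm $\|\hat a_{S'}\| \le \varepsilon_n^{-1}$ for the constructed coefficients. Expanding $(R \hat z)^S$ via the multinomial theorem and using $\|R\|_{\mathrm{op}} \le 1$, each $\hat a_{S'}$ is a linear combination of the $\tilde a_S^{\perp}$'s with coefficients bounded polynomially in $d$ and $\beta$ (independent of $D$). This produces a blow-up by a dimension-dependent constant, which is harmlessly absorbed into an adjusted numerical constant in the constraint $\|a_S\| \le \varepsilon_n^{-1}$ without affecting Proposition~\ref{prop:tangent_space_approximation} or Lemma~\ref{lemma:approximation_of_Phi}. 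With this adjustment, the restricted candidate is feasible and has objective $\le A \le$ original, so the restricted minimum equals the original minimum and is attained with the stated structure.
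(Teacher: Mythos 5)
Your proof is correct, but it takes a genuinely different route from the paper's. The paper argues by an exchange/reflection step: assuming $\Im P \not\subset \spn\cV_i$, it picks unit vectors $u_P \in \Im P$ orthogonal to $\spn\cV_i$ and $u_{\cV} \in \spn\cV_i$ orthogonal to $\Im P$, builds a new orthonormal frame $f_j = e_j + (u_{\cV}-u_P)\innerproduct{u_P}{e_j}$, re-expands the polynomial part in the new frame, and shows via a two-block orthogonal decomposition that the objective does not increase --- removing one offending direction at a time (iteration is left implicit). You instead do a one-shot construction: split the residual across $W \oplus W^\perp$ to lower-bound the objective by its in-$W$ part $A$, polar-decompose $\pi_W P = \hat P R$, and exhibit a restricted candidate whose objective equals $A - \sum_j\norm{\hat z_j - \beta_j}^2 \le A$. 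Your route handles all bad directions simultaneously and makes the improvement quantitatively explicit (the $\sum_j\norm{\hat z_j-\beta_j}^2$ term), at the cost of the polar-decomposition machinery; the paper's is more elementary per step but needs iteration to conclude. Both arguments must re-expand $\sum_S a_S(\cdot)^S$ under a linear substitution with operator norm at most a constant, and hence both can inflate the coefficients beyond the constraint $\norm{a_S}\le\varepsilon_n^{-1}$ by a factor depending on $d$ and $\beta$ only; the paper passes over this silently, whereas you flag it and absorb it into the constant of the constraint, which is the right fix and does not affect the downstream use in Proposition~\ref{prop:tangent_space_approximation} or Lemma~\ref{lemma:approximation_of_Phi}. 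Your reading of $\Ker P_i$ as $(\Im \hat P_i)^\perp$ is also the correct one and matches what the paper's own proof actually establishes ($a_S \perp \Im P$).
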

    \begin{restatable}{proposition}{NumberOfPointsInVi}
    \label{prop:support_estimator_solution_properties_2}
        There is a large enough constant $C_{\dim}$ that does not depend on $D$ such that with probability $1-N^{-\frac{\beta}{d}}$ for all $y\in M$ the size of the set 
        \[
        \cV_y = \curly{G_i: \norm{G_i-y} \le \varepsilon_N}
        \]
        is bounded by $
        \abs{\cV_y} \le C_{\dim} \log N.$
        In particular for all $G_i\in \cG$ holds
        $
        \abs{\cV_i} \le C_{\dim} \log N.
        $ 
    \end{restatable}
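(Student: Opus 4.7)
} The strategy is a standard net-plus-Chernoff argument, exploiting that on a $d$-dimensional manifold with bounded density, the expected number of samples in any ball of radius $\varepsilon_n$ is already of order $\log n$. To pass from a single point to a uniform statement over $y\in M$, I will first discretize $M$ at scale $\varepsilon_n$, control each center of the net by a tail inequality, and then absorb the neighborhood of each grid point by enlarging the query radius by a factor of $2$.

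\textbf{Step 1 (Net on $M$ and volume of small balls).} Applying Proposition~\ref{prop:covering_number_of_M} with parameter $\varepsilon_n$ (valid since $\varepsilon_n < r_0$ for $n$ large), I get an $\varepsilon_n$-dense set $\{z_1,\ldots,z_N\}\subset M$ with $N\lesssim \varepsilon_n^{-d}\Vol M \lesssim n/\log n$. Using that $M$ is $\beta$-smooth with reach bounded from below, the local chart $\Phi_{z_j}$ is bi-Lipschitz on $B_{T_{z_j}M}(0,r_0)$, so for $R=2\varepsilon_n$ one has $\Vol(M\cap B_D(z_j,R))\lesssim R^d\lesssim \varepsilon_n^d$. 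Combined with the density bound $p\le e^{C_{\log}}$ from Assumption~\ref{asmp:log_complexity_of_measure}, this gives $\mu(M\cap B_D(z_j,R))\le p_{\max}\cdot C\varepsilon_n^d$ for some $C$ depending only on the manifold constants.

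\textbf{Step 2 (Tail bound at each center).} For a fixed $z_j$, $|\{i:\|y_i-z_j\|\le 2\varepsilon_n\}|$ is $\mathrm{Binomial}(n,q_j)$ with $q_j\le Cp_{\max}\varepsilon_n^d$. Since $\varepsilon_n^d\simeq \log n/n$, we have $nq_j\lesssim \log n$. The Chernoff bound gives
\[
\Pr\bigl[|\{i:\|y_i-z_j\|\le 2\varepsilon_n\}|\ge k\bigr]\;\le\; \Bigl(\frac{e\, n q_j}{k}\Bigr)^{k}
\]
for $k\ge enq_j$. Taking $k=C_{\dim}\log n$ with $C_{\dim}$ large enough compared to $\beta/d$, $d$ and $p_{\max}$, this probability is at most $n^{-(\beta/d+d+1)}$.

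\textbf{Step 3 (Union bound and passage to arbitrary $y$).} Union-bounding over $N\lesssim n$ centers yields that with probability at least $1-n^{-\beta/d}$, simultaneously $|\{i:\|y_i-z_j\|\le 2\varepsilon_n\}|\le C_{\dim}\log n$ for all $j\le N$. Now for any $y\in M$, pick the nearest net point $z_j$ so that $\|y-z_j\|\le \varepsilon_n$. By the triangle inequality $\cV_y=\{y_i:\|y_i-y\|\le \varepsilon_n\}\subset \{y_i:\|y_i-z_j\|\le 2\varepsilon_n\}$, hence $|\cV_y|\le C_{\dim}\log n$. Specializing $y=y_i$ (which lies in $M$) gives the second assertion.

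\textbf{Anticipated difficulty.} The argument is essentially routine; the only point requiring care is ensuring that the volume constant in $\Vol(M\cap B_D(z_j,2\varepsilon_n))\lesssim \varepsilon_n^d$ depends only on $\tau_{\min}$, $L_M$ and $d$ (and not on $D$), which is precisely what the reach lower bound and Assumption~\ref{asmp:smooth_manifold} guarantee via the bi-Lipschitz property of $\Phi_{z_j}$. Once that geometric input is in place, balancing $C_{\dim}$ against the exponent $\beta/d$ in the failure probability is direct.
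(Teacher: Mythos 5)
Your proof is correct and follows essentially the same route as the paper: an $\varepsilon_n$-net on $M$ of cardinality $\lesssim n/\log n$, the inclusion $\cV_y\subset\{y_i:\|y_i-z_j\|\le 2\varepsilon_n\}$ for the nearest net point, a multiplicative Chernoff bound on the binomial count in each enlarged ball (whose mean is $\simeq\log n$ thanks to the $D$-independent volume bound from Proposition~\ref{prop:Manifold_Ball_Volume} and the density upper bound), and a union bound over the net. No gaps.
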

    The above two propositions are proved in Appendix~C.1.

    Together these statements show that with probability $1-N^{-\beta/d}$ there is a piece-wise polynomial surface $M^* = \bigcup_{i=1}^N M^*_i$ approximating $M$ with error $\varepsilon_N^\beta \simeq \del{\log N / N}^{\beta/d}$ such that each polynomial piece $M^*_i$ lies in affine subspace $G_i + \spn \cV_i$ where $\cV_i = \curly{G_j-G_i\big| \norm{G_j-G_i} \le \varepsilon_N}$ which is at most $C_{\dim}\log N$ dimensional.   

    Finally, we recall that by taking a partition $\mu = \sum_{i=1}^N\mu_i$ of the measure $\mu$ subordinated to the cover $M = \bigcup_{i=1}^N M_i$ we construct the measure $\mu^* = \mu^*_i$ where $\mu^*_i$ is  the pushforward of $\mu_i$ under $\Phi^*_i \circ \Phi^{-1}_i$. Moreover,  the functions $\Phi^*_i \circ \Phi^{-1}_i$ provide  transport maps which by Lemma~\ref{lemma:approximation_of_Phi} guarantees that $W_2(\mu, \mu^*) \le L^*\varepsilon^\beta_N$. 
\subsection{Manifold Approximation and the Score Function}
\label{sec:manifold_approximation_and_the_score}
    In this section, we answer the second and third questions raised at the beginning of~Section~\ref{sec:manifold_approximation}. 

    To control the score matching error between $s^*(t,x)$ and $s(t,x)$, we use a general result bounding the score-matching loss between two compactly supported distributions.

        \begin{restatable}{proposition}{wdvssml}
    \label{prop: wd_vs_sml}
    Let $P, Q$ be arbitrary compactly supported measures s.t.\ $W_2(P,Q) < \infty$, where $W_2$ is the quadratic Wasserstein distance. 

    Letting $X\sim P$, $Y\sim Q$, 
    we write $X_t = c_t X +\sigma_t Z_D$, and $Y_t= c_t Y + \sigma_t Z_D$, where $Z_D\sim\cN(0, Id)$ be two standard Ornstein-Uhlenbeck processes initialised from $X,Y$ respectively. These random variables are absolutely continuous w.r.t.\ the Lebesgue measure with densities $p(t,x), q(t,x)$ respectively.  

    If we consider $\grad \log q(t,x)$ as an estimator of $s(t,x) = \grad \log p(t,x)$ on the interval $t\in [t_{\min},t_{\max}]$, then the corresponding score matching loss is bounded by
    \[
    \int_{t_{\min}}^{t_{\max}}\int_{\R^d} \|\grad\log p(t, x)-\grad \log q(t, x)\|^2 p(t, x)dxdt \le W^2_2(P,Q)\frac{c^2_{t_{\min}}}{4\sigma^2_{t_{\min}}}.
    \]
\end{restatable}
\begin{proof}
    We believe that this statement can be traced back at least to~\cite{villani2008optimal}, however, we could not find it in this form. Therefore, we provide a proof in Appendix~C.2.
\end{proof}
\begin{remark}
    The bound is tight, to see this take $P = \delta_0$, and $Q = \delta_{x}$, for $\norm{x} = \varepsilon$.
\end{remark}

 In Section~\ref{sec:support_estimation} we constructed a measure $\mu^*$ satisfying $W_2(\mu,\mu^*) \le L^*\varepsilon^\beta_N$, so if we denote as $s^*$ the score function corresponding to $\mu^*$ by~Proposition~\ref{prop: wd_vs_sml}
\[
\int_{t_{\min}}^{t_{\max}}\int_{\R^d} \|s(t,x)-s^*(t,x)\|^2 p(t, x)dxdt \le L^*\varepsilon_N^{2\beta}\frac{c^2_{t_{\min}}}{4\sigma^2_{t_{\min}}}.
\]

Finally, we answer the last question and discuss how the substitution of the true score by the approximation $s^*(t,x)$ affects the regularity bounds on the score function derived in the Section~\ref{sec:high_probability_bounds}. Let us present the bound on the correlation between Gaussian noise $Z_D$ and the correction brought by the substitution of $M$ by $M^*$

\begin{restatable}{proposition}{TaylorGP}    
    \label{prop:TaylorGP}
    Let $M^*$ be a piece-wise polynomial approximation satisfying Definition~\ref{def:smooth_approximation}. Then for all positive $\delta < 1$ with probability at least $1-\delta$ for all $i\le N$ and $y\in M_i$
    \[
    \abs{\innerproduct{Z_D}{y- \Phi_i^*\circ \Phi_i^{-1}(y)}} \le L^*\varepsilon_N^{\beta}\sqrt{2d\log \varepsilon^{-1}_N + 2\log 2\delta^{-1}}.
    \]
\end{restatable}
The proof is provided in Appendix~C.2.

All the bounds in Section~\ref{sec:high_probability_bounds} follow from Proposition~\ref{prop:maximum_of_Gaussian_Process_on_Manifold}, therefore the statements presented in the previous section still hold if $t\ge \varepsilon^{2\beta}_N$ in the case of $M^*$ too, albeit with slightly different constants that are still independent of $D$. We present these results in Appendix~C. 

\section{Wasserstein Convergence}
\label{sec:W_main}

In this section, we analyze the convergence under the $W_1$ metric of the diffusion model that uses the estimator obtained in Theorem~\ref{thm:score_approximation_1} and prove~Theorem~\ref{cor:main_W_result}. We prove a more general statement that does not depend on the specific construction of our estimator and works for any estimator satisfying the assumptions below.

\begin{Assumption}
\label{asmp:score}
There exist constants $\varepsilon_{\mathrm{denoiser}}>0$, $C_{\mathrm{tail}}\ge 1$ such that the
score function estimate $\hat s$ verifies:
\begin{enumerate}
    \item[(i)] $\hat s$ has a continuous denoising loss bounded by $\varepsilon^2_{\mathrm{denoiser}}$, i.e.
    \£
    \label{eq:continuous_denoiser_loss}
    \int_{\underline T}^{\overline T}\sigma_t^2
    \expectation\|\hat s(t,X_t)-s(t,X_t)\|^2\,dt
    \le \varepsilon_{\mathrm{denoiser}}^2,
    \£
    \item[(ii)] for every $t\in[\underline T,\overline T]$ and some $\eta >0$ $\hat s$ satisfies the following high probability bound
    \£
    \label{eq:tail_bound_W1}
    \rP\del{
    \|\sigma_t \hat s(t,X_t) + Z_D\|
    >
    C_{\mathrm{tail}}
    \log \underline T^{-1}
    \sqrt{C_W+\log \eta^{-1}}
    }
    \le \eta,
    \£
    for some $C_{\mathrm{tail}}, C_W > 0$
\end{enumerate}
\end{Assumption}
Under this assumption, we show the following.
\begin{theorem}
    \label{thm:W_1_convergence_theorem}
    Assume that measure $\mu$ satisfies the same assumptions as in Theorem~\ref{thm:score_approximation_1}, let $s$ be the score function corresponding to $\mu$ and $\hat{s}$ be its approximation satisfying Assumption~\ref{asmp:score} with $\eta= O\del{\varepsilon^{-4}_{\mathrm{denoiser}}\underline{T}^{-1}\del{\log \underline{T} + \overline{T}}}$.
    
    Then, there exists a discretization scheme requiring $L= O\del{\varepsilon^{-2}_{\mathrm{denoiser}}\del{\log \underline{T} + \overline{T}}}$ steps to generate one sample from a measure $\hat{\mu}$ which satisfies
\£
\label{eq:thm W_1 convergence}
&W_1(\mu, \hat{\mu}) \lesssim
\sqrt{D}e^{-\overline{T}} + 
    \sqrt{\underline{T}}\log \underline{T}^{-1}\sqrt{C_W + \log (\varepsilon^{-1}_{\mathrm{denoiser}} \cdot \underline{T}^{-1}\cdot \overline{T})}
    \\
    &\quad
    +
    \varepsilon_{\mathrm{denoiser}}\cdot \log (\overline{T} \cdot \underline{T}^{-1})
    \log \underline{T}^{-1}\sqrt{C_W + \log (\varepsilon^{-1}_{\mathrm{denoiser}} \cdot \underline{T}^{-1}\cdot \overline{T})}\sqrt{(d\log \underline{T}^{-1} + C_{W})(\log {\underline T^{-1}} + \overline T) } \nonumber
\£
\end{theorem}
In the rest of the section, we prove Theorem~\ref{thm:W_1_convergence_theorem}. In Section~\ref{sec:randomized_mesh} we construct the discretization mesh, in Section~\ref{sec:modified_discretization_scheme} we introduce the discretization scheme, and in Section~\ref{sec:w1_convergence_proof} we prove Theorem~\ref{thm:W_1_convergence_theorem} for this specific discretization scheme. 

\subsection{Randomized Discretization Mesh}
\label{sec:randomized_mesh}
In this section, we construct a randomized discretization mesh that inherits certain useful properties of those used in~\cite{oko2023}~and~\cite{benton2024nearly}. The randomization is needed to ensure that the denoising loss evaluated at the grid points can be controlled by its continuous version~\eqref{eq:continuous_denoiser_loss}.

Following~\cite{benton2024nearly} we introduce $\kappa < 1/4$, which we fix later. We consider the same grid as in Section \ref{sec:score_approximation} with 
$\overline T := T_K > T_{K-1} >\cdots > T_0 :=\underline T$, a crude dyadic partition
defined by $T_{k+1}=2T_k$ and without loss of generality we assume that $ \log (\overline T/\underline T) \in \log_2\mathbb N$. 

\begin{figure}
    \centering
    \includegraphics[width=1.0\linewidth]{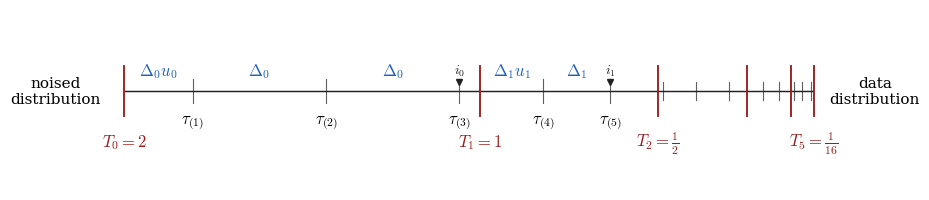}
    \caption{Illustration of constructed partition. The interval $[\underline{T},\overline{T}]$ is first split into dyadic blocks $[T_{k}, T_{k+1}]$ (red), and each block is split into smaller blocks of size $\Delta_k:=(\kappa/4)\min(1,T_{k})$ with a random indent $u_k\Delta_k$, where $u_k \sim U[0,1]$.}
    \label{fig:placeholder}
\end{figure}
For each block $[T_{k},T_{k+1}]$, let
$\Delta_k:=(\kappa/4)\min(1,T_{k})$, sample $u_k\sim U[0,1]$
independently and define
\[
\tau_{k,1}:=T_{k+1}-\Delta_k u_k,
\qquad
\tau_{k,\ell}:=\tau_{k,1}-(\ell-1)\Delta_k,\qquad \ell\ge 2.
\]
Letting $L_k:=\max\curly{\ell\ge 1:\tau_{k,\ell}>T_{k}}$, the points $\curly{\tau_{k,\ell}}_{k\le K, \ell \le L_k}$ will form the discretization grid that we will work with. We write 
\[
\overline T > \tau_{(1)} > \cdots > \tau_{(L)} > \underline T,
\] for
the grid points listed in decreasing order, where 
$L:=\sum_{k=0}^{K-1} L_k$ is their overall 
number. 
We also define $i_k := \curly{i: \tau_{(i)} = \tau_{K-k,1}}, k\le K$, the index of the largest grid point contained in the interval $[T_{K-k}, T_{K-k+1}]$ and  $i_{K} := L$.

We define $\gamma_i = \tau_{(i)} - \tau_{(i+1)}$ and introduce the score matching error corresponding to the constructed grid 
\[
\varepsilon^2_{denoiser}(u):=\sum_{i=1}^{L-1}\gamma_i \sigma^2_{\tau_{(i)}} \expectation\norm{\hat{s}(\tau_{(i)}, X_{\tau_{(i)}}) -s(\tau_{(i)}, X_{\tau_{(i)}})}^2
\]

The following proposition summarizes the properties of the constructed partition:

\begin{proposition}
\label{prop:discretization_setup}
Let $\{\tau_{(1)}, \dots, \tau_{(L)}\}$  be the random grid constructed above. Then we have the following:
\begin{itemize}
    \item[(i)] the condition introduced in~\cite[Page 5]{benton2024nearly} holds, i.e.\
\[
\tau_{(i)} - \tau_{(i+1)} \le \kappa \min(1,\tau_{(i+1)}), \quad 0 \le i \le M-1;
\]
\item[(ii)] 
\(
\tau_{(1)} > \overline{T} - \kappa, \quad \tau_{(L)} < (1+\kappa) \underline{T}, \quad \tau_{(i_k)}/\tau_{(i_{k+1})} < 4;
\)
\item[(iii)]
The total number of elements $L$ does not exceed
\£
\label{eq:M_bound}
L\le K-1+\frac{4}{\kappa}
\left(\overline T+\left\lceil \log_2 \underline T^{-1}\right\rceil \right) = O\del{\kappa^{-1}(\log {\underline T^{-1}} + \overline 
T)}, and 
\£
\item[(iv)] the discretised denoiser loss evaluated at the grid points $\tau_{(i)}$ can be controlled using~\eqref{eq:continuous_denoiser_loss}.  
\£
\label{eq:discrete_denoiser_loss}
\expectation_u \varepsilon^2_{denoiser}(u) = \expectation_u\sum_{i=1}^{L-1}\gamma_i \sigma^2_{\tau_{(i)}} \expectation\norm{\hat{s}(\tau_{(i)}, X_{\tau_{(i)}}) -s(\tau_{(i)}, X_{\tau_{(i)}})}^2
\\
\le
2\int_{\underline T}^{\overline{T}} \sigma^2_t \expectation\norm{\hat{s}(t, X_{t}) -s(t, X_{t})}^2 dt
\le 
2\varepsilon^2_{\mathrm{denoiser}}.
\£
\end{itemize}

\end{proposition}

Finally, we define the backward discretization times $0=t_1 < t_2 <\ldots < t_{L+1} = \tau_{(1)}$ and $t_i = \tau_{(1)} - \tau_{(i)}, i\le L$, and use them to define the discrete backward process in the next section.

\subsection{Modification of Discretization Scheme}
\label{sec:modified_discretization_scheme}
Let us introduce
\£
\label{eq:R_in_W1_convergence}
R_t(\eta) := \min\del{1, C_{\mathrm{tail}} (\sigma_t/c_t)\log \underline{T}^{-1}\sqrt{C_W  + \log \eta^{-1}}}.
\£
We recall, that by Assumption~\ref{asmp:score}  with probability at least $1-\eta$ the estimator $\hat{e}$ satisfies
\£
\label{eq:self-consistency}
\norm{\hat{e}(t, X_t) - X_0} &\le R_t(\eta),
\£
in particular, implying that for any $s, t > 0$, with probability at least $1-\eta$
\£
\label{eq:high_prob_difference}
\norm{\hat{e}(t, X_t) - \hat{e}(s, X_s)} &\le 2R_{\max(s,t)}(\eta/2).
\£

We propose the following modification of the score function which makes it self-consistent. Along the discretized trajectory $(y_{t_1},y_{t_2},\ldots, y_{t_L})$ we collect the observations \\
 $(\hat{e}(\tau_{(1)}, y_{t_1}),  \hat{e}(\tau_{(2)}, y_{t_2}),\ldots, \hat{e}(\tau_{(i)}, y_{t_i}))$ and we construct a modified estimator $\bar{e}(\tau_{(i)},y_{t_1:t_i})$, where $y_{t_1:t_i} = (y_{t_1},\ldots, y_{t_i})$ for all $i \leq L$,  which satisfies~\eqref{eq:high_prob_difference} almost surely instead of with high probability. Choosing $\eta = \underline{T}\delta/L^2$ in \ref{eq:high_prob_difference} we define 
the estimator of interest as:
\begin{equation}
\label{eq:definition_e_bar}
\bar{e}(\tau_{(i)}, y_{t_{1}:t_i}) :=
    \begin{cases}
        \hat{e}(\tau_{(1)}, y_{t_1}) &\text{ if } i = 1, 
        \\
        \hat{e}(\tau_{(i)}, y_{t_{i}}) &\text{ if }
            \forall j < i: \norm{\bar{e}(\tau_{(j)}, y_{t_{1}:t_j})-\hat{e}(\tau_{(i)}, y_{t_i})} 
             \le 2R_{\tau_{(j)}}\left(\frac{\underline T \delta }{2L^2}\right),
        \\
        \bar{e}(\tau_{(i-1)}, y_{t_1:t_{i-1}}) &\text{ otherwise}.
    \end{cases}
\end{equation}

That is, at each step if according to ~\eqref{eq:high_prob_difference} the new observation $\hat{e}(\tau_{(i)}, y_{t_{i}})$  is  an outlier, we discard it and use the previous estimate $\bar{e}(\tau_{(i-1)}, y_{t_1:t_{i-1}})$. 

This construction guarantees that for all $j < i$
\[
\label{ineq:self-consistent}
\norm{\bar{e}(\tau_{(i)}, y_{t_1:t_{i}}) - \bar{e}(\tau_{(j)}, y_{t_1:t_{j}})} 
\le 2R_{\tau_{(j)}}(\eta/2).
\]

Since $\eta = \underline{T}\delta/L^2$, in ~\eqref{eq:high_prob_difference}, 
the event
\£
\label{eq:denoiser_score_bar_s}
\bigg\{\norm{\hat{e}({\tau_{(i)}}, X_{\tau_{(i)}}) - \hat{e}({\tau_{(j)}}, X_{\tau_{(j)}})} &\le 2R_{\tau_{(j)}}\del{\frac{\underline{T}\delta}{2L^2}}, \qquad \text{for all pairs $j<i$} \bigg\}
\£
has probability at least $1-\underline{T}\delta/L^2$ for every $i$.

In particular, with probability at least $1-\underline{T}\delta/L$ for all $i\le L$ simultaneously $\hat{e}(\tau_{(i)}, X_{\tau_{(i)}}) = \bar{e}(\tau_{(i)}, X_{\tau_{(i)}:\tau_{(1)}})$, with $X_{\tau_{(i)}:\tau_{(1)}} = (X_{\tau_{(i)}},\ldots, X_{\tau_{(1)}})$. Note that the above probability is conditional on $\tau_{(1)}, \cdots, \tau_{(L)}$ which are random. 

Substituting the bound above, and using that $\norm{\bar{e}} \le \norm{\hat{e}} \le C_1$ , a.s., we can bound the discretized denoising loss corresponding to 
 $\bar{s}(\tau_{(i)}, x_{\tau_{(i)}:\tau_{(1)}}) = [c_t\bar{e}(\tau_{(i)}, x_{\tau_{(i)}:\tau_{(1)}})-x_{\tau_{(i)}}]/\sigma^2_{\tau(i)}$ 
as
\[
\expectation_u\sum_{i=1}^{L-1}\gamma_i \sigma^2_{\tau_{(i)}} \expectation\norm{\bar{s}(\tau_{(i)}, X_{\tau_{(i)}:\tau_{(1)}}) -s(\tau_{(i)}, X_{\tau_{(i)}})}^2 
\\
\le  \expectation_u \sum_{i=1}^{L}  
\del{\gamma_i \sigma^2_{\tau_{(i)}} \expectation\norm{\hat{s}(\tau_{(i)}, X_{\tau_{(i)}}) -s(\tau_{(i)}, X_{\tau_{(i)}})}^2 } \\
  +  \expectation_u\left( \mathbb P \left[ \exists i\leq L ; \hat{e}(\tau_{(i)}, X_{\tau_{(i)}}) \neq \bar{e}(\tau_{(i)}, X_{\tau_{(i)}:\tau_{(1)}}) |\tau_{(1)}, \cdots , \tau_{(L)}\right]\sum_{i=1}^{L}   \frac{\gamma_i c_{\tau_{(i)}} }{\sigma^2_{\tau_{(i)}} }\right) \\
\le \expectation_u\sum_{i=0}^{L}\del{\gamma_i \sigma^2_{\tau_{(i)}} \expectation\norm{\hat{s}(\tau_{(i)}, X_{\tau_{(i)}}) -s(\tau_{(i)}, X_{\tau_{(i)}})}^2 +  {\sigma^{-2}_{\tau_{(i)}}\frac{\underline{T}\delta}{L}}}
\le 2(\varepsilon^2_{\mathrm{denoiser}} + \delta),
\]
where to get the last inequality we used that $\sigma^2_{\tau_{(i)}} \ge \tau_{(i)} \ge \underline{T}$.

Following~\cite{potaptchik2024linearconvergencediffusionmodels}, we consider the
DDPM discretization over the interval $[0,\tau_{(1)}]$
\begin{equation}
\label{eq:discrete_version_of_final_SDE}
\begin{cases}
    
\bar Y_{t_{i+1}}\mid \bar Y_{t_1:t_i}
=
c_{\gamma_i}^{-1}\bar Y_{t_i}
+ \frac{\sigma_{\gamma_i}^2}{c_{\gamma_i}}\bar s(\tau_{(i)},\bar Y_{t_1:t_i})
+ \sigma_{\gamma_i}\frac{\sigma_{\tau_{(i+1)}}}{\sigma_{\tau_{(i)}}}Z_i,
Z_i\stackrel{i.i.d.}{\sim}\cN(0,\Id_D), \quad \quad i < L,
\\
Y_{t_{L+1}} \mid \bar Y_{t_1:t_L}= \bar{e}(\tau_{(L)}, \bar{Y}_{t_1 : t_L}),
\\
\bar Y_0\sim \cN(0,\Id_D).
\end{cases}
\end{equation}

As it is shown in~\cite{potaptchik2024linearconvergencediffusionmodels}, the discretized process admits a
continuous interpolation on each interval $[t_i,t_{i+1})$, for $i\leq M$
\begin{equation}
\label{eq:bridge_interpolation}
\begin{cases}
d\bar{Y}_t =
\square{
\bar{Y}_t+2\bar s(\tau_{(1)}-t,\bar{Y}_t\mid \tau_{(i)},\bar{Y}_{t_1:t_i})
}\,dt+\sqrt{2}\,dB'_t,
& t\in[t_i,t_{i+1}),
\\
\bar{Y}_0\sim \cN(0,\Id_D), 
\end{cases}
\end{equation}
where for $t < \tau_{(i)}$,
\[
\bar s(t,x_t\mid \tau_{(i)},x_{\tau_{(1)}:\tau_{(i)}})
=
c_{{\tau_{(i)}}-t}^{-1}\frac{\sigma_{\tau_{(i)}}^2}{\sigma_t^2}\bar s({\tau_{(i)}},x_{\tau_{(1)}:\tau_{(i)}})
- \frac{x_t-c_{{\tau_{(i)}}-t}^{-1}x_{\tau_{(i)}}}{\sigma_t^2} = \frac{c_t \bar{e}({\tau_{(i)}}, x_{\tau_{(1)}:\tau_{(i)}})-x_t}{\sigma^2_t}.
\]
Equivalently, once the process reaches time $t_i$, the next step simulates for time $\gamma_i$ the
Brownian bridge on $[t_i,\tau_{(1)}]$ connecting $\bar Y_{t_i}$ and $\bar e(\tau_{(i)}, \bar Y_{t_1:t_i})$.

\subsection{$W_1$ Convergence}
\label{sec:w1_convergence_proof}
We are now ready to prove Theorem~\ref{thm:W_1_convergence_theorem}.
    We recall that by construction, the indices 
    $i_k, k\le K+1$
    satisfy $\tau_{(i_k)}/\tau_{(i_{k+1})} < 4$. Following~\cite{oko2023} we use a coarser grid $0=t_{i_1} < t_{i_2} <\ldots <t_{i_{K}} = t_{L}=\tau_{(1)}-\tau_{(L)}$ to construct the coupling. More precisely, let us introduce auxiliary processes $\bar{Y}^{(k)}$  as follows: for $k \leq K$
    \begin{equation}
    \label{eq: Y^l definition}
        \begin{cases}
            d\bar{Y}^{(k)}_t = \square{\bar{Y}^{(k)}_t + 2s(\tau_{(1)}-t, \bar{Y}^{(k)}_t)}dt + \sqrt{2}dB_t', 
            & t\in [0, t_{i_k}], \, k>1 
            \\
            d\bar{Y}^{(k)}_t = \square{\bar{Y}^{(k)}_t + 2\bar{s}(\tau_{(1)}-t, \bar{Y}^{(k)}_t|\tau_{(L)}-t_i,\bar{Y}^{(k)}_{t_1:t_i})}dt + \sqrt{2}dB_t',  
            & t\in (t_i,t_{i+1}] \text{ for } i \ge i_k, 
            \\
            \bar{Y}^{(k)}_0 \sim p_{\tau_{(1)}}.
        \end{cases}
    \end{equation}
    In other words, the process $Y^{(k)}$ first coincides with the true backward process $Y$ given by~\eqref{eq:exact_backward_dynamic} on the interval $[0,t_{i_k}]$, and for the rest of time $(t_{i_k}, \tau_{(1)}]$ it follows the approximate dynamic~\eqref{eq:bridge_interpolation}.

    We bound the Wasserstein distance between the target distribution $X_0 \stackrel{dist.}{=} Y^{(1)}_{\tau_{(1)}}$ and its approximation $\bar{Y}_{\tau_{(1)}}$ as
    \[
    W_1(\bar{Y}_{\tau_{(1)}}, X_0) \le     
    W_1(\bar{Y}_{\tau_{(1)}}, \bar{Y}^{(1)}_{\tau_{(1)}}) 
     +
    \sum_{k=1}^{K-1} 
    W_1(\bar{Y}^{(k)}_{\tau_{(1)}}, \bar{Y}^{(k+1)}_{\tau_{(1)}}) 
    + 
    W_1(\bar{Y}^{(K)}_{\tau_{(1)}}, X_0).
    \]
    In what follows, we explain the structure of the argument and defer the technical details to Appendix~E. The proof is based on the above interpolation between the ideal reverse process and the discretized process. The interpolation isolates three distinct sources of error.

First, the discretized process is initialized from a standard Gaussian, whereas the ideal reverse process should be initialized from $p_{\tau_{(1)}}$. Since $\tau_{(1)} \ge \overline T - 1$, this error by~\cite{benton2024nearly} is exponentially small in $\overline T$:
\[
\expectation_u W_1\del{\bar Y_{t_{L}},\bar Y^{(1)}_{t_{L}}}
\lesssim \sqrt D e^{-\overline T}.
\]

Second, even if the process follows the exact reverse dynamics until the last grid point, it still has to replace the final transition from $X_{\tau_{(L)}}$ to $X_0$ by the denoiser. Since $\tau_{(L)} \le 2\underline T$, the high-probability control from Assumption~\ref{asmp:score} gives
\[
\expectation_u W_1\del{\bar Y^{(K)}_{t_{L}},X_0}
\lesssim
\sqrt{\underline T}\log \underline T^{-1}
\sqrt{C_W+\log (L \cdot \underline T^{-1} \cdot \delta^{-1})}
\]

Finally, the remaining terms $W_1(\bar{Y}^{(k)}_{\tau_{(1)}}, \bar{Y}^{(k+1)}_{\tau_{(1)}})$ are only  impacted by the discretization over the interval $[T_{K-k},T_{K-k+1}]$. The self-consistency modification of the denoiser ensures that the endpoints of the bridge remain localized, while Girsanov's theorem reduces the comparison of the two path laws to the accumulated score error on that block. Averaging over the randomized mesh then allows the discrete score error to be controlled by the continuous denoising loss. This gives, for every dyadic block $k$,
\[
\expectation_u W_1\del{
\bar Y^{(k)}_{t_{L}},
\bar Y^{(k+1)}_{t_{L}}
} \le 4 R_{\tau_{(i_k)}}\del{\frac{\underline{T}\delta}{2L^2}}
\TV\del{\bar{Y}^{(k)}_{[0, t_L]}, \bar Y^{(k+1)}_{[0, t_L]}}
\\
\lesssim
\log \underline T^{-1}
\sqrt{C_W+\log (L \cdot \underline T^{-1} \cdot \delta^{-1})}
\sqrt{
\varepsilon_{\mathrm{denoiser}}^2
+\delta
+\kappa d(\log \underline T^{-1}+C_{\log})
(\log \underline T^{-1}+\overline T)
}.
\]

Summing the above bounds over the dyadic blocks gives that
\[
\expectation_u W_1(\bar{Y}_{\tau_{(1)}}, X_0) 
\lesssim
\sqrt D e^{-\overline T}
+
\sqrt{\underline T}\log \underline T^{-1}
\sqrt{C_W+\log (L \cdot \underline T^{-1} \cdot \delta^{-1})}
\\
+
K\log \underline T^{-1}
\sqrt{C_W+\log (L \cdot \underline T^{-1} \cdot \delta^{-1})}
\sqrt{
\varepsilon_{\mathrm{denoiser}}^2
+\delta
+\kappa (d\log \underline{T}^{-1} + C_{W})
(\log \underline T^{-1}+\overline T)
}.
\]
By Proposition~\ref{prop:discretization_setup}, $
L=O\del{\kappa^{-1}(\log \underline T^{-1}+\overline T)}
$ and $
K=O\del{\log \underline T^{-1}+\log \overline T}$. 

Let $\hat{\mu}_u = \textrm{Law}(\bar{Y}_{\tau_{(1)}}|u)$, and $\hat{\mu} = \expectation_u \hat{\mu}_u$ be the law of $\bar{Y}_{\tau_{(1)}}$. Then, substituting $
\kappa=\delta=\varepsilon_{\mathrm{denoiser}}^2
$ into the previous display and applying convexity of Wasserstein distance we get
\[
&W_1(\mu, \hat{\mu}) \le \expectation_u W_1(\mu, \hat{\mu}_u) \lesssim
\sqrt{D}e^{-\overline{T}} + 
    \sqrt{\underline{T}}\log \underline{T}^{-1}\sqrt{C_W + \log (\varepsilon^{-1}_{\mathrm{denoiser}} \cdot \underline{T}^{-1}\cdot \overline{T})}
    \\
    &\quad
    +
    \varepsilon_{\mathrm{denoiser}}
    \log (\overline{T} \cdot \underline{T}^{-1})
    \log \underline{T}^{-1}\sqrt{C_W + \log (\varepsilon^{-1}_{\mathrm{denoiser}} \cdot \underline{T}^{-1}\cdot \overline{T})}\sqrt{(d\log \underline{T}^{-1} + C_{W})(\log {\underline T^{-1}} + \overline T) } \nonumber
\]

    \section{Alternative Approach: Kernel-Based Score Function Approximation} \label{sec:alternative:Divol}
    An alternative approach to building estimator $\hat{s}$ develops the idea that we used in case $d\le 2$, and based on the kernel-density estimator of $\mu$ constructed by~\cite{divol2022measure}. 

    We assume usual assumptions, $\mu$ is a measure satisfying Assumptions~\ref{asmp:smooth_manifold}--\ref{asmp:smooth_measure_on_manifold} supported on a $d$-dimensional manifold $M$, and we observed $n$ i.i.d.\ samples $\cY=\curly{y_1,\ldots,y_n}$ from $\mu$. 
    
    In Theorem~3.7, \citet{divol2022measure} constructs the kernel-density estimator $\hat{\mu}$ in two stages. As a first step, applying results of~\cite{aamari2018} that we discussed in~Section~\ref{sec:support_estimation} they solve optimization problem~\eqref{eq:support_estimation_functional} and construct an approximating surface $M^* = \cup M^*_i$. Then, using smooth partitioning functions $\chi_i$, they construct a smooth approximation $\widehat{\Vol}(dy^*)$ to uniform measure $dy$ as $\widehat{\Vol}(dy^*)=\sum \chi_i(y^*)dy^*$, i.e. for $\phi\in C^1_0(\R^D)$
    \[
    \int_{M^*} \phi(y^*)\widehat{\Vol}(dy^*) = \sum \int_{M^*_i} \phi(y^*)\chi_i(y^*)dy^*.
    \]
    As a second step, they construct a smoothing kernel $K_n(x)$ on $\R^D$ as a rescaling of a special kernel $K$, see~\cite[Section 3]{divol2022measure} for precise description. Denote as
    \[
    \nu_n = \nu_n(\cY) := \frac{1}{n}\sum_{y_i\in \cY} \delta_{y_i}
    \]
    the empirical density function. The estimator $\hat{\mu}_n$ is defined as
    \[
    \hat{\mu}_n = K_n * \del{\frac{\nu_n}{K_n * \hat{\Vol}M}}.
    \]
    \begin{theorem}\cite[Theorem 3.7]{divol2022measure}
        For $d \ge 3$, and any $1\le p \le \infty$
        \[
        \expectation_{\cY\sim\mu^{\otimes n}} W_p\del{\hat{\mu}_n, \mu} \lesssim n^{-\frac{\alpha+1}{2\alpha+d}}.
        \]
    \end{theorem}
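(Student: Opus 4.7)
The plan is to carry out a bias--variance decomposition tailored to the manifold setting. Writing $\tilde\mu_n := K_n * \del{\mu/(K_n * \widehat{\Vol M})}$ for the deterministic analogue of $\hat\mu_n$ obtained by replacing $\nu_n$ with $\mu$, I would split
\[
\expectation_{\cY\sim\mu^{\otimes n}}W_p(\hat\mu_n,\mu) \;\le\; \expectation_{\cY\sim\mu^{\otimes n}}W_p(\hat\mu_n,\tilde\mu_n) \;+\; \expectation_{\cY\sim\mu^{\otimes n}}W_p(\tilde\mu_n,\mu).
\]
The strategy is to pick a bandwidth $h$ for $K_n$ that balances the bias $W_p(\tilde\mu_n,\mu)$ against the stochastic term $W_p(\hat\mu_n,\tilde\mu_n)$.

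For the bias, the point is that $\tilde\mu_n$ and $\mu$ are both absolutely continuous with respect to a common reference volume measure on the tubular neighborhood $M^\varepsilon \cup M^{*\varepsilon}$. Pulling back to local charts $\Phi_y$, the density of $\mu$ is $\alpha$-smooth by Assumption~\ref{asmp:smooth_measure_on_manifold}; if the kernel $K$ is chosen with $\lfloor \alpha+1 \rfloor$ vanishing moments and bandwidth $h$, then convolution introduces an error of order $h^{\alpha+1}$ in densities. Combining this with the Hausdorff bound $d_H(M,M^*) \lesssim \varepsilon_n^\beta$ from Lemma~\ref{lemma:approximation_of_Phi} (which is negligible at the chosen $h$ because $\beta \ge \alpha+1$), and using that $W_p$ is controlled by density difference times diameter, gives $W_p(\tilde\mu_n,\mu) \lesssim h^{\alpha+1}$. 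The role of the normaliser $K_n * \widehat{\Vol M}$ is precisely to kill the leading-order geometric discrepancy between smoothing in $\R^D$ and smoothing on $M$; one must verify that this normaliser is bounded away from $0$ and essentially constant along tangential directions, so that its reciprocal preserves the $\alpha$-smoothness of the density.

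For the stochastic term, I would exploit the fact that the measures $\hat\mu_n$ and $\tilde\mu_n$ differ only through the substitution $\nu_n\mapsto \mu$, and hence their densities differ by $K_n * (\nu_n-\mu)/(K_n * \widehat{\Vol M})$. Using Kantorovich--Rubinstein duality for $p=1$ (and a density-to-Wasserstein interpolation argument for general $p$), the Wasserstein distance between the two measures is at most the $L^1$ norm of this density difference, times the diameter of the kernel support $h$. Since the effective number of independent ``kernel cells'' covering the $d$-dimensional support is $O(h^{-d})$, a Bernstein/VC-type concentration bound gives $\expectation\|K_n * (\nu_n - \mu)\|_{L^1} \lesssim (n h^d)^{-1/2}$, so the stochastic contribution is $\lesssim h \cdot (n h^d)^{-1/2}$.

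Balancing $h^{\alpha+1} \simeq h^{1-d/2}n^{-1/2}$ yields the optimal bandwidth $h\simeq n^{-1/(2\alpha+d)}$ and the announced rate $n^{-(\alpha+1)/(2\alpha+d)}$. The main obstacle, and the place where intrinsic-dimension-only rates live, is showing that the kernel normalisation by $K_n * \widehat{\Vol M}$ genuinely produces a smoothing operator whose bias on $\alpha$-smooth densities on an unknown $\beta$-smooth manifold is $h^{\alpha+1}$ rather than the naive $h^{\alpha+1}$ in ambient coordinates; this requires careful control of how $K_n * \widehat{\Vol M}$ approximates the Hausdorff measure on $M^*$ in normal coordinates, leveraging the manifold approximation error $\varepsilon_n^\beta$ from Section~\ref{sec:support_estimation} to absorb the geometry error into the kernel bias.
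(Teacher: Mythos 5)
First, note that the paper does not prove this statement at all: it is quoted verbatim as \cite[Theorem 3.7]{divol2022measure} and used as a black box, so there is no internal proof to compare against. Your sketch is therefore judged on its own merits, and its overall architecture (bias--variance split around the deterministic smoothed measure $\tilde\mu_n$, bandwidth balancing at $h\simeq n^{-1/(2\alpha+d)}$, absorbing the $\varepsilon_n^\beta$ geometric error using $\beta\ge\alpha+1$) does match the spirit of Divol's argument, and the bias analysis via vanishing-moment kernels in local charts is essentially right.

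The genuine gap is in the stochastic term. The inequality you invoke --- that $W_p(\hat\mu_n,\tilde\mu_n)$ is at most the $L^1$ norm of the density difference \emph{times the diameter $h$ of the kernel support} --- does not follow from Kantorovich--Rubinstein duality. Duality gives $W_1(\mu,\nu)\le \diam(\supp)\cdot\|\mu-\nu\|_{TV}$ with the diameter of the \emph{common support} (here $O(1)$, since the mass surplus of $K_n*(\nu_n-\mu)$ in one region may have to be transported across the whole manifold), not of the kernel. With the honest factor $\diam M=O(1)$ the stochastic term is $(nh^d)^{-1/2}$, and balancing yields the strictly slower rate $n^{-(\alpha+1)/(2\alpha+d+2)}$. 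The extra factor of $h$ is exactly where the difficulty of the theorem lives: one needs a linearization of $W_p$ around $\mu$, i.e.\ a bound of the form $W_p(\mu,\mu+\sigma)\lesssim\|\sigma\|_{\dot W^{-1,p}(\mu)}$ (a weighted homogeneous negative Sobolev norm), which requires $\mu$ bounded below on its support and is the content of the key lemma in \cite{divol2022measure}; the smoothing by $K_n$ then gains the factor $h$ at the level of the $\dot W^{-1,p}$ norm rather than the $L^1$ norm. Relatedly, your ``density-to-Wasserstein interpolation for general $p$'' is not an argument for $p=\infty$, where no $L^1$-type control of densities can bound $W_\infty$; the negative-Sobolev route (or a separate chaining/partitioning argument) is needed there too. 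Until the factor $h$ in the stochastic term is justified by such a tool, the proof does not reach the claimed rate.
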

    We define $\hat{s}_n$ as the score function of $\hat{\mu}_n$, then applying Proposition~\ref{prop: wd_vs_sml} 
    \[
     4\sigma^2_{T_k}\int_{T_k}^{T_{k+1}}\int_{\R^d} \|\hat{s}_n(t,x)-s(t,x)\|^2 p_t(x)dxdt \le W^2_2(\mu,\hat{\mu}_n).
    \]
    Since $T_{k+1} = 2T_{k}$ we have $\sigma_t \le 2\sigma_{T_k}$ for all $t\in [T_k, T_{k+1}]$. So, taking expectation and summing over all $k\le K = \log (n)$
    \[
    \expectation_{\cY\sim \mu^{\otimes n}}\int_{\underline{T}}^{\overline{T}}\int_{\R^D} \sigma^2_t\norm{\hat{s}_n(t,x) - s(t,x)}^2 p(t,x)dxdt  \lesssim n^{-\frac{\alpha+1}{2\alpha+d}}\log n.
    \]
    At first glance, this gives an alternative and very short proof of~Theorem~\ref{thm:score_approximation_1}. However, in this case, diffusion models are merely used to sample from the already known distribution $\hat{\mu}_n$, while in the results we presented, we indeed learn the distribution by solving the empirical risk minimization problem, which is much closer to what happens in practice.

    \section{Conclusion and Future work} \label{sec:conclusion}
    
    We have presented a novel approach for analyzing diffusion models under the manifold hypothesis in a high-dimensional setting. This approach allows us to show that: (i)~neural networks can approximate the normalized score function $\sigma_t s(t,x)$ with the ambient-dimension-free convergence rate $n^{-2(\alpha+1)/(2\alpha+d)}\polylog(n)$ w.r.t.\ the score matching loss; and (ii) diffusion models achieve the optimal (up to $\polylog$) convergence rate $n^{-(\alpha+1)/(2\alpha+d)}$ in the $W_1$ metric.

    Our results can be extended in several directions. Firstly, it is unclear if Assumption~\ref{asmp:smooth_measure_on_manifold_in_proofs} is necessary. We believe that similarly to~\cite{divol2022measure} inequality $\beta \ge \alpha+1$ should be enough. 
    
    Another interesting direction is the study of which assumptions might be relaxed. The assumption that density $p(y) > p_{\min}$ is lower bounded is quite strong, however, the bounds we got are polynomial on $C_{\log}$, which itself is only logarithmic in $p_{\min}$, so any smoothing, e.g. by the heat kernel could help.
    Also, it is interesting to study the case when the observed samples are corrupted by a bounded ambient noise. Finally in a recent line of works \cite{dou24,stephanovitch2025generalizationboundsscorebasedgenerative} have obtained generalization without early stopping without the manifold hypothesis, notably because the score has the required smoothness uniformly over $t$. It is unclear if this can be achieved in the context of degenerate distributions over a smooth manifold. 

    Finally note that our approach can be applied to other diffusion models, for instance in the form $ X_t  = X_0 + \sigma_t Z_d$ with $\sigma_t$ going to infinity, since this diffusion model parametrization differs only by rescaling and time reparameterization. Rescaling can be accommodated by changing the fixed first and last layers, while time reparameterization by a small neural network of $\polylog$ size. This argument also covers stochastic interpolants parametrization~\cite{albergo2025stochasticinterpolantsunifyingframework}.

\section*{Acknowledgments}
The authors would like to thank the anonymous referees, an Associate
Editor and the Editor for their constructive comments that improved the
quality of this paper. The authors are also grateful to Leo Zhang for useful discussions. Finally, we are grateful to Denis Belomestny who informed us about the error in the proof of~\cite{oko2023}, Theorem C.4.
\section*{Funding}
IA was supported by the Engineering and Physical Sciences Research Council [grant number EP/T517811/1]. 

GD was supported by the Engineering and Physical Sciences Research Council [grant number EP/Y018273/1]. 

JR received funding from the European Research Council (ERC) under the European Union’s Horizon 2020 research and innovation programme (grant agreement No 834175)
\bibliographystyle{plainnat}
\bibliography{references}

\newpage

\appendix

 In Appendix~\ref{apdx:geometric_lemmas} we provide auxiliary geometric statements that will be used further in the proofs. In Appendix~\ref{apdx:concentration_of_the_score_function} we prove the results of Section~4. Appendix~\ref{apdx:manifold_approximation} contains the proofs of the results in Section~\ref{sec:manifold_approximation}. Appendix~\ref{apdx:score_apprxoximation_by_neural_network} contains omitted details of Section~\ref{sec:main_results_score_function_approx}. Also in Appendix~\ref{sec:ThC4Oko}, we correct \cite{oko2023}, Theorem C.4 that plays an important role in the proof of our main result Theorem~\ref{thm:score_approximation_1}. In~Appendix~\ref{apdx:nn_results} we recall lemmas on neural networks approximation capabilities from~\cite{oko2023}. Finally, in Appendix~\ref{sec:bounds_on_tangent_spaces} we prove the bounds related to the projection of a normal random variable on a tangent space that play an important role in the case $T_k \le n^{-\frac{2}{2\alpha+d}}$.

    \section{Geometric Results}
   \label{apdx:geometric_lemmas}
        \begin{proposition}
        
        \label{prop:geometric_statements}
        Let $M$ be a manifold satisfying 
        Assumption~A, and $y\in M$. Let $\Phi_y : T_yM \mapsto M$ be a local inverse of projection $\pr_y$ on $T_yM$ and $r_0 = \min\del{\tau, L^{-1}_M}/8$, where $L_M \ge 1$.
        \begin{enumerate}[label=(\roman*)]
            \item The function $\Phi_y$ is well defined on $B_{T_yM}(0,\tau/4)$. For any $z\in B_{T_yM}(0,\tau/4)$ we have
            $
            \norm{z} \le \norm{\Phi_y(z) - y} \le 2\norm{z}. 
            $
            \item $\norm{\grad \Phi_y(z) - \grad \Phi_y(0)}_{op} \le L_{M}\norm{z}.$   
            \item  $\grad \Phi_y(0) \grad^T\Phi_y(0) = \Id_d$, and for $\norm{z} \le r_0$ we have $\norm{\grad \Phi_y(z) \grad^T\Phi_y(z) - \Id_d}_{op} \le 1/2$.   
        \end{enumerate}
        \begin{proof}
        \,
        
        \begin{enumerate}[label=(\roman*)]
            \item
            See~\cite{divol2022measure},~Lemma A.1(iv).
            \item By the mean value theorem
            \[
            \norm{\grad \Phi_y(z) - \grad \Phi_y(0)}_{op} = \sup_{\norm{u} =1}\norm{d_u\del{\Phi_y(z) - \Phi_y(0)}} \le \norm{z}\sup_{z'\in [0,z], \norm{u},\norm{v}=1} \norm{d_u d_v \Phi_y(z')} \le L_{M}\norm{z}.
            \]
            \item
            The first part holds by the definition of $\Phi_y$ as an inverse projection on $T_yM$. Next, we prove the second part.
            \[
            \norm{\grad\Phi_y\grad^T\Phi_y-\Id}_{op} 
            &= \sup_{\norm{v} = 1}\norm{\grad \Phi_y(z) \grad^T\Phi_y(z) v - v} 
            = \sup_{\norm{u},\norm{v} = 1}\innerproduct{\grad \Phi_y(z) (d_v\Phi_y(z)) - v}{u} 
            \\
            &= \sup_{\norm{u},\norm{v} = 1} [d_v\Phi_y(z)]^T d_u\Phi_y(z) -v^Tu\\
            &=
            \sup_{\norm{u},\norm{v} = 1} [d_v\Phi_y(z)]^T d_u\Phi_y(z) -[d_v\Phi_y(0)]^T d_u\Phi_y(0)
            \\
            &= \sup_{\norm{u},\norm{v} = 1} (d_v\Phi_y(z))^T d_u\del{\Phi_y(z) - \Phi_y(0)} + d_v\del{\Phi_y(z) - \Phi_y(0)}^Tu
            \\
            &\le \sup_{\norm{v} = 1}(\norm{d^T_v\Phi_y(z)}+1) \sup_{\norm{u} = 1}\norm{d_u\del{\Phi_y(z) - \Phi_y(0)}}. 
            \]
            Applying part (ii) we get 
            \[
            &\sup_{\norm{v} = 1} \norm{d^T_v\Phi_y(z)} \le \sup_{\norm{v} = 1}(\norm{d^T_v\Phi_y(z)-d^T_v\Phi_y(0)} + \norm{d^T_v\Phi_y(0)} \le L_M\norm{z}+ 1, 
            \\
            &\sup_{\norm{u} = 1}\norm{d_u\del{\Phi_y(z) - \Phi_y(0)}} \le L_M\norm{z}.
            \]
            So since $\norm{z} < L_M^{-1}/8$
            \[ 
            \norm{\grad\Phi\grad^T\Phi-\Id}_{op} \le \sup_{\norm{v} = 1}(\norm{d^T_v\Phi_y(z)}+1) \sup_{\norm{u} = 1}\norm{d_u\del{\Phi_y(z) - \Phi_y(0)}} 
            \\
            \le L_M\norm{z} (L_M\norm{z}+ 2) \le 3/8 \le 1/2. 
            \]
            
        \end{enumerate}
        \end{proof}
        \end{proposition}
    \begin{proposition}
    \label{prop:Manifold_Ball_Volume}
        Let $M$ be a manifold satisfying Assumption~A. Fix $\varepsilon < r_0$ and denote as $C_d$ the volume of the unit $d$-dimensional ball.
        \begin{enumerate}[label=(\roman*)]
            \item Let $M_y(\varepsilon) = \Phi_y\del{B_d\del{0,\varepsilon}}$ then
        \[
        C_d\varepsilon^d \le \Vol M_y(\varepsilon) \le 2^d C_d \varepsilon^d,
        \]
        where $C_d$ is volume of a $d$-dimensional unit ball. 
        \item Recall that for $y\in M$ we defined $B_M(y,\varepsilon) = M\cap B(y,\varepsilon)$ then
        \[
        2^{-d} C_d \varepsilon^d \le \Vol B_M(y,\varepsilon) \le 2^{d}C_d\varepsilon^d.
        \]
        \end{enumerate}
        
    \end{proposition}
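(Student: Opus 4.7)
The plan is to reduce both statements to the change-of-variables formula applied to $\Phi_y$, using the bound $\norm{\grad \Phi_y(z)\grad^T\Phi_y(z) - \Id_d}_{op} \le 1/2$ on $B_d(0, r_0)$ established in Proposition~\ref{prop:geometric_statements}. Since $\varepsilon < r_0$, the eigenvalues of the Gram matrix $g(z) := \grad \Phi_y(z)\grad^T\Phi_y(z)$ lie in $[1/2, 3/2]$ throughout $B_d(0,\varepsilon)$, so the Jacobian factor $J(z) := \sqrt{\det g(z)}$ satisfies $J(z) \in [(1/2)^{d/2}, (3/2)^{d/2}]$. The area formula then gives $\Vol M_y(\varepsilon) = \int_{B_d(0,\varepsilon)} J(z)\, dz$.

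For part~(i), the upper bound follows immediately: $\Vol M_y(\varepsilon) \le (3/2)^{d/2} C_d \varepsilon^d \le 2^d C_d \varepsilon^d$. The lower bound requires a separate, simpler argument, because the Jacobian bound alone only yields $(1/2)^{d/2} C_d \varepsilon^d$. Instead, I would use that the orthogonal projection $\pr_y$ onto $T_yM$ is $1$-Lipschitz, and by construction $\pr_y \circ \Phi_y = \Id_{T_yM}$. Hence $\pr_y(M_y(\varepsilon)) = B_d(0,\varepsilon)$, and the contraction property of $\pr_y$ gives $\Vol M_y(\varepsilon) \ge \Vol B_d(0,\varepsilon) = C_d \varepsilon^d$.

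For part~(ii), I would sandwich $B_M(y,\varepsilon)$ between $M_y(\varepsilon/2)$ and $M_y(\varepsilon)$ using the bi-Lipschitz estimate $\norm{z} \le \norm{\Phi_y(z)-y} \le 2\norm{z}$ from Proposition~\ref{prop:geometric_statements}. The upper inclusion $\Phi_y(B_d(0,\varepsilon/2)) \subset B_M(y,\varepsilon)$ is immediate from $\norm{\Phi_y(z)-y} \le 2\norm{z} \le \varepsilon$, yielding $\Vol B_M(y,\varepsilon) \ge \Vol M_y(\varepsilon/2) \ge 2^{-d} C_d \varepsilon^d$ by part~(i). For the reverse inclusion $B_M(y,\varepsilon) \subset M_y(\varepsilon)$, I use that $\varepsilon < r_0 \le \tau/8$, so any $x \in M \cap B_D(y,\varepsilon)$ in particular lies in $M \cap B_D(y,\tau/4)$, where $\Phi_y$ is the inverse of the bijective projection; writing $x = \Phi_y(z)$, the lower Lipschitz bound gives $\norm{z} \le \norm{x-y} \le \varepsilon$, hence $x \in M_y(\varepsilon)$, and the upper volume bound $\Vol B_M(y,\varepsilon) \le 2^d C_d \varepsilon^d$ follows from part~(i).

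There is no serious obstacle: everything reduces to checking that the two inclusions $M_y(\varepsilon/2) \subset B_M(y,\varepsilon) \subset M_y(\varepsilon)$ hold in the regime $\varepsilon < r_0$, together with a one-line change-of-variables computation. The only point requiring attention is the lower bound in (i), where one should avoid naively using the Jacobian lower bound $(1/2)^{d/2}$ and instead appeal to the contraction property of the orthogonal projection, ensuring that the constant $C_d$ is sharp.
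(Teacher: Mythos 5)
Your proof is correct and follows essentially the same route as the paper: the lower bound in (i) via the volume-contracting orthogonal projection $\pr_y$ (whose image of $M_y(\varepsilon)$ is exactly $B_d(0,\varepsilon)$), the upper bound via the change-of-variables formula with the Jacobian controlled by Proposition~\ref{prop:geometric_statements}, and (ii) via the sandwich $M_y(\varepsilon/2)\subset B_M(y,\varepsilon)\subset M_y(\varepsilon)$ coming from the bi-Lipschitz bounds on $\Phi_y$. Your remark about avoiding the naive Jacobian lower bound matches the paper's reasoning exactly.
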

    \begin{proof}
    \begin{enumerate}[label=(\roman*)]
        \item Since projection $\pr_{T_yM}$ is contraction, and $\Phi_y$ is the inverse map
        \[
        \Vol M_y(\varepsilon) \ge \Vol \del{\pr_{TyM}M_y(\varepsilon)} = \Vol B_d\del{0,\varepsilon} = \Vol B_d\del{0,1}\varepsilon^d = C_d\varepsilon^d.
        \]
        On the other hand, applying Proposition~\ref{prop:geometric_statements}
        \[
        \Vol M_y(\varepsilon) = \int_{M_y(\varepsilon)} dy = \int_{B_d\del{0,\varepsilon}} \abs{\grad \Phi_y(z)\grad^T\Phi_y(z)}^{-1/2} dz \le 2^d \Vol B_d\del{0,\varepsilon} \le 2^dC_d\varepsilon^d.
        \]
        \item Since $\Phi_y(z)$ is $2$-Lipschitz $\Phi_y\del{B_d\del{0,\varepsilon/2}} \subset B_M(y,\varepsilon)$
        \[
        \Vol B_M(y,\varepsilon) \ge \Vol \Phi_y\del{B_d\del{0,\varepsilon/2}} \ge 2^{-d}C_d\varepsilon^d.
        \]
        Finally, since $\pr_{T_yM}$ is contraction
        \[
        \Vol B_M(y,\varepsilon) \le \Vol M_y(\varepsilon) \le 2^d C_d \varepsilon^d.
        \]
    \end{enumerate}
        
    \end{proof}
    \section{Concentration of the Score Function}
    \label{apdx:concentration_of_the_score_function}
    \subsection{Correlation with Gaussian Noise}
    \label{apdx:CorrelationWithGaussianNoise}
    
    We start the section by establishing a technical proposition that will be used in the proofs. 
    
    \begin{restatable}{proposition}{CorrelationOfFunctionAndNoise}
    \label{prop:maximum_of_Gaussian_Process}
        Let $f: U\subset B_d(0,r)\subset \R^d \rightarrow \R^D$ be $L$-Lipschitz such that $f(0) =0$.
        Let $Z_D\sim \cN(0,\Id_D)$ be a $D$-dimensional standard normal vector and define the centred Gaussian field $F(z) := \innerproduct{Z_D}{f(z)}$ on $U$.
        Then for any $\delta > 0$ 
        \[
        \label{eq:maximum_of_GP_F}
        \rP\del{\sup_{z\in U} F(z) \le Lr\sqrt{d} + Lr\sqrt{2\log \delta^{-1}}} \ge 1-\delta.
        \]
    \end{restatable}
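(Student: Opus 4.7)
The plan is to recognize that $F(z)=\langle Z_D,f(z)\rangle$ is a centered Gaussian process on $U$, and to decouple the bound into (i) a control on the expected supremum via a comparison argument, and (ii) a concentration step via the Borell--TIS inequality. The $d$-dependence (rather than $D$-dependence) should come from the first step, since the index set $U$ lives in $\mathbb R^d$ and $f$ is Lipschitz; the $\|f\|_\infty$ factor should come from the second step via the pointwise variance.

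First I would compute the increment variance of $F$: for any $z,z'\in U$,
\[
\mathbb E\bigl[(F(z)-F(z'))^2\bigr]=\|f(z)-f(z')\|^2\le L^2\|z-z'\|^2.
\]
Introducing an auxiliary centered Gaussian process $Y_z:=L\langle Z_d,z-y\rangle$ on $U$ (with $Z_d\sim\mathcal N(0,\Id_d)$ independent of $Z_D$), I would observe that its increments satisfy $\mathbb E[(Y_z-Y_{z'})^2]=L^2\|z-z'\|^2$, matching the upper bound on the increments of $F$. Then Sudakov--Fernique's comparison inequality yields
\[
\mathbb E\Bigl[\sup_{z\in U}F(z)\Bigr]\le \mathbb E\Bigl[\sup_{z\in U}Y_z\Bigr]\le Lr\,\mathbb E\|Z_d\|\le Lr\sqrt d,
\]
where the second inequality uses $U\subset B_d(y,r)$ and Cauchy--Schwarz, and the last uses $\mathbb E\|Z_d\|\le\sqrt{d}$.

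Second, the pointwise variance is $\mathrm{Var}(F(z))=\|f(z)\|^2\le\|f\|_\infty^2$. The Borell--TIS inequality gives, for every $t>0$,
\[
\mathbb P\Bigl(\sup_{z\in U}F(z)\ge \mathbb E[\sup_{z\in U}F(z)]+t\Bigr)\le \exp\Bigl(-\tfrac{t^2}{2\|f\|_\infty^2}\Bigr).
\]
Choosing $t=\|f\|_\infty\sqrt{2\log\delta^{-1}}$ and combining with the first step yields the claim.

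The argument is essentially standard; the only subtle point is being careful that Sudakov--Fernique applies (both processes must be centered Gaussian, which they are) and that the auxiliary process $Y_z$ is built on the same index set $U$ so the supremum comparison is literal. I do not expect any serious obstacle; measurability of the suprema over the possibly-uncountable $U$ can be handled by the usual separability assumption on $F$, which holds since $f$ is Lipschitz (hence continuous) and $U$ is a subset of a ball in $\mathbb R^d$.
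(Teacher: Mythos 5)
Your proposal is correct and follows essentially the same route as the paper: bound the increment variance by $L^2\|z-z'\|^2$, compare with the auxiliary process $L\langle Z_d,\cdot\rangle$ via Sudakov--Fernique to get the $Lr\sqrt d$ term, then apply Borell--TIS with pointwise variance $\|f\|_\infty^2$ for the concentration term. The only cosmetic difference is that you center the comparison process at $y$, which makes the supremum computation slightly cleaner but changes nothing substantive.
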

    
    \begin{proof}
    
    The main idea of the proof is to find another process $F'$ with a larger variance of pairwise differences and with easy-to-control supremum. Then, the first condition guarantees a higher spread in values, and, therefore, a larger expected maximum, while the second helps us to control this supremum exactly. The difference between values in two points is controlled by the Lipschitz constant allowing us to choose $F'(z) = L\innerproduct{z}{Z_d}$ -- a process defined intrinsically in $\R^d$.  
    
    Formally, by the Lipschitz property for any $z,z'\in B(y,r)$ 
        \[
        \expectation \abs{F(z)-F(z')}^2 = \expectation \abs{\innerproduct{Z_D}{f(z)-f(z')}}^2 = \norm{f(z)-f(z')}^2 \le L^2\norm{z-z'}^2. 
        \]
        Let $Z_d\sim \cN\del{0,\Id_d}$, consider $G(z) = L\innerproduct{Z_d}{z}$ an auxiliary Gaussian field on $B(y,r)$. Both $F(z)$ and $G(z)$ are continuous zero-mean Gaussian fields, and by construction $\expectation \abs{F(z)-F(z')}^2 \le \expectation \abs{G(z)-G(z')}^2$, so by the Sudakov-Fernique inequality~\cite[Theorem 2.8]{adler1990introduction}
        \[
        \expectation \sup_{z\in U} F(z) \le \expectation \sup_{z\in U} G(z) = \expectation \sup_{z\in B_d(0,r)} L\innerproduct{Z_d}{z} = Lr \expectation \norm{Z_d} \le Lr\sqrt{d}.
        \]
        Moreover,
        \[
        \sup_{z\in U} \expectation \abs{{F(z)}}^2 = \sup_{z\in U} \norm{f(z)}^2 \le (Lr)^2.
        \]
        Finally, we note that $F(z)$ is continuous, so $\sup \abs{F(z)} <\infty$ a.s., and by the Borell-TIS inequality~\cite{adler1990introduction},~Theorem 2.1,
        \[
        \rP\del{\sup_{z\in U} F(z) \le Lr\sqrt{d} + Lr\sqrt{2\log \delta^{-1}}} \ge 1-\delta.
        \]
    \end{proof}
    Now we are ready to prove the main result of this section

    \begin{proof}[Proof of Proposition~\ref{prop:maximum_of_Gaussian_Process_on_Manifold}]
        The main idea is to first prove the inequality locally and then generalize it to arbitrary $y,y' \in M$ using a chaining argument by taking an $\varepsilon$-dense set on $M$. 
        
        Fix a small enough constant $\varepsilon > 0$ and an arbitrary point $G\in M$. By Assumption~\ref{asmp:smooth_manifold} the manifold $M$ can be locally represented as a graph of a function $\Phi_G$. By choice of $\varepsilon$, the function $\Phi_G(z)$ is $2$-Lipschitz by Proposition~\ref{prop:geometric_statements} and by the definition $\Phi_G(0) = G$, so applying Proposition~\ref{prop:maximum_of_Gaussian_Process} we conclude
        \[
        \rP\del{\sup_{z \in B_{T_{G}M}(G, \varepsilon)} \innerproduct{Z_D}{\Phi_G(z)-G} \le 2\varepsilon\del{\sqrt{d} + \sqrt{2\log \delta^{-1}}}} \ge 1-\delta
        \]
        that implies a local version of the desired result   
        \£
        \label{eq:CorrelationWithGaussianNoise_event_1}
        \rP\del{\sup_{y \in M\cap B(G, \varepsilon)} \innerproduct{Z_D}{y-G} \le 2\varepsilon\del{\sqrt{d} + \sqrt{2\log \delta^{-1}}}} \ge 1-\delta.
        \£
        
        To make it global, we take a $\varepsilon$-dense set $G_1,\ldots, G_N$, where by Proposition~\ref{prop:covering_number_of_M} we can guarantee $N \le (\varepsilon/2)^{-d}\Vol M$. For any pair $G_i, G_j$ we note that $\innerproduct{Z_D}{G_i-G_j} \sim \cN\del{0, \norm{G_i-G_j}^2}$ and therefore
        \£
        \label{eq:CorrelationWithGaussianNoise_event_2}  
        \rP\del{\innerproduct{Z_D}{G_i-G_j} \le \norm{G_i-G_j}\sqrt{2\log \delta^{-1}}} \ge 1-\delta.
        \£
        We estimate from above with $\delta N(N+1)/2$ the probability that simultaneously for any $G_i$ event~\eqref{eq:CorrelationWithGaussianNoise_event_1} holds, and for any pair $G_i\neq G_j$ event~\eqref{eq:CorrelationWithGaussianNoise_event_2} holds too.

        Representing $\innerproduct{Z_D}{y-y'} = \innerproduct{Z_D}{y-G} + \innerproduct{Z_D}{G-G'} + \innerproduct{Z_D}{G'-y'}$, where $G, G'$ are the nearest points in the $\varepsilon$-net to $y, y'$ respectively, and applying the bounds above we get that
    \[
    \rP
    \left[\forall y,y'\in M: \innerproduct{Z_D}{y-y'} \le 4\varepsilon\del{\sqrt{d} + \sqrt{2\log \delta^{-1}}} + \del{\norm{y-y'}+2\varepsilon}\sqrt{2\log \delta^{-1}} \right] \\
    \ge 1-\delta N(N+1)/2,
    \]
    where we additionally used the triangle inequality 
    $$\norm{G-G'} \le \norm{y-y'} + \norm{y-G} + \norm{y'-G'}\le \norm{y-y'} +  2\varepsilon.$$
    
    Next, we reorder and perform a change of variable 
    \[
    \rP\del{\forall y,y'\in M: \innerproduct{Z_D}{y-y'} \le 4\varepsilon\sqrt{d} + \del{\norm{y-y'}+6\varepsilon}\sqrt{2\log N(N+1)/2 + 2\log \delta^{-1}}}\\ \ge 1-\delta.
    \]
    Since $
    N(N+1)/2 \le N^2 \le (\varepsilon/2)^{-2d}(\Vol M)^2
    $
    we get
    \[
    \rP\Big[\forall y,y'\in M: \innerproduct{Z_D}{y-y'} \le 4\varepsilon\sqrt{d} + \\ \del{\norm{y-y'}+6\varepsilon}\sqrt{4d\log 2\varepsilon^{-1} + 4\log_+ \Vol M  + 2\log \delta^{-1}}\Big] \ge 1-\delta.
    \]
    Finally, noting that the same bound holds for $-\innerproduct{Z_D}{y-y'}\stackrel{dist.}{=} \innerproduct{Z_D}{y-y'}$ we obtain the bound on $\abs{\innerproduct{Z_D}{y-y'}}$ 
    \£
    \label{eq:corr_with_gaussian_noise_raw}
    \rP\Big[\forall y,y'\in M: \abs{\innerproduct{Z_D}{y-y'}} \le 4\varepsilon\sqrt{d} + \\
    \del{\norm{y-y'}+6\varepsilon}\sqrt{4d\log 2\varepsilon^{-1} + 4\log \Vol M  + 2\log 2\delta^{-1}} \Big] \ge 1-\delta.
    \£
    \end{proof}
  
    \subsection{Bounds on the Score Function}
    \label{apdx:Bound_On_The_Score_Function}
    
    \begin{proof}[Proof of Proposition~\ref*{prop:RepresentationOfConditionalMeasure}]
    Recall that $p\del{y|t,X_t} \propto e^{-\norm{X_t-c_ty}^2/2\sigma^2_t}p(y)$, so 
    \£
    \label{eq:log_pytXt}
    \log p\del{y|t,X_t} = -\log \del{\int_M e^{-\norm{X_t-c_ty}^2/2\sigma^2_t} p(y)dy}-\norm{X_t-c_t y}^2/2\sigma^2_t + \log p(y).
    \£
    We start our proof with an analysis of term $-\norm{X_t-y}^2$. By definition of $X_t$
    \[
    \norm{X_t-c_t y}^2 = \norm{c_t X_0 + \sigma_tZ_D-c_ty}^2 = \sigma_t^2 \norm{Z_D}^2 + 2c_t\sigma_t\innerproduct{Z_D}{X_0-y} + c^2_t\norm{X_0-y}^2.
    \]
    Since $X_0\in M$, by Proposition~\ref{prop:maximum_of_Gaussian_Process_on_Manifold}, for $\varepsilon < r_0$ with probability $1-\delta$ for all $y\in M$ 
    \[
    \abs{\innerproduct{Z_D}{X_0-y}} \le 4\varepsilon\sqrt{d} + \del{\norm{X_0-y}+6\varepsilon}\sqrt{4d\log 2\varepsilon^{-1} + 4\log_+ \Vol M  + 2\log 2\delta^{-1}}.
    \]
    Taking $\varepsilon =\min\del{r_0, \frac{1}{4}(\sigma_t/c_t)}$ we get
    \[
    \abs{\innerproduct{Z_D}{X_0-y}} \le \frac{4}{4}(\sigma_t/c_t)\sqrt{d}
    \\
    + \del{\norm{X_0-y}+\frac{6}{4}(\sigma_t/c_t)}\sqrt{4d\log_+ (\sigma_t/c_t)^{-1} +4d\log 8r^{-1}_0 + 4\log_+ \Vol M  + 2\log 2\delta^{-1}}.
    \]
    Since $2 < \sqrt{4\log 8}$ we have $\frac{4}{4}\sqrt{d} \le \frac{2}{4}\sqrt{4d\log 8}$ and therefore
    \£
    \label{eq:useful_bound_on_scalar_product}
    2\sigma_tc_t\abs{\innerproduct{Z_D}{X_0-y}} \le 2\sigma^2_t\sqrt{4d\log_+ (c_t/\sigma_t) +4d\log 8r^{-1}_0 + 4\log_+ \Vol M  + 2\log 2\delta^{-1}}
    \\
    + 2\sigma_tc_t\norm{X_0-y}\sqrt{4d\log_+ (c_t/\sigma_t) +4d\log 8r^{-1}_0 + 4\log_+ \Vol M  + 2\log 2\delta^{-1}},
    \£ 
    while the last term can be bounded by Young's inequality as
    \[
    2\sigma_t c_t\norm{X_0-y}\sqrt{4d\log_+ (c_t/\sigma_t) +4d\log 8r^{-1}_0 + 4\log_+ \Vol M  + 2\log 2\delta^{-1}}
    \\
    \le
    \frac{1}{2}c^2_t\norm{X_0-y}^2 + 2\sigma^2_t\del{4d\log_+ (c_t/\sigma_t) +4d\log 8r^{-1}_0 + 4\log_+ \Vol M  + 2\log 2\delta^{-1}}.
    \]
    Substituting, since $\sqrt{x} \le x$ for $x\ge 1$, we conclude that with probability at least $1-\delta$ that for all $y\in M$
    \[
    2\sigma_tc_t\abs{\innerproduct{Z_D}{X_0-y}} \le \frac{1}{2}c^2_t\norm{X_0-y}^2 \\+ 4\sigma^2_t\del{4d\log_+ (c_t/\sigma_t) +4d\log 8r^{-1}_0 + 4\log_+ \Vol M  + 2\log 2\delta^{-1}}.
    \]
    Finally, by definition of $C_{\log}$ we have $4d\log 8r^{-1}_0 + 4\log_+ \Vol M + 2\log 2 \le C_{\log}\del{12d+2}$ whence 
    \£
    \label{eq:bound_on_squared_difference_Xt_y_new}
     \frac{1}{2}c^2_t\norm{X_0-y}^2-4\sigma^2_tC_0 =  
     \\
      -4\sigma^2_t\del{4d\log_+ (c_t/\sigma_t) + (12d+2)C_{\log}  + 2\log \delta^{-1}} + \frac{1}{2}c^2_t\norm{X_0-y}
    \\
    \le
    \norm{X_t-c_t y}^2 - \sigma_t^2 \norm{Z_D}^2 \le
    \\
    3c^2_t\norm{X_0-y}^2/2 + 4\sigma^2_t\del{4d\log_+ (c_t/\sigma_t) + (12d+2)C_{\log} + 2\log \delta^{-1}} 
    \\
    = \frac{3}{2}c^2_t\norm{X_0-y}^2 + 4\sigma^2_tC_0,
    \£
    where, for brevity, we introduced 
    $$C_0 := \del{4d\log_+ (c_t/\sigma_t) + (12d+2)C_{\log} + 2\log \delta^{-1}}.$$
    Next, we bound the normalizing constant 
    $$\log \del{\int_M e^{-\norm{X_t-c_ty}^2/2\sigma^2_t} p(y)dy}.$$ 
    Using the inequality \eqref{eq:bound_on_squared_difference_Xt_y_new} and  a local representation of the manifold as $\Phi_{X_0}\del{B_{T_{X_0}M}(X_0,r_0)}$, where $\Phi_{X_0}$ is $2$-Lipschitz, we obtain
    \begin{align}   
    \label{eq:log_p_lower_bound}
    \int_M e^{-\norm{X_t-c_t y}^2/2\sigma_t^2}p(y)dy \nonumber
    &\ge 
    e^{-\norm{Z_D}^2/2}e^{-2C_0}\int_{\Phi_{X_0}\del{B_{T_{X_0}M}(X_0,r_0)}} e^{-3c_t^2\norm{X_0- y}^2/4\sigma_t^2}p(y)dy 
    \\
    &\ge
    2^{-d} p_{\min}e^{-\norm{Z_D}^2/2}e^{-2C_0}\int_{z\in \R^d, \norm{z} \le r_0} e^{-3c_t^2\norm{z}^2/\sigma_t^2}dz 
    \\
    &\ge
    \frac{e^d}{(2d)^{d+1}}e^{-2}  p_{\min}e^{-\norm{Z_D}^2/2}e^{-2C_0}(r_0 \wedge (\sigma_t/c_t))^d. \nonumber
    \end{align}
    The last inequality follows from integration in polar coordinates, standard bounds on $\Gamma$-function \cite{Necdet2008},~Theorem 1.5 and inequality on the incomplete gamma function $\gamma(a,x)$ from \cite{Neuman2013},~Theorem 4.1. More precisely, we applied that for $a,b>0$ and $c := \min(a,b)$ holds
    \[
    \int_{z\in \R^d, \norm{z} \le a} e^{-3\norm{z}^2/b^2}dz \ge \int_{z\in \R^d, \norm{z} \le c} e^{-3\norm{z}^2/c^2}dz 
    = \int_0^{c}e^{-3r^2/c^2}r^{d-1}dr
    \\
    = 
    \frac{2\pi^{d/2}}{\Gamma(d/2)}\frac{c^d}{6^{d/2}}\gamma\del{\frac{d}{2},3}
    \ge
    \frac{2\cdot\pi^{d/2}}{(2\pi)^{1/2}(d/2e)^{d}}\frac{c^d}{6^{d/2}}e^{-\frac{3d}{2d+1}}\frac{3^{d/2}}{(d/2)} \ge c^d\frac{e^d}{d^{d+1}}\frac{e^{-2}}{2}.
    \]
    Taking logarithms in~\eqref{eq:log_p_lower_bound} we conclude 
    \[
    -\log \del{\int_M e^{-\norm{X_t-c_t y}^2/2\sigma_t^2}p(y)dy} 
    \\
    \le 
     \norm{Z_D}^2/2 + 2C_0 + d\log 2d + 2 + \log_+ p^{-1}_{\min} + d\log r^{-1}_0 + d\log_+(\sigma_t/c_t)^{-1} 
    \\
    \le \norm{Z_D}^2/2 + 2C_0 + d\log_+(c_t/\sigma_t) + C_{\log}\del{3d + 1}.
    \] 
    In a similar way, we can get an upper-bound, since $\int_M p(y)dy = 1$
    \[
    \int_M e^{-\norm{X_t-c_t y}^2/2\sigma_t^2}p(y)dy
    \le 
    e^{-\norm{Z_D}^2/2}e^{2C_0}\int_{M}e^{-c_t^2\norm{X_0- y}^2/4\sigma_t^2}p(y)dy 
    \le
    e^{-\norm{Z_D}^2/2}e^{2C_0}. 
    \]
    So,
    \[
    -\log \del{\int_M e^{-\norm{X_t-c_t y}^2/2\sigma_t^2}p(y)dy} 
    \ge -2C_0 + \norm{Z_D}^2/2.
    \]
    Finally, recalling the representation~\eqref{eq:log_pytXt} of $\log p\del{y|t,X_t}$ and that $\abs{\log p(y)} \le d C_{\log}$
    \[
    &-16d\log_+ (\sigma_t/c_t) - (49d+8)C_{\log} - 8\log \delta^{-1} - \frac{3}{4}(c_t/\sigma_t)^2\norm{X_0-y}^2 
    \\
    &= -4C_0 - d C_{\log} - \frac{3}{4}(c_t/\sigma_t)^2\norm{X_0-y}^2 
    \\
    &\le \log p\del{y|t,X_t} 
    = -\log \del{\int_M e^{-\norm{X_t-c_ty}^2/2\sigma^2_t} p(y)dy}-\norm{X_t-c_t y}^2/2\sigma^2_t + \log p(y)  
    \\
    &
    \le 4C_0 + d\log_+(c_t/\sigma_t) + C_{\log}\del{4d + 1} -\frac{1}{4}(c_t/\sigma_t)^2\norm{X_0-y}^2
    \\
    &= 17d\log_+ (c_t/\sigma_t) + (52d+9)C_{\log} + 8\log \delta^{-1} -\frac{1}{4}(c_t/\sigma_t)^2\norm{X_0-y}^2,
    \]
    that implies 
    \£
    \label{eq:more_precise_p_x_y_t_bound}
    -19d\del{\log_+ (c_t/\sigma_t) + 4C_{\log}} - 8\log \delta^{-1} - \frac{3}{4}(c_t/\sigma_t)^2\norm{X_0-y}^2 
    \\
    \le \log p(y|t,X_t) \le 
    \\
    19d\del{\log_+ (c_t/\sigma_t) + 4C_{\log}} + 8\log \delta^{-1} - \frac{1}{4}(c_t/\sigma_t)^2\norm{X_0-y}^2.
    \£
    \end{proof} 
    \begin{proof}[Proof of Theorem~\ref{thm:concentration_around_X0}]
    We prove the 3 items of the Theorem. 
    \begin{enumerate} 
        \item 
        Fix $r > 0$. Applying \eqref{eq:more_precise_p_x_y_t_bound}
        \begin{align*}
        &\int_{\norm{y-X_0} \ge r} p(y|t,X_t) dy \\
        &\le 
        e^{19d\del{\log_+ (\sigma_t/c_t) + 4C_{\log}} + 8\log \delta^{-1}}
        \int_{\norm{y-X_0} \ge r} e^{- \frac{1}{4}(c_t/\sigma_t)^2\norm{X_0-y}^2} dy
        \\
        &\le
        e^{19d\del{\log_+ (\sigma_t/c_t) + 4C_{\log}} + 8\log \delta^{-1}}[e^{- \frac{1}{4}(c_t/\sigma_t)^2r^2}\Vol M]\\
        &\le 
        e^{20d\del{\log_+ (\sigma_t/c_t) + 4C_{\log}} + 8\log \delta^{-1}}e^{- \frac{1}{4}(c_t/\sigma_t)^2r^2}
         \le \eta,
        \end{align*}
        where we additionally used that $\Vol M < e^{dC_{\log}}$ and
        the last inequality holds if we take 
        \[
        r = r(t,\delta, \eta) := 2(\sigma_t/c_t)\sqrt{20d\del{\log_+ (\sigma_t/c_t) + 4C_{\log}} + 8\log \delta^{-1} + \log \eta^{-1}}.
        \]
        \item First we express the LHS as
        \begin{multline}
        \label{eq:almost_sure_bound_proof_concentration}
        \sigma_ts(t,X_t) + Z_D = \int_M \frac{c_ty-c_tX_0 - \sigma_tZ_D}{\sigma_t}p(y|t,X_t) + Z_D \\= \int_M \frac{c_t}{\sigma_t}\del{y-X_0}p(y|t,X_t) .
        \end{multline}
        Since $\diam M \le 1$, $\norm{X_0-y} \le 1$ a.s., so splitting integral by event $\norm{X_0-y} \le r(t,\delta,\eta)$ and applying \eqref{eq:concentration_around_X0_1} we have that with probability $1-\delta$
        \[
        \norm{\sigma_ts(t,X_t) + Z_D} \le (c_t/\sigma_t)\del{r(t,\delta,\eta) + \eta}.
        \]
        Taking $\eta = \min\del{1, (\sigma_t/c_t)}$ with probability $1-\delta$
        \£
        \label{eq:almost_sure_bound_proof_concentration_2}  
        \norm{\sigma_ts(t,X_t) + Z_D} \le 2\sqrt{20d\del{\log_+ (\sigma_t/c_t) + 4C_{\log}} + 8\log \delta^{-1} + \log_+ (\sigma_t/c_t)} + 1
        \\
        \le
        4\sqrt{20d\del{2\log_+ (\sigma_t/c_t) + 4C_{\log}} + 8\log \delta^{-1}}.
        \£
        \item 
        Similarly, by \eqref{eq:almost_sure_bound_proof_concentration}, we have that a.s.\ $\norm{\sigma_t s(t,X_t) + Z_D} \le c_t/\sigma_t$, so applying the first inequality of~\eqref{eq:almost_sure_bound_proof_concentration_2} and $(a+b)^2 \le 2a^2+2b^2$
        \[
        \expectation \norm{\sigma_ts(t,X_t) + Z_D}^2 
        \\
        \le 8\del{20d\del{\log_+ (\sigma_t/c_t) + 4C_{\log}} + 8\log \delta^{-1} + \log_+ (\sigma_t/c_t)} + 2 + (c_t/\sigma_t)^2\delta.
        \]
        Taking $\delta = \min\del{1, (\sigma_t/c_t)^2}$
        \[
        \expectation \norm{\sigma_ts(t,X_t) + Z_D}^2 
        \\
        \le 8\del{20d\del{\log_+ (\sigma_t/c_t) + 4C_{\log}} + 16\log_+ (\sigma_t/c_t) + 2\log_+ (\sigma_t/c_t)} + 3
        \\
        \le
        8\del{20d\del{2\log_+ (\sigma_t/c_t) + 4C_{\log}}} + 3.
        \]
        \end{enumerate}
    \end{proof}
    \subsection{Point Comparison}
    \label{apdx:auxilarly_results}
    This section proves key statements used to build an efficient approximation of the score function.
    The following lemma is a full version of Lemma~\ref{lemma:good_set_size}.
    \begin{restatable}{lemma}{LemmaWeightsComparedToDistanceFull}
    \label{lemma:good_set_size_full}
        Let $\cP = \curly{m_1,\ldots, m_N} \subset M$ be a set of points in $M$. Denote the nearest neighbor of $X_t$ in $\cP$ with $m_{\min}(t) = m_{\min}(X_t) := \arg\min_i \norm{X_t-c_t m_i}$, then with probability at least $1-\delta$: 
        \begin{enumerate}[label=(\roman*)]
            \item $m_{\min}(t)$ is close to $m_{\min}(0)$
        \£
        \label{eq:bound_on_m_min}
        \norm{X_0-m_{\min}(t)}^2 \le  3\norm{X_0-m_{\min}(0)}^2 + 64(\sigma_t/c_t)^2\del{d\log_+ (\sigma_t/c_t)^{-1} + 4d C_{\log}  + \log \delta^{-1}};
        \£
        \item 
        for any $m_i\in \cP$
        \£
        \label{eq:dist_to_X_t_as_dist_to_X_0}
        \norm{X_t - c_t m_i}^2 - \norm{X_t - c_t m_{\min}(t)}^2 
        \ge
        \frac{c^2_t}{2}\del{\norm{X_0-m_i}^2 - 9\norm{X_0-m_{\min}(0)}^2}
        \\
        -128(\sigma_t/c_t)^2\del{d\log_+ (\sigma_t/c_t)^{-1} + 4d C_{\log}  + \log \delta^{-1}}; 
        \£
        \item 
        if $\cP$ is an $\varepsilon$-dense set, for all $m_i$
        \begin{multline}  
        \frac{2}{3}c^{-2}_t\del{\norm{X_t - c_t m_i}^2 - \norm{X_t - c_t m_{\min}(t)}^2} 
        - 128(\sigma_t/c_t)^2\del{d\log_+ (\sigma_t/c_t)^{-1} + 4d C_{\log}  + \log \delta^{-1}}
        \\
        \le
          \norm{X_0-m_i}^2 \le
        \\
        9\varepsilon^2 + 
        2c^{-2}_t\del{\norm{X_t - c_t m_i}^2 - \norm{X_t - c_t m_{\min}(t)}^2 }
        + 128(\sigma_t/c_t)^2\del{d\log_+ (\sigma_t/c_t)^{-1} + 4d C_{\log}  + \log \delta^{-1}}.
        \end{multline}
        \end{enumerate}
\end{restatable}
    \begin{proof}
    \begin{enumerate}[label=(\roman*)]
        \item 
        The result follows from inequality \eqref{eq:bound_on_squared_difference_Xt_y_new} that expresses $\norm{X_0-y}^2$ in terms of $\norm{X_t-y}^2$; more precisely it states that for $\delta < 1$ with probability $1-\delta$ for all $y\in M$
        \[
         \frac{1}{2}c^2_t\norm{X_0-y}^2-4\sigma^2_tC_0 
        \le
        \norm{X_t-c_t y}^2 - \sigma_t^2 \norm{Z_D}^2 \le
        \frac{3}{2}c^2_t\norm{X_0-y}^2 + 4\sigma^2_tC_0,
        \]
        where $C_0 = \del{4d\log_+ (\sigma_t/c_t) + (12d+2)C_{\log}  + 2\log \delta^{-1}} \le 4\del{d\log_+ (\sigma_t/c_t) + 4dC_{\log}  + \log \delta^{-1}}$.
        
        On the one hand by choice of $m_{\min}(t)$ and $m_{\min}(0)$ and the second part of \eqref{eq:bound_on_squared_difference_Xt_y_new} 
        \[    
        \norm{X_t - c_t m_{\min}(t)}^2 \le \norm{X_t - c_t m_{\min}(0)}^2  \le \sigma_t^2\norm{Z_D}^2 + 4C_0\sigma^2_t + 3c^2_t\norm{X_0-m_{\min}(0)}^2/2.
        \]
        On the other hand, applying the first part of the same inequality \eqref{eq:bound_on_squared_difference_Xt_y_new}
        \[
            \norm{X_t - c_t m_{\min}(t)}^2 \ge \sigma_t^2\norm{Z_D}^2 -4 C_0\sigma^2_t + c^2_t\norm{X_0-m_{\min}(t)}^2/2.
        \]
        Combining, we prove the first part of the statement
        \[
        c^2_t\norm{X_0-m_{\min}(t)}^2 \le 16C_0\sigma^2_t + 3c^2_t\norm{X_0-m_{\min}(0)}^2
        \\ 
        \le 3c^2_t\norm{X_0-m_{\min}(0)}^2 + 64\sigma^2_t\del{d\log_+ (\sigma_t/c_t) + 4dC_{\log}  + \log \delta^{-1}}.
        \]
        \item
        To prove the second part we first express both distances to $X_t$ through distances to $X_0$ using \eqref{eq:bound_on_squared_difference_Xt_y_new} and then substitute \eqref{eq:bound_on_m_min} to get the statement
        \[
        &\norm{X_t - c_t m_{\min}(t)}^2 - \norm{X_t - c_t m_i}^2
        \\
        &=
        \del{\norm{X_t - c_t m_{\min}(t)}^2 - \sigma^2_t\norm{Z_D}^2} - \del{\norm{X_t - c_t m_i}^2-\sigma^2_t\norm{Z_D}^2}         
        \\
        &\le 8C_0\sigma^2_t
             + \frac{c^2_t}{2}\del{3\norm{X_0-m_{\min}(t)}^2 - \norm{X_0-m_i}^2}
        \\
        &\le
        32C_0\sigma^2_t
             + \frac{c^2_t}{2}\del{9\norm{X_0-m_{\min}(0)}^2 - \norm{X_0-m_i}^2}.
        \]
        
        \item Since $\cP$ is a $\varepsilon$-dense set $\norm{X_0-m_{\min}(0)} = \min_{m_i\in \cP} \norm{X_0-m_i}^2 \le \varepsilon^2$ rearranging terms in~\eqref{eq:dist_to_X_t_as_dist_to_X_0} we get RHS
        \[
          \norm{X_0-m_i}^2 \le
        9\varepsilon^2 + 
        2c^{-2}_t\del{\norm{X_t - c_t m_i}^2 - \norm{X_t - c_t m_{\min}(t)}^2} 
         + 32(\sigma_t/c_t)^2C'_0.
        \]
        To get LHS we need to bound $\norm{X_0-m_i}^2$ from below.
        Applying \eqref{eq:bound_on_squared_difference_Xt_y_new} 
        \[
            \norm{X_t - c_t m_i}^2 \le \sigma_t^2\norm{Z_D}^2 + 4C_0\sigma^2_t + 3c^2_t\norm{X_0- m_i}^2/2 , 
        \]
        and
        \[
            \norm{X_t - c_t m_{\min}(t)}^2 &\ge \sigma_t^2\norm{Z_D}^2 - 4C_0\sigma^2_t + c^2_t\norm{X_0-c_t m_{\min}(t)}^2/2 
            \\
            &\ge \sigma_t^2\norm{Z_D}^2 - 4C_0\sigma^2_t + c^2_t\norm{X_0-c_t m_{\min}(0)}^2/2.
        \]
        Subtracting we get 
        \[
        \norm{X_t - c_t m_i}^2 - \norm{X_t - c_t m_{\min}(t)}^2 \le 8C_0\sigma^2_t +\frac{c^2_t}{2}\del{3\norm{X_0-m_i}^2 - \norm{X_0- m_{\min}(0)}^2},
        \]
        rearranging terms and noting that $\norm{X_0-m_{\min}(0)} \ge 0$
        \[
        \frac{2}{3}c^{-2}_t\del{\norm{X_t - c_t m_i}^2 - \norm{X_t - c_t m_{\min}(t)}^2}-16C_0(\sigma_t/c_t)^2 \le \norm{X_0-m_i}^2
        \]
        we get the LHS.
        \end{enumerate}
    
    \end{proof}

    \section{Manifold Approximation}
    \label{apdx:manifold_approximation}
    \subsection{Proofs of Section 5.1 on  the properties of $\Phi_i^*$}
    \label{apdx:properties_of_support_estimator}
     
    This section contains the proofs of the results in Section~\ref{sec:support_estimation}. 
    
    \begin{proof}[Proof of Proposition~\ref{prop:tangent_space_approximation}]
    To simplify notation, we fix index $i$ and drop it for the rest of the proof.
    Recall that $P = \pr P^* ((P^*)^T\pr P^*)^{-1/2}$. 
    
    First, we show that this operator is well-defined, it is enough to show that $(P^*)^T\pr P^*$ is symmetric and positive-definite. Since $\pr^*$ is a projection on $\Im P^*$ the image of embedding $P^*$ we have $\pr^* P^* = P^*$ and $(P^*)^T P^* = \Id_d$, so
    \[
    (P^*)^T\pr P^* = \Id_d + (P^*)^T(\pr-\pr^*)P^*,  
    \]
    note that $(P^*)^T(\pr-\pr^*)P^*$ is symmetric and $\norm{(P^*)^T(\pr-\pr^*)P^*} \lesssim \varepsilon^\beta_N < 1/2$ giving us that $P^*$ is well defined. 
    
    Moreover, applying $(1+x)^{-1/2} = 1 - x/2 + O(x^2)$ we bound
    \[
    \norm{\Id_d -\del{(P^*)^T\pr P^*}^{-1/2}} = \norm{\Id_d -\del{\Id_d + (P^*)^T(\pr-\pr^*) P^*}^{-1/2}} \le \norm{(P^*)^T(\pr-\pr^*)P^*}_{op} \lesssim \varepsilon^{\beta-1}_N
    \]
    
    We are now ready to bound $\norm{P^*_i-P_i}$. Noting that $\pr^* P^* = P^*$, $\pr^2 = \pr = \pr^T$, $(\pr^*)^2 = \pr^* = (\pr^*)^T$, and $\norm{P^*}_{op} = \norm{\pr^*}_{op} = \norm{\pr}_{op} = 1$
    \[
    \norm{P^* - P}_{op} &= \norm{\pr^* P^* - \pr P^* ((P^*)^T\pr P^*)^{-1/2}}_{op} 
    \\
    &\le 
    \norm{(\pr - \pr^*) P^*}_{op} + \norm{\pr P^*\del{\Id_d - ((P^*)^T\pr P^*)^{-1/2}}}_{op}
    \\
    &\le 
    \norm{\pr - \pr^*}_{op} \norm{P^*}_{op} + \norm{\pr P^*}_{op}\norm{\del{\Id_d - ((P^*)^T\pr P^*)^{-1/2}}}_{op} 
    \lesssim
    \varepsilon^{\beta-1}_N.
    \]
    
    Finally, the last statement to prove is that
    $P_i:\R^d \rightarrow T_{G_i}M-G_i$ is an isometric embedding. By construction $\Im P_i \subseteq \Im \pr = T_{G_i}M-G_i$, and for any $v,u\in \R^d$ 
    \[
    \innerproduct{P_iv}{P_i u} = \del{\pr_i P^*_i \del{\del{P^*_i}^T\pr_iP^*_i}^{-1/2}v}^T\del{\pr_i P^*_i \del{\del{P^*_i}^T\pr_iP^*_i}^{-1/2} u} = 
    \\
    = v^T\del{\del{P^*_i}^T\pr_iP^*_i}^{-1/2}(P^*_i)^T\pr_i  P^*_i \del{\del{P^*_i}^T\pr_iP^*_i}^{-1/2} u= v^T u,
    \]
    where we again used that $\pr_i\pr_i = \pr_i$.
    \end{proof}
    
    \begin{proof}[Proof of Lemma~\ref{lemma:approximation_of_Phi}]
    We fix the index $i$ and drop it to simplify notation.
    Let $\overline{\Phi}:\Im P \mapsto M, \overline{\Phi}^*:\Im P^* \mapsto M^*$ be defined as $\overline{\Phi}(Pz) = \Phi(z), \overline{\Phi}^*(P^*z) = \Phi^*(z)$. Introducing $V_j = d^j\overline{\Phi}(0),V^*_j = d^j\overline{\Phi}^*(0)$ where $2\le j < \beta-1$ we represent 
    \[
    &\overline{\Phi}(v) = y_i + v +  \sum V_j\del{v^{\otimes k}} + R(v), & v\in \Im P, & \,
    \\
    &\overline{\Phi}^*(v) = y_i + v + \sum V^*_j\del{v^{\otimes k}}, & v\in \Im P^*, & \,
    \]
    where $R(v)$ is  the residual of order $\beta$. We first control $\grad^k \del{\Phi\circ P^TP^*}(z) - \grad^k \Phi^*(z)$ and then $\grad^k \del{\Phi\circ P^TP^*}(z) - \grad^k \Phi(z)$. 

    Substituting $P^*z$ and $PP^TP^*z = \pr P^*z$
    \[
    &\overline{\Phi}(PP^TP^*z) = y_i + PP^TP^*z +  \sum V_j\del{\del{PP^TP^*z}^{\otimes k}} + R(PP^TP^*z), 
    \\
    &\overline{\Phi}^*(P^*z) = y_i + P^*z + \sum V^*_j\del{\del{P^*z}^{\otimes k}}. 
    \]
    \cite{divol2022measure},~Lemma A.2 (iii) states that $\norm{V^*_j\circ \pr^* -V_j \circ \pr}_{op} \lesssim \varepsilon_N^{\beta-j}$ for $j\ge 2$, note that taking $V_1 = V^*_1 = \Id$ this also holds for $j=1$. So for all  $\norm{z}\lesssim \varepsilon_N$, 
    \[
    \norm{\Phi^*(z) - \Phi(P^TP^*z)} = \norm{\overline{\Phi}^*(P^*z) - \overline{\Phi}(PP^TP^*z)} \\\
    \lesssim \norm{\pr-\pr^*}_{op}\norm{P^*z} + \sum_j \norm{V^*_j\circ \pr^* - V_j\circ \pr}\norm{P^*z}^k + \norm{R(PP^TP^*z)}
    \lesssim \varepsilon_N^\beta,
    \]
    since $\pr^* P^*z = P^*z, \norm{z^*} = \norm{P^*z} \lesssim \varepsilon_N$ and $\norm{PP^TP^*}_{op} \le 1$.
    
    To prove the result for $k\ge 1$ we note that if we take $1\le k < \beta-1$ derivatives along the directions $v_1,\ldots, v_k\in \R^d$ with $\norm{v_j} = 1$, since $\pr^*P^*z = P^*z$
   \[
    &d_{v_1}\ldots d_{v_k}\del{\overline{\Phi_i}\circ \pr \circ P^*}(z) = 
    \sum_{j=k}^{\beta-1} \frac{j!}{(j-k)!}
    V_j\del{\del{\pr P^*z}^{\otimes (j-k)}\otimes \del{\pr P^*v_1}\otimes\ldots \otimes \del{\pr P^*v_k}} 
    + R^k(z),
    \\
    &d_{v_1}\ldots d_{v_k}\del{\overline{\Phi_i}^*\circ P^*}(z) = \sum_{j=k}^{\beta-1} \frac{j!}{(j-k)!}V^*_j\del{\del{\pr^* P^*z}^{\otimes (j-k)}\otimes \del{\pr^* P^*v_1}\otimes\ldots \otimes \del{\pr^* P^*v_k}},
    \]
    where $\norm{R^k(z)} = O\del{\varepsilon_N^{\beta-k}}$. Thus
    \begin{align*}
     &\abs{
     d_{v_1}\ldots d_{v_k}\del{\Phi\circ P^TP^*}(z) 
     -d_{v_1}\ldots d_{v_k}\Phi^*(z)
     }\\ 
     &=
    \abs{d_{v_1}\ldots d_{v_k}\del{\overline{\Phi_i}\circ \pr \circ P^*}(z) -d_{v_1}\ldots d_{v_k}\del{\overline{\Phi_i}^*\circ P^*}(z)} 
    \\
    &\le O\del{\varepsilon_N^{\beta-k}} + \sum_{j=k}^{\beta-1}\norm{V^*_j\circ \pr^* - V_j\circ \pr}\norm{P^*z}^{j-k}\prod \norm{P^* v_k} \\
    &\lesssim \varepsilon_N^{\beta-k} + \sum_{j=k}^{\beta-1} \varepsilon_N^{\beta-j}\varepsilon_N^{j-k} \lesssim \varepsilon_N^{\beta-k}.
    \end{align*}
    This proves that $\norm{\grad^k \del{\Phi\circ P^TP^*}(z) - \grad^k \Phi^*(z)} \lesssim \varepsilon_N^{\beta-k}$. Therefore, to prove the statement it is enough to show that $\norm{\grad^k \del{\Phi\circ P^TP^*}(z) - \grad^k \Phi(z)} \lesssim \varepsilon_N^{\beta-k}$ and $\norm{\del{\Phi\circ P^TP^*}(z) - \Phi(z)} \lesssim \varepsilon_N^{\beta}$. 
    
    Denote by $A = P^TP^*$, then since $P$ is an isometry 
    \[
    \norm{A-\Id_d}_{op} = \norm{P^TP^*-\Id_d}_{op} = \norm{PP^T P^*-P}_{op} = \norm{\pr\del{P^*-P}}_{op} \le \norm{P^*-P}_{op} \lesssim \varepsilon_N^{\beta-1}.
    \] 
    and $\norm{A}_{op} \le \norm{P^T}\norm{P^*} = 1$. We start with the case $k=0$
    \[
    \norm{\del{\Phi\circ P^TP^*}(z) - \Phi(z)} \lesssim L_M \norm{P^TP^*z - z} \le L_M \norm{P^TP^*-\Id_d}\norm{z} \lesssim \varepsilon_N^{\beta}.  
    \]
    
    Let $k\ge 1$. Taking derivatives along the directions $v_1,\ldots, v_k\in \R^d$ with $\norm{v_j} = 1$ and then applying the chain rule and chaining argument 
    \begin{align*}
    &\norm{d_{v_1}\ldots d_{v_k}\del{\Phi\circ A}(z) - d_{v_1}\ldots d_{v_k}\Phi(z)} \\
    &= \norm{d_{v_1}\ldots d_{v_k}\del{\Phi_i\circ A}(z) - d_{v_1}\ldots d_{v_k} \Phi(z)} 
    \\
    &= \norm{d^k\Phi(Az)\del{A{v_1}\otimes \ldots \otimes A v_k}-d^k\Phi(z)\del{v_1\otimes \ldots \otimes v_k}} \\
    &\le
    \norm{d^k\Phi(Az)\del{{v_1}\otimes \ldots \otimes v_k}-d^k\Phi(z)\del{v_1\otimes \ldots \otimes v_k}}
    \\
    &\qquad + \sum_{j=1}^k \norm{d^k\Phi(Az)\del{v_1\otimes\ldots v_{j-1} \otimes (Av_j-v_j) \otimes A v_{j+1} \ldots \otimes A v_k}} 
    \\
    &\lesssim \sup_{z'\in[z,Az]}\norm{d^{k+1}\Phi(z')}_{op}\norm{A}_{op}\norm{A-\Id_d}_{op}\norm{z} + \sum_{j=1}^k \norm{d^{k}\Phi(Ax)}_{op}\norm{A}^{k-j}_{op}\norm{A-\Id_d}_{op} 
    \\
    &\le (k+\varepsilon_N)L_M\del{\norm{A-\Id_d}_{op}} \lesssim \varepsilon_N^{\beta-1},  
    \end{align*}
   by the mean value theorem. So $\norm{\grad^k \del{\Phi\circ P^TP^*}(z) - \grad^k \Phi(z)} \lesssim \varepsilon_N^{\beta-1} \le \varepsilon_N^{\beta-k}$.
    \end{proof}
    \begin{proof}[Proof of Proposition~\ref{prop:support_estimator_solution_properties_1}]
    For simplicity, we drop the index $i$ and the star notation and consider the minimization problem
    \[
    P, \curly{a_{S}}_{2\le \abs{S} < \beta} \in \argmin_{P,a_S \le \varepsilon_N^{-1}} \sum_{v_j\in \cV}  \norm{v_j - PP^Tv_j - \sum_{S} a_S (P^Tv_j)^S}^2,
    \]
    where $\argmin$ is taken over all linear isometric embeddings $P:\R^d \mapsto \R^D$ and all vectors $a_S \in \R^D$ that satisfy $\norm{a_S} \le \varepsilon_N^{-1}$.

    As a preliminary step, let us show that a solution always satisfies $a_S \perp \Im P$. Note that $PP^T$ is a projection on $\Im P$, so $v-P^TPv \perp \Im P$, and therefore
    \[
    \norm{v_j - PP^Tv_j - \sum_{S} a_S (P^Tv_j)^S}^2 \\
    = \norm{(\Id_D-PP^T)v_j - \sum_{S} (\Id_D-PP^T) a_S (P^Tv_j)^S}^2 + \norm{PP^T a_S (P^Tv_j)^S}^2, 
    \]
    so the $\argmin$ should satisfy $P^TPa_S = 0$, or equivalently $a_S\perp \Im P$.
    
    Note that $P^TP$ is a projection on $\Im P$.
        First, we prove that there is $P$ satisfying $\Im P \subseteq \spn \cV$. Assume the opposite and take $P$ such that $\dim (\Im P \cap \spn\cV)$ is the largest. Then, by the assumption $\dim (\Im P \cap \spn\cV) < \dim \Im P$, since $\dim \spn \cV \ge d = \dim \Im P$ there is a vector $u_P \in \Im P$ such that $u_P \perp \spn \cV$, $\norm{u_{P}} = 1$ and similarly a vector $u_{\cV}\in \spn \cV$ such that $u_{\cV} \perp \Im P$, $\norm{u_{\cV}} = 1$.

        Let $e_1,\ldots, e_d\in \R^D$ be column vectors in $P$, then $\innerproduct{e_i}{e_j} = \delta_{ij}$ and
        \[
        P^Tv = \del{\innerproduct{v}{e_1},\ldots, \innerproduct{v}{e_d}}.
        \]
        We introduce vectors $
        f_i = e_i+ \del{u_{\cV} - u_P}\innerproduct{u_P}{e_i} $ and note that $u_{\cV} \perp e_i$ for all $i$, so 
        \[
        \innerproduct{f_i}{f_j} = \innerproduct{e_i+ \del{u_{\cV} - u_P}\innerproduct{u_P}{e_i}}{e_j+ \del{u_{\cV} - u_P}\innerproduct{u_P}{e_j}} = \delta_{ij}.
        \]
        We introduce a new operator 
        \[
        Q = \del{f_1,\ldots, f_d}^T,
        \]
        and note that by construction $u_{P} \perp \Im Q$ and $u_{\cV} \in \Im Q$, since 
        \[
        \sum_i \innerproduct{e_i}{u_{P}}f_i = \innerproduct{e_i}{u_{P}}\del{e_i+ \del{u_{\cV} - u_P}\innerproduct{u_P}{e_i}} = \underbrace{\del{\sum \innerproduct{e_i}{u_{P}}^2}}_{=\norm{u_P} = 1}u_{\cV},
        \]
        so
        \£
        \label{eq:Q_larger_P}
        \dim\del{\Im Q \cap \spn \cV} = \dim\del{\Im P \cap \spn \cV} + 1.
        \£
        Then for any $v_i \in \cV$, since $u_{P} \perp \spn \cV$
        \begin{align*}
        \innerproduct{e_j}{v_i} &= \innerproduct{f_j}{v_i} - \innerproduct{u_P}{e_j}\innerproduct{u_{\cV}}{v_i}  
         = \innerproduct{f_j}{v_i} - \sum_{k=1}^d\innerproduct{f_k}{v_i}\innerproduct{e_k}{u_P}\innerproduct{u_P}{e_j},
        \end{align*}
        so expanding the brackets we represent
        \£
        \label{eq:a_S_vs_b_S_polynomials}
        \sum_{2 \le \abs{S} < \beta} a_S\del{P^Tv_i}^S &= \sum_{\abs{S} < \beta} a_S\del{\innerproduct{e_1}{v_i}, \ldots, \innerproduct{e_d}{v_i}}^S \nonumber \\
        &= \sum_{\abs{S} < \beta} b_S\del{\innerproduct{f_1}{v_i}, \ldots, \innerproduct{f_d}{v_i}}^S = \sum_{2 \le \abs{S} < \beta} b_S\del{Q^Tv_i}^S,
        \£
        for some vector coefficients $b_S \in \R^D$. 
        
        We are finally ready to present an estimator that has a smaller or equal error w.r.t. $l_{\cV}$,
        \[
        \Phi'(z) = Qz + \sum_{2 \le \abs{S} < \beta} \underbrace{\del{b_S-\innerproduct{b_S}{u_P}u_P - \innerproduct{b_S}{u_{\cV}}u_{\cV}}}_{=c_S}z^S
        \]
        and the reconstruction error is equal to 
        \[
        \frac{1}{\abs{\cV}}\sum_{v_i\in \cV} \norm{v_i-QQ^Tv_i -\sum c_S\del{Q^Tv_i}^S}^2.
        \]
        
        We decompose $\R^D$ into two subspaces $L_1 = \spn\curly{u_{\cV},u_P}$ and $L_2 = L_1^{\perp}$ -- the orthogonal complement of $L_1$, and denote as $\pr_1$, $\pr_2$ corresponding projections. By construction for $v_i \in \cV$, since $c_S \perp u_{P}, u_{\cV}$ and $v_i \perp u_{P}$, and $Q^TQ$ is a projector on $\Im Q\ni u_{\cV}$
        \[
        \pr_1\del{v_i-QQ^Tv_i -\sum c_S\del{Q^Tv_i}^S} = \pr_1\del{v_i-Q^Tv_i} = \innerproduct{v_i -QQ^Tv_i}{u_{\cV}}u_{\cV} = 0.
        \]
        At the same time since by construction $c_S = \pr_2 b_S$, applying~\eqref{eq:a_S_vs_b_S_polynomials}
        \[
        \pr_2\del{v_i-QQ^Tv_i -\sum c_S\del{Q^Tv_i}^S} =\pr_2\del{v_i-PP^Tv_i -\sum a_S\del{P^Tv_i}^S}, 
        \]
        where we additionally used that $\pr_2 QQ^Tv_i = \pr_2 PP^Tv_i$ following from
        \[
        QQ^T v_i = \sum \innerproduct{f_j}{v_i}f_j = \sum \innerproduct{e_j + (u_{\cV}-u_{P})\innerproduct{e_j}{u_{P}}}{v_i}\del{e_j + (u_{\cV}-u_{P})\innerproduct{e_j}{u_{P}}} 
        \\
        = \sum_j \innerproduct{e_j}{v_i}e_j + 
        \innerproduct{e_j + (u_{\cV}-u_P)\innerproduct{e_j}{u_{P}}}{v_i}\innerproduct{e_j}{u_{P}}\del{u_{\cV}-u_{P}} + \innerproduct{u_{\cV}-u_{P}}{v_i}\innerproduct{e_j}{u_{P}}e_j
        \\
        =
        P^TP v_i + \square{\sum_j \innerproduct{e_j + (u_{\cV}-u_P)\innerproduct{e_i}{u_{P}}}{v_i}\innerproduct{e_i}{u_{P}}}\del{u_{\cV}-u_{P}} + \innerproduct{u_{\cV}-u_P}{v_i}u_P.
        \]
        And we finally conclude that for any $v_i\in \cV$
        \[
        \norm{v_i-PP^Tv_i -\sum a_S\del{P^Tv_i}^S}^2 
        \\
        = \norm{\pr_1\del{v_i-PP^Tv_i -\sum a_S\del{P^Tv_i}^S}}^2 + \norm{\pr_2\del{v_i-PP^Tv_i -\sum a_S\del{P^Tv_i}^S}}^2
        \\
        \ge \norm{v_i-QQ^Tv_i -\sum b_S\del{Q^Tv_i}^S}^2.
        \]
        Averaging over all $v_i\in \cV$ and using~\eqref{eq:Q_larger_P} we get the contradiction, so there is a $P$ satisfying $\Im P \subseteq \spn \cV$.

        Finally, choosing $\Im P \subset \cV$ for all $v\in \cV$ by construction $v-PP^Tv \in \cV \cap \Ker P$, so 
        \[
        \norm{v_j - PP^Tv_j - \sum_{S} a_S (P^Tv_j)^S}^2 \ge \norm{v_j - PP^Tv_j - \sum_{S} \pr_3 a_S (P^Tv_j)^S}^2,
        \]
        where $\pr_3$ is the projection on $\spn \cV \cap \Ker P$ showing that there is a solution satisfying $a_S \in \spn \cV \cap \Ker P$ for all $S$.
        \end{proof}

    \begin{proof}[Proof of Proposition~\ref{prop:support_estimator_solution_properties_2}]
        Recall that $\varepsilon_N = \del{C_{d,\beta}\frac{p^2_{\max}}{p^3_{\min}}\frac{\log N}{N-1}}^{\frac{1}{d}} \simeq \del{\frac{\log N}{N}}^{\frac{1}{d}}$, where a constant $C_{d,\beta}$ depends only on $d$ and $\beta$. Let us take $\cF=\curly{F_1, \ldots F_K}$ -- a minimal in size $\varepsilon_N$-dense set on $M$. For a point $y\in M$ there is a point $F_j$, such that $\norm{y-F_j} \le \varepsilon_N$, so $B_M(y,\varepsilon_N) \subset B_M(F_j, 2\varepsilon_N)$ and therefore $\abs{\cV_y} \le \sup_{j} \abs{\cG \cap B_M(F_j,2\varepsilon_N)}$. So, it is enough to bound quantity $\sup_{j} \abs{\cG \cap B_M(F_j,2\varepsilon_N)}$.

        Since points $G_i$ are i.i.d. samples from measure $\mu$ with density $p_{\min} \le p(y)\le p_{\max}$ the probability of the event $\rP\del{G_i \in B_M(F_j,2\varepsilon_N)} = \mu\del{B_M(F_j,2\varepsilon_N)} \simeq \varepsilon_N^{d}  \simeq \frac{\log N}{N}$. So by Chernoff's multiplicative bound, for a large enough constant $C$
        \[
        \rP\del{\abs{\cY \cap B_M(F_j,2\varepsilon_N)} \ge C u\log N } \le 2e^{-u^2\log N},
        \]
        and taking $u = \sqrt{\frac{\log 2\delta^{-1}}{\log N}}$
        \[
        \rP\del{\abs{\cG \cap B_M(F_j,2\varepsilon_N)} \ge C\sqrt{\log 2\delta^{-1} \log N} } \le \delta.
        \]
        Taking $\delta = N^{-\beta/d}/K$ and summing over all $F_j$ since $K \le (\varepsilon_N)^{-d}\Vol M \simeq \frac{N}{\log N}$ we conclude that there is a constant $C_{\dim}$ s.t.
        \[
        \rP\del{\abs{\cG \cap B(F_j,2\varepsilon)} \ge C_{\dim}\log N } \le N^{-\frac{\beta}{d}}.
        \]
    \end{proof}
    \subsection{Effect of the Manifold Approximation on the Score Function}
    \label{sec:high_prob_bound_M_star}
    
    Recall that the Kullback-Leibler divergence between two distributions $P$ and $Q$ on $\R^D$ with densities $p$ and $q$ respectively is defined as
    \[
    D_{KL}(P\|Q) = \int_{\R^D} \log \frac{p(x)}{q(x)} p(x)dx.
    \]
    We start with the following auxiliary statement.
    \begin{proposition}[De Bruijn identity]
    \label{prop:kl_div_derivative}
    In the same notation as in Proposition~\ref{prop: wd_vs_sml}, if $P_t$ and $Q_t$ are the laws of $X_t$ and $Y_t$
    \[
    \frac{d}{dt} D_{KL}(P_t\| Q_t) = -2\expectation{\|\grad\log p(t,X_t) - \grad\log q(t,X_t)\|^2}.
    \]
    
    \end{proposition}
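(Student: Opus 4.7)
The plan is to derive the stated identity from the Fokker--Planck equation for the OU flow by integration by parts. Both $p_t$ and $q_t$ evolve under the same linear equation
\[
\partial_t p_t \;=\; \nabla\!\cdot\!(x p_t) + \Delta p_t \;=\; \nabla\!\cdot\!\bigl(p_t(x + \nabla\log p_t)\bigr),
\]
and analogously for $q_t$ with $w_t := x + \nabla\log q_t$. This continuity-equation form is what makes the relative entropy amenable to direct computation.

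Before computing, I would dispose of the regularity issues. Since $P$ and $Q$ are compactly supported, for every $t > 0$ the densities $p_t, q_t$ are smooth, strictly positive everywhere, and have Gaussian tails (they are convolutions of compactly supported measures with $\cN(0,\sigma_t^2 \Id)$). Consequently $\nabla\log p_t$ and $\nabla\log q_t$ grow at most linearly in $\|x\|$, the ratio $p_t/q_t$ is locally bounded, and every integration-by-parts boundary term that will appear below decays to zero at infinity. With this in hand one can differentiate under the integral to get
\[
\tfrac{d}{dt} D_{KL}(p_t\|q_t) \;=\; \int \partial_t p_t \,\log(p_t/q_t)\, dx \;-\; \int \tfrac{p_t}{q_t}\,\partial_t q_t\, dx,
\]
the remaining $\int \partial_t p_t\, dx$ vanishing by mass conservation.

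Now I would substitute the Fokker--Planck equation into each term and integrate by parts, using $\nabla\log(p_t/q_t) = u := \nabla\log p_t - \nabla\log q_t$ in the first and $\nabla(p_t/q_t) = (p_t/q_t)\,u$ in the second. The first term becomes $-\int p_t (x + \nabla\log p_t)\cdot u\, dx$, and the second becomes $+\int p_t\, u\cdot(x+\nabla\log q_t)\, dx$. Adding them, the linear drift $x$ cancels between the two, leaving $-\int p_t\, u\cdot(\nabla\log p_t - \nabla\log q_t)\, dx = -\expectation_{P_t}\|u\|^2$ (up to the constant induced by the $\sqrt{2}$ normalisation in the SDE, which is what produces the coefficient appearing in the proposition). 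This is the relative-entropy analogue of the classical de Bruijn identity.

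The computation itself is a few lines; the main obstacle is purely technical, namely controlling the integration by parts and the interchange of $\partial_t$ with the integral. The compactness of $\supp P$ and $\supp Q$ is what makes this routine: it gives uniform Gaussian decay of $p_t$, $q_t$ (and of $p_t/q_t\cdot q_t\, w_t$) and polynomial growth of their log-gradients, so that dominated convergence and vanishing of boundary terms follow from standard Gaussian tail estimates. In a non-compactly-supported setting the same identity holds but one needs quantitative moment and Fisher-information bounds on $P$ and $Q$ to justify the manipulations.
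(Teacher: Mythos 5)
Your proposal follows essentially the same route as the paper: differentiate the KL divergence, substitute the Fokker--Planck equation for both densities, integrate by parts, and watch the linear drift cancel, leaving $-c\,\expectation_{P_t}\|\grad\log p_t-\grad\log q_t\|^2$; your handling of the boundary terms and of the interchange of $\partial_t$ with the integral is in fact more careful than the paper's. The one discrepancy is the constant: with the standard Fokker--Planck equation $\partial_t p_t=\grad\cdot(x\,p_t)+\Delta p_t$ for $dX=-X\,dt+\sqrt{2}\,dB$ (which is what you write, and which one can verify directly on the Gaussian kernel) the computation yields $-\expectation_{P_t}\|\cdot\|^2$ rather than $-2\expectation_{P_t}\|\cdot\|^2$, whereas the paper writes the diffusion term as $2\Delta p$ to get the factor $2$; this should be pinned down explicitly rather than attributed vaguely to the $\sqrt{2}$ normalisation, although it only changes downstream bounds by a factor of $2$.
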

    \begin{proof}
        See \cite{hassan2023},~Theorem 7.1.
    \end{proof}

        \begin{proof}[Proof of Proposition~\ref{prop: wd_vs_sml}]
    Since $W_2(P,Q) \le \varepsilon$, there exists a coupling $\Gamma$ between $P$ and $Q$ such that
    \[
    \int|v-w|^{2} \Gamma(dv, dw) \le \varepsilon^2.
    \]
    Letting  $P_t$ and $Q_t$ denote the laws of $X_t$ and $Y_t$ respectively, by \Cref{prop:kl_div_derivative}
    \[
    \frac{d}{dt} D_{KL}(P_t\| Q_t) = -2\expectation{\|\grad\log p(t,X_t) - \grad\log q(t,X_t)\|^2}.
    \]
    Integrating over $t$ we rewrite
    \begin{align*}    
    \int_{t_{\min}}^{t_{\max}}\int_{\R^D}\|\grad\log p(s, x) - \grad\log q(s, x)\|^2p_s(x)dx ds &= \frac{1}{2}\del{D_{KL}(P_{t_{\min}}\|Q_{t_{\min}}) - D_{KL}(P_{t_{\max}}\|Q_{t_{\max}})}  
    \\
    &\le \frac{1}{2}D_{KL}(P_{t_{\min}}\|Q_{t_{\min}}).
    \end{align*}
    It remains to show that 
    \begin{align*}
    D_{KL}(P_{t}\|Q_{t}) 
    &:= C_{\sigma_t}\int \log\frac{\int e^{-|x-c_tv|^2/2\sigma^2_t}P(dv)}{\int e^{-|x-c_tw|^2/2\sigma^2_t}Q(dw)}
    \int e^{-|x-c_tv|^2/2\sigma^2_t}P(dv) dx \\
    &\le W_2^2(P_t,Q_t)\frac{c^2_t}{2\sigma^2_t}.
    \end{align*}
    Using the fact that $\Gamma$ has marginals $P,Q$ we rewrite
    \[
    D_{KL}(P_{t}\|Q_{t}) = C_{\sigma_t}\int \log\frac{\iint e^{-|x-c_t w|^2/2\sigma^2_t}\Gamma(dv, dw)}{\iint e^{-|x-c_t v|^2/2\sigma^2_t}\Gamma(dv, dw)}{\iint} e^{-|x-c_t v|^2/2\sigma^2_t}\Gamma(dv,dw) dx.
    \]
    Next, using the log-sum inequality for a fixed $x$ we get
    \begin{align*}    
     &\log\frac{\iint e^{-|x-c_tv|^2/2\sigma^2_t}\Gamma(dv,dw)}{\iint e^{-|x-c_tw|^2/2\sigma^2_t}\Gamma(dv,dw)}{\textstyle\iint} e^{-|x-c_tv|^2/2\sigma^2_t}\Gamma(dv,dw)  \\
     &\le 
     \iint e^{-|x-c_tv|^2/2\sigma^2_t}\log\frac{e^{-|x-c_t v|^2/2\sigma^2_t}}{e^{-|x-c_tw|^2/2\sigma^2_t}}\Gamma(dv,dw) 
     \\
     &\le \iint \frac{|x-c_t w|^2-|x-c_t v|^2}{2\sigma^2_t}e^{-|x-c_t v|^2/2\sigma_t}\Gamma(dv,dw)\\
    &= \iint \frac{2c_t\innerproduct{v-w}{x} + c_t^2\del{|v|^2-|w|^2}}{2\sigma^2_t}e^{-|x-c_t v|^2/2\sigma_t}\Gamma(dv,dw).
    \end{align*}
    Integrating over $x$ we get
    \begin{multline*}    
    D_{KL}(P_{t}\|Q_{t}) 
    \le \frac{1}{2\sigma_t^2}C_{\sigma_t}\int \int \int \del{2c_t\innerproduct{v-w}{x} + c_t^2\del{|v|^2-|w|^2}}e^{-|x-c_tv|^2/2\sigma^2_t}\Gamma(dv,dw) dx 
    \\
     = \frac{1}{2\sigma^2_t}\iint 2c_t\innerproduct{\del{C_{\sigma_t}\int (x e^{-|x-c_tv|^2/2\sigma^2_t}dx}}{v-w} \Gamma(dv,dw) 
    \\
    + 
    \frac{1}{2\sigma^2_t}\iint \del{C_{\sigma_t}\int e^{-|x-c_t v|^2/2\sigma^2_t}dx} c_t^2\del{|w|^2-|v|^2} \Gamma(dv,dw).
    \end{multline*}
    Since $C_{\sigma_t}e^{-|x-c_t v|^2/2\sigma^2_t}$ is the density of $\cN(c_t v,\sigma^2_t)$ we get
    \begin{align*}    
    D_{KL}(P_{t}\|Q_{t}) 
    &\le
    \frac{1}{2\sigma^2_t}\iint 2c^2_t\innerproduct{v}{v-w} \Gamma(dv,dw) + \frac{1}{2\sigma^2_t} \iint c_t^2\del{|w|^2-|v|^2} \Gamma(dv,dw) 
    \\
    &=\frac{c_t^2}{2\sigma^2_t}\iint |v-w|^2 \Gamma(dv,dw) \le W^2_2(P,Q)\frac{c_t^2}{2\sigma_t^2},
    \end{align*}
    where the last equality follows from the definition of $\Gamma$.
    \end{proof}
    Our next goal is to prove~\Cref{prop:concentration_around_X0_approximation} an analog of Theorem~\ref{thm:concentration_around_X0} for approximation $M^*$. 
    We start with a simple proposition that bounds the length of the projection $Z_D\sim \cN\del{0,\Id_D}$ on subspaces $\cH_i = \spn \cV_i$ constructed in~\ref{sec:support_estimation}.
    \begin{proposition}
    \label{prop:bound_on_projection_on_cH_i}
        With probability at least $1-\delta$ for all $i\le N$ simultaneously
        \[
        \norm{\pr_{\cH_i} Z_D} \le \sqrt{(C_{\dim}+2)\log N + 2\log 2\delta^{-1}},
        \]
        where $C_{\dim}$ is given in Proposition~\ref{prop:support_estimator_solution_properties_2}.
    \end{proposition}
    \begin{proof} First note that 
        $\pr_{\cH_i}Z_D \sim \cN\del{0,\Id_{\dim \cH_i}}$, so using standard bounds~\cite{laurent2000} on the tails of the $\chi$-distribution
        \[
        \rP\del{\norm{\pr_{\cH_i}Z_D} \le \sqrt{\dim \cH_i +2\log\del{2\delta^{-1}N}}} \ge 1-\delta/N.
        \]
        Since $\dim \cH_i \le C_{\dim} \log N$,  we get
        \[
        \rP\del{\norm{\pr_{\cH_i}Z_D} \le \sqrt{C_{\dim}\log N +2\log\del{2\delta^{-1}N}}} \ge 1-\delta/N.
        \]
        Combining for all $i\le N$ we get the statement.

    \end{proof}
    
    \begin{proof}[Proof of Proposition~\ref{prop:TaylorGP}]
        We follow the same strategy as in the proof of Proposition~\ref{prop:maximum_of_Gaussian_Process_on_Manifold}.
        First, we prove that the statement holds on a submanifold $M_i = \Phi_i\del{B_d(0,\varepsilon_N )}$, and then we spread it on the whole manifold.
        
         Take $y\in M_i$ and represent it as $y = \Phi_i(z)$, where $z \in B_d(0,\varepsilon_N)$, and define ${y^*=\Phi^*_i(z)}$. Then $y-y^* = \Phi_i(z)-\Phi_i^*(z)$. By construction $\norm{\grad \Phi_i(z)-\grad \Phi_i^*(z)} \le L^*|z|^{\beta-1}$, so on $B_d(0,\varepsilon_N)$ the function $\Phi_i-\Phi^*_i$ is $L^*\varepsilon_N^{\beta-1}$ Lipschitz, and $\Phi_i(0) = \Phi^*_i(0) = G_i$. Therefore by~\Cref{eq:maximum_of_GP_F} for all $\delta <1$
        \[
        \rP\del{\sup_{z\in B(0,\varepsilon_N)} \innerproduct{Z_D}{\Phi_i(z)-\Phi_i^*(z)} \le L^*\varepsilon_N^\beta\del{\sqrt{d} + \sqrt{2\log {(\delta/N)^{-1}}}}} \ge 1-\delta/N.
        \]
        So, the probability that such an event holds for all $M_i$ is bounded from below as 
        \[
        \rP\del{\forall i: \sup_{z\in B(0,\varepsilon_N)} \innerproduct{Z_D}{\Phi_i(z)-\Phi_i^*(z)} \le L^*\varepsilon_N^\beta\del{\sqrt{d} + \sqrt{2\log N + 2\log \delta^{-1}}}} \ge 1-\delta.
        \]
        Applying the same bound to $-\innerproduct{Z_D}{\Phi_i(z)-\Phi_i^*(z)}\stackrel{dist.}{=}\innerproduct{Z_D}{\Phi_i(z)-\Phi_i^*(z)}$ and using $\sqrt{a} + \sqrt{b} \le \sqrt{2(a+b)}$ we get the statement.
    \end{proof}
    Next, we prove an analog of Proposition~\ref*{prop:RepresentationOfConditionalMeasure} for $M^*$.
    
    \begin{proposition} 
    \label{prop:bound_on_X_t_y_star}
    Let $t\ge \varepsilon^{2\beta}_N$, $X_0\sim \mu, Z_D \sim \cN \del{0, \Id_D}$ and $X_t = c_tX_0 + \sigma_t Z_D$. For any $\delta > 0$ a.s. in $X_0$ and with probability at least $1-\delta$ in $Z_D$ for all $i\le N$, $y \in M_i$ and $y^* = \Phi_i^*\circ \Phi^{-1}_i(y)$
    \£
    \label{eq:bound_on_squared_difference_Xt_y_new_approximation}
     \frac{1}{2}c^2_t\norm{X_0-y}^2-16\sigma^2_t\del{4d\log_+ (\sigma_t/c_t) + 2d\log \varepsilon^{-1}_N + (12d+2)C_{\log} + (L^*)^2 +  2\log \delta^{-1}}   
    \\
    \le
    \norm{X_t-c_t y^*}^2 - \sigma_t^2 \norm{Z_D}^2 \le
  \\
  \frac{3}{2}c^2_t\norm{X_0-y}^2 + 8\sigma^2_t\del{4d\log_+ (\sigma_t/c_t) + 2d\log\varepsilon^{-1}_N + (12d+2)C_{\log} + (L^*)^2 +  2\log \delta^{-1}}.
    \£
    \end{proposition}
    \begin{proof}
    
    First, we represent
    \[
    \norm{X_t-c_ty^*}^2 
    &= \norm{X_t-c_ty}^2 + 2c_t\sigma_t\innerproduct{Z_D}{y-y^*} + 2c^2_t\innerproduct{X_0-y}{y-y^*} + c^2_t\norm{y-y^*}^2
    \\
    &=
    \sigma^2_t\norm{Z_D}^2 + c^2_t\norm{X_0-y}^2 + 2c_t\sigma_t\innerproduct{Z_D}{X_0-y}  + 2c_t\sigma_t\innerproduct{Z_D}{y-y^*} 
    \\
    &\quad \quad + 2c^2_t\innerproduct{X_0-y}{y-y^*} + c^2_t\norm{y-y^*}^2.
    \]
    We recall \eqref{eq:useful_bound_on_scalar_product} stating that if 
    \[
    C^2_0 := 4d\log_+ (c_t/\sigma_t) + 4d\log \varepsilon^{-1}_N + (12d+2)C_{\log}  + 2\log 2\delta^{-1} \ge 4d\log_+ (c_t/\sigma_t) + (12d+2)C_{\log}  + 2\log 2\delta^{-1}
    \]
    then with probability at least $1-\delta$
    \[
    2\sigma_tc_t\abs{\innerproduct{Z_D}{X_0-y}} \le 2\sigma^2_tC_0
    + 2\sigma_tc_t\norm{X_0-y}C_0
    \le 
    c^2_t\norm{X_0-y}^2/4 + 6\sigma^2_t C^2_0.
    \]
    By Proposition~\ref*{prop:TaylorGP} since $\sigma_t \ge \varepsilon^\beta_N$ with probability at least $1-\delta$
    \[
    2c_t\sigma_t \abs{\innerproduct{Z_D}{y-y^*}} \le 
    2c_t\sigma_t L^*\varepsilon_N^{\beta}\del{\sqrt{d} + \sqrt{2d\log \varepsilon_N^{-1} + 2\log 2\delta^{-1}}}
    \le
    2\sigma^2_t\del{(L^*)^2 + C^2_0}.
    \]
    Finally,
    \[
    \abs{2c^2_t\innerproduct{X_0-y}{y-y^*} + c^2_t\norm{y-y^*}^2} 
    &\le 2c^2_t\norm{X_0-y}\norm{y-y^*} + c^2_t\norm{y-y^*}^2 
    \\
    &\le 2c^2_tL^*\varepsilon^\beta_N\norm{X_0-y}  + c^2_t(L^*)^2\varepsilon^{2\beta}_N
    \\
    &\le 2c^2_tL^*\sigma_t\norm{X_0-y} + (L^*)^2c^2_t\sigma^2_t 
    \\
    &\le c^2_t\norm{X_0-y}^2/4 + 5(L^*)^2\sigma_t^2. 
    \]
    Summing up, we get that with probability at least $1-\delta$
    \[
    \abs{2c_t\sigma_t\innerproduct{Z_D}{X_0-y}  + 2c_t\sigma_t\innerproduct{Z_D}{y-y^*} + 2c^2_t\innerproduct{X_0-y}{y-y^*} + c^2_t\norm{y-y^*}^2} 
    \\
    \le
    c^2_t\norm{X_0-y}^2/2 + 8\sigma^2_t C^2_0 +  7(L^*)^2\sigma^2_t,
    \]
    and the desired inequality
    \[
     \frac{1}{2}c^2_t\norm{X_0-y^*}^2- 8\sigma^2_t(C^2_0 + (L^*)^2)  
    &\le
    \norm{X_t-c_t y^*}^2 - \sigma_t^2 \norm{Z_D}^2 
    \\
    &\le
  \frac{3}{2}c^2_t\norm{X_0-y^*}^2 + 8\sigma^2_t(C^2_0 + (L^*)^2).
    \]
    \end{proof}
    Finally, we are ready to present the main result of this section. Consider a measure $\mu$ supported on $M$. Let $\mu^*$ be the measure on $M^*$ constructed in Section~\ref*{sec:manifold_approximation} corresponding to $\varepsilon_N > 0$.
    As in Section~\ref*{sec:high_probability_bounds} we consider a conditional measure $\mu^*(y|t, x) \propto e^{-\norm{x-c_t y^*}^2/2\sigma^2_t}\mu^*(dy^*)$.
  \begin{restatable}{proposition}{ConcentrationAroundXApproximation}    
    \label{prop:concentration_around_X0_approximation}
    Let a measure $\mu$ satisfy Assumption~\ref*{asmp:smooth_measure_on_manifold}, and measure $\mu^*$ be as above. Fix $t \ge \varepsilon_N^{2\beta}$, non-negative $\delta, \eta < 1$, and the radius
    \[
    r(t,\delta, \eta) = 20(\sigma_t/c_t)\sqrt{ d(\log_+ (c_t/\sigma_t) + \log \varepsilon^{-1}_N+ 3 C_{\log})  +(L^*)^2 +\log 2\delta^{-1} + \log \eta^{-1}}
    \]
    then with probability at least $1-\delta$
    in $Z_D\sim \cN\del{0,\Id_D}$ and a.s. in $X_0\sim \mu$ for a point $X_t = c_tX_0+\sigma_tZ_D$ the following bound holds 
        \[
        \int_{y^*\in M: \norm{y^*-X_0} \le r(t,\delta,\eta)} p(y^*|t,X_t) dy^* \ge 1-\eta.
        \]
    \end{restatable}
    \begin{proof}
    As in \Cref{prop:bound_on_X_t_y_star} we introduce $C^2_0 = 4d\log_+ (c_t/\sigma_t) + 4d\log \varepsilon^{-1}_N + (12d+2)C_{\log}  + 2\log 2\delta^{-1}$.
    
    Since $\mu^*(y^*|t,X_t)\propto e^{-\norm{X_t-c_t y^*}^2/2\sigma_t^2}\mu(dy^*)$, first, we control the normalization constant. Applying the change of variable formula, \Cref{prop:bound_on_X_t_y_star}, and then~\eqref{eq:log_p_lower_bound} with probability at least $1-\delta$
    \[
    B &:= \int_{M^*} e^{-\norm{X_t-c_t y^*}^2/2\sigma_t^2}\mu(dy^*)
    =
    \sum_i \int_{M^*_i} e^{-\norm{X_t-c_t y^*}^2/2\sigma_t^2}\mu^*_i(dy^*)
    \\
    &=
    \sum_i \int_{M_i} e^{-\norm{X_t-c_t \Phi^*_i\circ \Phi_i^{-1}(y)}^2/2\sigma_t^2}\mu_i(dy)
    \\
    &\ge
    e^{-\norm{Z_D}^2/2}e^{-4(C^2_0 + (L^*)^2)}\sum_i \int_{M_i} e^{-3\norm{X_0-y}^2/2\sigma_t^2}\mu_i(dy)
    \\
    &= 
    e^{-\norm{Z_D}^2/2}e^{-4(C^2_0 + (L^*)^2)}\int_{M} e^{-3c^2_t\norm{X_0-y}^2/2\sigma_t^2}\mu(dy)
    \\
    &\ge
    \frac{e^d}{(2d)^{d+1}}e^{-2}  p_{\min}e^{-\norm{Z_D}^2/2}e^{-4(C^2_0 + (L^*)^2)}(r_0 \wedge (\sigma_t/c_t))^d 
    \\
    &\ge e^{-\norm{Z_D}^2/2}e^{-C_{\log}(3d+1)-4(C^2_0 + (L^*)^2) -d \log_+ (c_t/\sigma_t)}.
    \]
    By the same change of variables formula and \Cref{prop:bound_on_X_t_y_star} with probability at least $1-\delta$
    \[
    A&:=\int_{y^* \in M^*: \norm{X_0-y^*} \ge r(t,\delta,\eta)} e^{-\norm{X_t-c_t y^*}^2/2\sigma_t^2}\mu^*(dy^*) 
    \\
    &= \sum_i \int_{y^*\in M^*_i: \norm{X_0-y^*} \ge r(t,\delta,\eta)} e^{-\norm{X_t-c_t y^*}^2/2\sigma_t^2}\mu^*_i(dy^*) 
    \\
    &= \sum_i \int_{y\in M_i: \norm{X_0-\Phi^*_i\circ \Phi^{-1}_i(y)} \ge r(t,\delta,\eta)} e^{-\norm{X_t-c_t \Phi^*_i\circ \Phi^{-1}_i(y)}^2/2\sigma_t^2}\mu_i(dy)
    \\
    &\le \sum_i \int_{y \in M_i: \norm{X_0-y} \ge r(t,\delta,\eta)- L^*\varepsilon^\beta_N} e^{-\norm{X_t-c_t \Phi^*_i\circ \Phi^{-1}_i(y)}^2/2\sigma_t^2}\mu_i(dy) 
    \\
    &\le e^{-\norm{Z_D}^2/2}e^{4(C^2_0+(L^*)^2)}\int_{y: \norm{X_0-y} \ge r(t,\delta,\eta)- L^*\varepsilon^\beta_N} e^{-c^2_t\norm{X_0-y}^2/4\sigma_t^2}\mu(dy) 
    \\
    &\le e^{-\norm{Z_D}^2/2}e^{4(C^2_0+(L^*)^2)-c^2_t\del{r(t,\delta,\eta)- L^*\varepsilon^\beta_N}^2/4\sigma_t^2}.
    \]
    Thus with probability at least $1-\delta$
    \[
    \mu^*\del{B_{M^*}(X_0, r(t,\delta, \eta) |t, X_t} = 1-\frac{A}{B}
    \\
    \ge 1 - e^{C_{\log}(3d+1) +  8(C^2_0+(L^*)^2) + d\log_+ (c_t/\sigma_t) -c^2_t\del{r(t,\delta,\eta)- L^*\varepsilon^\beta_N}^2/4\sigma_t^2}.
    \]
    So, for 
    \[
    r(t,\delta, \eta) \ge  L^*\varepsilon^\beta_N + 2(\sigma_t/c_t)\sqrt{\log \eta^{-1} + C_{\log}(3d+1) + d\log_+ (c_t/\sigma_t) +  8(C^2_0+(L^*)^2)},
    \]
    with probability at least $1-\delta$
    \[
    \mu^*\del{B_{M^*}(X_0, r(t,\delta, \eta) |t, X_t} \ge 1-\eta.
    \]
    The statement follows if we let $t\ge \varepsilon^{2\beta}_N$, note that $(\sigma_t/c_t) \ge \varepsilon^\beta_N$, and substitute the definition of $C_0$.   
    \end{proof}

    \section{Score Approximation by Neural Networks}
    \label{apdx:score_apprxoximation_by_neural_network}
    In this section, we prove Theorem~\ref*{thm:main_result} for $d\ge 3$ on the score approximation by a neural network.

    Recall that $\underline{T} \ge \del{\log n}^{\gamma}
    n^{-\frac{2(\alpha+1)}{2\alpha+d}}$ and $\overline{T} \lesssim \log n$.
    Throughout the appendix, we harmlessly enlarge the polylogarithmic exponent
    by replacing $\gamma$ with $4\gamma$ when convenient.
    
    We divide the interval $[\underline{T}, \overline{T}]$ into segments $\underline{T} = T_1 < T_2 < \ldots <T_K = \overline{T}$, where $T_{k+1}/T_k =2$ and build estimator $\hat{s}(t,x)$ as a union of separate estimators $\hat{s}_k(t,x)$, where $k$-th estimator approximates $s(t,x)$ on $[T_k,T_{k+1}]$. We find each $\hat{s}_k(t,x)$ as a solution of $\hat{s}_k(t,x) = \argmin_{\phi\in \cS_k} \cR_{\cY}(\phi)$, where the class $\cS_k$ was defined in~(\ref*{eq:definition of cS}). The first step is to construct $\phi\in \cS_k$ approximating the score; and in  \Cref{sec:main_theorem_t_large} we prove that for $T_k \ge n^{-\frac{2}{2\alpha+d}}$ 
    \[
    \int_{T_k}^{T_{k+1}}\expectation\norm{\phi(t,X_t)-s(t,X_t)}^2 dt \le \sigma^{-2}_{T_k}\del{\log n}^{4\gamma \beta} n^{-\frac{2(\alpha+1)}{2\alpha+d}},
    \]
    while in \Cref{sec:main_theorem_t_small} we prove that for $T_k < n^{-\frac{2}{2\alpha+d}}$
    \[
    \int_{T_k}^{T_{k+1}}\expectation\norm{\phi(t,X_t)-s(t,X_t)}^2 dt \le \sigma^{-2}_{T_k}\del{\log n}^{6\gamma \beta} n^{-\frac{2\beta}{2\alpha+d}} + n^{-\frac{2\alpha}{2\alpha+d}} (\log n)^{2\alpha+1}  
    \]

    In \Cref{apdx:generalization_error} we first bound the covering number of classes $\cS_k$, then we bound the variance of the empirical risk estimator via the covering number of $\cS_k$, thus obtaining Theorem~\ref*{thm:main_result}. 
    Finally, in~\cref{sec:ThC4Oko}, we provide correction of \cite{oko2023}, Theorem C.4 that we use in~\Cref{apdx:generalization_error}.

 In both cases ($T_k\ge n^{-\frac{2}{2\alpha+d}}$ and
 $T_k< n^{-\frac{2}{2\alpha+d}}$), we first fix  $\del{\log n}^{2\gamma d} n^{\frac{d}{2\alpha+d}} \ge N_k \ge n^{\frac{d-2}{2\alpha+d}}$, and by suppressing $k$, write
 $N=N_k$ and $
    \varepsilon_N :=
    \del{C_{d,\beta}\frac{p^2_{\max}}{p^3_{\min}}\frac{\log N}{N}}^{1/d}.
 $ Throughout the proof, we assume that $n$ is large enough, so $\varepsilon_N < r_0$.
 
 The construction of $\phi$ starts from a set
 $\cG := \curly{G_1,\ldots, G_N} \subset \cY= \{y_1, \cdots, y_n\}$,
 for instance $\cG  = \{ y_1, \cdots, y_N\}$ so that
 $\cG \sim \mu^{\otimes N}$.  
Following
 Section~\ref*{sec:support_estimation}, for each $G_i$ we consider
 $\cV_i := \curly{G_j-G_i\big| \norm{G_j-G_i} \le \varepsilon_N}$ and the polynomial functions
\[
    \Phi_i^*(z) = G_i + P^*_iz + \sum a^*_{i,S} z^S,
\]
where $P^*_i, a^*_{i,S}$ solution of~(\ref*{eq:support_estimation_functional}) satisfying $\Im P^*_i \subset \spn \cV_i, a^*_{i,S} \in \spn \cV_i$. The functions $\Phi_i^*$ are polynomial approximations of the local maps $\Phi_i$ defined as the local inverse to projections on $T_{G_i}M$ over the balls $B_d(0, r_0)$.

We introduce an event $\cE_{\mathrm{geom},k}$, depending only on the training sample $\cY$, under which the following hold:
\[
    \cG \text{ is }\varepsilon_N/2\text{-dense in }M,\qquad
    \sup_{i,z\in B_d(0,8\varepsilon_N)}
    \norm{\Phi_i^*(z)-\Phi_i(z)}\lesssim\varepsilon_N^\beta,
\]
\[
    \max_{i\le N}|\cV_i|\le C_{\dim}\log n,\qquad
    \sup_{i,z\in B_d(0,8\varepsilon_N),\,m<\beta}
    \norm{\grad^m\Phi_i^*(z)}\le2L_M,
\]
and
\[
    \sup_{i,z\in B_d(0,8\varepsilon_N)}
    \norm{\grad\Phi_i^*(z)\grad^T\Phi_i^*(z)-\Id_d}\le\frac12.
\]
Note that the last equation guarantees that $\Phi^*$ is Lipschitz on $B_d(0,8\varepsilon_N)$. 

By Proposition~\ref*{prop:samples_are_dense},
Lemma~\ref*{lemma:approximation_of_Phi},
Proposition~\ref*{prop:support_estimator_solution_properties_2}, and
\Cref{prop:geometric_statements}, this event has probability at least $1-O(N^{-\beta/d})\ge1-O(n^{-\frac{\alpha+1}{2\alpha+d}})$.

Throughout the appendix, high-probability statements are proved
conditionally on $\cE_{\mathrm{geom},k}$, with probability then taken only
over the fresh pair $(X_0,Z_D)$.

Also, recall that 
\£\label{eq:partition_function}
    \rho(u) 
    =
    \begin{cases}
    1 & \text{ if } 0\le u \le 1/2,
    \\
    2-2u & \text {if } u \in [1/2, 1],
    \\ 0 & \text{ if } u\geq 1
    \end{cases},
\£
and the localization functions are defined as 
\begin{equation}\label{rhoi}
\rho_i(t, X_t) = \rho\del{\frac{\del{\norm{X_t - c_t G_i}^2 - \norm{X_t - c_t G_{\min}(X_t)}^2}}{2C(T_k,n)}}
\end{equation}
where 
\begin{equation}\label{def:Ctn}
\begin{split}
    C(T_k,n) & = \sigma_{T_k}^2 \log n \log \log n  \quad \text{if } T_k\geq n^{-2/(2\alpha+d)}\\
    C(T_k,n) &= 6 \varepsilon_N^2\quad \text{if } T_k< n^{-2/(2\alpha+d)}
    \end{split}
\end{equation} 
and where
\[
    i_{\min}(x)\in\argmin_{i\leq N}\|x-c_tG_i\|,
    \qquad
    G_{\min}(x):=G_{i_{\min}(x)} .
\]
Thus $\rho_i$ is equal to $0$ when $G_i$ is far from $x/c_t$ compared to
the closest center $G_{\min}(x)$.  Finally recall that $s(t,x) = \frac{c_t}{\sigma^2_t}e(t,x) - x/\sigma_t^2$ where 
\[ 
e(t,x) = 
\frac{\int_{M} y e^{-\frac{\norm{x-c_ty}^2}{2\sigma^2_t}}\mu(dy)}{\int_{M} e^{-\frac{\norm{x-c_ty}^2}{2\sigma^2_t}}\mu(dy)}.
\]
We first consider the (simpler) case  $T_k\ge n^{-\frac{2}{2\alpha+d}}$.

\subsection{Case $T_k\ge n^{-\frac{2}{2\alpha+d}}$}
\label{sec:main_theorem_t_large}

Choose
\[
N=N_k
:=
\left\lceil
\max\Big( (\log n)^{2\gamma d}\,(\sigma_{T_k}/c_{T_k})^{-d},\ \ n\cdot n^{-\frac{2(\alpha+1)}{2\alpha+d}}\Big)
\right\rceil
\]

By construction (for $n$ large) we have $N \le\ 4^d(\log n)^{2\gamma d}\,n^{\frac{d}{2\alpha+d}}$ and
\£
\label{eq:basic_ineq_t_large}
(\log n)^{1/d}\,n^{-\frac{d-2}{d}\frac{1}{2\alpha+d}}\ \ge\ \varepsilon_N\ge\ (\log n)^{-2\gamma}\,n^{-\frac{1}{2\alpha+d}},
\qquad
 (\log n)^{1/d-2\gamma}(\sigma_{T_k}/c_{T_k}) \ge \varepsilon_N
\£
By Assumption~\ref*{asmp:smooth_measure_on_manifold_in_proofs} we have $\varepsilon_N^\beta \lesssim (\log n)^{\beta/d} n^{-\frac{\alpha+1}{2\alpha+d}}$. Let $M_i := \Phi_i\del{B_d(0,\varepsilon_N)}$ and define $M^*_i = \Phi_i^*\del{B_d(0,\varepsilon_N)}$
so that the surface $M^*_i$ is a polynomial approximation of $M_i$ and define $M^* = \cup_{i=1}^N M_i^*$. Since $\diam M \le 1$ by properties of $\Phi_i^*$, $\diam M^* \le 2$.

We build the partition of unity on $M$ corresponding to the set $\cG_i$ as  
    \[
    \chi_i(y) = \frac{\rho(\norm{y-G_i}/\varepsilon_N)}{\sum_{j=1}^N \rho(\norm{y-G_j}/\varepsilon_N)},
    \]
    where $\rho$ is defined in \eqref{eq:partition_function}. 
Since $\cG$ is $\varepsilon_N/2$ dense on $M$, $\sum \chi_i(y) = 1$ for all $y\in M$. We consider the measure $\mu_i$ with density $\chi_i(y)p(y)$ supported on $M\cap B_D(G_i,\varepsilon_N) \subset M_i$ and denote by $\mu^*_i$ its pushforward on $M^*_i$ by $\Phi^*_i\circ \Phi^{-1}_i$. By construction $\mu^* := \sum \mu^*_i$ is a probability measure satisfying $W_2(\mu, \mu^*) \lesssim \varepsilon_N^\beta$  
Therefore, by Proposition~\ref*{prop: wd_vs_sml} the score function $s^*$ corresponding to measure $\mu^*$ satisfies 
\begin{equation}
\label{eq:D1_W2_vs_SML}
\int_{T_k}^{T_{k+1}}\expectation \|s(t,X_t)-s^*(t,X_t)\|^2 dt \lesssim \sigma^{-2}_{T_k} \varepsilon_N^{2\beta} \le \sigma^{-2}_{T_k} n^{-\frac{2(\alpha+1)}{2\alpha+d}} (\log n)^{\frac{2\beta}{d}}.
\end{equation}
and since $\sigma^2_t \le 2 \sigma_{T_k}$ for $t\in [T_k,T_{k+1}]$
\begin{equation}
\label{eq:D1_W2_vs_SML}
\int_{T_k}^{T_{k+1}} \sigma^2_t\expectation \|s(t,X_t)-s^*(t,X_t)\|^2 dt \lesssim n^{-\frac{2(\alpha+1)}{2\alpha+d}} (\log n)^{\frac{2\beta}{d}}.
\end{equation}

This achieves our first intermediate goal of replacing the original manifold $M$ with its approximation $M^*$. We start by studying $s^*$ more closely.

As we discussed in Section~\ref*{sec:main_results} introducing $\cA_i = G_i +  \spn \cV_i$, $\cH_i = \spn \cV_i$ and $x_{i,t} = \pr_{c_t\cA_i} x$ we can represent the score function $s^*(t,x)$ as
\£
\label{eq;s_star_as_sum}
s^*(t,x) = 
\sum_{i=1}^N p^*(i|t,x) s^*_i(t,x)
=
\frac{c_t}{\sigma^2_t}\sum_{i=1}^N p^*(i|t,x) e^*_i(t,x)-\frac{x}{\sigma^2_t}
\£
where $\frac{c_t}{\sigma^2_t}e^*_i(t,x) := s^*_i(t,x) -\frac{x}{\sigma^2_t}$ and $s^*_i(t,x)$ is the score function corresponding to the measure $\mu^*_i$, and 
\[
p^*(i|t,x) &=\frac{ \int_{M^*_i}e^{-{\norm{x-c_t y^*}^2}/{2\sigma^2_t}}\mu^*_i(dy^*)}{ \sum_{j=1}^N\int_{M^*_j}e^{-{\norm{x-c_t y^*}^2}/{2\sigma^2_t}}\mu^*_j(dy^*)},
\\ 
s^*_i(t,x) &= \frac{1}{\sigma^2_t}\frac{\int_{M^*_i}(c_t y-x)e^{-\norm{x-c_t y^*}^2/2\sigma^2_t}\mu^*_i(dy^*)}{\int_{M^*_i}e^{-\norm{x-c_t y^*}^2/2\sigma^2_t}\mu^*_i(dy^*)} = \frac{c_t}{\sigma^2_t} e_i^*(t,x) -\frac{  x}{\sigma_t^2}.
\]

As a next step, we show that terms corresponding to $\rho_i(t,x) < 1$, where $\rho_i$ is defined by $\eqref{rhoi}$, are negligible in~\eqref{eq;s_star_as_sum}. So we can approximate $s^*$ by the truncated version denoted $s^*_{tr}$ defined by: 
\begin{equation}\label{trunc_sstar}
s^*_{tr}(t,x) := \frac{c_t}{\sigma^2_t}\frac{\sum \rho_i(t,x) p^*(i|t,x) e^*_i(t,x)}{\sum \rho_i(t,x) p^*(i|t,x) } - \frac{x}{\sigma^2_t}, 
\end{equation}
We introduce
\[
    I_1(t)
    :=
    \left\{i\le N:
    \norm{X_0-G_i}
    \le \frac 13 (\sigma_t/c_t)\sqrt{\log n\log\log n}
    \right\},
\]
and
\[
    I_2(t)
    :=
    \left\{i\le N:
    \norm{X_0-G_i}
    \le 3(\sigma_t/c_t)\sqrt{\log n\log\log n}
    \right\}.
\]
and in Proposition~\ref{prop:active_indices_large_tail} show that with probability at least $1-n^{-d}$
\[
    I_1(t)\subseteq\{i:\rho_i(t,X_t)=1\},
    \qquad
    \{i:\rho_i(t,X_t)>0\}\subseteq I_2(t).
\]
Then by Proposition~\ref{prop:concentration_around_X0_approximation} there is an absolute constant $C>0$ such that
\[
    r_t
    \le
    C(\sigma_t/c_t)
    \sqrt{d\log_+(c_t/\sigma_t)+dC_{\log}+d\log n}
\]
with probability at least $1 - n^{-d}$ satisfy
\[
    \int_{B_{M^*}(X_0,r_t)} p(y^*|t,X_t)dy^* \ge 1-n^{-d}.
\]
Since $\diam M^*_i \le 4\varepsilon_N$ by representing representing $M^* = \bigcup M^*_i$ we get
\[
B_{M^*}(X_0,r_t) \subset \bigcup_{i=1}^N \del{B(X_0,r_t)\cap M^*_i}  \subset \bigcup_{\norm{X_0-G_i} \le r_t + 4\varepsilon_N}^N M^*_i
\]
for sufficiently large $n$ we have
$
   r_t +4\varepsilon_N
    \le \frac13(\sigma_t/c_t)\sqrt{\log n\log\log n},
$
so  
\[
    \sum_{i\notin I_1(t)} p^*(i\mid t,X_t)
    \le n^{-d} \le n^{-2}.
\]
Therefore, since $\|e^*_i\del{t,X_t}\|\leq 2 c_t/\sigma_t^2$ a.s., with probability $1-n^{-2}$
\[
\frac{c_t}{\sigma^2_t}\norm{e^*_{tr}\del{t,X_t}-  e^*\del{t,X_t}} \le \frac{2c_tn^{-2}}{(1-n^{-2}) \sigma^2_t}.
\]

At the same time, a.s. 
\[
\norm{e^*_{tr}\del{t,X_t}- e^*\del{t,X_t}} \le \norm{e^*\del{t,X_t}} + \norm{e^*_{tr}\del{t,X_t}} \le 3, 
\]
so
\[
\expectation \norm{s^*_{tr}\del{t,X_t}- s^*\del{t,X_t}}^2 \le \frac{c^2_t}{\sigma^4_t}\del{16n^{-4} + 18n^{-2}} 
\]
and since $T_k \le \overline{T} = O(\log n)$ and $d \ge 3$
\begin{equation}
\label{eq:error_of_s_star_tr}
\int_{T_k}^{T_{k+1}}\sigma_t^2 \expectation \norm{s^*_{tr}\del{t,X_t}- s^*\del{t,X_t}}^2 dt \lesssim n^{-2}\int_{T_k}^{T_{k+1}} \frac{ 1 }{t }dt  \lesssim  n^{-2}.
\end{equation}

\subsubsection{Polynomial Approximation of $s^*_{tr}$}
In this section, we introduce the main regularity bounds that allow us to construct an efficient neural network approximation of $e^*_{tr}$. We show that for some functions $h_i, f_i, g_i$ we can represent it as
\£
\label{eq:D12_intor_e^*}
e^*_{tr}(t,x) = \frac{\sum_i \rho_i(t,x) 
    h_i(t,x) f_i(t,x)
    }{\sum_i \rho_i(t,x) 
    h_i(t,x) g_i(t,x)
    }, 
\£
where $\rho_i(t,x) h_i(t,x)$ are easy-to-calculate weight functions, $f_i(t,x), g_i(t,x)$ are functions that can be efficiently approximated by neural networks. This decomposition corresponds to \eqref*{eq:definition of phi} presented in Section~\ref*{sec:score_approximation}. Recall that
$x_{i,t} = \pr_{c_t \cA_i} x$  and since $M^*_i \subset \cA_i$ we may represent  $\norm{x-c_ty^*}^2 = \norm{x-x_{i,t}}^2 + \norm{x_{i,t}-c_ty^*}^2$ so that 
\[
p^*(i|t,x) = e^{-\frac{\norm{x-x_{i,t}}^2+\norm{x_{i,t}-c_tG_i}^2}{2\sigma_t^2}} \int_{M^*_i} e^{\frac{\norm{x_{i,t}-c_tG_i}^2-\norm{x_{i,t}-c_ty^*}^2}{2\sigma_t^2}}\mu^*_i(dy^*)
\]
Using a similar argument, we write
\[
e^*_i(t,x) = \frac{\int_{M^*_i} y^* e^{(\norm{x_{i,t}-c_tG_i}^2-\norm{x_{i,t}-c_ty^*}^2)/2\sigma^2_t}\mu^*_i(dy^*)}{\int_{M^*_i} e^{(\norm{x_{i,t}-c_tG_i}^2-\norm{x_{i,t}-c_ty^*}^2)/2\sigma^2_t}\mu^*_i(dy^*)}. 
\]

We can thus write $e^*_i = f_i/g_i$ with
\[
{
\begin{aligned}
g_i(t,x) &:= \int_{M^*_i}
\exp\del{\frac{\norm{x_{i,t}-c_tG_i}^2-\norm{x_{i,t}-c_ty^*}^2}{2\sigma_t^2}}
\mu^*_i(dy^*), \\
f_i(t,x) &:= \int_{M^*_i} y^*
\exp\del{\frac{\norm{x_{i,t}-c_tG_i}^2-\norm{x_{i,t}-c_ty^*}^2}{2\sigma_t^2}}
\mu^*_i(dy^*),
\end{aligned}
}
\]
and take
{
$h_i(t,x) := \exp\del{\frac{\norm{x-c_t G_{\min}(x)}^2 -\norm{x-c_tG_i}^2}{2\sigma_t^2}}$
}
to get representation~\eqref{eq:D12_intor_e^*}.

We establish regularity bounds to show that $f_i$ and $g_i$ can be efficiently approximated by neural networks. Noting that $\sqrt{\log n\log \log n} >>
\sqrt{(d + C_{\dim} + 5) \log n + 4d C_{\log}  + (L^*)^2}$,  we have:
\begin{itemize}

    \item
By Proposition~\ref{prop:active_indices_large_tail}, with probability at least $1-n^{-d}$ every active index
$\rho_i(t,X_t) > 0$ belongs to $I_2$, i.e.
\£
\label{eq:bound_on_distance_to_G_i_t_large}
\norm{X_0-G_i}
\le 3\frac{\sigma_t}{c_t}\sqrt{\log n\log\log n}.
\£

\item Applying Proposition~\ref*{prop:TaylorGP} with $\delta = n^{-2}$, since $\varepsilon_N > n^{-1}$, we have that with probability $1-n^{-2}$ for all $y_i\in M_i, y^*_i = \Phi^*_i\circ \Phi^{-1}_i(y_i)$ 
\[
\abs{\innerproduct{Z_D}{y_i-y^*_i}} &\le \abs{\innerproduct{Z_D}{y_i- y_i^*}} \le L^*\varepsilon_N^{\beta}\sqrt{(2d+5)\log n}.
\]
\item 
    Applying Proposition~\ref*{prop:maximum_of_Gaussian_Process_on_Manifold} with $\delta =n^{-2}$ and $\varepsilon = n^{-2}$ for all $y, y'\in M$
    \[
    \abs{\innerproduct{Z_D}{y-y'}} \le 4n^{-2}\sqrt{d} + \del{\norm{y-y'}+6n^{-2}}\sqrt{(4d+2)\log 2n^{2} + 4dC_{\log}}.
    \]
\item 
As a corollary of the two previous statements, since $\beta \ge 3$, for all ${y\in M, y^*\in M^*}$ s.t.\ $\norm{y-y^*} \lesssim 4\varepsilon_N$ we have

\£
\label{eq:bound_on_scalar_product_case_t_large}
\abs{\innerproduct{Z_D}{y-y^*}} \le 5\varepsilon_N \sqrt{(8d+4)\log n + (4d+2)\log 2 + 4dC_{\log} } \le \varepsilon_N \sqrt{\log n\log \log n} .
\£
\end{itemize}

We first bound the term in the exponent $\norm{X_{i,t}-c_tG_i}^2-\norm{X_{i,t}-c_ty^*}^2$ where $X_{i,t} = \pr_{c_t \cA_i} X_t$ and $y^*\in M^*_i$. 
\[
\norm{X_{i,t}-c_tG_i}^2
-\norm{X_{i,t}-c_ty^*}^2 = \norm{X_t-c_tG_i}^2-\norm{X_t-c_ty^*}^2 
\\
= -c^2_t\norm{y^* - G_i}^2 + 2c^2_t\innerproduct{y^*-G_i}{X_0-G_i} + 2c_t\sigma_t\innerproduct{y^*-G_i}{Z_D}.
\]

Combining ~\eqref{eq:bound_on_distance_to_G_i_t_large}~and~\eqref{eq:bound_on_scalar_product_case_t_large}, since for all $y^*\in M^*_i$ $\norm{y^*-G_i} \le 2\varepsilon_N < (\sigma_t/c_t)(\log n)^{-\gamma}$,
with probability at least $1-n^{-2}$ for all $i\in I_2$ and $y^*\in M^*_i$  we have
\[
\abs{\norm{X_{i,t}-c_tG_i}^2-\norm{X_{i,t}-c_ty^*}^2} 
\\
\le
4c^2_t\varepsilon^2_N + 2c^2_t\cdot 2\varepsilon_N\cdot 3(\sigma_t/c_t)\sqrt{\log n\log\log n} 
+ 2c_t\sigma_t \cdot \varepsilon_N\sqrt{\log n\log\log n}
\\
\le \sigma^2_t (\log n)^{-2\gamma} 
+ 7\sigma^2_t\sqrt{\log n\log\log n} (\log n)^{-\gamma} 
\]
Therefore, assuming that $\gamma > 2$ and that $n$ large enough, by using $\abs{1-e^{x}} \le 2\abs{x}$ for $x\le 1$ we obtain 
\£
\label{eq:bound_on_exponent_t_large}
\frac{\abs{\norm{X_{i,t}-c_tG_i}^2-\norm{X_{i,t}-c_ty^*}^2}}{2\sigma^2_t} &\le \frac{1}{2}(\log n)^{-\gamma+2},\nonumber 
\\
\abs{e^{-({\norm{X_{i,t}-c_tG_i}^2-\norm{X_{i,t}-c_ty^*}^2})/2\sigma^2_t} - 1}  &\le (\log n)^{-\gamma+2}.
\£
Now we can bound $g_i(t,X_{i,t})$ from above and below. \Cref{prop:Manifold_Ball_Volume} states that $\Vol M_i \le C_d\varepsilon^d_N$ with probability $1-n^{-2}$ for all $i\in I_2$.  We bound $g_i(t,X_{i,t})$ from above by
\[
g_i(t,X_{i,t}) &= \int_{M^*_{{i}}} e^{{\frac{\norm{X_{i,t}-c_tG_i}^2-\norm{X_{i,t}-c_ty^*}^2}{2\sigma_t^2}}}\mu^*_{i}(dy^*) 
\\
&\le \int_{M^*_{i}} \del{1+(\log n)^{-\gamma+2}}\mu^*_{i}(dy^*) 
\\
&= \del{1+(\log n)^{-\gamma+2}}\int_{M_{i}} \chi_i(y) p(dy) \le 2^{d+1}p_{\max}C_d\varepsilon^d_N.
\]
At the same time, with probability $1-n^{-2}$ for all $i\in I_2$ we bound from below $g_i(t,X_{i,t})$ by
\[
g_i(t,X_{i,t}) &= \int_{M^*_i} e^{{\frac{\norm{X_{i,t}-c_tG_i}^2-\norm{X_{i,t}-c_ty^*}^2}{2\sigma_t^2}}}\mu^*_i(dy^*) 
\\
&\ge \int_{M^*_i} \del{1-(\log n)^{-\gamma+2}}\mu^*_i(dy^*) 
= \del{1-(\log n)^{-\gamma+2}}\int_{M_i} \chi_i(y) p(dy).
\]
By Proposition~\ref*{prop:support_estimator_solution_properties_2}, $\abs{\cY\cap B(G_i,4\varepsilon_N)}\le C_{\dim}\log N \le C_{\dim} \log n$, so for $y\in B_M(G_i, \varepsilon_N/2)$
\[
    \chi_i(y) = \frac{\rho(\norm{y-G_i}/\varepsilon_N)}{\sum_{j=1}^N \rho(\norm{y-G_j}/\varepsilon_N)} \ge \frac{1}{\abs{\cY \cap B(y,\varepsilon_N)}} \ge \frac{1}{C_{\dim}\log n},
\]
substituting and applying \Cref{prop:Manifold_Ball_Volume}
\[
g_i(t,X_{i,t}) \ge \del{1-(\log n)^{-\gamma+2}}\int_{M_i} \chi_i(y) p(dy) 
\\
\ge \frac{p_{\min}}{2C_{\dim}\log n}\text{Vol}(B_M(G_i, \varepsilon_N/2)) \ge \varepsilon^d_N 4^{-d}\frac{p_{\min}}{2C_{\dim}\log n}.
\]
Since $\diam M^* \le 2$, we bound $\norm{f_i}$ by
\[
\norm{f_i(y,X_{i,t})} &= \norm{\int_{M^*_i} y^*e^{{\frac{\norm{X_{i,t}-c_tG_i}^2-\norm{X_{i,t}-c_ty^*}^2}{2\sigma_t^2}}}\mu^*_i(dy^*)} 
\\
&\le 2 g_i(t,X_{i,t}) \le 2^{d+2}p_{\max}C_d\varepsilon^{d}_N.
\]
Since $
(\log n)^{1/d}n^{-\frac{d-2}{d}\frac{1}{2\alpha+d}}\ge \varepsilon_N \ge (\log n)^{-\gamma+1}n^{-\frac{1}{2\alpha+d}}$, 
for $n$ large enough, with probability $1-n^{-2}$ for all $i\in I_2$
\£
\label{eq:lower_bound_g_i}
g_i(t,X_{i,t}) &\ge \varepsilon^d_N 4^{-d}\frac{p_{\min}}{2C_{\dim}\log n} \gtrsim \frac{\varepsilon^d_N}{\log n} \ge (\log n)^{-\gamma d +d-1} n^{-\frac{d}{2\alpha+d}}
\\
 \norm{f_i(t,X_{i,t})} &\le 2^{d+2}p_{\max}C_d\varepsilon^{d}_N \lesssim \varepsilon^d_N < 1.
\£
Note that the constants depend only on $C_{dim}, C_{\log}$ and $d$.
We show below that if $\hat{g}_i,\hat{f}_i, \hat{e}_i$ are such that with probability at least $1-n^{-2}$ for all $i\in I_2$
\begin{align}\label{bound:hatf_hatg}
\abs{\hat{g}_i(t,X_{i,t}) - g_i(t,X_{i,t})} \le n^{-2}(\log n)^{-\gamma d}, \,\, 
 \norm{\hat{e}_i(t,X_{i,t}) - e_i(t,X_{i,t})} \le n^{-1}, \,\, \norm{\hat{e}_i} \le 1,
\end{align}
where $\hat{e}_i = \hat{f}_i/\hat{g}_i$. Note that $\hat g_i(t, X_{i,t}) \ge g_i(t, X_{i,t})/2$ and $\norm{e_i} \le 1$. Then the approximation
\[
\hat{e}^*(t,x) :=  \frac{\sum_i \rho_i(t,x) 
    h_i(t,x) \hat{g}_i(t,x)\hat{e}_i(t,x)
    }{\sum_i \rho_i(t,x) 
    h_i(t,x) \hat{g}_i(t,x)} 
\]
verifies with probability greater than $1-n^{-2}$
\begin{align*}
    &\norm{e^*_{tr}(t,X_t)-\hat{e}^*(t,X_t)}  
    \\
    &\quad 
    = 
    \norm{
    \frac{
        \sum_i \rho_i(t,x) 
            h_i(t,x) \hat{g}_i(t,x)\hat{e}_i(t,x)
    }{
        \sum_i \rho_i(t,x) 
        h_i(t,x) \hat{g}_i(t,x)
}
    -  
    \frac{
        \sum_i \rho_i(t,x) 
        h_i(t,x) g_i(t,x) e_i(t,x)
    }{
        \sum_i \rho_i(t,x) 
        h_i(t,x) g_i(t,x)
    }
    }
    \\
    &\quad 
    \le 
    \norm{\frac{
        \sum \rho_i h_i \del{\hat{g}_i\hat{e}_i - g_i e_i}
    }{
    \sum \rho_i h_i \hat{g_i}
    }}
    +
    \norm{\frac{
        \sum \rho_i h_i g_i e_i}
        {\sum \rho_i h_i g_i}
        \del{\frac{\sum \rho_i h_i g_i}{\sum \rho_i h_i \hat{g}_i}-1}
        }
    \\
    &\quad 
    \le 
    \frac{
        \sum \rho_i h_i \hat{g}_i\norm{\hat{e}_i - e_i} 
    }{
    \sum \rho_i h_i \hat{g_i}
    }
    +
    \frac{
        \sum \rho_i h_i \norm{e_i}\abs{\hat{g}_i - g_i} 
    }{
    \sum \rho_i h_i \hat{g_i}
    }
    +
    \norm{e^*_{tr}}
        \frac{\sum \rho_i h_i \abs{g_i-\hat{g}_i}}{\sum \rho_i h_i \hat{g}_i}
    \\
    &\quad \le 
    \sup_{i\in I_2}\norm{\hat{e}_i - e_i} 
    +
    2\frac{
        \sup_{i\in I_2}\abs{\hat{g}_i - g_i} 
    }{
    \inf_{i\in I_2}\hat{g_i}
    }
     \leq  n^{-1} + 2   \frac{ n^{-2}}{ n} \le 3n^{-1} 
\end{align*}
Finally if we define $\hat{s}^*(t,x) =\frac{c_t}{\sigma^2_t}  \hat{e}^*(t,x) -\frac{x}{\sigma^2_t}$ and use that $\norm{e^*},\norm{\hat{e}^*} \le 1$ a.s.\ we get
\[
\expectation \norm{\hat{s}^*(t,X_t) - s^*_{tr}(t,X_t)}^2 \le \frac{4}{\sigma^4_t} \del{ n^{-2} + n^{-2} } \le \frac{8}{\sigma^4_t}n^{-2}.
\]
After integration, this results in
\£
\label{eq:error_of_s_star_clip}
\int_{T_k}^{T_{k+1}}\expectation \norm{\hat{s}^*(t,X_t) - s^*_{tr}(t,X_t)}^2dt \lesssim   \sigma_{T_k}^{-2}n^{-2} \int_{T_k}^{T_{k+1}}\frac{ 1 }{ \sigma_t^2} dt \lesssim  \sigma_{T_k}^{-2}n^{-2} \log n%
\£
\subsubsection{Neural Networks Approximation} \label{app:NNapprox:larget}

The goal of this section is to construct neural network functions $\hat{e}_i$ and $\hat{g}_i$ that satisfy \eqref{bound:hatf_hatg} for all $i \in I_2$ with probability at least $1-n^{-2}$. Since $\hat{e_i}$ is an approximation of $e_i = f_i/g_i$, instead of directly constructing $\hat{e}_i$ we focus on approximation of $f_i$ with $\hat{f}_i$, by~\eqref{eq:lower_bound_g_i} it is enough to guarantee $\norm{\hat{f}_i-f_i} \le n^{-2}(\log n)^{-\gamma d}$.

We first construct polynomial functions in $X_{i,t}$ and then we use \cite{oko2023} to approximate the polynomial functions by neural networks. 
Since $M^*_i = \Phi^*_i\del{B_d\del{0,\varepsilon_N}}$, using the change of variable $y^* = \Phi^*_i(z) $, we have

\[
\begin{aligned}
 g_i(t,x) &:= \int_{B_d\del{0,\varepsilon_N}} e^{\frac{\norm{x_{i,t}-c_tG_i}^2-\norm{x_{i,t}-c_t\Phi^*_i(z)}^2}{2\sigma_t^2}}p_i(z)dz, \\
 f_i(t,x) &:= \int_{B_d\del{0,\varepsilon_N}} \Phi^*_i(z) e^{\frac{\norm{x_{i,t}-c_tG_i}^2-\norm{x_{i,t}-c_t\Phi^*_i(z)}^2}{2\sigma_t^2}}p_i(z)dz.
\end{aligned}
\]
Recall that $\Phi^*_i(z) = G_i + P^*_iz + R_i(z)$, where $R_i(z)=O(\norm{z})^2$ and $R_i(z) \perp P^*_i z$, so
\[
\norm{x_{i,t}-c_tG_i}^2-\norm{x_{i,t}-c_t\Phi^*_i(z)}^2 = \norm{x_{i,t}-c_tG_i}^2-\norm{x_{i,t}-c_t\del{G_i + P^*_iz + R_i(z)}}^2 
\\
= 2c_t\innerproduct{x_{i,t}-c_tG_i}{P^*_i z} - c^2_t\norm{P^*_i z}^2 -2c_t\innerproduct{x_{i,t}-c_tG_i}{R_i(z)}  - c^2_t\norm{R_i(z)}^2
\\
= \underbrace{2c_t\innerproduct{(P^*_i)^T(x_{i,t}-c_tG_i)}{z} - c^2_t\norm{z}^2}_{=:F^{(1)}_i(x_{i,t},z)} -\underbrace{2c_t\innerproduct{x_{i,t}-c_tG_i}{R_i(z)}  - c^2_t\norm{R_i(z)}^2}_{=:F^{(2)}_i(x_{i,t},z)} = F^{(1)}_i(x_{i,t},z) +  F^{(2)}_i(x_{i,t},z).
\]
where to get the last line we used that $P^*_i$ is an isometric embedding. We bound each of terms $F^{(1)}_i(X_{i,t},z)$ and $F^{(2)}_i(X_{i,t},z)$ separately 
\[
\abs{F^{(1)}_i(X_{i,t},z)} = \abs{2c_t\innerproduct{(P^*_i)^T(X_{i,t}-c_tG_i)}{z} - c^2_t\norm{z}^2} 
\\
\le 2c_t \abs{\innerproduct{\pr_{\cH_i} (c_t X_0 -c_t G_i + \sigma_t Z_D)}{P^*_i z}} 
+ c_t^2\norm{z}^2
\\
\le 2c^2_t \norm{X_0-G_i}\norm{z} + 2c_t\sigma_t\norm{(P^*_i)^T Z_D}\norm{z} 
+ c_t^2\norm{z}^2
\]
Since $(P^*_i)^T Z_D \sim \cN\del{0,\Id_d}$ applying Laurent–Massart inequality~\cite{laurent2000} we have 
\[
\rP\del{\forall i: \norm{(P^*_i)^T Z_D} \le \sqrt{d+ 2\log \delta^{-1} + 2\log 2N}} 
\\
\ge 1- \sum_{i=1}^N \rP\del{\norm{(P^*_i)^T Z_D} \ge \sqrt{d+ 2\log 2\delta^{-1}N}} \ge 1-\delta.
\]
So substituting $\delta = n^{-2}$, combined with \eqref{eq:bound_on_distance_to_G_i_t_large}, and using that $\norm{z} \le \varepsilon_N$ we get that with probability at least $1-n^{-2}$
\[
\abs{F^{(1)}_i(X_{i,t},z)} &\le 
4 \sigma_t c_t\cdot \tilde C(n)\varepsilon_N 
+ 2c_t\sigma_t\sqrt{d+ 2\log \delta^{-1} + 2\log 2N}\norm{z} 
+ c_t^2\norm{z}^2 
\\
&\le 8 \sigma_t c_t\cdot \tilde C(n)\varepsilon_N, 
\]
and substituting $2\varepsilon_N < (\sigma_t/c_t)(\log n)^{-\gamma}$,  $\tilde C(n) =\sqrt{\log n \log \log n} = o(\log n)$ for $n$ large enough with probability at least $1-n^{-2}$
\[
\abs{F^{(1)}_i(X_{i,t},z)} 
\le \sigma^2_t (\log n)^{-\gamma + 1}.
\]
Similarly, we bound $F^{(2)}_i(X_{i,t},z)$
\[
\abs{F^{(2)}_i(X_{i,t},z)} = \abs{2c_t\innerproduct{X_{i,t}-c_tG_i}{R_i(z)}  - c^2_t\norm{R_i(z)}^2} 
\\
\le 2c^2_t\norm{\pr_{\cH_i}(X_0-G_i)}\norm{R_i(z)} +\sigma_t c_t\norm{\pr_{\cH_i} Z_D}\norm{R_i(z)}  + c^2_t\norm{R_i(z)}^2 
\]
By construction $\norm{R_i(z)}\lesssim \varepsilon^2_N$, so combining with \eqref{prop:bound_on_projection_on_cH_i} and \eqref{eq:bound_on_distance_to_G_i_t_large} with probability at least $1-n^{-2}$ for all $i$ s.t. $\rho_i(X_t) > 0$ 
\[
\abs{F^{(2)}_i(X_{i,t},z)}
\lesssim  c_t \sigma_t \tilde C(n)\varepsilon_N^2 +\sigma_t c_t \varepsilon^2_N\sqrt{\log n} + c^2_t\varepsilon_N^4 
\]
Using that $\varepsilon_N < \min\curly{(\log n)^{1/d}\,n^{-\frac{d-2}{d}\frac{1}{2\alpha+d}},(\sigma_t/c_t)(\log n)^{-\gamma}}$, $(d-2)/d \geq 1/3$ and $ \tilde C(n)= o( \log n)$ we get for $n$ large enough
\[
&\abs{F^{(2)}_i(X_{i,t},z)} = o(\sigma^2_t n^{-\frac{1}{6}\frac{1}{2\alpha+d}})
\]
Summarizing
\[
\frac{\abs{F^{(1)}_i(X_{i,t},z)}}{2\sigma^2_t} \le  (\log n)^{-\gamma +1}, \text{ and } \quad \frac{\abs{F^{(2)}_i(X_{i,t},z)}}{2\sigma^2_t} \le n^{-\frac{1}{6}\frac{1}{2\alpha+d}}
\]
Therefore the Taylor approximation of  $ e^{u}$  near 0 leads to
\[
    \exp\del{(F^{(1)}_i(X_{i,t}) + F^{(2)}_i(X_{i,t}))/2\sigma_t^2} = \\
    \del{\sum_{k=0}^{\left\lceil \frac{5 \log n}{2(\gamma - 1) \log(\log n)} \right\rceil}  \frac{\del{F^{(1)}_i(X_{i,t})}^k}{k! 2^k\sigma^{2k}_t}}\del{\sum_{k=0}^{16\cdot 6 \cdot (2\alpha+d)}  
    \frac{\del{F^{(2)}_i(X_{i,t})}^k}{k! 2^k\sigma^{2k}_t}} + O(n^{-4})
\]
and since $\diam M^* \le 2$
\begin{align}
\label{eq:final_repr_f_g_t_large}
    \hat{g}_i(t,X_{i,t}) 
    &:= 
    \int_{B_d(0,\varepsilon_N)} \del{\sum_{k=0}^{\left\lceil \frac{5 \log n}{2(\gamma - 1) \log(\log n)} \right\rceil}  \frac{\del{F^{(1)}_i(X_{i,t})}^k}{k! 2^k\sigma^{2k}_t}}\del{\sum_{k=0}^{16\cdot 6 \cdot (2\alpha+d)}  
    \frac{\del{F^{(2)}_i(X_{i,t})}^k}{k! 2^k\sigma^{2k}_t}}
    p_i(z)dz \nonumber
    \\
    &= g_i\del{t,X_{i,t}} + O(n^{-4}),
    \\
    \hat{f}_i(t,X_{i,t}) &:= 
    \int_{B_d(0,\varepsilon_N)} \Phi^*_i(z)\del{\sum_{k=0}^{\left\lceil \frac{5 \log n}{2(\gamma - 1) \log(\log n)} \right\rceil}  \frac{\del{F^{(1)}_i(X_{i,t})}^k}{k! 2^k\sigma^{2k}_t}}\del{\sum_{k=0}^{16\cdot 6 \cdot (2\alpha+d)}  
    \frac{\del{F^{(2)}_i(X_{i,t})}^k}{k! 2^k\sigma^{2k}_t}}
    p_i(z)dz
    \\
    &= f_i\del{t,X_{i,t}} + O(n^{-4}). \nonumber
\end{align}

$\Phi^*_i$ is  polynomial
$\Phi^*_i(z) = G_i + P^*_iz + \sum a^*_{i,S}z^{S}$, 
where $z\in \R^d$, $\Im P^*_i \subset \cH_i, a^*_{i,S}\in \cH_i$, therefore introducing a linear isometry $P_{\cH_i}:\R^{\dim \cH_i}\rightarrow \cH_i$ that identifies $\cH_i$ with $\R^{\dim \cH_i}$ we represent 
\[
\Phi^*_i(z) = G_i + P_{\cH_i}\widetilde{\Phi}^*_i(z),
\]
where $\widetilde{\Phi}^*_i(z):\R^{d} \mapsto \R^{\dim \cH_i}$ is a polynomial functions
\[
\widetilde{\Phi}^*_i(z) = P^T_{\cH_i}P^*_iz + {\sum P^T_{\cH_i}a^*_{i,S}z^{S}}
\]
of degree $\beta-1$ in $d$ variables with coefficients belonging to $\R^{\dim \cH_i}$ and $\widetilde{R}_i(z) := \sum P^T_{\cH_i}a^*_{i,S}z^{S}$ is corresponding second order residual.

So denoting $\widetilde X^{(2)}_i(t) = P^T_{\cH_i}\del{X_{i,t}-c_tG_i}$ we get 
\[
F^{(2)}_i(X_{i,t}, z) = 2c_t\innerproduct{\widetilde{X}^{(2)}_i(t)}{\widetilde{R}_i(z)}  - \norm{\widetilde{R}_i(z)}^2 = \norm{\widetilde{X}^{(2)}_i(t)}^2-\norm{\widetilde{X}^{(2)}_i(t) - c_t \widetilde{R}_i(z)}^2 := \widetilde{F}^{(2)}_i(\widetilde{X}^{(2)}_i(t), z)
\]
Similarly, introducing $\widetilde X^{(1)}_i(t) = (P^*_i)^T(X_{i,t}-c_tG_i) \in \R^d$
\[
F^{(1)}_i(X_{i,t},z) = {2c_t\innerproduct{\widetilde{X}^{(1)}_i(t)}{z} - c^2_t\norm{z}^2} = \norm{\widetilde{X}^{(1)}_i(t)} - \norm{\widetilde{X}^{(1)}_i(t)-c_tz} := \widetilde{F}^{(1)}_i(\widetilde{X}^{(2)}_i(t),z).
\]

Expanding brackets of~\eqref{eq:final_repr_f_g_t_large} we represent $\hat{f}$ and $\hat{g}$ as polynomials in $c_t, \sigma^{-2}_t, \widetilde{X}^{(1)}_i, \widetilde{X}^{(2)}_i$ with coefficients being integrals depending on unknown coefficients of $\widetilde{\Phi}^*_i$ and density $p_i$ that itself depends on samples $\cG\subset \cY$ used to construct the measure, and $\mu$ -- the density. More presisely
\[
    \tilde{g}_i(t,X_{i,t}) &= \sum_{\abs{S^{(1)}} <s_n }\sum_{\abs{S^{(2)}} <16\cdot 24 \cdot (2\alpha+d)}\del{\widetilde X^{(1)}_i(t), \sigma^{-1}_t, c_t}^{S^{(1)}}\del{\widetilde X^{(2)}_i(t), \sigma^{-1}_t, c_t}^{S^{(2)}} C_{S^{(1)},S^{(2)},g_i}\del{\cY,\mu}  
    \\
    \tilde{f}_i(t,X_{i,t}) &= 
    \sum_{\abs{S^{(1)}} <s_n}\sum_{\abs{S^{(2)}} <16\cdot 24 \cdot (2\alpha+d)}\del{\widetilde X^{(1)}_i(t), \sigma^{-1}_t, c_t}^{S^{(1)}}\del{\widetilde X^{(2)}_i(t), \sigma^{-1}_t, c_t}^{S^{(2)}} C_{S^{(1)},S^{(2)},f_i}\del{\cY,\mu}
\]
where $s_n = \left\lceil \frac{5 \log n}{2(\gamma - 1) \log(\log n)} \right\rceil$,  $C_{S^{(1)},S^{(2)},g_i}\del{\cY,\mu}$ are scalars and $C_{S^{(1)},S^{(2)},f_i}\del{\cY,\mu}\in \R^{\dim \cH_i}$, while $S^{(1)}\in \mathbb N^{d+2}$ and $S^{(2)}\in \mathbb N^{\dim \cH_i + 2}$

We treat the coefficients $C_{S^{(1)},S^{(2)},g_i}\del{\cY,\mu}, C_{S^{(1)},S^{(2)},f_i}\del{\cY,\mu}$ as unknown weights of the neural network that require optimization. Since $\dim \cH_i \le C_{\dim}\log n$ and $s_n\le \log n$, the number of monomials is bounded by 
\[
\binom{d+\log n}{\log n}\binom{C_{\dim}\log n + 16\cdot 24\cdot (2\alpha+d)}{\log n} \le \del{(2C_{\dim}+1)\log n + d}^{16\cdot 24 \cdot (2\alpha+d) + d} = \polylog n.
\]
Therefore, the number of weights required to parameterize all the coefficients $C_{S,g_i}\del{\cY,\mu}, C_{S,f_i}\del{\cY,\mu}$ is bounded by $\polylog n$.

Summarizing, $\tilde{g}_i$ and $\tilde{f}_i$ are both represented as polynomials in $O(\log n)$ variables, with $\polylog n$ non-zero coefficients, and coefficients belonging to $\polylog n$ subspace. Therefore, using Lemmas~\ref{lemma:parallelization_of_neural_networks}-- \ref{lemma:oko_nn_appproximating_reciprocal_function} both $\tilde{g}_i(t,x_{i,t}), \tilde{f}_i(t,x)$ can be approximated by a neural networks $\phi_{g_i}, \phi_{f_i}\in \Psi(L,W,S,B)$, where $L, W,S = \polylog n$ and $B = e^{\polylog n}$, with an error less than $n^{-4}$ (actually less than $n^{- C} $ for any $C>0$).
Finally, applying~\Cref{lemma:oko_nn_appproximating_reciprocal_function} we approximate $\tilde{e}_i := \tilde{f}_i(t,x)/\tilde{g}_i(t,x_{i,t})$ with a neural network $\phi_{e_i} \in \Psi(L,W,S,B)$ of the same type,  with an error less than $2n^{-2}$.

Applying argument for all $i\le N$ and combining~\eqref{eq:error_of_s_star_clip},~\eqref{eq:error_of_s_star_tr}~and~\eqref{eq:D1_W2_vs_SML} we obtain that for $T_k\ge n^{-\frac{2}{2\alpha+d}}$ there is a network $\phi\in\cS_k$ s.t. 
    \[
    \int_{T_k}^{T_{k+1}}\sigma^2_t\expectation \norm{\phi(t,X_t) - s(t,X_t)}^2 dt\lesssim \sigma^{-2}_{T_k} n^{-\frac{2(\alpha+1)}{2\alpha+d}} (\log n)^{\frac{2\beta}{d}}.
    \]

\subsection{Case $\underline T \le T_{k} \le n^{-\frac{2}{2\alpha+d}}$}
\label{sec:main_theorem_t_small}
\subsubsection{Summary} \label{sec:summary:D2}
Recall that 
$\underline T\ge {(\log n)^{\beta\gamma + \frac\beta d }}n^{-\frac{2(\alpha+1)}{2\alpha+d}} $;
we choose $\gamma >3 - 1/d$ and 
$N = (\log n)^{-\gamma d} n^{\frac{d}{2\alpha+d}}$.
Then 
$N^{-\frac{\beta}{d}} = n^{-\frac{\beta}{2\alpha+d}} (\log n)^{\beta\gamma} \leq r_0$, $ \varepsilon_N \lesssim n^{-\frac1{2\alpha+d}} (\log n)^{(\gamma+ 1/d)}$, and  
$$
\forall t \in [T_k, T_{k+1}],\quad  \sigma_t/c_t \le 2\sqrt{t} \le \varepsilon_N  (\log n)^{-(\gamma+ 1/d)} \le \varepsilon_N  (\log n)^{-3},\quad \varepsilon_N^\beta \lesssim \underline{T}.
$$
Similarly to the case $T_k \ge n^{-\frac{2}{2\alpha+d}}$, as a first step, we localize the score function and construct its truncated version $s_{tr}$.
As before, we introduce $ \rho_i(t, X_t)$ defined in \eqref{rhoi}, with this time $C(T_k ,n)= 6\varepsilon^2_N$.
In this section, we define \[
C(n) = \sqrt{d\del{\log n + 4C_{\log}} + 4\log n} \asymp \sqrt{ \log n}
\]
We show that with probability at least $1-n^{-2}$ if $\rho_i(t,X_t) > 0$ then $i\in I_2$ where
\£ \label{I2:D2}
I_2 = \curly{i\le N: \norm{X_0-G_i} \le 4 \varepsilon_N}. 
\£

In contrast to the case $T_k \ge n^{-\frac{2}{2\alpha+d}}$, here we have $(\sigma_t/c_t) =o(\varepsilon_N)$, while Theorem~\ref*{thm:concentration_around_X0} implies that the support of conditional measure $\mu(y|t,X_t)$ of scale $O(\sigma_t/c_t)$.  concentrated on $\Phi_i(G_i, 5\varepsilon_N)$. On the other hand, this means that the $O(\varepsilon_N)$ scale is too coarse, and we need to better localize the support.    

We solve this problem, by considering the projection of $X_t$ onto $T_{G_i}\Phi^*_i(0, \varepsilon_N)$, namely
\£ \label{zistar:D2}
z^*_i = z^*_i(X_t) = (P^*_i)^T \del{X_t-c_t G_i}/c_t,
\£
and by defining the submanifolds $M_i$ and their counterparts $M^*_i$ as
\begin{align}   \label{Mi:Mistar} 
&M_i := \Phi_i\del{B_d\del{z^*_i,4(\sigma_t/c_t)\sqrt{20}C(n)}}, & M^*_i := \Phi^*_i\del{B_d\del{z^*_i,4(\sigma_t/c_t)\sqrt{20}C(n)}}
\end{align}
that centered around $m_i := \Phi_i(z^*_i)$ and $m^*_i := \Phi^*_i(z^*_i)$.

As we show in~\Cref{apdx:score_function_localization_t_small} with probability at least $1-O(n^{-2})$
for all $i\in I_2$ the local score $s_i(t,X_t)$ verifies
\[
\norm{s_i(t,X_t) - s(t,X_t)} \le 2\frac{c_t}{\sigma^2_t}n^{-2}, \quad \text{where }s_i(t,X_t) := \frac{1}{\sigma^2_t}\frac{\int_{M_i}(c_ty-X_t) p(y|t,X_t)}{p\del{M_i|t,X_t}}.
\]
So, if we define 
\£ \label{str:D2}
    s_{tr}(t,X_t) = \frac{\sum_i \rho_i(t,X_t) s_i(t,X_t)}{\sum_i \rho_i(t,X_t)},
\£
then we can prove that 
\£ \label{str-s:D2}
\int_{T_k}^{T_{k+1}}\expectation \norm{s_{tr}(t,X_t) - s(t,X_t)}^2dt \lesssim n^{-2} \sigma_{T_k}^{-2}.     
\£

The next step is the replacement of $s_{tr}$ with its polynomial counterpart $s^*_{tr}$, which we define by constructing $s^*_i$. Denoting by $p_i(z) = \abs{\grad\Phi_i\grad^T\Phi_i}^{-1}p\del{\Phi_i(z)}$ the density of pushforward of $\mu$ restricted on $M_i$ on the tangent space $T_{G_i}M_i$ we represent
\£ \label{si:D2}
    s_i(t,X_t)
    =
    \frac{c_t}{\sigma^2_t}\frac{\int_{B_d(z^*_i, 4(\sigma_t/c_t)\sqrt{20}C(n))}\Phi_i(z) e^{-\norm{c_t\Phi_i(z)-X_t}^2/2\sigma^2_t}p_i(z)dz}{\int_{B_d(z^*_i, 4(\sigma_t/c_t)\sqrt{20}C(n))} e^{-\norm{c_t\Phi_i(z)-X_t}^2/2\sigma^2_t}p_i(z)dz} - \frac{X_t}{\sigma^2_t},
\£
and approximate it by substituting $\Phi_i$ and $p_i$ with their polynomial approximations $\Phi^*_i, p^*_i$ giving us
\£
\label{eq:s^*_i_definition}
s^*_i(t,X_t) := 
    \frac{c_t}{\sigma^2_t}\frac{\int_{B_d(z^*_i, 4(\sigma_t/c_t)\sqrt{20}C(n))}\Phi^*_i(z) e^{-\norm{c_t\Phi^*_i(z)-X_t}^2/2\sigma^2_t}p^*_i(z)dz}{\int_{B_d(z^*_i, 4(\sigma_t/c_t)\sqrt{20}C(n))} e^{-\norm{c_t\Phi^*_i(z)-X_t}^2/2\sigma^2_t}p^*_i(z)dz} - \frac{X_t}{\sigma^2_t}.
\£
Defining $s^*_{tr}$ as a mixture of $s^*_i$ and correspondingly $e^*_{tr}$, 
\[
    s^*_{tr}(t,X_t) := \frac{\sum_i \rho_i(t,X_t) s^*_i(t,X_t)}{\sum_i \rho_i(t,X_t)}, e^*_{tr}(t,X_t) := \frac{\sum_i \rho_i(t,X_t) e^*_i(t,X_t)}{\sum_i \rho_i(t,X_t)},
\]
with $\frac{c_t}{\sigma^2_t}e^*_i(x) = s^*_i(x) + \frac{x}{\sigma^2_t}$,  
the direct calculations performed in \Cref{apdx:polynomial_approximation_t_small} show
\£
\label{eq:sml_s_star_t_small}
\int^{T_{k+1}}_{T_k}\expectation \norm{s^*_{tr}(t,X_t) - s_{tr}(t,X_t)}^2dt =
O\del{ \sigma_{T_k}^{-2}\varepsilon_N^{2\beta} (\log n)^2 +  (\log n)^{2\alpha+1} n^{-\frac{2\alpha}{2\alpha+d}}  } 
\\
=
O\del{T_k^{-1}(\log n)^{2\gamma\beta+2}n^{-\frac{2\beta}{2\alpha+d}} +   (\log n)^{2\alpha+1} n^{-\frac{2\alpha}{2\alpha+d}}}
.
\£
Next, we represent $e^*_i = f_i/g_i$, where $f_i,g_i$ are regularized such that they can be efficiently approximated using polynomials.

As in the case $T_k\ge n^{-\frac{2}{2\alpha+d}}$ the main challenge is the term $\norm{X_t-c_t\Phi^*_i(z)}$ in the exponent, and which  scales as $O(D)$.  To tackle this first note that by definition $M^*_i$ is centered around $m^*_i = \Phi^*_i(z^*_i)$. We approximate $M^*_i$ by the tangent space $T_{m^*_i}M^*_i$ and linearize $\Phi^*_i$ as
\[
        \Phi^*_i(z) = \Phi^*_i(z^*_i) + \grad \Phi^*_i(z^*_i)(z-z^*_i) +R_i(z,z^*_i),
\]
 and we finally define the projection of $x$ onto $T_{m^*_i}M^*_i$
 \£
    \label{eq:x^pr_i}
    x^{\pr}_i :=  \pr_{z^*_i}\del{x- c_t m_i^*}.
\£
Note that $x^{\pr}_i$ is not the same as $x_{i,t} = \pr_{c_t \cA_i} x$ .

Using that $(x - c_tm^*_i) - x^{\pr}_i \perp T^*_{m^*_i} M^*_i$ and $x-c_t G_i-x_{i,t} \perp \cH_i \supseteq T^*_{m^*_i} M^*_i$ we extract from $\norm{X_t-c_t m^*_i - c_t\Phi^*_i(z)}^2$ a term $\norm{X_t-X_{t,i}^{\pr}}^2$ depending only $x$ and not on $z$. This term carries the ambient $O(D)$ contribution and cancels between numerator and denominator in the posterior ratio. More precisely, we write
\£
\label{eq:definition of g, f small time}
        g_i(t,x) :=
    &\int_{B_d(z^*_i, 4(\sigma_t/c_t)\sqrt{20}C(n))} 
        \exp\del{-\frac{{\norm{x^{\pr}_i - c_t\grad\Phi^*_i(z^*_i)(z-z^*_i)}^2}}{2\sigma^2_t}} \nonumber
        \\
        &\qquad\times \exp\del{\frac{c^2_t\norm{R_i(z,z^*_i)}^2
        + 2c_t\innerproduct{x_{i,t}- \Phi^*_i(z^*_i)}{R_i(z,z^*_i)}}{2\sigma^2_t}}
        p^*_i(z)dz \nonumber
        ,
        \\
        f_i(t,x) := 
    &\int_{B_d(z^*_i, 4(\sigma_t/c_t)\sqrt{20}C(n))} \Phi^*_i(z)
        \exp\del{-\frac{\norm{x^{\pr}_i - c_t\grad\Phi^*_i(z^*_i)(z-z^*_i)}^2}{2\sigma^2_t}} \nonumber
        \\ 
        &\qquad \times \exp\del{\frac{c^2_t\norm{R_i(z,z^*_i)}^2
        + 2c_t\innerproduct{x_{i,t}- \Phi^*_i(z^*_i)}{R_i(z,z^*_i)}}{2\sigma^2_t}}
        p^*_i(z)dz.   
\£
In \Cref{apdx:polynomial_approximation_of_e_star_i_t_small} we verify that $e^*_i = f_i/g_i$ and show that with probability at least $1-n^{-2}$ for all $i\in I_2$
\£
\label{eq:regularity of exponents in g, f small time}
&\norm{X^{\pr}_i(t) - c_t\grad\Phi^*_i(z^*_i)(z-z^*_i)}^2 \le {676}\sigma^2_t C^2(n) \nonumber
\\
&
\abs{c^2_t\norm{R_i(z,z^*_i)}^2 -2 c_t\innerproduct{X_t- c_t\Phi^*_i(z^*_i)}{R_i(z,z^*_i)}} = O(\sigma_t^3 (d\log n)^2)
\£
implying $
g_i(t,X_t) \ge n^{-2688d}. $ So if we construct $\hat{f}_i, \hat{g}_i$ satisfying with probability at least $1-n^{-2}$ for all $i\in I_2$
$
\abs{g_i(t,X_t) - \hat{g}_i(t,X_t)}, \norm{f_i(t,X_t) - \hat{f}_i(t,X_t)}\le n^{-2370d}, $
 then since $\norm{e_i} \le 2$ by construction 
\[ 
\hat{e}_i(t,x) &= \frac{\hat{f}_i(t,x)}{\hat{g}_i(t,x)}
, \quad
\hat{e}_{tr}(t,x) = \frac{\sum_{i=1}^N \rho_i(t,x)\ind_{\norm{\hat{e}_i(t,x)}\le 2\sigma^{-2}_t}\hat{e}_i(t,x)}{\sum_{i=1}^N \rho_i(t,x)}, \quad \hat{s}(t,x) = \frac{c_t}{\sigma^2_t}\hat{e}_{tr}(t,x) - \frac{x}{\sigma^2_t},
\]
satisfy
\[
\int_{T_k}^{T_{k+1}}\expectation \norm{\hat{s}(t,X_t) - s^*_{tr}(t,X_t)}^2 dt \le n^{-2}.
\]
Finally, in \Cref{apdx:neural_netwok_approximation_t_small} using~\eqref{eq:regularity of exponents in g, f small time} we perform the Taylor expansion of the exponents in~\eqref{eq:definition of g, f small time} and show that both $\hat{f}_i$ and $\hat{g}_i$ satisfying~\eqref{eq:f_hat_definition_t_small} can be obtained as a composition of polynomials of $\polylog$ size, which can be approximated by neural networks also of $\polylog$ size.

  \subsubsection{Score Function Localization}
    \label{apdx:score_function_localization_t_small}
    In this section we show that $s(t, X_t)$ is close to $ s_{tr}(t,X_t)$ which is defined in \eqref{str:D2}.

Applying it with $A=n^{-2}$ we conclude that with probability at least $1-n^{-2}$ if $\rho_i(t,X_t) > 0$ then $i\in I_2$ and 
\[
B_M\del{X_0,2(\sigma_t/c_t)\sqrt{20}C(n,A)} \subset M_i \subset \Phi_i\del{B_d\del{0,7\varepsilon_N}}.
\]
Applying Theorem~\ref*{thm:concentration_around_X0} with the same fixed tail
probability gives, for all $i\in I_2$,
\[
    \int_{M_i}p(y|t,X_t)dy
    \ge
    \int_{B_M\del{X_0,2(\sigma_t/c_t)\sqrt{20}C(n)}}p(y|t,X_t)dy
    \ge 1-n^{-2}.
\]
     We obtain that with probability greater than $1-n^{-2}$, for all $i\in I_2$
    $p(M_i| t , X_t) \geq 1 - n^{-2}$ and using the definition \eqref{si:D2} of $s_i(t,X_t)$ together with $e_i(t,X_t) = s_i(t,X_t) + X_t/\sigma^2_t$ and $\diam M\le 1$.
    we have that   with probability at least $1-n^{-2}$, for all $i\in I_2$
     \[
    \norm{e_i(t,X_t)p\del{M_i|t,X_t} - e(t,X_t)} = \norm{\int_{M\backslash M_i} y\, p(y|t,X_t)} \le n^{-2}, 
    \]
    Also, since $\norm{s_i(t,X_t) - s(t,X_t)} = \frac{c_t}{\sigma^2_t}\norm{e_i(t,X_t) - e(t,X_t)}$, we bound
    \begin{align*}
    & \quad \norm{s_i(t,X_t) - s(t,X_t)} \\
    & \le \frac{c_t}{\sigma^2_t}\norm{e_i(t,X_t)\del{1-p\del{M_i|t,X_t}}} \nonumber  + \frac{c_t}{\sigma^2_t}\norm{e_i(t,X_t)p\del{M_i|t,X_t} - e(t,X_t)} 
    \le  2\frac{c_t}{\sigma^2_t}n^{-2}   
    \end{align*}
    with probability at least $1-n^{-2}$, for all $i\in I_2$.

    Finally, recall that 
    $ s_{tr}(t,X_t) = \sum_i \rho_i(t,X_t) s_i(t,X_t)/(\sum_i \rho_i(t,X_t))$, 
     is a  mixture of $s_i$ with weights  $\rho_i(t,X_t)\neq 0$ only if $i\in I_2$. Therefore  with probability at least $1-n^{-2}$
    \[
    \norm{s_{tr}(t,X_t) - s(t,X_t)} \le 2\frac{c_t}{\sigma^2_t}n^{-2}.
    \]
    Almost surely $\norm{e_i(t,X_t) - e(t,X_t)} \le \norm{e_i(t,X_t)} + \norm{e(t,X_t)}\le 2$, so taking the expectation
    \[
    \expectation \norm{s_{tr}(t,X_t) - s(t,X_t)}^2  &\le 8\frac{c^2_t}{\sigma^4_t}n^{-4}   \quad  \text{and} \\ 
     \int_{T_k}^{T_{k+1}}\expectation \norm{s_{tr}(t,X_t) - s(t,X_t)}^2dt &\le 8 \frac{n^{-2}}{ \sigma_{T_k}^2} \int_{T_k}^{T_{k+1}} \frac{1}{t} dt \le 8 \frac{n^{-2}}{ \sigma_{T_k}^2} \log 2, \]
     which proves \eqref{str-s:D2}.
\subsubsection{Polynomial Approximation of $f_i$ and $g_i$}
\label{apdx:polynomial_approximation_t_small}
    The main goal of this section is to substitute $s_{tr}(t,x)$ with the score corresponding to the piecewise  polynomial approximations of the manifold $M$ by $M^* = \cup_i M_i^*$ with $M_i^* = \Phi_i^* (\Phi_i^{-1}( M_i))= \Phi_i^*\del{B_d\del{z^*_i, 4(\sigma_t/c_t)\sqrt{20}C(n)}}$ and for each $i$, $p_i=\abs{\grad\Phi_i\grad^T\Phi_i}^{-1}p\circ \Phi_i$ by a polynomial function $p_i^*$.  
   
    Recall that from \eqref{eq:balls_inclusion_t_small}, we have with probability at least $1-n^{-2}$  
    $
   M_i \subset B_d(0,7\varepsilon_N)$  so that we can express
    \[
    e_i(t,X_t) = \frac{\int_{M_i}y\, p(y|t,X_t)}{p\del{M_i|t,X_t}}
    \\ 
    =
    \frac{\int_{B_d(z^*_i, 4(\sigma_t/c_t)\sqrt{20}C(n))}(\Phi_i(z)-\Phi_i(z^*_i)) e^{-\norm{\Phi_i(z)-X_t}^2/2\sigma^2_t}p_i(z)dz}{\int_{B_d(z^*_i, 4(\sigma_t/c_t)\sqrt{20}C(n))} e^{-\norm{\Phi_i(z)-X_t}^2/2\sigma^2_t}p_i(z)dz} + \Phi_i(z^*_i) 
    \]

   A natural way to approximate $e_i(t,x)$ is to substitute $\Phi_i$ by $\Phi^*_i$, and $p_i(z)$ by its Taylor approximation of order $\alpha-1$ at $0$ which we denote $p^*_i(z)$. This leads to 
    \[
    e^*_i(t,X_t) := 
    \frac{\int_{B_d(z^*_i, 4(\sigma_t/c_t)\sqrt{20}C(n))}\del{\Phi^*_i(z)-\Phi_i^*(z^*_i)} e^{-\norm{\Phi^*_i(z)-X_t}^2/2\sigma^2_t}p^*_i(z)dz}{\int_{B_d(z^*_i, 4(\sigma_t/c_t)\sqrt{20}C(n))} e^{-\norm{\Phi^*_i(z)-X_t}^2/2\sigma^2_t}p^*_i(z)dz} + \Phi^*(z^*_i) 
    \]
   We now bound $e_i^* - e_i$. First, 
   since $p_i(z)$ is $\alpha$-smooth then 
   $$\abs{p_i(z)-p^*_i(z)} \lesssim  (\sigma_t C(n))^\alpha \lesssim \varepsilon^\alpha_N$$ and by~\Cref{prop:geometric_statements} $p_i(z) = \abs{\grad\Phi_i\grad^T\Phi_i}^{-1}p\del{\Phi_i(z)} \ge 2^{-d}p_{\min}$, so 
    \[
    p_i(z) = p^*_i(z) \del{1+O\del{(\sigma_t C(n))^\alpha }} =  p^*_i(z)\del{1+O\del{(\log n)^{\alpha/2}n^{-\frac{\alpha}{2\alpha+d}}}}.
    \]
    
       By Lemma~\ref*{lemma:approximation_of_Phi} for all $z\in B_d(0,7\varepsilon_N)$
       \[
       \norm{\Phi^*_i(z)-\Phi_i(z)} \lesssim \varepsilon^\beta_N \le (\log n)^{2\gamma \beta}n^{-\frac{\beta}{2\alpha+d}}.
       \]
     
    Let $z\in B_d(z^*_i,4(\sigma_t/c_t)\sqrt{20}C(n))$, denoting ${y:= \Phi_i(z)}$ and $y^* := \Phi^*_i(z)$ we first bound
    \[
    \norm{X_t - c_t y^*}^2 - \norm{X_t- c_ty}^2 =  c^2_t\norm{y-y^*}^2 + 2c^2_t\innerproduct{X_0-y}{y-y^*} + 2c_t\sigma_t\innerproduct{Z_D}{y-y^*}.   
    \]
    By Lemma~\ref*{lemma:approximation_of_Phi} the first term is bounded as
    \[
    c^2_t\norm{y-y^*}^2 \le c^2_t\norm{\Phi_i^*(z)-\Phi_i(z)}^2 \lesssim c^2_t\cdot \varepsilon^{2\beta}_N  = o( \varepsilon^{\beta}_N\sigma_t c_t)
    \]
    By Proposition~\ref*{prop:TaylorGP} for $n$ large enough with probability $1-n^{-2}$ the third term is bounded by 
    \[
    2c_t\sigma_t\cdot\abs{\innerproduct{Z_D}{y-y^*}} \lesssim c_t\sigma_t \cdot \varepsilon_N^\beta \cdot C(n) \le c_t\sigma_t \cdot (\log n)^{(\gamma+1/d)\beta +1/2}n^{-\frac{\beta}{2\alpha+d}}/2.
    \]
    At the same time by~\Cref{prop:points_z_as_projections_on_tangent_space}, 
    $
    X_0 \in B_M(m_i, 2(\sigma_t/c_t)\sqrt{20}C(n))$, and 
    since $\Phi_i$ is $2$-Lipschitz
    \£
    \label{eq:dist_X_0_y_in_cube_t_small}
    \norm{X_0-y} \le \norm{X_0-m_i}+\norm{m_i-y} \le (8 +2)(\sigma_t/c_t)\sqrt{20}C(n),\£
    the second term is bounded by, for $n$ large enough,
    \[
        c^2_t\abs{\innerproduct{X_0-y}{y-y^*}} \lesssim c_t\sigma_t C(n) \varepsilon^\beta_N =o( c_t\sigma_t \cdot  (\log n)^{(\gamma+1/d)\beta +1/2}n^{-\frac{\beta}{2\alpha+d}}),
    \]
    where the constants depend only on $d, C_{log}, \beta$. 
    Summing up we conclude that with probability at least $1-n^{-2}$ for all $i\in I_2$ and $y\in B_d(z^*_i,4(\sigma_t/c_t)\sqrt{20}C(n))$
    \[
    \frac{\abs{\norm{X_t - c_t y^*}^2 - \norm{X_t- c_ty}^2}}{2\sigma^2_t} \lesssim \frac{\varepsilon_N^\beta c_t C(n)}{\sigma_t} = o(1), 
    \]
    where the latter is due to $(\sigma_t/c_t) \ge (\log n)^{\gamma/2}n^{-\frac{\alpha+1}{2\alpha+d}} \ge n^{-\frac{d\beta}{(d-2)(2\alpha+d)}}(\log n)^{\gamma/2} >> \sqrt{\log n} \varepsilon_N^\beta$, for all $\beta >0$. 
    As a result
    \[
    e^{-\frac{\norm{X_t - c_t y^*}^2}{2\sigma_t^2}} = e^{-\frac{\norm{X_t - c_t y}^2}{2\sigma_t^2}}\del{1+O\del{\frac{\varepsilon_N^\beta c_t C(n)}{\sigma_t} }}.
    \]
    Combining the inequalities above, with probability $1-n^{-2}$ for all $i\in I_2$
    \[
    &e^*_i(t,X_t)
    =
    \frac{\int_{B_d(z^*_i, 4(\sigma_t/c_t)\sqrt{20}C(n))}\del{\Phi^*_i(z)-\Phi^*_i(z^*_i)} e^{-\norm{\Phi^*_i(z)-X_t}^2/2\sigma^2_t}p^*_i(z)dz}{\int_{B_d(z^*_i, 4(\sigma_t/c_t)\sqrt{20}C(n))} e^{-\norm{\Phi^*_i(z)-X_t}^2/2\sigma^2_t}p^*_i(z)dz} + \Phi_i(z^*_i) 
    \\
    &=
    \frac{\int_{B_d(z^*_i, 4(\sigma_t/c_t)\sqrt{20}C(n))}\del{\Phi_i(z)-\Phi_i(z^*_i)} e^{-\norm{\Phi^*_i(z)-X_t}^2/2\sigma
    ^2_t}p^*_i(z)dz}
    {\int_{B_d(z^*_i, 4(\sigma_t/c_t)\sqrt{20}C(n))} e^{-\norm{\Phi^*_i(z)-X_t}^2/2\sigma^2_t}p^*_i(z)dz} + \Phi_i(z^*_i)+ O\del{\varepsilon_N^\beta}
    \\
    &=e_i(t,X_t) + O\del{\varepsilon_N^\beta }
    \\
    & + \sup_{z \in B_d(z^*_i, 4(\sigma_t/c_t)\sqrt{20}C(n))}\norm{\Phi_i(z)-\Phi_i(z^*_i)}
    O\del{(\sigma_t \sqrt{\log n})^\alpha + 
    \frac{\varepsilon_N^\beta C(n)}{ \sigma_t}}.
    \]
    Since $\Phi_i$ is $2$-Lipschitz $\norm{\Phi_i(z)-\Phi_i(z^*_i)} \le 2\norm{z-z^*_i}\le 8\sqrt{d}(\sigma_t/c_t)\sqrt{20}C(n)$, applying that $\sigma^2_t \ge t/2 \ge T_k/2, \sigma_t/c_t \le 4t < 4T_k$ and $C(n) \lesssim \log n$ we conclude that with probability at least $1-n^{-2}$
    \[
    e^*_i(t,X_t) = e_i(t,X_t) +  O\del{(\log n)^{\gamma\beta+2}n^{-\frac{\beta}{2\alpha+d}}} + O\del{T^{1/2}_k n^{-\frac{\alpha}{2\alpha+d}}(\log n)^{\alpha+1/2}} .
    \]
    Taking expectations, and noting that $\norm{e^*_i(t,X_t)} \le 4T^{-1}_k, \norm{e_i(t,X_t)} \le 2T^{-1}_k$ a.s., we obtain
    \[
    \expectation \norm{s^*_i(t,X_t) - s_i(t,X_t)}^2 
    &= \frac{c_t}{\sigma_t^2}\expectation \norm{e^*_i(t,X_t) - e_i(t,X_t)}^2 
    \\
    &= O\del{T^{-2}_k \varepsilon_N^{2\beta} \log n + T^{-1}_k n^{2\alpha/(2\alpha+d)} (\log n)^{2\alpha+1}}. 
    \]
    Integrating over $[T_k,T_{k+1}]$, since $T_{k+1} = 2T_k\leq 2n^{-2/(2\alpha+d)}$
    \[
    \int^{T_{k+1}}_{T_k}\expectation \norm{s^*_i(t,X_t) - s_i(t,X_t)}^2dt&=  \frac{ 1 }{ \sigma_{T_k}^{2}}O\del{   \varepsilon_N^{2\beta}\log n^2 + \sigma_{T_k}^{2}n^{2\alpha/(2\alpha+d)} (\log n)^{2\alpha+1} }
    \]
Finally,  using that $\rho_i(t,x) \neq 0$ implies $i\in I_2$ and by  definition of $s^*_{tr}$ we have
\[
\int^{T_{k+1}}_{T_k}\expectation \norm{s^*_{tr}(t,X_t) - s_{tr}(t,X_t)}^2dt
=O\del{
    \sigma_{T_k}^{-2}
    {\varepsilon_N^{2\beta}}
    (\log n)^2
    + n^{2\gamma\alpha}n^{-\frac{2\alpha}{2\alpha+d}}(\log n)^{2\alpha+1}
}.
\]
which proves \eqref{eq:sml_s_star_t_small}. 

\subsubsection{Polynomial Approximation of $e^*_i$}
\label{apdx:polynomial_approximation_of_e_star_i_t_small}
To construct a neural network approximation of $e^*_{tr}$ we first approximate $e_i^*$ by a low degree polynomial of at most $O(\polylog n)$ variables. 
A key point is to treat $e^{-\norm{x-c_t\Phi^*_i(z)}^2/2\sigma^2_t}$  in the definition of $e_i^* $ since $\norm{x-c_t\Phi^*_i(z)}^2 \asymp D$. As explained in~\Cref{sec:summary:D2}, we introduce the projection operator onto $T_{m^*_i}M^*_i$ for $m_i^*=\Phi^*_i(z^*_i)$, 
\£\label{eq:x^pr_i}
        \pr_{z^*_i} = \grad \Phi^*_i(z^*_i)\del{\grad^T\Phi^*_i(z^*_i)\grad\Phi^*_i(z^*_i)}^{-1}\grad^T \Phi^*_i(z^*_i),
        \quad x^{\pr}_i :=  \pr_{z^*_i}\del{x- c_t m_i^*} 
        \£
        We then write 
        \[
        \Phi^*_i(z) = \Phi^*_i(z^*_i) + \grad \Phi^*_i(z^*_i)(z-z^*_i) +R_i(z,z^*_i),
        \]
        where polynomial $R_i(z,z^*_i)$ is the second order polynomial residual and
        note that $\Im \grad \Phi^*_i(z^*_i) = T_{m^*_i}M^*_i$.  Since
        $x^{\pr}_i$ is the orthogonal projection of
        $x-c_t\Phi_i^*(z_i^*)$ onto this tangent space,
        \[
        \begin{aligned}
        \norm{x-c_t \Phi^*_i(z)}^2
        &=
        \norm{x-c_t\Phi_i^*(z_i^*)-x^{\pr}_i}^2
        +\norm{x^{\pr}_i-c_t\grad\Phi_i^*(z_i^*)(z-z_i^*)}^2 \\
        &\quad
        -c_t^2\norm{R_i(z,z_i^*)}^2
        -2c_t\innerproduct{x_{i,t}-c_t\Phi_i^*(z)}{R_i(z,z_i^*)}.
        \end{aligned}
        \]
        Indeed, the first two terms come from the orthogonal decomposition
        into the tangent space and its orthogonal complement.  For the
        residual term, we use $R_i(z,z_i^*)\in\cH_i$ and
        $x-x_{i,t}\perp\cH_i$, which gives
        \[
            \innerproduct{x-c_t\Phi_i^*(z_i^*)
            -c_t\grad\Phi_i^*(z_i^*)(z-z_i^*)}{R_i(z,z_i^*)}
            =
            \innerproduct{x_{i,t}-c_t\Phi_i^*(z)}{R_i(z,z_i^*)}
            +c_t\norm{R_i(z,z_i^*)}^2 .
        \]
        Therefore, since the first term in the displayed decomposition does
        not depend on $z$, we can write $e_i^* = f_i/g_i$ with $f_i$ and
        $g_i$ defined in \eqref{eq:definition of g, f small time}.
        
        We now bound from below $g_i$. 
        
        First we bound the residual exponential term in~\eqref{eq:definition of g, f small time}, using~\Cref{lemma:BoundsOnScalarProductSmallTime}
    in~\Cref{apdx:Correlation Between Tangent Vectors and Gaussian Noise}. 
    It states that with probability at least $1-n^{-2}$, 
 $\forall i\in I_2$ and $ \forall z\in B_d\del{z^*_i, 4(\sigma_t/c_t)\sqrt{20}C(n)}$ 
\[
           \left|c^2_t\norm{R_i(z,z^*_i)}^2 +2 c_t\innerproduct{X_{i,t}- c_t\Phi^*_i(z)}{R_i(z,z^*_i)} \right|\lesssim \sigma_t^3 (d\log n)^2.
\]
        Then, since $T_k < n^{-\frac{2}{2\alpha+d}}$
        we get that for $n$ large enough and with probability greater than $1-n^{-2}$
\begin{align}
        \label{eq:small_time_residual_in_g}
        \frac{\abs{c^2_t\norm{R_i(z,z^*_i)}^2
        +2 c_t\innerproduct{X_{i,t}- c_t\Phi_i^*(z)}{R_i(z,z^*_i)}}}{2\sigma_t^2}
        &\le \frac{1}{2}n^{-\frac{1}{2\alpha+d}}(\log n)^2,\nonumber
        \\
        \abs{\exp\del{\frac{c^2_t\norm{R_i(z,z^*_i)}^2
        +2c_t\innerproduct{X_{i,t}-c_t\Phi_i^*(z)}{R_i(z,z^*_i)}}{2\sigma_t^2}}-1}
        &\le n^{-\frac{1}{2\alpha+d}}(\log n)^2.
\end{align}
        Next, we analyze the main term ${\norm{x^{\pr}_i- c_t\grad\Phi^*_i(z^*_i)(z-z^*_i)}^2}$. Substituting $X_t$ as $x$ in~\eqref{eq:x^pr_i} leads to
\begin{align}
    \label{eq:decomposition_of_main_term_t_small}
        X^{\pr}_i(t) - c_t\grad\Phi^*_i(z^*_i)(z-z^*_i) &= \pr_{z^*_i}\del{(X_t)-c_t\Phi^*_i(z^*_i)} - c_t\grad\Phi^*_i(z^*_i)(z-z^*_i) \nonumber \\
        &=
        \sigma_t\pr_{z^*_i}Z_D +  c_t\pr_{z^*_i}\del{X_0-\Phi^*_i(z_i^*)} - c_t\grad\Phi^*_i(z^*_i)(z-z^*_i).
\end{align}
        Recall that $m^*_i = \Phi^*_i(z^*_i)$ then 
\[
\norm{m^*_i-X_0} \le \norm{m^*_i-m_i} + \norm{m_i-X_0} \le O\del{\varepsilon^\beta_N} + 2(\sigma_t/c_t)\sqrt{20}C(n) \le 4(\sigma_t/c_t)\sqrt{20}C(n),
\]
where the last inequality follows from ~\Cref{prop:points_z_as_projections_on_tangent_space}.
We bound the projection $\pr_{z^*_i}Z_D$  using~\Cref{lemma:global_bound_on_projection_on_tangent_space} in~\Cref{apdx:Correlation Between Tangent Vectors and Gaussian Noise} with $\delta = n^{-2}$, so that with probability greater than $1 - n^{-2}$,
    \begin{align*}
       \norm{\pr_{z_i^*} Z_D}
       &\le  56dL_M\varepsilon_N + 8\sqrt{d}(1+7L_M\varepsilon_N)\sqrt{2\log 2d + 6\log n} \le 16 C(n)
    \end{align*}
for $n$ large enough  since $\varepsilon_N =o(1)$. Finally, since $\|\nabla \Phi_i^*(z_i^*)\|_{op}\leq \sup_{z\in B_d(0, 7 \varepsilon_N)} \|\nabla \Phi_i^*(z)\|_{op}\leq 2  $ by Lemma~\ref{lemma:approximation_of_Phi} the last term of  \eqref{eq:decomposition_of_main_term_t_small} is also bound by $ 8\sqrt{20}\sigma_t C(n) $ leading to
\begin{align}
\label{eq:step_7_bound_on_projected_term}
&\norm{X^{\pr}_i(t) - c_t\grad\Phi^*_i(z^*_i)(z-z^*_i)}^2
\nonumber \\
&\le 4\sigma^2_t\norm{\pr_{z^*_i}Z_D}^2 +  4c^2_t\norm{X_0-\Phi_i(z_i^*)}^2 + 4c^2_t\norm{\grad\Phi^*_i(z^*_i)(z-z^*_i)}^2 
\nonumber \\
&\le 4\cdot \sigma^2_t C^2(n)  [256+ 16 + 64] = 1344\sigma^2_t C^2(n).
\end{align}

Moreover, for $n$ large enough $p^*(z) > \frac{1}{2}\inf_{B_d(0,8\varepsilon_N)} p(z) \ge 2^{-d-1}p_{\min}$, so combining with \eqref{eq:small_time_residual_in_g} we get
\begin{align*}
g_i(t,x) 
        &\ge 
        \frac{1}{4}p_{\min}
        e^{-676 C^2(n)}\int_{B_d(z_i^*, 2(\sigma_t/c_t)\sqrt{20}C(n))} 
         dz 
=  \frac{1}{4}p_{\min}
        e^{-676 C^2(n)}C_d \del{2(\sigma_t/c_t)\sqrt{20}C(n)}^d,
\end{align*}
where $C_d$ is the volume of  the $d$-dimensional unit ball. 
Since $C(n) = \sqrt{d\del{\log n + 4C_{\log}} + 4\log n} $, then for $n$ large enough $C^2(n) < 4d\log n$. So, since $\sigma_t/c_t > n^{-\frac{\alpha+1}{2\alpha+d}} > n^{-1}$ for $n$ large enough with probability at least $1-n^{-2}$, for all $i \in I_2$,
\[
g_i(t,X_t) \ge n^{-2688 d}.
\]
Suppose $\hat{f}_i$ and $\hat{g}_i$ are approximations of $f_i, g_i$ such that with probability $1-n^{-2}$ for all $i\in I_2$ 
\begin{align}
\label{eq:f_hat_definition_t_small}
&\abs{g_i(t,X_t) - \hat{g}_i(t,X_t)} \le n^{-2690d}, \quad 
\norm{f_i(t,X_t) - \hat{f}_i(t,X_t)}  \le n^{-2690d}.
\end{align}
then  with probability at least $1-n^{-2}$, for all $i\in I_2$
\[
\norm{\hat{e}_i(t,X_t) - e^*_i(t,X_t)} \le \norm{e^*_i(t,X_t)} n^{-2} \le 4n^{-2}, \quad \text{with } \, \hat{e}_i(t,x) = \frac{\hat{f}_i(t,x)}{\hat{g}_i(t,x)}.
\]
This implies that 
$\hat{e}_{tr}(t,x) = \sum_{i=1}^N \rho_i(t,x)\hat{e}_i(t,x)/(\sum_{i=1}^N \rho_i(t,x))$  verifies 
$$\norm{\hat{e}_{tr}(t,X_t) - e^*_{tr}(t,X_t)} \le 4n^{-2}.$$
 So, defining $\hat{e}_{\clip}(t,x) = \hat{e}(t,x)\ind_{\norm{\hat{e}(t,x)}\le 1}$ and using that $\norm{e^*(t,x)} \le 1$ a.s.
\[
\expectation \norm{\hat{e}_{\clip}(t,X_t) - e^*_{tr}(t,X_t)}^2 \lesssim \del{n^{-2} + n^{-4}} \lesssim n^{-2}. 
\]
Integrating we obtain 
\[
\int_{T_k}^{T_{k+1}}\expectation \norm{\hat{e}_{\clip}(t,X_t) - e^*_{tr}(t,X_t)}^2 dt \lesssim n^{-2} \le T_kn^{-1},
\]
where the constant depends only on $d, \beta, C_{log}$.
So, defining $\hat{s}_{\clip}(t,x) := \frac{c_t\hat{e}_{\clip}(t,x)-x}{\sigma^2_t}$ and combining with~\eqref{eq:sml_s_star_t_small}
\[
\int^{t_{k+1}}_{t_k}\expectation \norm{\hat{s}_{\clip}(t,X_t) - s_{tr}(t,X_t)}^2 dt=O\del{\sigma^{-2}_{T_k}(\log n)^{2\gamma\beta}n^{-\frac{2\beta}{2\alpha+d}} +  (\log n)^{2\alpha +1}n^{-\frac{2\alpha}{2\alpha+d}}}.
\]

\subsubsection{Neural Network Approximation}
\label{apdx:neural_netwok_approximation_t_small}
    In this Section we show that we can construct neural networks for $\hat{f}_i, \hat{g}_i$ satisfying~\eqref{eq:f_hat_definition_t_small}. To do that we first approximate them by polynomials whose coefficients are themselves polynomials of $\widetilde x_{i,t}, c_t, c_t^{-1}, \sigma_t, \sigma_t^{-1}$.
    
    First,  as in the case $T_k \geq n^{-\frac{2}{2\alpha+d}}$, using the linear isometry $P_{\cH_i}:\R^{\dim \cH_i}\rightarrow \cH_i$ that identifies $\cH_i$ with $\R^{\dim \cH_i}$, we can represent 
     $   \Phi^*_i(z) = G_i + P_{\cH_i}\widetilde{\Phi}^*_i(z),$ where $\widetilde{\Phi}^*_i$ is a polynomial of degree $\beta-1$ with coefficients $\widetilde{P}^*_i := P^T_{\cH_i}P^*_i:\R^d\rightarrow \R^{\dim \cH_i}$, $\widetilde{a}^*_{i,S}\in \R^{\dim \cH_i}$ belonging to $\R^{\dim \cH_i}$. Define $\widetilde{x}_{i} = P^T_{\cH_i}\del{x-c_tG_i} \in \R^{\dim \cH_i}$,   
    and $\widetilde{x}^{\pr}_i = P_{\cH_i}x^{\pr}_i$ so that
    \[
    \widetilde{x}^{\pr}_i= \widetilde{\pr}_{z^*_i}\del{\widetilde{x}_{i,t}- c_t \widetilde{\Phi}^*_i(z^*_i)}
    = \grad \widetilde{\Phi}^*_i(z^*_i)\del{\grad^T\widetilde{\Phi}^*_i(z^*_i)\grad\widetilde{\Phi}^*_i(z^*_i)}^{-1}\grad^T \widetilde{\Phi}^*_i(z_i^*)(\widetilde{x}_{i,t}- c_t \widetilde{\Phi}^*_i(z^*_i)).
    \]
    And since $P_{\cH_i}$ is an isometry
    \begin{align*}
    \norm{x^{\pr}_i - c_t\grad\Phi^*_i(z^*_i)(z-z^*_i)}^2 &= {\norm{\widetilde{x}^{\pr}_i - c_t\grad\widetilde{\Phi}^*_i(z^*_i)(z-z^*_i)}^2}, 
    \\
    \norm{R(z,z^*_i)}^2 &= \norm{\widetilde{R}(z,z^*_i)}^2
    \\
      \innerproduct{X_{i,t}- c_t\grad\Phi^*_i(z^*_i)(z-z^*_i)}{R(z,z^*_i)} &= 
     \innerproduct{\widetilde{X}_{i,t}(t)- c_t\grad\widetilde{\Phi}^*_i(z^*_i)(z-z^*_i)}{\widetilde{R}(z,z^*_i)}.
    \end{align*}
    Therefore $f_i$ and $g_i$ can be re-expressed in terms of $\widetilde{x}_{i,t}, \widetilde{\Phi}_i^*, \widetilde{R}$ etc. 
Then using a Taylor approximation of $e^u$ near $0$ together with 
    \eqref{eq:step_7_bound_on_projected_term} and  \eqref{eq:small_time_residual_in_g}, we have that for $n$ large enough with probability at least $1-n^{-2}$
  \[
    & \abs{e^{-{\norm{\widetilde{x}^{\pr}_i - c_t\grad\widetilde{\Phi}^*_i(z^*_i)(z-z^*_i)}^2}/2\sigma^2_t} - \sum_{k=1}^{K_1} \frac{1}{k!}\del{-{\norm{\widetilde{x}^{\pr}_i- c_t\grad\widetilde{\Phi}^*_i(z^*_i)(z-z^*_i)}^2}/2\sigma^2_t}^k} 
    \\
    & \quad \le \del{\frac{e\cdot 2368d\log n}{K_1}}^{K_1} \leq {e^{-e^2 \cdot 2368d\log n} < n^{-2370d}},
    \]
    by choosing $K_1 = e^2\cdot 2368d\log n = O(d \log n)$. And similarly 
    \[
    &\bigg|e^{
    \del{c^2_t\norm{\widetilde{R}(z,z^*_i)}^2 - c_t\innerproduct{\widetilde{X}_{i,t}(t)- c_t\grad\Phi^*_i(z^*_i)(z-z^*_i)}{\widetilde{R}(z,z^*_i)}}/2\sigma^2_t
    } 
    \\
    &- \sum_{k=1}^{K_2} \frac{1}{k!}
    \del{
    \del{c^2_t\norm{\widetilde{R}(z,z^*_i)}^2 - c_t\innerproduct{\widetilde{X}_{i,t}(t)- c_t\grad\widetilde{\Phi}^*_i(z^*_i)(z-z^*_i)}{\widetilde{R}(z,z^*_i)}}/2\sigma^2_t
    }^k
    \bigg|
    \\
    &\le 2\del{\frac{en^{-\frac{1}{2(2\alpha+d)}}}{K_2}}^{K_2}
     \quad \leq n^{-2370d}
    \]
    by choosing $K_2=2(2\alpha+d)2370d = O(d\alpha+d)) $. We write $Q_1(z-z_i^*)$ the polynomial approximation of the first term and $Q_2(z-z_i^*)$ the polynomial approximation of the second (as functions of $z$). $Q_1$ has degree $2K_1$ and $Q_2 $ $2K_2$. Moreover $p_i^*$ is a polynomial function of degree $\alpha$; therefore  
    \begin{align*}
    \bar g_i(t,x) := \int_{\|z-z_i^*\|\leq 4\sigma_tC(n)/c_t} Q_1(z-z_i^*)Q_2(z-z_i^*)p_i^*(z)dz = \int_{\|z\|\leq 4\sigma_tC(n)/c_t} Q_1(z)Q_2(z)Q_3(z)dz
    \end{align*}
    where $Q_3$  is a polynomial function of $z$ of degree  $\alpha$ and we have $p_i^* \leq 2p_{max}$
    $$|\bar g_i(t,x) - g_i(t,x)| \leq 4n^{-2370d} \Vol B_d(0, 4\sigma_tC(n)/c_t) \lesssim T_k^{d}(\log n)^dn^{-2370d}= o(n^{-2370d})$$
     Similarly, since $\widetilde{\Phi}_i^*$ is also polynomial with degree $\beta$ and $\|\widetilde{\Phi}_i^*(z)\|\leq 2$ over the ball of integration, $|\bar f_i(t,x) - f_i(t,x)|  o(n^{-2370d})$  with 
     \begin{align*}
    \bar f_i(t,x) :=  \int_{\|z\|\leq 4\sigma_tC(n)/c_t} Q_4(z)Q_1(z)Q_2(z)Q_3(z)dz, \quad Q_4(z) = \Phi_i^*(z+z_i^*).
    \end{align*}
     We write 
     $ Q_j(z) = \sum_{|S|\leq 2K_j} a_{S,Q_j} z^{S} $ with $ \beta\geq 2K_3 \geq \beta-1$ and $\alpha-1 \leq 2K_4 \leq \alpha$ and note that $Q_j: \mathbb R^d \rightarrow \mathbb R$ for $j= 1, 2, 3$ and $Q_4: \mathbb R^d \rightarrow \mathbb R^{\dim \cH_i}$. Then by construction $ a_{S,Q_1}$  are polynomial functions  of degree less than $O(K_1)$ of $\widetilde{x}_{i,t}^{pr}, \sigma_t^{-2}, c_t, z_i^*$ since $\widetilde{\Phi}_i^* =  \widetilde{P}_i^* + \sum_{S'} \widetilde{a}_{i,S}^* z^{S'}$ . Also $a_{S,Q_2}$ are polynomial functions of  degree less than $O(K_2)$  of the first $\beta$ derivatives of $\widetilde{\Phi}_i^*$ at $z_i^*$, of $c_t, \sigma_t^{-1},\widetilde{x}_{i,t}$ so that they are polynomial functions  of degree less than $O(K_2)$  of $c_t, \sigma_t^{-1},\widetilde{x}_{i,t}, z_i^*$. Similarly for $a_{S, Q_3}$ which as a degree of order $O(\alpha d)$. Finally $a_{S, Q_4}\in \mathbb R^{\dim \cH_i}$, whose components are polynomial functions of degree $\beta$ of $\widetilde{x}_{i,t}, z_i^* $.
     Since for all vector of indices $S$
     $$ \int_{\norm{z} \le 4(\sigma_t/c_t)\sqrt{20}C(n)} z^S dz 
    = (\sigma_t/c_t)^S\del{4\sqrt{20}C(n))}^S \int_{\norm{z}\le 1} z^S = c_{S,n} (\sigma_t/c_t)^S$$ we can write 
    \begin{align*}
        \bar g_i(t, x) &= \sum_{S_1, \cdots, S_3}\prod_{l=1}^3a_{S_l,Q_l} (\sigma_t/c_t)^{\sum_lS_l}C_{\sum_lS_l,n}
         \\
         \bar f_i(t, x) &= \sum_{S_1, \cdots, S_4}\prod_{l=1}^4a_{S_l,Q_l} (\sigma_t/c_t)^{\sum_lS_l}C_{\sum_lS_l,n}.
    \end{align*}

We now show that $\widetilde{x}_{i,t}^{\pr}$ and $z_i^*$ are polynomial functions of $\widetilde{x}_{i,t}$ and $c_t^{-1}$. Recall that   $x_{i,t} = \pr_{\cH_i}(x-ct G_i) + c_t G_i$ and since $\Im P^*_i \subset \cH_i$ then 
    $
    z^*_i := \del{P^*_i}^T(x-c_tG_i)/c_t = \del{P^*_i}^T(x_{i,t}-c_tG_i)/c_t
    $
 and
    \[
    \del{\widetilde{P}^*_i}^T\widetilde{x}_{i,t}/c_t = \del{P^*_i}^T P_{\cH_i}P^T_{\cH_i}\del{x_{i,t}-c_tG_i} = z^*_i.
    \]
    Therefore $z^*_i$ is a polynomial of order $2$ of $\widetilde{x}_{i,t},c_t^{-1}$. 
    Now recall that 
    \[
    \widetilde{x}^{\pr}_i = \grad \widetilde{\Phi}^*_i(z^*_i)\del{\grad^T\widetilde{\Phi}^*_i(z^*_i)\grad\widetilde{\Phi}^*_i(z^*_i)}^{-1}\grad^T \widetilde{\Phi}^*_i(z^*_i)(\widetilde{x}_{i,t}-c_t\widetilde{\Phi}^*_i(z^*_i)) 
    \] 
    and that $
    \widetilde{\Phi}^*_i(z) = \widetilde{P}^*_iz + \sum \widetilde{a}^*_{i,S}z^{S}$; with $\widetilde{a}^*_{i,S}\in \R^{\dim \cH_i}$ and $|S|_1\leq \beta -1$. Therefore  $
    \grad\widetilde{\Phi}^*_i(z_i^*)$ is a polynomial of degree smaller than $\beta-2$ of $z_i^*$, hence it is a polynomial of degree $2(\beta-2)$ of $\widetilde{x}_{i,t},c_t^{-1}$. 
    As a consequence we obtain that 
     \begin{itemize}
        \item the entries of the $d\times d$ matrix $\grad^T \Phi^*_i(z^*_i)\grad \Phi^*_i(z^*_i)$ are polynomials in $c^{-1}_t, \widetilde{x}_{i,t}$ of order at most $4(\beta - 2)$. In other words $\grad^T \Phi^*_i(z^*_i)\grad \Phi^*_i(z^*_i) = Q_{d,d}(c^{-1}_t, \widetilde{x}_{i,t})$, for some polynomial $Q_1$ with coefficients in $\R^{d\times d}$;
        \item
        the entries of  the $d\times 1$ vector $\grad^T\widetilde{\Phi}^*_i(z^*_i)(\widetilde{x}_{i,t}-c_t\widetilde{\Phi}^*_i(z^*_i))$ are polynomials in $c^{-1}_t, x_{i,t}$ of order at most $2\beta$. In other words $\grad^T \Phi^*_i(z^*_i)(\widetilde{x}_{i,t}-c_t\widetilde{\Phi}^*_i(z^*_i))= Q_{d}(c^{-1}_t, \widetilde{x}_{i,t})$, for some polynomial $Q_{d}$ with coefficients in $\R^{d}$; 
        \item 
        Cramer's rule states that if $A\in \R^{d\times d}$ is invertible then 
        \[
        A^{-1} = \frac{1}{\det A}((-1)^{i+j}\det M_{ji})_{i,j\le d},
        \]
        where the matrix $M_{ij}$ is the $ij$-minor -- matrix. Also the 
        determinant of  a $d\times d$ matrix is a polynomial of order $d$ in its entries
        Therefore, the inverse matrix can be represented as
        \[
\del{\grad^T\Phi^*_i(z^*_i)\grad\Phi^*_i(z^*_i)}^{-1} = \frac{Q_{\operatorname{adj}}(c^{-1}_t, \widetilde{x}_{i,t})}{Q_{\det}(c^{-1}_t, \widetilde{x}_{i,t})},
        \]
        where $Q_{\operatorname{adj}}$ is a polynomial in $c^{-1}_t, \widetilde{x}_{i,t}$ of order $2(d-1)(\beta-2)$ with coefficients in $\R^{d\times d}$, and $Q_{\det}$ is a polynomial in $c^{-1}_t, \widetilde{x}_{i,t}$ of order $2d(\beta-2)$ with scalar coefficients.
    \end{itemize}
    Combining the above we obtain that $\widetilde{x}_{i,t}^{\pr}$ is a polynomial of degree  4 of $Q_d, Q_{d,d}, Q_{\det}^{-1}, Q_{\operatorname{adj}}$.
        
         Also, applying~\Cref{lemma:global_bound_on_projection_on_tangent_space} we note that with probability $1-n^{-2}$ for all $i\in I_2$
    \[
    \norm{\widetilde{X}_{i,t}(t)} = \norm{X_{i,t}} = \norm{\pr_{\cH_i}\del{c_tX_0+\sigma_tZ_D-c_tG_i}} \le c_t\norm{X_0-G_i}+ \norm{\pr_{\cH_i}Z_D} \lesssim O\del{\log n},
    \]
    and using that $1/2 \le \norm{\grad \Phi^*_i(z)\grad^T\Phi^*_i(z)}\le 2$ we conclude that all variables $\widetilde{x}_{i,t}, \sigma^{-1}_t, \sigma_t, c_t, c^{-1}_t, Q_d, Q_{d\times d},Q_{\operatorname{adj}}$ are bounded by $n$.
    
    We can now construct the neural network approximations of $\bar f_i$ and $\bar g_i$.
    Using Lemmas~\ref{lemma:parallelization_of_neural_networks}-- \ref{lemma:oko_nn_appproximating_reciprocal_function}, neural networks can be chosen $\phi_{\bar g}, \phi_{\bar f}, \phi_{Q_d}, \phi_{Q_{d\times d}}, \phi_{Q_{\operatorname{adj}}}, \phi_{Q^{-1}_{\det}}\in \Psi(L_1,W_1,S_1,B_1)$, where $L_1, W_1,S_1 = \polylog n$ and $B_1 = e^{\polylog n}$.
    
    Finally, applying~\Cref{lemma:concatentation_of_neural_networks} and then~\Cref{lemma:oko_nn_appproximating_reciprocal_function} we concatenate these neural networks and get two neural networks $\phi_{g_i}, \phi_{e_i} \in \Psi(L,W,S,B)$, where $L, W,S = \polylog n$ and $B = e^{\polylog n}$ that approximate $g_i$ and $e_i = f_i/g_i$ respectively  with an error $n^{-2370d}$.

    Constructing these networks for all $i\le N$ we show that for $T_k \le n^{-\frac{2}{2\alpha+d}}$ there is a network $\phi\in\cS_k$ s.t.
    \[
    \int_{T_k}^{T_{k+1}}\expectation \norm{\phi(t,X_t) - s(t,X_t)}^2 dt\le O\del{\sigma^{-2}_{T_k}(\log n)^{4\gamma\beta}n^{-\frac{2\beta}{2\alpha+d}} +  (\log n)^{2\alpha+1}n^{-\frac{2\alpha}{2\alpha+d}}}.
    \]
\subsection{Tail Bounds}
\label{sec:proof_tail_bound_D}
In this section, we first prove propositions describing the sets of indices for which $\rho_i(t,X_t) > 0$. Then we prove the tail bound statement of Theorem~\ref*{thm:main_result}. All the statements in the section hold for an arbitrary chosen initial point $X_0\in M$ and with high probability in $Z_D$.
    
\begin{proposition}
\label{prop:active_indices_large_tail}
Assume $T_k\ge n^{-2/(2\alpha+d)}$, and fix $t\ge T_k$ and $A > 0$. For
$t\in[T_k,T_{k+1}]$ define
\[
    I_1(t)
    :=
    \left\{i\le N:
    \norm{X_0-G_i}
    \le \frac 13 (\sigma_t/c_t)\sqrt{\log n\log\log n}
    \right\},
\]
and
\[
    I_2(t)
    :=
    \left\{i\le N:
    \norm{X_0-G_i}
    \le 3(\sigma_t/c_t)\sqrt{\log n\log\log n}
    \right\}.
\]
Then for all sufficiently large $n$ and $A \le \frac{1}{1200}\log \log n$ with probability at least
$1-n^{-A}$,
\[
    I_1(t)\subseteq\{i:\rho_i(t,X_t)=1\},
    \qquad
    \{i:\rho_i(t,X_t)>0\}\subseteq I_2(t).
\]
\end{proposition}

    \begin{proof}
Both statements are corollaries of Lemma~\ref*{lemma:good_set_size} obtained
by substituting the $\varepsilon_N$-dense set $\cG$.  The lemma states that
with probability at least $1-n^{-A}$, for all $i\le N$ simultaneously
\begin{align}
\label{eq:good_set_size_Gi_large_time}
    \frac23 c_t^{-2}\Delta_i(X_t)
    -128(\sigma_t/c_t)^2B_t
    \le \norm{X_0-G_i}^2
    \le
    9\varepsilon_N^2+2c_t^{-2}\Delta_i(X_t)
    +128(\sigma_t/c_t)^2B_t,
\end{align}
where
\[
 \Delta_i(X_t)
    :=
    \norm{X_t-c_tG_i}^2-\norm{X_t-c_tG_{\min}(X_t)}^2,\qquad  
    B_t
    :=
    d\log_+(c_t/\sigma_t)+4dC_{\log}+A\log n.
\]
Note that $\rho_i(t,X_t) = \rho(\Delta_i(X_t)/2C(T_k,n))$.  Since
$t\ge n^{-\frac{2}{2\alpha+d}}$,  for large enough $n$ we have $192B_t\le \frac{1}{6}\log n\log\log n$.

We start by proving the first inclusion
$I_1(t)\subseteq\{i:\rho_i(t,X_t)=1\}$.  By definition of $\rho_i$, it is
enough to prove that $\Delta_i(X_t)\le C(T_k,n)$.

Let $i\in I_1(t)$, the first inequality in~\eqref{eq:good_set_size_Gi_large_time} combined with definition of $I_1(t)$ implies
\[
\Delta_i(X_t) \le \frac 32 c^2_t\norm{X_0-G_i}^2 + 192\sigma^2_t B_t \le \frac{1}{6}\sigma_t^2 \log n\log\log n + 192\sigma^2_t B_t \le \frac{1}{3} \sigma^2_t\log n \log \log n.
\]

Finally, using that $\sigma^2_t \le 2\sigma^2_{T_k}$ we conclude that for $i\in I_1(t)$ 
\[
\Delta_i(X_t) \le \frac 23 \sigma^2_{T_k} \log n \log \log n = \frac{2}{3} C(T_k,n).
\]
This proves the first inclusion.

Next, we prove that $\{i:\rho_i(t,X_t)>0\}\subseteq I_2(t)$. Let $i$ satisfy $\rho_i(t,X_t) >0$, by definition of $\rho_i$ we have 
\[
\Delta_i \le 2C(T_k, n) = 2\sigma^2_{T_k} \log n\log \log n \le 2\sigma^2_{t} \log n\log \log n
\]

Substituting this into~\eqref{eq:good_set_size_Gi_large_time} we get
\[
\norm{X_0-G_i}^2
    \le
    9\varepsilon_N^2+4(\sigma_t/c_t)^2 \log n\log \log n
    +128(\sigma_t/c_t)^2B_t
\]
Since $128 B_t\le \frac{1}{9} \log n \log \log n$ and by~\eqref{eq:basic_ineq_t_large}
$\varepsilon_N=o(\sigma_t/c_t)$ for $n$ large enough,
\[
\norm{X_0-G_i}^2
    \le 5(\sigma_t/c_t)^2 \log n\log \log n
    \le 9(\sigma_t/c_t)^2\log n\log \log n.
\]
That proves the second inclusion.
\end{proof}
The next proposition is a small time counterpart of the proposition above.
 \begin{proposition}
  \label{prop:small_time_score_localization_geometry}
  Fix $t\in[\underline{T},n^{-\frac{2}{2\alpha+d}}]$ and $A>0$. Then, for all sufficiently large $n$ with $\log n>A$, with probability at least $1-n^{-A}$, the following holds.

  \begin{itemize}
      \item[(i)] Every active index satisfies
      \[
          \rho_i(t,X_t)>0
          \quad\Longrightarrow\quad
          i\in I_2,
          \qquad
          I_2:=\curly{i\le N:\norm{X_0-G_i}\le4\varepsilon_N}.
      \]
      In particular, for every such index $i$,
      \[
          X_0 \in \Phi_i(B_d(0,4\varepsilon_N)),
          \qquad
          \operatorname{dist}(X_0,G_i+\cH_i)\lesssim \varepsilon_N^\beta .
      \]

      \item[(ii)]
      Let 
      \[
      C(n,A) = \sqrt{d\del{\log n + 4C_{\log}} + 4A\log n}.
      \]
      Then, for all sufficiently large $n$, every active index ($\rho_i(t,X_t) > 0$) satisfies
      \begin{equation}
      \label{eq:balls_inclusion_t_small}
          B_M\del{X_0,2(\sigma_t/c_t)\sqrt{20}C(n,A)}
          \subset
          \Phi_i\del{B_d\del{z^*_i,4(\sigma_t/c_t)\sqrt{20}C(n,A)}}
          \subset
          \Phi_i\del{B_d\del{0,7\varepsilon_N}} .
      \end{equation}
  \end{itemize}
  \end{proposition}

\begin{proof}
\,

\begin{itemize}
    \item[(i)]
Since $\cG$ is $\varepsilon_N/2$-dense in $M$,
Lemma~\ref{lemma:good_set_size}, applied with failure probability
$n^{-A}/2$ and $\varepsilon=\varepsilon_N/2$, gives simultaneously for all
$i\le N$
\begin{align*}
    \norm{X_0-G_i}^2
    &\le
    \frac94\varepsilon_N^2
    +2c_t^{-2}\del{\norm{X_t-c_tG_i}^2-\norm{X_t-c_tG_{\min}(X_t)}^2}
    \\
    &\quad
    +128(\sigma_t/c_t)^2
    \del{d\log_+(c_t/\sigma_t)+4dC_{\log}+A\log n + \log 2}.
\end{align*}
If $\rho_i(t,X_t)>0$, then
\[
    \norm{X_t-c_tG_i}^2-\norm{X_t-c_tG_{\min}(X_t)}^2
    <2C(T_k,n)=6\varepsilon_N^2.
\]
Since $c_t = e^{-t}$, for large enough
$n$ for any $t <n^{-2/(2\alpha+d)}$ we have $c_t^{-2}=1+O(t) \le25/24$.  Since $A<\log n$ and $\log_+(c_t/\sigma_t)\lesssim\log n$, the
logarithmic factor in the last term is bounded by a fixed multiple of
$(\log n)^2$, at the same time by the choice of $\gamma$
\[
    \sigma_t/c_t\le\varepsilon_N(\log n)^{-3},
\]
so for $n$ large enough, the last term is bounded by $\frac14 \varepsilon_N^2$. Therefore
\[
    \norm{X_0-G_i}^2
    \le
    \del{\frac94+\frac{25}{2}+\frac{1}{4}}\varepsilon_N^2
    <16\varepsilon_N^2,
\]
which gives that $i\in I_2$.

By construction, $\Phi_i$ is the inverse of the projection on the tangent space at $G_i$, so if $z_i$ is the projection of
$X_0-G_i$ on $T_{G_i}M$ then $X_0=\Phi_i(z_i)$ and $\|z_i\|\le\|X_0-G_i\|\le4\varepsilon_N$. Finally, by construction, the approximation $\Phi^*_i$ satisfy 
\[
    \operatorname{dist}(X_0,G_i+\cH_i)
    \le
    \norm{\Phi_i(z_i)-\Phi_i^*(z_i)}
    \le C\varepsilon_N^\beta.
\]
\item[(ii)] 
Let $m_i = \Phi_i(z^*_i)$. Proposition~\ref{prop:points_z_as_projections_on_tangent_space},
with the union bound over $i\le N$ and failure probability $n^{-A}/2$, gives that for all $i\in I_2$
\[
    \norm{m_i-X_0}
    \le
    C\varepsilon_N^\beta
    +2(\sigma_t/c_t)\sqrt{d+\log n+2A\log n} \le 2(\sigma_t/c_t)\sqrt{20}C(n,A),
\]
where the last inequality holds for all sufficiently large $n$, as $(\sigma_t/c_t) \ge \varepsilon^{\beta}_N \log n$. That shows
\[
 B_M\del{X_0,2(\sigma_t/c_t)\sqrt{20}C(n,A)}
          &\subset
          B_M\del{m_i,4(\sigma_t/c_t)\sqrt{20}C(n,A)} 
          \\
          &\subset \Phi_i\del{B_d\del{z^*_i,4(\sigma_t/c_t)\sqrt{20}C(n,A)}}.
\]
Finally, since $z^*_i$ is a projection on $T_{G_i}M$ of $m_i$
\[
\norm{z^*_i} \le \norm{m_i-G_i} \le \norm{m_i-X_0} + \norm{X_0-G_i} \le 2(\sigma_t/c_t)\sqrt{20}C(n,A) + 4\varepsilon_N.
\]
So, since for $n$ large enough ${2(\sigma_t/c_t)\sqrt{20}C(n,A) \le \varepsilon_N}$ we conclude
\[
B_d\del{z^*_i,4(\sigma_t/c_t)\sqrt{20}C(n,A)} \subset B_d\del{0,4\varepsilon_N + 6(\sigma_t/c_t)\sqrt{20}C(n,A)} \subset B_d\del{0,7\varepsilon_N},
\]
By applying $\Phi_i$, we finish the proof.
\end{itemize}
\end{proof}
By applying these propositions, we prove 
{
\begin{lemma}
\label{lem:large_time_constructed_tube}
Assume $T_k\ge n^{-2/(2\alpha+d)}$.  For every fixed
$t\in[T_k,T_{k+1}]$ and $A$ satisfying $A \le \frac{1}{1200}\log\log n$, with probability at least $1-2n^{-A}$, every active
index $\rho_i(t,X_t)>0$ satisfies
\[
    \norm{c_te_i^*(t,X_t)-X_0}\le 4\sigma_t\sqrt{\log n \log \log n}.
\]
and
\[
    \norm{c_te_i^*(t,X_t)-X_{i,t}}\le 8\sigma_t\sqrt{\log n \log \log n}.
\]
for all $n\ge C_0(d,C_{\log})$.  
\end{lemma}
}
{
\begin{proof}
By construction
$M_i^*=\Phi_i^*(B_d(0,\varepsilon_N))\subset G_i+\cH_i$ and
$\Phi_i^*(0)=G_i$.  Since $\Phi_i^*$ is $2+O(\varepsilon_N^{\beta-1})$-Lipschitz, for $n$ large enough
every $y^*\in M_i^*$ satisfies $\|y^*-G_i\|\le 3\varepsilon_N$.  The point
$e_i^*(t,X_t)$ is a weighted average over $M_i^*$ and $\varepsilon_N = o(\sigma_t/c_t)$, hence
\[
    \norm{e_i^*(t,X_t)-G_i}\le 3\varepsilon_N \le \sigma_t/c_t.
\]
Combining Proposition~\ref{prop:active_indices_large_tail} and
Proposition~\ref{prop:bound_on_projection_on_cH_i} both applied
with $\delta=n^{-A}$ we get that for $n$ large enough with probability at least $1-2n^{-A}$
\[
    \norm{X_0-G_i}
    \le
    3(\sigma_t/c_t)
    \sqrt{\log n\log\log n}
\]
and
\[
    \max_{i\le N}\norm{\pr_{\cH_i}Z_D}\le 4\sqrt{\log n \log \log n}.
\]

Using $X_t=c_tX_0+\sigma_tZ_D$ and $e_i^*(t,X_t)\in G_i+\cH_i$,
\begin{align*}
    X_{i,t}-c_te_i^*(t,X_t)
    &=
    c_tP_{\cH_i}P_{\cH_i}^T(X_0-G_i)
    +\sigma_tP_{\cH_i}P_{\cH_i}^TZ_D
    -c_t(e_i^*(t,X_t)-G_i).
\end{align*}
Combining with the inequalities above, we conclude
\[
\norm{X_{i,t}-c_te_i^*(t,X_t)} \le 8\sigma_t\sqrt{\log n \log \log n}.
\]
While, by dropping $Z_D$ term we get
\[
    \norm{c_te_i^*(t,X_t)-X_0}\le 4\sigma_t\sqrt{\log n \log \log n}.
\]
\end{proof}
}
The following lemma is a small-time counterpart of the lemma above.
{
\begin{lemma}
\label{lem:small_time_constructed_tube}
Assume $\underline T\le T_k< n^{-2/(2\alpha+d)}$.
For every fixed $t\in[T_k,T_{k+1}]$ and $A\le \log n$, with probability at least
$1-O(n^{-A})$, every active index $\rho_i(t,X_t)>0$ satisfies
\[
    \norm{c_te_i^*(t,X_t)-X_{0}}
    \le
    241 \sigma_t \log n
\]
and
\[
    \norm{c_te_i^*(t,X_t)-X_{i,t}}
    \le
    243 \sigma_t \log n
\]
for all $n\ge C_0(d,C_{\log})$.  
\end{lemma}
}

\begin{proof}
{
By construction $e_i^*(t,X_t)$ is the integral over $M^*_i=\Phi^*_i\del{B_d\del{z^*_i,4(\sigma_t/c_t)\sqrt{20}C(n,A)}}$. So it is enough to bound $\norm{c_t\Phi^*_i(t,X_t)-X_{i,t}}$ for all
$z\in B_d\del{z^*_i,4(\sigma_t/c_t)\sqrt{20}C(n,A)}$.

Since $\Phi_i^*$ is $2+O(\varepsilon_N^{\beta-1})$-Lipschitz, for $n$ large enough 
\[
\norm{\Phi^*_i(z^*_i)-\Phi^*_i(z)}\le 12(\sigma_t/c_t)\sqrt{20}C(n,A) \le 120(\sigma_t/c_t)\log n.
\]
By Proposition~\ref{prop:small_time_score_localization_geometry} with probability at least $1-n^{-A}$ it holds $X_0\in M_i$, since $\norm{\Phi_i-\Phi_i^*}_\infty \le C\varepsilon_N^\beta$
\[
\norm{X_0 -\Phi^*_i(z^*_i)} \le C\varepsilon_N^\beta + 12(\sigma_t/c_t)\sqrt{20}C(n,A) \le C\varepsilon_N^\beta + 120(\sigma_t/c_t)\log n.
\]
Finally, applying Proposition~\ref{prop:bound_on_projection_on_cH_i}, for $n$ large enough, with probability at least $1-n^{-A}$
\[
\norm{Z_D} \le \sqrt{(C_{\dim} +2) \log n + 2 \log 2n^A } \le 2 \log n.
\]
Therefore, representing
\begin{align*}
    X_{i,t}-c_te_i^*(t,X_t)
    &=
    c_tP_{\cH_i}P_{\cH_i}^T(X_0-G_i)
    +\sigma_tP_{\cH_i}P_{\cH_i}^TZ_D
    -c_t(e_i^*(t,X_t)-G_i).
\end{align*}
and using that $c_t \le 1$ and $\varepsilon_N^{\beta}\lesssim \underline{T}\lesssim \sigma_t$ we conclude
\[
\norm{X_{i,t}-c_te_i^*(t,X_t)} \le C\varepsilon_N^\beta + 242 \sigma_t \log n \le 243 \sigma_t \log n.
\]
Similarly, by dropping the term with $Z_D$
\[
\norm{X_{0}-c_te_i^*(t,X_t)} \le C\varepsilon_N^\beta + 240 \sigma_t \log n \le 241 \sigma_t \log n.
\]
}
\end{proof}
By combining these lemmas, we get the tail bound in Theorem~\ref*{thm:main_result} as an immediate consequence.
\begin{corollary}
\label{cor:class_level_tail_bound}
For every fixed $t\in[T_k,T_{k+1}]$ and $A \le \frac{1}{1200}\log n\log\log n$, on event
$\cE_{\mathrm{geom},k}$, with probability at least $1-n^{-A}$ every
$\phi_s\in\cS_k$ satisfies
\[
{
    \norm{\sigma_t\phi_s(t,X_t)+Z_D}
    \le
    244\log n.
}
\]
{
Consequently the ERM $\hat s_k\in\cS_k$ satisfies
the tail bound in Theorem~\ref*{thm:main_result}.
}
\end{corollary}

\begin{proof}
{
For $\phi_s\in\cS_k$, define
}
{
\[
    \phi_e(t,x):=\frac{\sigma_t^2}{c_t}\phi_s(t,x)+\frac{x}{c_t},
\]
}
{
We need to bound
\[
\norm{\sigma_t\phi_s(t,X_t)+Z_D} = \frac{\norm{c_t\phi_e(t,X_t)-X_0}}{\sigma_t}. 
\]
By construction, $\phi_e(t,X_t)$ is a convex combination of functions
$e_i^\phi(t,X_t):=G_i + P_{\cH_i}\phi_{i}(t,X_t)$ with indices satisfying $\rho_i(t,X_t) > 0$, so it is enough to bound $\norm{c_te^{\phi}_i(t,X_t)-X_0}.$

Conditioned on $\cE_{\mathrm{geom},k}$ we have $\dim\cH_i\le C_{\dim}\log n$, so combined with the definition of class $\cS_k$ we have
}
\[
{
    \norm{c_te_i^\phi(t,X_t)-X_{i,t}}
    \le
    \sqrt{\dim\cH_i}\,\sigma_t\log n
    \lesssim
    \sigma_t\log n\sqrt{\log n}.
}
\]

{
Applying Propositions~\ref{lem:large_time_constructed_tube}~and~\ref{lem:small_time_constructed_tube} we obtain that
\[
\norm{c_t X_{i,t}-X_0} \le 243\sigma_t\log n.
\]
Therefore
\[
\frac{\norm{c_te^{\phi}_i(t,X_t)-X_0}}{\sigma_t} \le 244\sigma_t\log n
\]
}
\end{proof}

    \subsection{Generalization error \& Proof of Theorem~\ref*{thm:main_result} }
    \label{apdx:generalization_error}
        Recall that we approximate the score $s(t,x)$ by neural networks of the form 
       \[
{
        \phi_s(t,x) =
        \frac{c_t}{\sigma^2_t}
        \frac{\sum_i^N \tilde \rho_i(t,x)\phi_{w_i}(t,\widetilde{x}_{i,t})
        \del{G_i +P_{\cH_i}\phi_{e_i}(t,\widetilde{x}_{i,t})}}
        {\sum_i^N \tilde \rho_i(t,x)\phi_{w_i}(t,\widetilde{x}_{i,t})}
        - \frac{x}{\sigma^2_t},
        \quad
        \widetilde{x}_{i,t}=P_{\cH_i}^T(x-c_tG_i),
}
        \]
{
        where $\phi_{e_i}:\R_+\times\R^{\dim \cH_i}\rightarrow \R^{\dim \cH_i}$ and $\phi_{w_i}:\R_+\times\R^{\dim \cH_i}\rightarrow \R_+$ are neural networks of $\polylog$ size belonging to the class $\cS_k$ defined in~\eqref*{eq:definition of cS}.  In particular, on the relevant domain,
}
    \[
        \cS_k = \Big\{\phi_s: \phi_{e_i}, \phi_{w_i} \in \Psi\del{L,W,B,S},\quad \norm{\phi_{e_i}} \le C_2,
        \\
        \left\|\frac{c_t\phi_{e_i}(t,\widetilde{x}_{i,t}) - \widetilde{x}_{i,t}}{\sigma_t}\right\|_{\infty} \le \log n,\quad \norm{\phi_{w_i}}_{\infty} \ge n^{-2}\Big\}, 
    \]
    The network parameters satisfy
    \[
    L = O\del{\polylog n}, \norm{W}_\infty = O\del{\polylog n}, S = O\del{\polylog n}, B = e^{O\del{\polylog n}}. 
    \]
      The estimator $\hat{s}$ of the score $s(t,x)$ on interval $[T_k,T_{k+1}]$  is defined as
    \[
    {\hat{s}_k \in} \argmin_{\phi\in {\cS_k}} \cR_{\cY}(\phi, T_k, T_{k+1}), \quad
    \cR_{\cY}(\phi, T_k, T_{k+1}) = \frac{1}{n} \sum_{i=1}^n \ell_{y_i}(\phi, T_k, T_{k+1}).
    \]
    and the loss function $\ell_y$ for $y\in M$ is defined in~\eqref*{eq:intro_diffusion_ell_y} as
    \[
    \ell_y(\hat{s}, T_k, T_{k+1}) := \int_{T_k}^{T_{k+1}} \expectation_{Z_D}\|\hat{s}\del{t,c_ty+\sigma_t Z_D}+Z_D/\sigma_t\|^2 dt.
    \]

       Theorem C.4 in \cite{oko2023}, states (Note that the original proof has an error which we fix in~\Cref{sec:ThC4Oko})  
        \£
        \label{eq:generalization_bound_c4}
        &\int_{T_k}^{T_{k+1}} \expectation \norm{\hat{s}_k(t,X_t)-s(t,X_t)}^2 dt\lesssim \inf_{\phi\in \cS_k} \int_{T_k}^{T_{k+1}} \expectation \norm{\phi(t,X_t)-s(t,X_t)}^2dt\nonumber
        \\
        & \quad + \frac{\sup_{y\in M}\sup_{\phi} \ell_y(\phi,T_k,T_{k+1})}{n}\del{ \log \mathcal{N}(\cL, \| \cdot \|_{L_\infty(M)}, \delta) + 1} + \delta.
        \£
       We have shown in~\Cref{sec:main_theorem_t_small} that the first term is of order $O( T_k^{-1} n^{-2\beta/({2\alpha+d})} (\log n)^{4\gamma \beta}+ n^{-2\alpha/(2\alpha+d)}(\log n)^{2\alpha+1})$. We now study the second term. 
      We follow \cite{oko2023}, but taking into account the dependency on ambient dimension $D$ and the special form~\eqref*{eq:definition of cS} of our estimators.  
     
        We first bound $\sup_{y\in M}\ell_y\del{\phi, T_k, T_{k+1}}$ uniformly over $\phi\in \cS_k$.
    To do this let us first define 
    \[
{
    \phi_e(t,x) := \frac{\sigma^2_t}{c_t}\phi_s(t,x) + \frac{x}{c_t}
    =
    \frac{\sum_i^N \tilde\rho_i(t,x)\phi_{w_i}(t,\widetilde{x}_{i,t}) \del{G_i +P_{\cH_i}\phi_{e_i}(t,\widetilde{x}_{i,t})}}{\sum_i^N \tilde\rho_i(t,x)\phi_{w_i}(t,\widetilde{x}_{i,t})},
}
    \] 
{
    the estimator of the conditional expectation $e(t,x)$ defined in~\eqref*{eq:intro_diffusion_e(t,x)}.  For fixed $y\in M$, set $X_t(y):=c_ty+\sigma_tZ_D$.  Then
}
    \begin{align}
    \label{eq:ell_y_gen_error}
    \ell_y(\phi, T_k, T_{k+1})
    &=
{
    \int_{T_k}^{T_{k+1}}\frac{c^2_t}{\sigma^4_t}\expectation_{Z_D}\|\phi_e(t,X_t(y)) - y\|^2 dt
}.
    \end{align}

    Applying Corollary~\ref{cor:class_level_tail_bound} with $A=4$ we get that with probability at least $1-2n^{-4}$
    \£
    \label{eq:nn_error_phi_e}
    \norm{\phi_e(t,X_t(y)) - y}
    \le
    244(\sigma_t/c_t)\log n.
    \£
{
    Since $\norm{\phi_{e_i}}\le C_2$ and by the choice of $C_2\ge 2\diam M$ we have that almost surely $\norm{\phi_{e}-y} \le C_2 + \diam M \le 2C_2$.
    Substituting into~\eqref{eq:ell_y_gen_error} gives, for all $y\in M$ and all $\phi \in \cS_k$,
}
    \£
    \label{eq:uniform_bound_on_ell_y_phi}
    {
    \ell_y(\phi, T_k, T_{k+1})
    \le    
    \int_{T_k}^{T_{k+1}}\del{244\log n \cdot \sigma_t^{-2} +C_2n^{-4}\cdot\sigma_t^{-4}}dt 
    \le C_{\ell}\log^3 n,
    }
    \£
    where we used that $\sigma_t^{-2} \le \frac{1}{2t} + 1$ and $T_k \lesssim \log n$. This bounds the first factor in the variance term in~\eqref{eq:generalization_bound_c4}.

{
    To complete the proof we thus only now need to control the covering number of $\cL = \curly{y\mapsto \ell_y(\phi,T_k,T_{k+1}); \phi \in \cS_k}$ and show that for $\delta \ge 1/n$
        \£ \label{covering:L}
        \log \mathcal{N}(\cL, \| \cdot \|_{L_\infty(M)}, \delta) \lesssim N(\polylog n) (\log \delta^{-1}).
        \£ 
    To do this we follow the proof of \cite{oko2023}, Lemma C.2.
    First note that using Lemma 3 of \cite{suzuki2018adaptivity}, 
    the covering number of $\Psi(L, W, S, B)$ is bounded by
    \begin{equation} \label{lemma:suzuki_lemma_3}
    \log \mathcal{N}(\delta, \Psi(L, W, S, B), \|\cdot\|_\infty) \le
    2SL \log (\delta^{-1} L(B \vee 1)(W + 1)).
    \end{equation}
We now explain why this allows us to bound the number of $\cS_k$ by $\norm{\norm{\phi}_2}_{L_\infty([-C,C]^{D+1})}$- balls, where $\norm{\norm{\phi}_2}_{L_\infty([-C,C]^{D+1})} = \sup_{(x,t)\in [-C,C]^{D+1}}\norm{\phi(t,x)}_2$ required to cover $\cS_k$.
 Indeed for all $k$ and $\phi^1, \phi^2\in \cS_k$, 
by definition of $\cS_k$ we have for $j =1,2$,
$$\norm{\phi^j_{e_i}} \le C_2, \quad
\norm{\frac{c_t\phi^j_{e_i}(t,\widetilde{x}_{i,t}) - \widetilde{x}_{i,t}}{\sigma_t}}_{\infty} \le \log n, \quad \text{and} \quad
\phi^j_{w_i}(t,\widetilde{x}_{i,t}) \ge n^{-2}.$$
Moreover, $\sigma^2_t > n^{-1}$, so that 
        \[  
        &\norm{\phi^1 - \phi^2} 
        \\
        &\quad = \frac{c_t}{\sigma^2_t}\norm{
        \frac{\sum_i^N \tilde\rho_i(t,x)\phi^1_{w_i}(t,\widetilde{x}_{i,t}) \del{G_i +P_{\cH_i}\phi^1_{e_i}(t,\widetilde{x}_{i,t})}}{\sum_i^N \tilde\rho_i(t,x)\phi^1_{w_i}(t,\widetilde{x}_{i,t})} 
        -
        \frac{\sum_i^N \tilde\rho_i(t,x)\phi^2_{w_i}(t,\widetilde{x}_{i,t}) \del{G_i +P_{\cH_i}\phi^2_{e_i}(t,\widetilde{x}_{i,t})}}{\sum_i^N \tilde\rho_i(t,x)\phi^2_{w_i}(t,\widetilde{x}_{i,t})}
}
        \\
        & \quad \le
        \sum_{i\in I_2} \sigma^{-2}_t \cdot 2C_2n^{2}\norm{\phi^1_{w_i} - \phi^2_{w_i}} + \norm{\phi^1_{e_i}(t,x) - \phi^2_{e_i}(t,x)} \\
        & \quad \le n^{3}C_2\del{\max_{i,t} \norm{\phi^1_{e_i} - \phi^2_{e_i}} + \max_{i,t} \norm{\phi^1_{w_i} - \phi^2_{w_i}}},
        \]
        which implies that for all $\delta >0$ and all $C>0$
       \begin{align*} 
                \log \cN(\cS_k, \norm{\norm{\phi}_2}_{L_\infty([-C,C]^{D+1})}, \delta) &\leq \log \mathcal{N}(\delta n^{-3}/2C_2, \Psi(L, W, S, B), \|\cdot\|_\infty)  \nonumber \\
                &\lesssim 2n^{d/(2\alpha +d)}(\polylog n) \log( C\delta^{-1}).
                \end{align*} 
    To prove \eqref{covering:L}, we then bound for    $\norm{\norm{\phi^1-\phi^2}_2}_{L_\infty([-C_5\sqrt{\log n},C_5\sqrt{\log n}]^{D+1})} \le \delta'$ with $\phi^1,\phi^2 \in \cS_k$,
      \begin{align*}
        \abs{\ell_{y}(\phi^1) - \ell_{y}(\phi^2)} &= \abs{\int_{T_k}^{T_{k+1}} \del{\expectation\norm{\phi^1\del{t, c_ty+\sigma_t Z_D}+Z_D/\sigma_t}^2 - \expectation\norm{\phi^2\del{t, c_ty+\sigma_t Z_D}+Z_D/\sigma_t}^2}dt}\\
        & = \int_{T_k}^{T_{k+1}}\frac{c_t^2 }{\sigma_t^4}\expectation_{X_y(t)\sim \mathcal N(y, \sigma_t^2 Id_D)}[\|\phi_e^1(t, X_y(t)) - y\|^2 - \|\phi_e^2(t, X_y(t)) - y\|^2]dt
        \end{align*}
         where the last  equality comes from \eqref{eq:ell_y_gen_error} with  $\phi_e^j (t,x)=\sigma_t^2\phi^j(t,x)/c_t + x/c_t $. We bound, using \eqref{eq:nn_error_phi_e}
        \begin{align*}    
        &\|\phi_e^1(t, X_y(t)) - y\|^2 - \|\phi_e^2(t, X_y(t)) - y\|^2 
        \\
        &\quad \quad =\del{\|\phi_e^1(t, X_y(t)) - y\| - \|\phi_e^2(t, X_y(t)) - y\|}\del{\|\phi_e^1(t, X_y(t)) - y\| + \|\phi_e^2(t, X_y(t)) - y\|}
        \\
        &\quad \quad  \lesssim \|\phi_e^1(t, X_y(t)) - \phi_e^2(t, X_y(t))\| (\sigma_t/c_t) (\log n)^{3/2}
        \lesssim \frac{\sigma_t^3}{c_t^2} \|\phi^1(t, X_y(t)) - \phi^2(t, X_y(t))\|(\log n)^{3/2}.
        \end{align*}
        When 
        $X_y(t)\in [-C_5\sqrt{\log n},C_5\sqrt{\log n}]^{D+1} $ then  $$\|\phi^1(t, X_y(t)) - \phi^2(t, X_y(t))\| \leq \norm{\norm{\phi^1-\phi^2}_2}_{L_\infty([-C_5\sqrt{\log n},C_5\sqrt{\log n}]^{D+1})} \le \delta'.$$
        Also
         \begin{align*} 
         \expectation_{X_y(t)\sim \mathcal N(y, \sigma_t^2 \Id_D)}&\left[\ind_{X_y(t) \notin [-C_5\sqrt{\log n},C_5\sqrt{\log n}]^{D}}\|\phi_e^j(t, X_y(t)) - y\|^2 \right] 
         \\
         &\leq \sigma_t^2(\log n)^3\rP[\cN(y, \sigma_t^2 \Id_D) \notin [-C_5\sqrt{\log n},C_5\sqrt{\log n}]^{D}]
        \end{align*}
    Since for all $y \in M$, $\|y\|_\infty\leq C_{\log }$, for all $C>0$
    $$\rP[\mathcal N(y, \sigma_t^2 Id_D) \notin [-C_5\sqrt{\log n},C_5\sqrt{\log n}]^{D}]\leq  \rP\del{\norm{Z_D}_\infty \ge C_5\sqrt{\log n}/2 }\le n^{-C}$$
   since $\log D \lesssim \log n$ and by choosing $C_5 $ large enough.
   We finally have that 
    \begin{align*}
        \abs{\ell_{y}(\phi^1) - \ell_{y}(\phi^2)} &\lesssim 
        \polylog(n)\int_{T_k}^{T_{k+1}}\sigma_t^{-1}dt[\delta' +
        n^{-C}\polylog(n)] \lesssim \delta' \polylog(n)
        \end{align*}
        and choosing $\delta' \asymp \delta/\polylog(n)$ leads to $\abs{\ell_{y}(\phi^1) - \ell_{y}(\phi^2)} \leq \delta$, which in turns implies
    \eqref{covering:L}.    
}

\subsection{Correction of \cite{oko2023}, Theorem C.4}
\label{sec:ThC4Oko}
\begin{theorem} \label{Th:thC4Oko}
    For all $k \le K$ let $\hat{\phi}_k$ be a minimizer of the empirical risk, i.e.\
    $\hat{\phi}_k := \argmin_{\phi\in {\cS_k}} \sum_{y_i \in \cY} \ell_{y_i}(\phi, T_k,T_{k+1})$.
    Then for any $\delta_n > 0$ there is a constant $C_b > 0$ such that
        \£
        \label{ThC4Oko}
        \expectation_{\cY} 
            \int_{T_k}^{T_{k+1}} \expectation \norm{\hat{\phi}_k(t,X_t)-s(t,X_t)}^2 dt
         \\
         \leq  
         3 \inf_{\phi\in {\cS_k}}\int_{T_k}^{T_{k+1}} \expectation \norm{\phi(t,X_t)-s(t,X_t)}^2 dt
         \\
         +
         \frac{C_b\log^6 n}{n}\del{1\vee \log \cN\del{\cL, \norm{ \cdot }_{L_\infty(M)}, \delta_n}}
         +
         12\delta_n
        \£
    
\end{theorem}
\begin{remark}
    The constant $3$ in~\eqref{ThC4Oko} can be replaced with $1+\delta$ for any $\delta > 0$ at the expense of constants in front of the other two terms. 
\end{remark}
\begin{proof}

    We start by introducing some notation. We fix an index $k$ and suppress it in what follows.
    We recall that the de-noising score matching loss $\ell$ at $y\in M$ is defined as
    \[
    \ell_y(\phi, T_k,T_{k+1}) := \int_{T_k}^{T_{k+1}} \expectation \norm{\phi(t, c_t y + \sigma_t Z_D) + \sigma^{-1}_tZ_D}^2 dt.
    \]
    We write $\ell_\phi(y) := \ell_{y}(\phi, T_k, T_{k+1})$ to emphasize the dependence on $y\in M$.
    The score-matching risk $R(\phi,s)$~\eqref*{eq:the_score_matching_loss} is
    \[
    R(\phi, s) := \int_{T_k}^{T_{k+1}}\expectation \norm{\phi(t,X_t) - s(t,X_t)}^2dt.
    \]
    Note that by~\eqref*{eq:SML_to_DSML} 
    \[
    0\le R(\phi,s) :=   \expectation_{\mu} l_y(\phi) -  \expectation_{\mu} l_y(s).
    \]
    Finally, we introduce the empirical risk and its deviation from the true risk
    \[
    R_n(\phi) = \frac{ 1 }{n}\sum_{i=1}^n \ell_\phi(y_i) , \quad \Delta_n(\phi) = \mathbb E_{\mu} \ell_\phi(y) - R_n( \phi).
    \]
    Fix $\delta_n > 0$ and let $\curly{\phi_j}_{j\le \cN_n}$ be a minimal $\delta_n$-cover of ${\cL} = \{ \ell_\phi, \phi \in {\cS_k}\}$ with respect to $L_{\infty}(M)$,
    in other words $\cN_n := \cN\del{\cL, \norm{ \cdot }_{L_\infty(M)}, \delta_n}$.
    The case $\log \cN_n < 1$ is trivial, so we assume that $\log \cN_n\geq 1$.
    Slightly abusing notation for all $j\le \cN_n$ we write index $j$ instead of $\phi_j$, namely $f(j) = f_j := f(\phi_j)$, e.g. $\ell_j = \ell_{\phi_j}$.
    
    Since $\hat{\phi}$ is the minimizer of the empirical risk for any $\phi\in {\cS_k}$
    \begin{align}
    \label{eq:risk_decomposition}
    R(\hat{\phi}, s) 
    &= R_n(\hat \phi) + \Delta_n( \hat\phi) - \expectation_{\mu} \ell_{s}(y)
    \nonumber
    \\
    &\le R_n(\phi) - \expectation_\mu \ell_{s}(y) + \Delta_n( \hat\phi)
    \nonumber
    \\
    &= R(\phi, s) - [\Delta_n( \phi) - \Delta_n( s)] + [\Delta_n( \hat{\phi}) - \Delta_n(s)].
    \end{align}
    Let $\phi^* := \argmin_{\phi\in{\cS_k}} R(\phi, s)$,
     since $\curly{\phi_j}_{j\le \cN_n}$ form $\delta_n$ dense net there are $\hat{j}$ and $j^*$ 
     such that 
      \[\| \ell_{\hat \phi} - \ell_{\hat j}\|_\infty \leq \delta_n, \quad \norm{\ell_{ \phi^*} - \ell_{j^*}}_\infty \leq \delta_n. 
      \]
    Substituting $\phi^*$ into~\eqref{eq:risk_decomposition} we conclude
    \begin{align}
        R(\hat{\phi}, s) &\leq \inf_{\phi\in {\cS_k}} R(\phi, s)
        - [\Delta_n( j^*) - \Delta_n( s)]
     +[\Delta_n( \hat{j}) - \Delta_n( s)] + 5\delta_n.
     \label{eq:riskbound_deltas}
    \end{align}    
{
    So, it is enough to control $\Delta_n( \hat{j}) - \Delta_n(s)$ and
    $\Delta_n( j^*) - \Delta_n(s)$.  We condition on the training sample
    $\cY$ and introduce an independent ghost sample
    $\cX=\{x_1,\ldots,x_n\}\sim\mu^{\otimes n}$.  For each
    $j\le\cN_n$,
}
    \[
    {
    \Delta_n(j) - \Delta_n(s)
    = \expectation_{\cX}\frac{1}{n}\sum_{i=1}^n
    [(\ell_j(x_i) - \ell_j(y_i)) - (\ell_{s}(x_i) - \ell_{s}(y_i))].
    }
    \]
    Set
    $g_j(x,y) := (\ell_j(x) - \ell_j(y)) - (\ell_{s}(x) - \ell_{s}(y))$.
    Then $\expectation_{x,y\sim \mu} g_j(x,y)=0$, and the variance can be bounded by Jensen's inequality as
     \begin{equation*}
         \begin{split}
        &\Var_{x,y\sim \mu} g_j({x, y})
        = 2 \Var_{y\sim \mu}\square{\ell_j(y)-\ell_s(y)}
        \\
        &\quad \leq
        2 \mathbb E_{y\sim \mu} \Bigg[ \bigg( \int_{T_k}^{T_{k+1}} \expectation_{Z_D}
        \norm{\phi_j(t,c_ty+\sigma_tZ_D) +\sigma^{-1}_tZ_D}^2
        -\norm{s(t,c_ty+\sigma_tZ_D) + \sigma^{-1}_t Z_D}^2dt \bigg)^2 \Bigg]
        \\
         & \quad \leq 2 (T_{k+1}-T_k)
         \int_{T_k}^{T_{k+1}} \expectation_{y, Z_D}\square{
         \norm{\phi_j(t,c_ty+\sigma_tZ_D) +\sigma^{-1}_tZ_D}^2
         -\norm{s(t,c_ty+\sigma_tZ_D) + \sigma^{-1}_t Z_D}^2}^2dt.
         \end{split}
     \end{equation*}
    Writing
    \begin{align*}
    &\norm{\phi_j(t,c_ty+\sigma_tZ_D) +\sigma^{-1}_tZ_D}^2
    - \norm{s(t,c_ty+\sigma_tZ_D) + \sigma^{-1}_t Z_D}^2
     \\
     &=\norm{\phi_j(t,c_ty+\sigma_tZ_D) - s(t,c_ty+\sigma_tZ_D)}^2
    \\
     &\quad + 2\innerproduct{\phi_j(t,c_ty+\sigma_tZ_D) - s(t,c_ty+\sigma_tZ_D)}{s(t,c_ty+\sigma_tZ_D) + \sigma^{-1}_t Z_D},
    \end{align*}
    and recalling that $T_{k+1}=2T_k$ we obtain the following bound
    \begin{align*}
    &\Var_{x,y\sim \mu} g_j(x,y)
    \leq 8 T_k
         \int_{T_k}^{T_{k+1}} \expectation_{y, Z_D} \norm{\phi_j(t,c_ty+\sigma_tZ_D) - s(t,c_ty+\sigma_tZ_D)}^4dt
    \\
    &\quad +
    32T_k
         \int_{T_k}^{T_{k+1}} \expectation_{y, Z_D}
         \norm{\phi_j(t,c_ty+\sigma_tZ_D) - s(t,c_ty+\sigma_tZ_D)}^2
         \norm{s(t,c_ty+\sigma_tZ_D) + \sigma^{-1}_t Z_D}^2dt.
    \end{align*}
    \begin{itemize}
        \item 
    By Tweedie's formula $s(t,x) =  \frac{c_t\expectation [X_0|X_t=x]-X_t}{\sigma^2_t}$, and since $\sigma^2_t \ge \sigma^2_{T_{k}} \ge \min(1, T_{k})/2$, a.s.
    \[
    \norm{s(t,x)+\sigma^{-1}_t Z_D}_\infty = \frac{c_t}{\sigma^2_t}\norm{\expectation[X_0|X_t]-X_0}_\infty \le \frac{c_t}{\sigma^2_t} \le 2\max\del{1, T^{-1}_{k}},
    \]
    while by Theorem~\ref*{thm:concentration_around_X0}, since $T_k\ge n^{-2}$ with probability at least $1-n^{-4}$ for a constant $C_s$ that depends only on $C_{\log}$ and $d$
     \[
     \norm{s(t,c_t y + \sigma_t Z_D) + \sigma^{-1}_t Z_D} \le \sigma^{-1}_t\cdot 20\sqrt{80d\del{C_{\log} +  \log n}} \le C_{s}\max\del{1, T^{-1/2}_{k}}\sqrt{\log n}.
     \]
    \item 
    The construction~\eqref*{eq:definition of cS} of $\phi\in \cS_k$ mimics Tweedie's representation of the score. Any $\phi\in \cS_k$ can be represented as $\phi(t,x) = \frac{c_t}{\sigma^2_t}\phi_e(t,x) - \frac{x}{\sigma^2_t}$ where $\norm{\phi_e(t,x)}_\infty \le 1$. Moreover, 
     as we show in~\eqref{eq:nn_error_phi_e}, {for each fixed $y\in M$ this holds with probability at least $1-n^{-4}$ w.r.t.\ $Z_D$:}
     \begin{align*}
     &\norm{\phi(t,c_t y + \sigma_t Z_D) + \sigma^{-1}_t Z_D}
     = \frac{c_t}{\sigma^2_t}\norm{\phi_e(t,c_t y + \sigma_t Z_D) - y}
     \le 2\cdot244\sigma^{-1}_t\log n^{3/2}
     \\
     &\le 1000\max\del{1, T^{-1/2}_{k}}\log n^{3/2}.
     \end{align*}
     \item 
    Summarizing, with probability at least $1-2n^{-4}$
    \[
    \norm{s(t,c_t y + \sigma_t Z_D) - \phi(t,c_t y + \sigma_t Z_D)} \le (1000 + C_{s})\max\del{1, T^{-1/2}_{k}}\log^{3/2} n.
    \]
    This implies that for $\log n \gtrsim T_k \gtrsim n^{-1}$
    \begin{align*}
    &8T_k
         \int_{T_k}^{T_{k+1}} \expectation_{y, Z_D} \norm{\phi_j(t,c_ty+\sigma_tZ_D) - s(t,c_ty+\sigma_tZ_D)}^4dt
    \\
    &\lesssim
    n^{-4}T_k^2\max\del{1, T^{-4}_k}
    +
    T_k\max\del{1, T^{-1}_{k}} \log^{3} n \int_{T_k}^{T_{k+1}}\expectation_{y, Z_D}\norm{s(t,c_t y + \sigma_t Z_D) - \phi_j(t,c_t y + \sigma_t Z_D)}^2dt 
    \\
    &\lesssim
    n^{-2}
    +
    \log^{4} n \int_{T_k}^{T_{k+1}}\expectation_{y, Z_D}\norm{s(t,c_t y + \sigma_t Z_D) - \phi_j(t,c_t y + \sigma_t Z_D)}^2dt = n^{-2} +  \log^4 n R(\phi_j,s).
    \end{align*}
    and similarly
    \begin{align*}
    &16 (T_{k+1}-T_k)
         \int_{T_k}^{T_{k+1}} \expectation_{y, Z_D}
         \norm{\phi_j(t,c_ty+\sigma_tZ_D) - s(t,c_ty+\sigma_tZ_D)}^2
         \norm{s(t,c_ty+\sigma_tZ_D) + \sigma^{-1}_t Z_D}^2 dt
    \\
    &\lesssim 
    n^{-2}
    +
    \log^{4} n \int_{T_k}^{T_{k+1}}\expectation_{y, Z_D}\norm{s(t,c_t y + \sigma_t Z_D) - \phi_j(t,c_t y + \sigma_t Z_D)}^2dt = n^{-2} +  \log^4 n R(\phi_j,s).
    \end{align*}
    \end{itemize}
    Therefore, there is a constant $C_{g}$ that depends only on $C_{\log}$ and $d$ such that
     \[
     \Var_{x,y\sim \mu} g_j(x,y) \le C_{g}\del{n^{-2} +  \log^4 n \cdot R(\phi_j,s)}.
     \]
    Finally, we compute an a.s.\ bound on $g_j(x,y)$, combining~\eqref{eq:uniform_bound_on_ell_y_phi}~and~\eqref*{eq:concentration:of_ell_s}
    there is a constant $C_{\sup}$ that depends only on $C_{\log}$ and $d$
    \[
    \abs{g_j(x,y)}  \le 2\sup_{y\in M} \abs{{\ell_s(y)}}  +  2\sup_{\phi\in \cS_k}\sup_{y\in M}\abs{\ell_j(y)}
    \lesssim \log^3 n + \log^2 n \le C_{\sup} \log^3 n.
    \]
    \begin{remark}
        This argument also works in the original setting~\cite[Theorem C.4]{oko2023}, in this case the inequalities have form 
        \begin{align*}
        &\norm{s(t,c_t y + \sigma_t Z_D) + \sigma^{-1}_t Z_D}
        \lesssim 2\max\del{1,T^{-1/2}_k} \sqrt{D\log n},
        \\
        &\norm{\phi(t,c_t y + \sigma_t Z_D) + \sigma^{-1}_t Z_D}
        \lesssim 2\max\del{1,T^{-1/2}_k} \sqrt{D\log n}.
        \end{align*}
    \end{remark}
    {Under $\rP_{\cY,\cX}$ the pairs $(x_i,y_i)$ are independent with common law $\mu\otimes\mu$.  }Then, by Bernstein's inequality, there is a positive constant $C_b < 1/5$ depending only on $d, C_{\log}$
    \[
    {\rP_{\cY,\cX}\del{ \abs{\sum_i g_j(x_i,y_i)} > t}}
    \le  2\exp\del{ - C_b\frac{t^2}{n^{-1} +  n \log^4 n \cdot R(\phi_j,s) + t \log^3 n}}.
    \]
    So, for $r_j = \max\del{\sqrt{n^{-1}\log \cN_n},\sqrt{R(\phi_j,s)}}$
    \begin{align*}
    &
    {\rP_{\cY,\cX}\del{ \abs{\frac{\sum_i g_j(x_i,y_i)}{r_j}} > t}}
    \\
    &\le  2\exp\del{ - C_b\frac{t^2}{r_j^{-2}n^{-1} +  r_j^{-2}n \log^4 n \cdot R(\phi_j,s) + r_j^{-1}t\cdot \log^3 n}}
    \\
    &\le  2\exp\del{ - C_b\frac{t^2}{2n \log^4 n + (t/r_j)\cdot \log^3 n}} \le 2\exp\del{ - C_b\frac{t^2}{\log^4 n\cdot \del{2n + t/r_j}}}.
    \end{align*}
    For $t \ge 2\log n\sqrt{n\log \cN_n}$, since $r_j \ge \sqrt{n^{-1}\log \cN_n}$
    we have
    \[
    -t = -\frac{t}{2} -\frac{t}{2} \le -\log n\sqrt{n\log \cN_n} - \frac{t}{2}\frac{\sqrt{n^{-1}\log\cN_n}}{r_j} = -\frac{1}{2}\log n\sqrt{\frac{\log \cN_n}{n}}\del{2n + t/r_j}
    \]
    implying that
    \[
    -\frac{t^2}{ (2n+t/r_j)\log^4 n} \le -\frac{t\sqrt{\log\cN_n}}{2\sqrt{n}\log^3 n}.
    \]
    Therefore,  if we introduce
    \[
    T_n(j) = {\frac{\sum_i g_j(x_i,y_i)}{r_j}},
    \]
    substituting this bound we get that for $t\ge 2\log n\sqrt{n\log \cN_n}$
    \begin{align*}
    &
    {\rP_{\cY,\cX}\del{\abs{\sup_j T_n(j)} \ge t}}
    \\
    &\le 2\cN_n\sup_j\exp\del{ - C_b\frac{t^2}{2n \log^4 n + (t/r_j)\cdot \log^3 n}}
    \le
    2\cN_n \exp\del{ - \frac{C_b}{2}\frac{t \sqrt{\log \cN_n}}{\sqrt{n}\cdot \log^3 n}}.
    \end{align*}
    We are now ready to bound the expectation. Taking $A = 2C^{-1}_b\log^3 n \sqrt{n \log \cN_n} \ge 2\log n\sqrt{n\log \cN_n}$ for $n$ large enough
    \[
    \expectation \abs{\sup_j T_n(j)} = \int_0^{\infty} \mathbb P\del{\abs{\sup_j T_n(j)} \ge t}dt
    \le A + \int_{A}^\infty 2\cN_n \exp\del{ - \frac{C}{2}\frac{t \sqrt{\log \cN_n}}{\sqrt{n}\cdot \log^3 n}}dt 
    \\
    = A +2\cN_n\exp\del{ - \frac{C_b}{2}\frac{A \sqrt{\log \cN_n}}{\sqrt{n}\cdot \log^3 n}}\frac{2}{C_b}\frac{\sqrt{n}\cdot \log^3 n}{\sqrt{\log \cN_n}}
    \le 4C^{-1}_b\log^3 n \sqrt{n \log \cN_n}.
    \]
    Next we bound $\mathbb{E}|\sup_j T_n^2(j)|$ using a similar approach. We have
    \[
    \expectation \abs{\sup_j T^2_n(j)} = \int_0^{\infty} \mathbb P\del{\abs{\sup_j T_n(j)} \ge \sqrt{t}}dt \le A + \int_{A}^\infty  2\cN_n\exp\del{ - \frac{C_b}{2}\frac{\sqrt{t} \sqrt{\log \cN_n}}{\sqrt{n}\cdot \log^3 n}}dt 
    \\
    = A +8 \cN_n\exp\del{-\frac{C_b}{2} \frac{\sqrt{A} \sqrt{\log \mathcal{N}_n}}{\sqrt{n} \log^3 n}} \frac{n (\log^6 n)}{C_b^2\log \mathcal{N}_n} \del{ \frac{C_b}{2} \frac{\sqrt{A} \sqrt{\log \mathcal{N}_n}}{\sqrt{n} \log^3 n} + 1}
    \]
    and choosing $A = 4C^{-2}_b \del{n \log \cN_n}\log^6 n \ge \del{2\log n\sqrt{n\log \cN_n}}^2$
    we get 
    \[
    \expectation \abs{\sup_j T^2_n(j)} 
    \lesssim 4C^{-2}_b \del{n \log \cN_n}\log^6 n + 
    8 \frac{n (\log^6 n)}{C_b^2\log \mathcal{N}_n} \del{\log \cN_n + 1} \le 20 C^{-2}_b n\log \cN_n \log^6 n.
    \]
    
    We are finally, ready to bound $\Delta_n( j) - \Delta_n( s)$ after noting that
    $
    r_j \le \sqrt{(\log \cN_n)/n} + \sqrt{R(\phi_j,s)} 
    $.
    Let $J = J(\cY)$ be a random index potentially depending on the sample $\cY$. 
    Then notice that     
\begin{align*} 
     \mathbb E[\Delta_n( J) - \Delta_n( s) ] &= \frac{1}{n} \mathbb E[r_{J}T_n(J)]\leq  \sqrt{\frac{\log \cN_n}{n}}\frac{\mathbb E[\max_j|T_n( j)|]}{n}  +  \frac{\expectation\left(
        \sqrt{R(\phi_J,s)} \max_j|T_n( j)|
        \right) }{n}\nonumber 
     \\ 
     &
     \leq \frac{\sqrt{\log \cN_n}}{n^{3/2}} \mathbb E[ \max_j|T_n( j)|] +  \frac{1}{n}\square{\expectation R(\phi_J,s)}^{1/2}\square{\expectation \max_j|T_n( j)|^2}^{1/2}
     \\
     & 
     \leq
     5C^{-1}_b\frac{\sqrt{(\log \cN_n)/n}}{n}\sqrt{n \log \cN_n}\log^3 n + \square{\expectation R(\phi_J,s)}^{1/2} 5C^{-1}_b\sqrt{\frac{\log \cN_n}{n}} \log^3 n.
     \\
     &
     \le
     5C^{-1}_b\frac{\log \cN_n}{n}\log^3 n + 5C^{-1}_b\square{\expectation R(\phi_J,s)}^{1/2} \sqrt{\frac{\log \cN_n}{n}} \log^3 n
     \intertext{and using Young's inequality}
     &
     \le
     5C^{-1}_b\frac{\log \cN_n}{n}\log^3 n + \frac{1}{2}\expectation R(\phi_J,s) + \frac{25C^{-2}_b}{2} \frac{\log \cN_n}{n} \log^6 n 
     \\
     & 
     \le \frac{1}{2}\expectation R(\phi_J,s) + 50C^{-2}_b \frac{\log \cN_n}{n} \log^6 n.
     \end{align*}

Substituting $J = \hat{j}$ and recalling that by construction $\norm{\ell_{\phi_{\hat{j}}}-\ell_{\hat{\phi}}}_\infty \le \delta_n$,
\begin{align*}
\mathbb E[\Delta_n( \hat{j}) - \Delta_n( s) ]
&\le
\frac{1}{2}\expectation R(\phi_{\hat{j}},s) + 50C^{-2}_b \frac{\log \cN_n}{n} \log^6 n
\\
&\le
\frac{1}{2}\expectation R(\hat{\phi},s) + \frac{1}{2}\delta_n + 50C^{-2}_b \frac{\log \cN_n}{n} \log^6 n.
\end{align*}
While substituting $J = j^*$ and recalling that $\norm{\ell_{\phi_{j^*}}-\ell_{\phi^*}}_\infty \le \delta_n$ and $\phi^* = \argmin_{\phi\in \cS_k} R(\phi,s)$,
\begin{align*}
\mathbb E[\Delta_n( j^*) - \Delta_n( s) ]
&\le
\frac{1}{2}\expectation R(\phi_{j^*},s) + 50C^{-2}_b \frac{\log \cN_n}{n} \log^6 n
\\
&\le
\frac{1}{2}\min_{\phi\in \cS_k} R(\phi,s) + \frac{1}{2}\delta_n + 50C^{-2}_b \frac{\log \cN_n}{n} \log^6 n.
\end{align*}
Finally, the above combined with \eqref{eq:riskbound_deltas} gives us the desired bound
\begin{align*}
\expectation R(\hat{\phi}, s)
&\leq \inf_{\phi\in \cS_k} R(\phi, s)
        - \expectation[\Delta_n( j^*) - \Delta_n( s)]
     +\expectation[\Delta_n( \hat{j}) - \Delta_n( s)] + 5\delta_n
\\
&\le
\frac{1}{2}\expectation R(\hat{\phi}, s) + \frac{3}{2}\min_{\phi\in \cS_k} R(\phi,s) + 6\delta_n + 50C^{-2}_b \frac{\log \cN_n}{n} \log^6 n.
\end{align*}
\end{proof}

{\color{red}

}

\section{Proofs for Section 6} 
\label{apdx:W_new}

This appendix contains the details of the  proofs of Proposition 6.2, 
Theorem 6.1 and Corollary~\ref*{cor:main_W_result}
omitted in Section 6.
Recall that in this part $L$ denotes the number of splits in the discretization and not the number of layers in the  neural network. 

\begin{proof}[Proof of Proposition 6.2] 
\,\newline
\begin{itemize}
    \item[(i)]
    We need to show that 
    \[
\tau_{(i)}-\tau_{(i+1)}
\le \kappa \min(1,\tau_{(i+1)}),
\qquad
1\le j\le M.
\]
If $\tau_{(i)}$ and $\tau_{(i+1)}$ belong to the same block
$[T_{K-k},T_{K-k+1}]$, then
\[
\tau_{(i)}-\tau_{(i+1)}=\Delta_k
=\frac{\kappa}{4}\min(1,T_{k+1})
\le \kappa \min(1,\tau_{(i+1)}).
\]
Otherwise, for some $k\le K$ we have that $\tau_{(i)} = T_k-(L_k-1) \Delta_k - \Delta_k u_k$ is the last point of a subpartition of $[T_{k+1},T_{k}]$, while $\tau_{(i+1)} = T_{k+1}-\Delta_{k+1} u_{k+1}$ is the first element of subpartition of the next block $[T_{k+2}, T_{k+1}]$. So, by the definition of $L_k$
\[
\norm{\tau_{(i)} - \tau_{(i+1)}} \le \norm{\tau_{(i)} - T_{k+1}} + \norm{\tau_{(i+1)} - T_{k+1}} \le \Delta_k + \Delta_{k+1} 
\\=(\kappa/4)\del{\min(1,T_{k+1}) + \min(1,T_{k+2})} \le (\kappa/4)\min(1+1,T_{k+1} + T_{k+2}) 
\\
= (\kappa/4)\min(2,3T_{k+2}) \le  \kappa \min(1,T_{k+2}) \le \kappa \min (1,\tau_{i+1}).
\]
\item[(ii)]
We need to show
\[
\tau_{(1)} > \overline{T} - \kappa, \quad \tau_{(L)} < (1+\kappa) T, \quad \tau_{i_k}/\tau_{i_{k+1}} < 4;
\]
First inequality: since $T_0 := \overline{T}$
\[
 \overline{T} -\tau_{(1)} = u_0 \Delta_0 \le \Delta_0 = (\kappa/4)\min(1,\overline{T}) \le \kappa.
\]
Second inequality: Since $T_{K+1} := \underline{T}$
\[
\tau_{(L)}-\underline{T} \le \Delta_K \le (\kappa/4)\min(1,T_{K+1}) \le \kappa \underline{T}.
\]
Third inequality: by the definition of $i_k$ we have $\tau_{i_k} \in [T_{k+1}, T_{k+1}+\Delta_k]$, so using that $\kappa < 1$
\[
\tau_{i_k} \le T_{k+1}+\Delta_k = 2T_{k+2} + (\kappa/4) \min(1, 2T_{k+2}) \le 4T_{k+2} \le 4\tau_{i_{k+1}}.
\]
\item[(iii)] We need to show
\[
L\le (K+1)+\frac{4}{\kappa}
\left(\overline T+\left\lceil \log_2 \underline T^{-1}\right\rceil+1\right) = O\del{\kappa^{-1}(\log {\underline T^{-1}} + \overline 
T)}
\]
Consider an interval $[T_{k+1},T_k]$. Number of points on the interval $[T_k,T_{k+1}]$ is $L_k$ and satisfy
\[
T_k - \Delta_k u_k - (L_k-1) \Delta_k \ge T_{k+1}, 
\]
so
\[
L_k \le \frac{T_{k+1} - \Delta_k u_k + \Delta_k}{\Delta_k} \le 1 + \frac{4}{\kappa}\frac{T_{k+1}}{\min(1,T_{k+1})}.
\]
Let $k_1$ be the smallest index satisfying $T_{k_1} > 1$.
Then
\[
 L= \sum_{k=1}^K L_k = \sum_{k=1}^{k_1} L_k + \sum_{k=k_1+1}^K L_k \le K + \sum_{k=1}^{k_1} \frac{4}{\kappa} T_{k+1} + \sum_{k=k_1+1}^{K} \frac{4}{\kappa} = K + \frac{4}{\kappa}\del{K-k_1 + \sum_{k=1}^{k_1} T_{k+1}},
\]
since $T_{k}/T_{k+1} = 2$ and $T_{k_1+1} \le 1$ we have $K-k_1 \le \lceil\log_2 T^{-1}_{K+1}\rceil = \lceil\log_2 \underline{T}^{-1}\rceil$, at the same time $\sum_{k=1}^{k_1} T_{k+1}= \sum 2^{-(k+1)}\overline{T} \le \overline{T}$. Substituting we conclude
\[
L \le K + \frac{4}{\kappa}\del{\lceil\log_2 \underline{T}^{-1}\rceil + \overline{T}}.
\]
Noting that $K = \log_2(\overline{T}-\underline{T})$ we conclude
\[
L = O\del{\kappa^{-1}(\log {\underline T^{-1}} + \overline 
T)}
\]
\item[(iv)]
Let $g(t) = \sigma^2_i\expectation \norm{\hat{s}(t, X_t) -s(t, X_t)}^2$
Then, recalling that $\gamma_i = \tau_{(i)}-\tau_{(i+1)}$ we need to show that
\[
\expectation_{\tau} \sum_{i=1}^{M-1} \gamma_i g(\tau_{(i)}) \le \int_{\underline{T}}^{\overline{T}} g(t)\, dt.
\]
If $\tau_{(i+1)} = \tau_{k, \ell}$ for $\ell > 1$ then $\gamma_i = \Delta_k$, while if $\tau_{(i+1)} = \tau_{k, 1}$ then ${\gamma_i \le \Delta_k + \Delta_{k-1} \le 2\Delta_{k-1}}$, so
\[
\sum \gamma_i g(\tau_{(i)}) \le 2\sum_{k=1}^K \sum_{\ell=1}^{L_k} \Delta_k g(\tau_{k,\ell}) = 2\sum_{k=1}^K \sum_{\ell=1}^{L_k} \Delta_k g(T_k - u_k \Delta_k - (\ell-1) \Delta_k) 
\]

Taking expectation with respect to the shifts and using Tonelli's theorem,
\begin{align*}
\expectation_{u_k} &\sum_{\ell=1}^{L_k}\Delta_k g(\tau_{k, \ell})
=
\sum_{\ell\ge 1}\int_0^1
\Delta_k g(T_k - u_k \Delta_k - (\ell-1) \Delta_k)
\ind\{T_k - u_k \Delta_k - (\ell-1) \Delta_k \ge T_{k+1}\}\,du
\\
&=
\sum_{\ell\ge 1}\int_{(\ell-1)\Delta_k}^{\ell\Delta_k}
g(T_k+s)\ind\{T_k - s\ge T_{k+1}\}\,ds =
\int_0^{T_{k}-T_{k+1}} g(T_k-s)\,ds
=
\int_{T_{k+1}}^{T_{k}} g(t)\,dt.
\end{align*}
By summing over $k$ we get the desired result.
\end{itemize}
\end{proof}
\subsection{Added details in the proof  of Theorem 6.1} 
\,\newline
    \textbf{Initialization Error.} We note that the processes $\bar{Y}$ and $\bar{Y}^{(1)}$ differ only due to their initialization, so by applying~\cite[{Proposition 9}]{benton2024nearly}, Pinsker's and the data-processing inequalities
    \begin{multline*}
    W_1\del{\bar{Y}_{\tau_{(1)}}, \bar{Y}^{(1)}_{\tau_{(1)}} }\le 
    2C_1\TV\del{\bar{Y}_{\tau_{(1)}}, \bar{Y}^{(1)}_{\tau_{(1)}} }
    \le 2C_1\sqrt{\KL\del{\bar{Y}_{\tau_{(1)}} \,||\, \bar{Y}^{(1)}_{\tau_{(1)}}}}
    \\
    \le 2C_1\sqrt{\KL\del{\bar{Y}_{0} \,||\, \bar{Y}^{(1)}_{0}}} \le 2C_1\sqrt{\KL(p_{\tau_{(1)}} \,||\,\cN\del{0,\Id})}\le 2C_1\sqrt{D}e^{-\tau_{(1)}} \le ,
    \end{multline*}
    where we used that 
    $\bar{Y}_{t_{L+1}} = \bar{e}(\tau_{(L)},\bar{Y}_{t_1:t_L} )$, $\bar{Y}^{(1)}_{t_{L+1}} = \bar{e}(\tau_{(L)},\bar{Y}^{(1)}_{t_1:t_L} )$ and $\norm{\bar{e}(t, \cdot )}_\infty \le C_1$ for all $t$.

    Since $\tau_{(1)} \ge \overline{T}-1$ a.s. in $u$
    \£
    \label{eq:W_1_result_1}
    \expectation_u W_1\del{\bar{Y}_{\tau_{(1)}}, \bar{Y}^{(1)}_{\tau_{(1)}} } \lesssim \sqrt{D}e^{-\overline{T}}. 
    \£
    \textbf{Error in the last step.} Next we bound $W_1(\bar{Y}^{(K+1)}_{\tau_{(1)}}, X_0)$. We note that $\bar{Y}^{(K+1)}_{t_{L+1}} \stackrel{dist.}{=} X_{\tau_{(L)}}$ as it follows the backward dynamic~\eqref*{eq:exact_backward_dynamic} on the interval $[0,\tau_{(1)}-\tau(L)]$ and the difference is only in the last step. In other words, while the true process sends $X_{\tau_{(L)}}\rightarrow X_0$, the discretized one sends $X_{\tau_{(L)}}\rightarrow \bar{e}(\tau_{(L)}, X_{\tau_{(L)}}) = \bar Y_{\tau_{(1)}}^{(K+1)}$. 
    
    Using the fact that with probability at least $1-\frac{\underline{T}\delta}{L^2}$ we have that  ${\bar{e}(\tau_{(L)}, X_{\tau_{(L)}})  = \hat{e}(\tau_{(L)}, X_{\tau_{(L)}})}$ and $\norm{\hat{e}(\tau_{(L)}, X_{\tau_{(L)}}) - X_0} \le R_{\tau_{(L)}}\del{\frac{\underline{T}\delta}{L^2}}$, we conclude that with probability at least $1-2\frac{\underline{T}\delta}{M^2}$
    \[
    \norm{\bar{e}(\tau_{(L)}, X_{\tau_{(L)}}) - X_0} \le R_{\tau_{(L)}}\del{\frac{\underline{T}\delta}{2L^2}}.
    \]
    Finally, using that $\norm{\bar{e}} \le C_1$ almost surely we bound
    \[
    W_1(\bar{Y}^{(K+1)}_{\tau_{(1)}}, X_0) 
    &
    \le \expectation \norm{X_0-\bar{e}(\tau_{(L)}, X_{\tau_{(L)}})} 
    \le R_{\tau_{(L)}}\del{\frac{\underline{T}\delta}{L^2}} + 2C_1\underline{T}\delta/L^2
    \\
    &
    \le (\sigma_{\tau_{(L)}}/c_{\tau_{(L)}})\log \underline{T}^{-1}\sqrt{C_W  + \log \frac{2L^2}{\underline{T}\delta}} + 2\frac{C_1\underline{T}\delta}{L^2}.
    \]
    Since $\tau_{(L)} \le 2\underline{T}$ a.s. in $u$
    \£
    \label{eq:W1_result_2}
       \expectation_u W_1(\bar{Y}^{(K+1)}_{\tau_{(1)}}, X_0) \le \sqrt{\underline{T}}\log \underline{T}^{-1}\sqrt{C_W  + \log \frac{2L^2}{\underline{T}\delta}} + 2\frac{\underline{T}\delta}{L^2} 
    \£
   \textbf{Discretization Error}
    Finally, we bound $W_1(\bar{Y}^{(k)}_{\tau_{(1)}}, \bar{Y}^{(k+1)}_{\tau_{(1)}})$. 

By taking the same Brownian motion on the interval $[0,t_{i_k}]$,  by definition (6.12) 
the processes $\bar Y^{(k)}_{t}$ and $\bar Y^{(k+1)}_{t}$ coincide on this interval and in particular $ \bar Y^{(k)}_{t_{i_k}} = \bar Y^{(k+1)}_{t_{i_{k}}} \stackrel{dist.}{=}X_{\tau_{(i_k)}}$. 

Moreover, by construction (see definition (6.12)) 
$\bar{Y}^{(k)}_{\tau_{(1)}} = \bar{e}(\tau_{(L)},\bar{Y}^{(k)}_{t_1:t_L})$ and $\bar{Y}^{(k+1)}_{\tau_{(1)}} = \bar{e}(\tau_{(L)}, \bar{Y}^{(k+1)}_{t_1:t_L})$, so
\[
\norm{\bar{Y}^{(k)}_{\tau_{(1)}} - \bar{e}(\tau_{(i_k)}, \bar Y^{(k)}_{t_{i_k}})} 
= \norm{\bar{e}(\tau_{(L)}, \bar{Y}^{(k)}_{t_1:t_L}) - \bar{e}(\tau_{(i_k)}, \bar Y^{(k)}_{t_{i_k}})}
\leq  
2R_{\tau_{(i_k)}}\del{\frac{\underline{T}\delta}{2L^2}}
\\
\norm{\bar{Y}^{(k+1)}_{\tau_{(1)}} - \bar{e}(\tau_{(i_k)}, \bar Y^{(k+1)}_{t_{i_k}})} 
= \norm{\bar{e}(\tau_{(L)}, \bar{Y}^{(k+1)}_{t_1:t_L}) - \bar{e}(\tau_{(i_k)}, \bar Y^{(k+1)}_{t_{i_k}})}
\leq  
2R_{\tau_{(i_k)}}\del{\frac{\underline{T}\delta}{2L^2}}
\]
Therefore the conditional distributions of $\bar Y^{(k)}_{\tau_{(1)}}$ and $\bar Y^{(k+1)}_{\tau_{(1)}}$ given $Y_{[0,t_{i_k}]}$ have a support with diameter bounded by $2R_{\tau_{(i_k)}}\del{\frac{\underline{T}\delta}{2L^2}}$. 

Then writing $W_1(\bar Y^{(k)}_{\tau_{(1)}}, \bar{Y}^{(k+1)}_{\tau_{(1)}}|Y_{[0,t_{i_k}]})$ to denote the Wasserstein distance between the conditional distributions of $\bar Y^{(k)}_{\tau_{(1)}}$ and  $ \bar{Y}^{(k+1)}_{\tau_{(1)}}$ given the past trajectory $Y_{[0,t_{i_k}]}$, we have 
 \[
W_1(\bar Y^{(k)}_{\tau_{(1)}}, \bar{Y}^{(k+1)}_{\tau_{(1)}}) &\le 
\mathbb E\left(W_1(\bar Y^{(k)}_{\tau_{(1)}}, \bar{Y}^{(k+1)}_{\tau_{(1)}}|Y_{[0,t_{i_k}]})\right) \\
& \le 4 \left(R_{\tau_{(i_k)}}\del{\frac{\underline{T}\delta}{2L^2}}
\expectation\TV\del{\bar{Y}^{(k)}_{\tau_{(1)}}, \bar Y^{(k+1)}_{\tau_{(1)}}\big | Y_{[0,t_{i_k}]}}\right)
\\
&\le 
4\left(R_{\tau_{(i_k)}}\del{\frac{\underline{T}\delta}{2L^2}}
\TV\del{\bar{Y}^{(k)}_{[0, t_L]}, \bar Y^{(k+1)}_{[0, t_L]}}\right),
\]
where we have used that $W_1(\mu_1,\mu_2) \leq diam(\mathcal X)\TV\del{\mu_1, \mu_2}$ for  any distributions $\mu_j, j\in\curly{1,2}$ supported on $\mathcal X$, the fact that 
for random variables $X, Y, Z$ we have
$$\expectation \TV(X, Y|Z) = \expectation \TV ( (X,Z), (Y,Z)),$$
as well as the fact that $y_{t_1:t_L} \rightarrow \bar e (\tau(L), y_{t_1:t_L})$ is measurable.

    Next, we bound the $\TV$ distance using first Pinsker's inequality and then Girsanov's theorem on the interval $[0,\tau_{(L)}]$. By construction of the processes $\bar Y^{(k)}$ 
    we obtain,
    \[
    \TV^2\del{\bar{Y}^{(k)}_{[0,t_L]}, \bar Y^{(k+1)}_{[0,t_L]}}
    &\le 
    {\KL\del{\bar{Y}^{(k+1)}_{[0, t_L]}\,||\, \bar{Y}^{(k)}_{[0, t_L]}}}
    \\
    &\stackrel{\text{(Girsanov)}}{\le} \frac{1}{2} \mathbb E_u
    \int_{t_{i_k}}^{t_{i_{k+1}}}\expectation\norm{s(\tau_{(1)}-t, Y_{t})-\bar{s}(\tau_{(1)}-t, Y_{t}|\tau_{(i_k)},Y_{t_1:t_{i_k}})}^2\,dt. 
    \]
    
    Note that, unlike \cite{oko2023}, we do not have to introduce a truncated auxiliary process to ensure~(Girsanov), since Novikov's condition holds a.s.\ by construction of the process. 

    Finally, using \cite[Theorem 10]{potaptchik2024linearconvergencediffusionmodels}, for all $u_1, \cdots, u_K$, since expectation is with respect to $P_t$ 
    \begin{align*}
    &\int_{t_{i_k}}^{t_{i_{k+1}}}\expectation\norm{s(\tau_{(1)}-t, Y_{t})-\bar{s}(\tau_{(1)}-t, Y_{t}|\tau_{(i_k)},Y_{t_1:t_{i_k}})}^2\,dt \\
    &\qquad =        \sum_{i=i_k}^{i_{k+1}}\int_{t_{i}}^{t_{i+1}}\expectation\norm{s(\tau_{(1)}-t, Y_{t})-\bar{s}(\tau_{(1)}-t, Y_{t}|\tau_{(i_k)},Y_{t_1:t_{i_k}})}^2\,dt
        \\
    &\qquad \lesssim 
    \sum_{i=i_k}^{i_{k+1}}\expectation\norm{s(\tau_{(i)}, Y_{t_i})-\bar{s}(\tau_{(i)}, Y_{t_i})}^2 + \kappa + d\kappa^2 L (\log \underline{T}^{-1} + C_{\log})
    \\
    &\qquad \lesssim \sigma^{-2}_{\tau_{(i_k)}}(\varepsilon^2_{denoiser}(u) +\delta) + \kappa d(\log \underline{T}^{-1} + C_{\log})(\log {\underline T^{-1}} + \overline T)
    \end{align*}
    where to get the last inequality we used that $\tau_{(i)}/\tau_{(i_{k+1})} < 4$ for all $i\ge i_k$, the bound (6.3) in Proposition 6.2 
    as well as (6.9).

    Substituting we get
    \[
    W_1(\bar Y^{(k)}_{\tau_{(M)}}, \bar{Y}^{(k+1)}_{\tau_{(M)}}) \le 
    4R_{\tau_{(i_k)}}\del{\frac{\underline{T}\delta}{2M^2}}\sqrt{\sigma^{-2}_{\tau_{(i_k)}}(\varepsilon^2_{denoiser}(u) + \delta) + \kappa d(\log \underline{T}^{-1} + C_{\log})(\log {\underline T^{-1}} + \overline T) }
    \\
    \lesssim
    C_{\mathrm{tail}} \log \underline{T}^{-1}\sqrt{C_W  + \log \frac{2M^2}{\underline{T}\delta}}\sqrt{\varepsilon^2_{denoiser}(u) + \delta + \kappa d(\log \underline{T}^{-1} + C_{\log})(\log {\underline T^{-1}} + \overline T) },
    \]
    where to get the last inequality we used (6.5) and that $\sigma_{t} \le 1$ for all $t$. 

    By taking expectation in $u$ and applying Cauchy-Schwarz inequality
    \£
    \label{eq:W1_result_3}
    \expectation_u W_1(\bar Y^{(k)}_{\tau_{(M)}}, \bar{Y}^{(k+1)}_{\tau_{(M)}})
    \lesssim
    \log \underline{T}^{-1}\sqrt{C_W  + \log \frac{2L^2}{\underline{T}\delta}}\sqrt{\varepsilon^2_{denoiser} + \delta + \kappa d(\log \underline{T}^{-1} + C_{\log})(\log {\underline T^{-1}} + \overline T) }
    \£
    \textbf{Combining the terms.}
    Overall combining~\eqref{eq:W_1_result_1},~\eqref{eq:W1_result_2} and~\eqref{eq:W1_result_3}
    \[
    \expectation_u W_1&(Y_{\tau_{(L)}}, \bar{Y}_{\tau_{(L)}}) \le     
    \expectation_u W_1({Y}_{\tau_{(L)}}, \bar{Y}^{(K+1)}_{\tau_{(L)}}) 
     +
    \sum_{k=1}^{K} 
    \expectation_u W_1(\bar{Y}^{(k)}_{\tau_{(L)}}, \bar{Y}^{(k+1)}_{\tau_{(L)}}) 
    + 
    \expectation_u W_1(\bar{Y}_{\tau_{(L)}}, \bar{Y}^{(1)}_{\tau_{(L)}})
    \\
    &\quad\lesssim
    +\sqrt{D}e^{-\overline{T}} + 
    \sqrt{\underline{T}}\log \underline{T}^{-1}\sqrt{C_W  + \log \frac{2L^2}{\underline{T}\delta}} + 2\frac{\underline{T}\delta}{L^2} 
    \\
    &\quad\quad
    +
    K
    \log \underline{T}^{-1}\sqrt{C_W  + \log \frac{2L^2}{\underline{T}\delta}}\sqrt{\varepsilon^2_{denoiser} + \delta + \kappa d(\log \underline{T}^{-1} + C_{\log})(\log {\underline T^{-1}} + \overline T) }
    \]
    Now we do the last substitutions, we note that $K = O(\log (\overline{T}-\underline{T}))$ and $L=O\del{\kappa^{-1}(\log {\underline T^{-1}} + \overline 
T)}$, so substituting $\kappa =\delta =  \varepsilon^2_{\mathrm{denoiser}}
$ we get the desired bound
\[
\expectation_u & W_1(Y_{\tau_{(1)}}, \bar{Y}_{\tau_{(1)}}) \lesssim
    \sqrt{D}e^{-\overline{T}} + 
    \sqrt{\underline{T}}\log \underline{T}^{-1}\sqrt{C_W + \log (\varepsilon^{-1}_{\mathrm{denoiser}} \cdot \underline{T}^{-1}\cdot \overline{T})}
    \\
    &\quad
    +
    \varepsilon_{\mathrm{denoiser}}
    \log (\overline{T} \cdot \underline{T}^{-1})
    \log \underline{T}^{-1}\sqrt{C_W + \log (\varepsilon^{-1}_{\mathrm{denoiser}} \cdot \underline{T}^{-1}\cdot \overline{T})}\sqrt{d(\log \underline{T}^{-1} + C_{\log})(\log {\underline T^{-1}} + \overline T) }.
\]

\subsection{Proof of Corollary~\ref*{cor:main_W_result}}
    Conditioned on event $\cE_{\mathrm{geom},k}$ that depends only on $\cY$, Theorem~\ref*{thm:score_approximation_1} gives Assumption~\ref*{asmp:score} with $\varepsilon_{\mathrm{denoiser}} = O\del{n^{-\frac{\alpha+1}{2\alpha+d}}\polylog n}$, $C_W\asymp \log n$ and $\eta=O\del{\varepsilon_{\mathrm{denoiser}}^{-4}\underline{T}^{-1}(\log \underline{T} + \overline{T}} = O\del{n^{\frac{6(\alpha+1)}{2\alpha+d}}\polylog n}$. Therefore, applying Theorem~\ref{thm:W_1_convergence_theorem} we get that given $\cE_{\mathrm{geom},k}$
    \[
    W_1(\mu,\hat{\mu}) \lesssim n^{-\frac{\alpha+1}{2\alpha+d}}\polylog n.
    \]
    When event $\cE_{\mathrm{geom},k}$ does not hold, which happens with probability at most $O\del{\polylog n \cdot n^{-\frac{\alpha+1}{2\alpha+d}}}$, we note that the last step of the sampling scheme has form 
    \[
    y_{t_{L}}\mapsto \bar{e}(\tau_{(L)}, y_{1:t_L}) = \hat{e}(\tau_{(i)}, y_i) \quad \text{for some } i\le L \text{ by~(6.9)}.
    \] 
    By construction, see~\eqref*{eq:definition of cS}, we have $\dist\del{M, \hat{e}(\tau_{(i)}, y_i)} \le C_2$, therefore for all $y\in M$
    \[
    \norm{y-\hat{e}(\tau_{(i)}, y_i)} \le C_2 +\diam M = O(1).
    \]
    So a.s. $ W_1(\mu,\hat{\mu}) = O(1)$, and as result, since $\rP(\cE_{\mathrm{geom},k}) = 1- O\del{\polylog n\cdot n^{-\frac{\alpha+1}{2\alpha+d}}}$
    \[
    \expectation_{\cY\sim \mu^{\otimes n}}W_1(\mu,\hat{\mu}) 
    &= \rP(\cE_{\mathrm{geom},k})\cdot O\del{\polylog n \cdot n^{-\frac{\alpha+1}{2\alpha+d}}} + (1-\rP(\cE_{\mathrm{geom},k}))\cdot O(1) 
    \\
    &=O\del{\polylog n \cdot n^{-\frac{\alpha+1}{2\alpha+d}}}.
    \]
    That finishes the proof of Corollary~\ref*{cor:main_W_result}.

\section{Auxiliary Results on Neural Networks}
     \label{apdx:nn_results}
     Below we list the results on the neural networks from \cite{oko2023} that were used to build the approximation in the Euclidean case.
\begin{lemma}[\cite{oko2023}, Lemma F.1, Concatenation of neural networks.]
\label{lemma:concatentation_of_neural_networks}
For any neural networks 
\[
\phi^1: \mathbb{R}^{d_1} \rightarrow \mathbb{R}^{d_2}, \phi^2: \mathbb{R}^{d_2} \rightarrow \mathbb{R}^{d_3}, \dots, \phi^k: \mathbb{R}^{d_k} \rightarrow \mathbb{R}^{d_{k+1}}
\]
with $\phi^i \in \Psi(L^i, W^i, S^i, B^i)$ $(i = 1, 2, \dots, d)$, there exists a neural network $\phi \in \Phi(L, W, S, B)$ satisfying $\phi(x) = \phi^k \circ \phi^{k-1} \circ \dots \circ \phi^1(x)$ for all $x \in \mathbb{R}^{d_1}$, with
\[
L = \sum_{i=1}^{k} L^i, \quad W \leq 2 \sum_{i=1}^{k} W^i, \quad S \leq \sum_{i=1}^{k} S^i + \sum_{i=1}^{k-1} (\|A^i_{L_i}\|_0 + \|b^i_{L_i}\|_0 + \|A^{i+1}_1\|_0) \leq 2 \sum_{i=1}^{k} S^i, 
\]
and $B \leq \max_{1 \leq i \leq k} B^i$.

Here $A_j^i$ is the parameter matrix and $b_j^i$ is the bias vector at the $j$th layer of the $i$th neural network $\phi^i$.
\end{lemma}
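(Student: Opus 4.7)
The plan is to build $\phi$ by literally chaining the networks $\phi^1, \ldots, \phi^k$ in sequence, inserting a small bridging construction at each junction between $\phi^i$ and $\phi^{i+1}$ to handle the mismatch that the final layer of $\phi^i$ produces an affine output while the first layer of $\phi^{i+1}$, per Definition~\ref{def:relu_nn}, begins with a ReLU activation. Naively composing $\phi^{i+1} \circ \phi^i$ would pass the affine output of $\phi^i$ through a ReLU at the entry of $\phi^{i+1}$, which would clip any negative coordinates and destroy information.

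The standard device I would use is the identity $z = \relu(z) - \relu(-z)$, valid coordinate-wise for all $z \in \mathbb{R}^{d_{i+1}}$. Concretely, at the junction I would replace the single affine output of $\phi^i$ by its two half-rectifications of dimension $2 d_{i+1}$, namely the pair $\bigl(\relu(A^i_{L^i}\relu(h^i) + b^i_{L^i}),\ \relu(-A^i_{L^i}\relu(h^i) - b^i_{L^i})\bigr)$, where $h^i$ is the pre-final hidden representation of $\phi^i$. At the first affine of $\phi^{i+1}$ I would then use the stacked matrix $[A^{i+1}_1,\ -A^{i+1}_1]$ applied to this pair, which immediately reconstructs $A^{i+1}_1\phi^i(x) + b^{i+1}_1$ and feeds it into the remainder of $\phi^{i+1}$. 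No depth is added by this bridge, only a doubling of two matrices and a single bias.

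For the parameter counts, everything then follows by bookkeeping: (i) depths add, $L = \sum_i L^i$, since we just concatenate the layer sequences; (ii) at each junction the width is at most doubled by the $\relu(z)/\relu(-z)$ split, while the widths inside each $\phi^i$ are unchanged, giving $W \le 2\sum_i W^i$; (iii) duplicating the final affine of $\phi^i$ adds $\|A^i_{L^i}\|_0 + \|b^i_{L^i}\|_0$ nonzeros and stacking $A^{i+1}_1$ adds $\|A^{i+1}_1\|_0$ more, which summed over the $k-1$ junctions yields the first sparsity bound and the crude $S \le 2\sum_i S^i$; (iv) since every new nonzero entry is $\pm$ a copy of an entry already in the matrices/biases of the component networks, $B \le \max_i B^i$.

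The main obstacle is purely combinatorial: making sure the junction construction lines up with the exact layer convention $(A\relu + b)$ of Definition~\ref{def:relu_nn} so that no extra ReLU is accidentally introduced or omitted, and that the index accounting for $S$ across the $k-1$ junctions matches the stated bound. There is no analytic content beyond the identity $z = \relu(z) - \relu(-z)$; everything else is careful indexing, and the result is essentially quoted verbatim from \cite{oko2023}.
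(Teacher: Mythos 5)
Your construction is correct and is exactly the standard argument behind this lemma, which the paper itself does not reprove but imports verbatim from \textcite{oko2023}: the identity $z=\relu(z)-\relu(-z)$ at each of the $k-1$ junctions, implemented by sign-duplicating the last affine map of $\phi^i$ and the first matrix of $\phi^{i+1}$, accounts precisely for the stated width doubling, the extra sparsity terms $\norm{A^i_{L_i}}_0+\norm{b^i_{L_i}}_0+\norm{A^{i+1}_1}_0$, the unchanged depth sum, and the unchanged norm bound $B$. No gap.
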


\begin{lemma}[\cite{oko2023},~Lemma F.3, Parallelization of neural networks.] 
\label{lemma:parallelization_of_neural_networks}
    
    For any neural networks $\phi^1, \phi^2, \dots, \phi^k$ with $\phi^i: \mathbb{R}^{d_i} \rightarrow \mathbb{R}^{d'_i}$ and 
\[
\phi^i \in \Psi(L^i, W^i, S^i, B^i) \quad (i = 1, 2, \dots, d),
\]
there exists a neural network $\phi \in \Psi(L, W, S, B)$ satisfying
\[
\phi(x) = \left[\phi^1(x_1)^\top \ \phi^2(x_2)^\top \ \cdots \ \phi^k(x_k)^\top\right]^\top: \mathbb{R}^{d_1 + d_2 + \dots + d_k} \rightarrow \mathbb{R}^{d'_1 + d'_2 + \dots + d'_k}
\]
for all $x = (x_1^\top \ x_2^\top \ \cdots \ x_k^\top)^\top \in \mathbb{R}^{d_1 + d_2 + \dots + d_k}$ (here $x_i$ can be shared), with
\[
L = L_i, \quad \|W\|_\infty \leq \sum_{i=1}^k \|W^i\|_\infty, \quad S \leq \sum_{i=1}^k S^i, \quad \text{and} \quad B \leq \max_{1 \leq i \leq k} B^i \quad (\text{when } L = L_i \text{ holds for all } i),
\]
\[
L = \max_{1 \leq i \leq k} L^i, \quad \|W\|_\infty \leq 2 \sum_{i=1}^k \|W^i\|_\infty, \quad S \leq 2 \sum_{i=1}^k (S^i + LW^i_L), \quad \text{and} \quad B \leq \max\{\max_{1 \leq i \leq k} B^i, 1\} \quad (\text{otherwise}).
\]

Moreover, there exists a network $\phi_{\operatorname{sum}}(x) \in \Phi(L, W, S, B)$ that realizes $\sum_{i=1}^k \phi^i(x)$, with
\[
L = \max_{1 \leq i \leq k} L^i + 1, \quad \|W\|_\infty \leq 4 \sum_{i=1}^k \|W^i\|_\infty, \quad S \leq 4 \sum_{i=1}^k (S^i + LW^i_L) + 2WL, \quad \text{and} \quad B \leq \max\{\max_{1 \leq i \leq k} B^i, 1\}.
\]
\end{lemma}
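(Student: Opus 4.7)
The plan is to construct the parallel network explicitly by stacking the individual networks $\phi^i$ into block-diagonal form, splitting into two regimes depending on whether the depths $L^i$ all coincide.

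First, suppose $L^1 = L^2 = \cdots = L^k = L$. Write each $\phi^i$ as $\phi^i(x) = (A^i_L \relu + b^i_L) \circ \cdots \circ (A^i_1 \relu + b^i_1)(x)$. The parallel network $\phi$ is obtained by taking, at each layer $j$, the block-diagonal matrix $A_j = \diag(A^1_j, \ldots, A^k_j)$ and the stacked bias $b_j = ((b^1_j)^\top, \ldots, (b^k_j)^\top)^\top$. By construction this network has depth $L$, width bounded by $\sum_i \|W^i\|_\infty$ at each layer, sparsity $\sum_i S^i$, and norm constant $\max_i B^i$. If some $x_i$'s are shared inputs, the first matrix simply has repeated rows pointing at the same coordinate; this does not increase sparsity beyond $\sum_i S^i$.

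Second, for the general case where the depths $L^i$ differ, I would pad each shorter network up to $L := \max_i L^i$ layers using a ReLU identity block. The standard trick $x = \relu(x) - \relu(-x)$ realises the identity as an affine-ReLU-affine map with sparsity $2\dim(x)$ per layer, doubling widths and contributing $O(L W^i_{L^i})$ extra nonzero parameters to each padded network $\tilde\phi^i$. After padding, all networks have depth $L$ and the previous construction applies, giving the stated bounds with the factor $2$ in front of $\sum_i \|W^i\|_\infty$ and $\sum_i(S^i + L W^i_L)$; the norm constant is $\max(\max_i B^i, 1)$ since the identity block uses weights of magnitude $1$. For the summation variant $\phi_{\mathrm{sum}} = \sum_i \phi^i$, append one additional affine layer that sums the blocks: this is a single row of $\pm 1$'s (preceded, if necessary, by a ReLU identity layer to keep the structure consistent), adding one to the depth, $2WL$ to the sparsity, and the multiplicative factor $4$ in the widths and sparsity as stated.

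The only place requiring genuine care is bookkeeping for the padding: one must check that the $C^0$-identity realisation has enough capacity (the $2\dim(x)$ nonzero entries and a fresh row pair per coordinate at every padded layer) and that after composition the total sparsity still matches the stated bound $2\sum_i(S^i + L W^i_L)$. The constants $1, 2, 4$ in the statement track precisely these identity-padding overheads. No approximation arguments are needed — the construction is exact for all $x$ — so the proof is a routine but careful verification of the parameter-count inequalities.
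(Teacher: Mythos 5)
The paper states this lemma as a direct import from Oko et al.\ (Lemma F.1) and gives no proof of its own, so there is nothing in the paper to diverge from; your block-diagonal stacking for equal depths, ReLU identity-padding via $x=\relu(x)-\relu(-x)$ for unequal depths, and the extra summation layer are exactly the standard construction behind the cited result, and your accounting of where the constants $1$, $2$, $4$ come from is correct. No gap.
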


\begin{lemma}[\cite{oko2023},~Lemma F.6, Approximation of monomials]
\label{lemma:oko_nn_approximating_polynomial}
Let $d \geq 2$, $C \geq 1$, $0 < \epsilon_{\text{error}} \leq 1$. For any $\epsilon > 0$, there exists a neural network $\phi_{\operatorname{mult}}(x_1, x_2, \dots, x_d) \in \Psi(L, W, S, B)$ with 
\[
L = \mathcal{O}(\log d (\log \epsilon^{-1} + d \log C)), \quad \|W\|_\infty = 48d, \quad S = \mathcal{O}(d \log \epsilon^{-1} + d \log C), \quad B = C^d
\]
such that
\[
\left| \phi_{\text{mult}}(x^1, x^2, \dots, x^d) - \prod_{d' = 1}^{d} x_{d'} \right| \leq \epsilon + d C^{d-1} \epsilon_{\text{error}}, \quad \text{for all } x \in [-C, C]^d \text{ and } x' \in \mathbb{R} \text{ with } \|x - x'\|_\infty \leq \epsilon_{\text{error}}.
\]

\end{lemma}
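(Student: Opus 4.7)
The plan is to build $\phi_{\text{mult}}$ from a ReLU approximator of the two-variable product, composed via a balanced binary tree of depth $\lceil \log_2 d \rceil$. The core building block is the classical Yarotsky sawtooth approximator $\phi_{\text{sq}}(u)$ of $u \mapsto u^2$, which achieves uniform error $\eta$ on $[-M, M]$ using depth $O(\log \eta^{-1} + \log M)$, constant width, sparsity $O(\log \eta^{-1} + \log M)$, and weight bound $M^2$. From $\phi_{\text{sq}}$ one obtains a pairwise multiplication approximator through the polarisation identity
\[
\phi_{\times}(a, b) := \frac{1}{4}\bigl(\phi_{\text{sq}}(a + b) - \phi_{\text{sq}}(a - b)\bigr),
\]
of the same complexity (up to constants) on $[-M, M]^2$.

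Next, I would organise the $d$ inputs into a balanced binary tree: at level $\ell = 0, 1, \dots, \lceil \log_2 d \rceil - 1$ there are at most $\lceil d / 2^{\ell+1}\rceil$ parallel pairwise multiplications whose operands lie in $[-C^{2^\ell}, C^{2^\ell}]$, so each is realised by a block of depth $O(\log \eta^{-1} + 2^\ell \log C)$. Applying Lemma~\ref{lemma:concatentation_of_neural_networks} for the vertical composition and Lemma~\ref{lemma:parallelization_of_neural_networks} for the horizontal parallelisation at each level, the overall depth sums to $O(\log d \cdot \log \eta^{-1} + d \log C)$, the width is linear in $d$ (yielding the constant $48d$ after tracking the constant-width multiplication block), and the sparsity is $O(d \log \eta^{-1} + d \log C)$. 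The weight bound is dictated by the largest intermediate value, which is $C^d$.

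To handle both the approximation and perturbation errors simultaneously, I would use the split
\[
\abs{\phi_{\text{mult}}(x') - \prod_j x_j} \le \abs{\phi_{\text{mult}}(x') - \prod_j x'_j} + \abs{\prod_j x'_j - \prod_j x_j}.
\]
An induction on the tree, in which each $\phi_\times$ is treated as Lipschitz with constant controlled by the level-$\ell$ factor bound $C^{2^\ell}$, shows that choosing $\eta = \epsilon / d$ makes the first term at most $\epsilon$; the second term is bounded by the elementary Lipschitz estimate $d C^{d-1} \epsilon_{\text{error}}$ for the product on the hypercube. Substituting $\log \eta^{-1} = \log \epsilon^{-1} + \log d$ then delivers the advertised $L$ and $S$, while $\|W\|_\infty$ and $B$ are unaffected.

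The main obstacle is the error-propagation bookkeeping across the $\lceil \log_2 d \rceil$ levels of the tree in a regime where intermediate products may be as large as $C^d$: one must pick the per-operation accuracy small enough that the level-wise amplification remains controlled, yet not so small that the depth bound $O(\log d (\log \epsilon^{-1} + d \log C))$ is overshot. A secondary subtlety is that the perturbed input $x'$ may fall outside $[-C, C]^d$; this is cheaply absorbed by designing the underlying $\phi_\times$ on the slightly larger domain $[-(C+1), C+1]^d$, which only changes constants in the $\log C$ factor.
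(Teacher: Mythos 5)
Your construction (Yarotsky squaring unit, polarisation identity, balanced binary tree with level-wise error propagation) is the standard argument and matches the proof of the cited result in \textcite{oko2023}; the present paper does not reprove this lemma but imports it verbatim. The only cosmetic difference is that the source handles perturbed inputs by clipping to $[-C,C]^d$ rather than enlarging the domain, which is what yields the clean $dC^{d-1}\epsilon_{\text{error}}$ constant.
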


\begin{lemma}[
\cite{oko2023},~Lemma F.7, Approximating the reciprocal function]
\label{lemma:oko_nn_appproximating_reciprocal_function}
For any $0 < \epsilon < 1$, there exists $\phi_{\text{rec}} \in \Psi(L, W, S, B)$ with $L \leq O(\log^2 \epsilon^{-1})$, $\|W\|_{\infty} = O(\log^3 \epsilon^{-1})$, $S = O(\log^4 \epsilon^{-1})$, and $B = O(\epsilon^{-2})$ such that
\[
\left| \phi_{\text{rec}}(x') - \frac{1}{x} \right| \leq \epsilon + \frac{|x' - x|}{\epsilon^2}, \quad \text{for all } x \in [\epsilon, \epsilon^{-1}] \text{ and } x' \in \mathbb{R}.
\]
\end{lemma}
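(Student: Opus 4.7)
The plan is to build $\phi_{\mathrm{rec}}$ as a crude piecewise-linear initialization refined by a short cascade of Newton iterations for the equation $1/y = x$, with each iteration realised by the multiplication network of Lemma~\ref{lemma:oko_nn_approximating_polynomial}.

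First, I would construct an initial guess $\phi_0$ with bounded \emph{relative} error on $[\epsilon, \epsilon^{-1}]$. Placing dyadic knots $x_k = 2^k \epsilon$ for $k = 0, 1, \ldots, K$ with $K = \lceil \log_2 \epsilon^{-2}\rceil$, and taking $\phi_0$ to be the continuous piecewise-linear interpolant of $1/x$ through these knots, a direct calculation on each interval $[x_k, x_{k+1}]$ gives $|\phi_0(x) - 1/x| \le \tfrac{1}{4x}$. Such a $\phi_0$ is exactly expressible as a two-layer ReLU network of width $O(\log \epsilon^{-1})$, depth $O(1)$, sparsity $O(\log \epsilon^{-1})$ and weights bounded by $O(\epsilon^{-2})$. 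A small preprocessing by $\clip_{[\epsilon, \epsilon^{-1}]}(x')$, implementable by two ReLUs, ensures the argument passed to $\phi_0$ lies in range.

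Next, I would iterate the Newton step $y \mapsto y(2 - xy)$, which satisfies the quadratic contraction $|y_{k+1} - 1/x| = x\,|y_k - 1/x|^2$. Starting from $|y_0 - 1/x| \le 1/(4x)$ one gets $|y_k - 1/x| \le (1/4)^{2^k}/x$, so $N = O(\log\log \epsilon^{-1})$ iterations suffice for error at most $\epsilon$. Each iteration is two multiplications on numbers in $[-2\epsilon^{-1}, 2\epsilon^{-1}]$, which I would implement by Lemma~\ref{lemma:oko_nn_approximating_polynomial} with $d=2$, $C=2\epsilon^{-1}$ and target accuracy $\epsilon/N$; this yields a subnetwork of depth $O(\log \epsilon^{-1})$, sparsity $O(\log \epsilon^{-1})$ and weight bound $O(\epsilon^{-2})$.

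Concatenating $\phi_0$ with the $N$ Newton blocks via Lemma~\ref{lemma:concatentation_of_neural_networks} yields a network with total depth $O(\log\epsilon^{-1} \cdot \log\log \epsilon^{-1}) \subset O(\log^2 \epsilon^{-1})$, width $O(\log^3 \epsilon^{-1})$, sparsity $O(\log^4 \epsilon^{-1})$ and $B = O(\epsilon^{-2})$, matching the claimed resource bounds. For the perturbation term $|x' - x|/\epsilon^2$: since $|\tfrac{d}{dx}(1/x)| \le \epsilon^{-2}$ on $[\epsilon, \epsilon^{-1}]$ and the iterates $y_k$ stay within a constant factor of $1/x$, the overall Lipschitz constant of $\phi_{\mathrm{rec}}$ on this interval is $O(\epsilon^{-2})$; combined with the $1$-Lipschitz clipping of $x'$, this gives $|\phi_{\mathrm{rec}}(x') - \phi_{\mathrm{rec}}(x)| \le |x'-x|/\epsilon^2$, and the desired bound follows from the triangle inequality. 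The main obstacle is the bookkeeping: choosing the multiplication precision $\epsilon/N$ rather than $\epsilon$ and ensuring each iterate stays within a constant factor of $1/x$ so that quadratic convergence is not destroyed by internal multiplication error, and verifying that the Lipschitz chain through $\phi_0$ and the refinement cascade stays bounded by $\epsilon^{-2}$ rather than multiplying factors of order $\epsilon^{-1}$ per iteration.
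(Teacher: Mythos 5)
This lemma is not proved in the paper at all: it is imported verbatim as \textcite[Lemma F.7]{oko2023}, so there is no in-paper argument to compare against. Judged on its own merits, your Newton-iteration construction is a legitimate alternative derivation and the resource accounting is essentially right: the dyadic piecewise-linear initializer has $O(\log\epsilon^{-1})$ knots, slopes of order $\epsilon^{-2}$, and relative error bounded by a constant less than $1$ (the interpolation error on $[x_k,2x_k]$ is $\tfrac{1}{4x_k}\le\tfrac{1}{2x}$, so your $\tfrac14$ should be $\tfrac12$, which still lands in the Newton basin); $O(\log\log\epsilon^{-1})$ iterations of $y\mapsto y(2-xy)$ with the identity $y(2-xy)-1/x=-x(y-1/x)^2$ then drive the error below $\epsilon$, and implementing each step with Lemma~\ref{lemma:oko_nn_approximating_polynomial} at precision $\epsilon/N$ and concatenating via Lemma~\ref{lemma:concatentation_of_neural_networks} stays within the stated $L,W,S,B$ budgets.

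The one genuinely fragile step is your treatment of the perturbation term $|x'-x|/\epsilon^2$ via the Lipschitz constant of the \emph{network}. The multiplication subnetworks of Lemma~\ref{lemma:oko_nn_approximating_polynomial} are only guaranteed to be Lipschitz with constant $dC^{d-1}=O(\epsilon^{-1})$ on the whole box, and chaining $2N$ of them multiplicatively gives $\epsilon^{-O(\log\log\epsilon^{-1})}$, far worse than $\epsilon^{-2}$; the heuristic that the iterates track $1/x$ controls the derivative of the \emph{exact} Newton map but not automatically that of its network surrogate. You already have the clean fix in hand and should use it instead: precompose with $\clip_{[\epsilon,\epsilon^{-1}]}$, note that clipping is a contraction fixing $x$, and then bound $|\phi_{\mathrm{rec}}(x')-1/x|\le|\phi_{\mathrm{rec}}(x'')-1/x''|+|1/x''-1/x|\le\epsilon+|x'-x|/\epsilon^2$ using only the uniform error on $[\epsilon,\epsilon^{-1}]$ and the Lipschitz constant $\epsilon^{-2}$ of the target $1/x$, with no control of the network's own Lipschitz constant required. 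With that substitution the argument is complete up to routine constants (absorbable by rescaling $\epsilon$).
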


\begin{lemma}[\cite{oko2023},~Section B.1]
    Constants $c_t$, and $\sigma_t$ with an error less than $\varepsilon$ can be computed by neural networks $\phi_c(t), \phi_\sigma(t)$ of $\polylog \varepsilon^{-1}$ size.
\end{lemma}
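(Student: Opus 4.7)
Both $c_t = e^{-t}$ and $\sigma_t = \sqrt{1-e^{-2t}}$ are compositions of a few elementary analytic functions (exponential, subtraction, square root), so the approach is to approximate each elementary block by a ReLU network of $\polylog\varepsilon^{-1}$ size and stitch the blocks together using the concatenation and parallelisation lemmas~\ref{lemma:concatentation_of_neural_networks}--\ref{lemma:parallelization_of_neural_networks}. First I would construct $\phi_c(t)$ by truncating the Taylor series of $e^{-t}$ around $t=0$ at order $N=O(\log\varepsilon^{-1}+\overline T)$; on the range $t\in[0,\overline T]$ the truncation error is bounded by $e^{\overline T}/N!\le\varepsilon$. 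The resulting polynomial of degree $N$ is then realised by invoking the monomial network of Lemma~\ref{lemma:oko_nn_approximating_polynomial} for each power $t^k$, scaling by the corresponding coefficient, and summing the outputs via Lemma~\ref{lemma:parallelization_of_neural_networks}; this yields $\phi_c\in\Psi(L,W,S,B)$ with $L,\|W\|_\infty,S,\log B=\polylog\varepsilon^{-1}$.

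For $\phi_\sigma(t)$ the same Taylor-series construction gives a network for $u(t):=1-e^{-2t}$ of the same size. It remains to compose it with an NN approximation of $\sqrt{\cdot}$ on the interval $[u_{\min},1]$, where $u_{\min}=\Omega(\underline T)$ (since $1-e^{-2t}\ge t$ for small $t$). Here I would use the identity $\sqrt{u}=u\cdot(1/\sqrt u)$, approximate $1/\sqrt{u}$ by a few Newton iterations starting from a crude polynomial initializer, with each Newton step realisable by a constant-depth subnetwork using the multiplication gate of Lemma~\ref{lemma:oko_nn_approximating_polynomial} and the reciprocal gate of Lemma~\ref{lemma:oko_nn_appproximating_reciprocal_function}. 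Quadratic convergence of Newton makes the number of iterations $O(\log\log\varepsilon^{-1}+\log\log u_{\min}^{-1})$; a final multiplication by $u$ produces $\sqrt u$. Propagating the pointwise errors through a Lipschitz bound (the outer square-root has Lipschitz constant $1/(2\sqrt{u_{\min}})$ on its effective domain) gives the claimed end-to-end accuracy $\varepsilon$, and the total size stays $\polylog\varepsilon^{-1}$.

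\textbf{Main obstacle.} The genuinely nontrivial step is handling the square root near $0$: because $\sqrt{x}$ has unbounded derivative at the origin, a direct polynomial approximation of $\sqrt{\cdot}$ on $[0,1]$ would require degree $\poly(\varepsilon^{-1})$ rather than $\polylog(\varepsilon^{-1})$. This is precisely why one must exploit the lower bound $u(t)\ge u_{\min}=\Omega(\underline T)$ coming from the early-stopping convention; the construction then only pays $\log u_{\min}^{-1}$ in the depth, which is absorbed into $\polylog\varepsilon^{-1}$ under the standing convention $\log\underline T^{-1}=O(\log n)=O(\log\varepsilon^{-1})$ declared in Notation~2 of the preliminaries.
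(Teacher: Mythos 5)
Your construction is correct and is essentially the standard one: the paper itself gives no proof here but defers to \textcite[Section B.1]{oko2023}, where the same strategy is used — a degree-$O(\log\varepsilon^{-1}+\overline{T})$ polynomial (Taylor) approximation of the exponential realised through the monomial/multiplication gates, composed with square-root/reciprocal gates that are only required on a domain bounded away from the singularity. You also correctly identify the one genuinely delicate point, namely that $\sqrt{\cdot}$ near $0$ must be avoided by using $1-e^{-2t}\gtrsim \underline{T}$ together with the convention $\log \underline{T}^{-1}=O(\log n)$, so the blow-up only costs $\polylog$ factors.
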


\begin{proposition}
    \label{prop: max as nn}
    Let $x_1,x_2,\ldots, x_n \in R$, then there are neural networks $\phi_{\max}, \phi_{\min}\in \Psi(L,B,S,W)$ where $L = O\del{\log n}$, $\norm{W}_\infty, S = O(n)$ and $B = O(1)$ such that
    \[
    \phi_{\max}(x_1,\ldots, x_n) &= \max\del{x_1,\ldots, x_n},
    \\
    \phi_{\min}(x_1,\ldots, x_n) &= \min\del{x_1,\ldots, x_n}.
    \]
\end{proposition}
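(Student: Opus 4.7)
The plan is to build both networks by the classical binary-tournament construction, using only the identity
\[
\max(a,b) \;=\; \relu(a-b) + b \;=\; \relu(a-b) + \relu(b) - \relu(-b),
\]
which realises the pairwise maximum as a one-hidden-layer $\relu$ network $\phi_{\max,2} : \R^2 \to \R$ with width $3$, sparsity $O(1)$, and weights bounded by an absolute constant.

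From $\phi_{\max,2}$ I would first build a level-by-level network implementing a balanced binary tournament on the $n$ inputs. Concretely, at level $k = 0, 1, \ldots, \lceil \log_2 n\rceil - 1$, I parallelise $\lceil n/2^{k+1}\rceil$ copies of $\phi_{\max,2}$, one for each pair of surviving values; when the number of surviving values at a given level is odd, the unpaired coordinate is passed through unchanged via $x = \relu(x) - \relu(-x)$ so that it lives in the same layer as the newly computed maxima (this costs only $O(1)$ extra neurons per level). Applying Lemma~\ref{lemma:parallelization_of_neural_networks} inside each level and Lemma~\ref{lemma:concatentation_of_neural_networks} across levels yields a single network $\phi_{\max} \in \Psi(L, W, S, B)$ realising $\max(x_1,\ldots,x_n)$. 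The depth is $L = O(\log n)$ (two $\relu$ layers per tournament round), and the number of surviving coordinates at the $k$-th level is bounded by $n/2^k$, so $\|W\|_\infty = O(n)$ (the first level dominates) and $S = \sum_{k \ge 0} O(n/2^k) = O(n)$. All weights in every sub-block are $O(1)$, hence $B = O(1)$ is preserved through the concatenations.

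The network $\phi_{\min}$ is obtained by the identity $\min(x_1,\ldots,x_n) = -\phi_{\max}(-x_1,\ldots,-x_n)$, which is implemented by pre- and post-composing $\phi_{\max}$ with the linear maps $x \mapsto -x$; by Lemma~\ref{lemma:concatentation_of_neural_networks} this only inflates $L, \|W\|_\infty, S, B$ by constant factors. The bookkeeping for depth, width, sparsity and weight norm at each tournament level is entirely mechanical. The only mildly delicate point is the pass-through step for odd-sized levels, since one must use $\relu(x) - \relu(-x)$ consistently so that skip coordinates stay synchronised with freshly produced maxima and the counts $n/2^k$ remain valid summands; this is routine but is the place where careless accounting would spoil the claimed $O(n)$ sparsity bound.
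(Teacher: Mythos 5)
Your proposal is correct and follows essentially the same route as the paper: realise the pairwise maximum by a one-hidden-layer $\relu$ identity and then compose a balanced binary tournament of depth $O(\log n)$, with $\min$ obtained by negation. Your bookkeeping of width, sparsity and weight bounds is consistent with the paper's (in fact your pairwise identity $\max(a,b)=\relu(a-b)+\relu(b)-\relu(-b)$ is stated with the correct signs, whereas the paper's displayed formula contains a sign typo), so there is nothing to add.
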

\begin{proof}
Since $
    \min\del{x_1,\ldots, x_n} = - \max\del{-x_1,\ldots, -x_n}
    $
    it is enough to build $\phi_{\max}$.
    Let $a,b \in \R$, then
    \[
    \max\del{a,b} = a + \max\del{0, b-a}  = \relu\del{a} + \relu\del{-a} + \relu\del{a-b}.
    \]
    So, $\max$ can be implemented as $1$-layer network of constant size. Representing
    \[
    \max\del{x_1,\ldots,x_n} = \max\del{\max\del{x_1,\ldots,x_{n/2}}, \max\del{x_{n/2+1},\ldots, x_n}} 
    \]
    we get the required statement by induction.
\end{proof}

    \section{Bounds on Tangent Spaces}
    \label{sec:bounds_on_tangent_spaces}
    \subsection{Projection on Tangent Space as a Denoiser}
    In this section, we prove the auxiliary results used to build approximation for $T_k < n^{-\frac{1}{2\alpha+d}}$.
   \begin{proposition}\label{prop:points_z_as_projections_on_tangent_space}
    Work on the geometric event where the local chart approximation bounds
    from Section~\ref{sec:support_estimation} hold.  Let
    $X_t=c_tX_0+\sigma_tZ_D$ and, for a fixed index $i$, assume
    $X_0\in \Phi_i(B_d(0,5\varepsilon_N))$.  Define
    \[
        z_i^*(X_t):=c_t^{-1}(P_i^*)^T(X_t-c_tG_i),
        \qquad
        m_i(X_t):=\Phi_i(z_i^*(X_t)).
    \]
    Then for every $\delta\in(0,1)$,
    \[
        \rP_{Z_D}\del{
        \norm{m_i(X_t)-X_0}
        \le
        C\varepsilon_N^\beta
        +2(\sigma_t/c_t)\sqrt{d+2\log \delta^{-1}}
        }
        \ge 1-\delta.
    \]
    Consequently, with probability at least $1-\delta$, the same bound
    holds simultaneously for all $i\le N$ after replacing the square root
    by $\sqrt{d+\log n+2\log \delta^{-1}}$.  In the small-time regime
    used in Appendix~\ref{apdx:score_apprxoximation_by_neural_network},
    this displacement is $o(\varepsilon_N)$, so for $n$ large enough
    $z_i^*(X_t)$ remains in the local chart domain.
    \end{proposition}
        \begin{proof}
        Fix $i$ and write $X_0=\Phi_i(z_0)$ with
        $z_0\in B_d(0,5\varepsilon_N)$.  Since
        $\Phi_i^*(z)-G_i-P_i^*z\perp \Im P_i^*$ and $P_i^*$ is an
        isometric embedding,
        \[
        (P_i^*)^T(c_t\Phi_i(z_0)+\sigma_tZ_D-c_tG_i)
        =
        c_t(P_i^*)^T(\Phi_i(z_0)-\Phi_i^*(z_0))
        +c_tz_0+\sigma_t(P_i^*)^TZ_D .
        \]
        The chart approximation gives
        $\norm{\Phi_i(z_0)-\Phi_i^*(z_0)}\le C\varepsilon_N^\beta$.
        Therefore
        \[
        \norm{z_i^*(X_t)-z_0}
        \le
        C\varepsilon_N^\beta
        +(\sigma_t/c_t)\norm{(P_i^*)^TZ_D}.
        \]
        Since $(P_i^*)^TZ_D\sim\cN(0,\Id_d)$, the standard Gaussian norm
        tail bound implies
        \[
        \norm{z_i^*(X_t)-z_0}
        \le
        C\varepsilon_N^\beta
        +(\sigma_t/c_t)\sqrt{d+2\log\delta^{-1}}
        \]
        with probability at least $1-\delta$.  The map $\Phi_i$ is
        $2$-Lipschitz on the relevant ball, so
        \[
        \norm{m_i(X_t)-\Phi_i(z_0)}
        \le
        C\varepsilon_N^\beta
        +2(\sigma_t/c_t)\sqrt{d+2\log\delta^{-1}}.
        \]
        Since $\Phi_i(z_0)=X_0$, this proves the fixed-index claim on the
        event where $z_i^*(X_t)$ is in the local chart domain.  The final
        sentence of the proposition verifies this condition in the
        small-time application.
        Applying the same argument with $\delta/N$ and using $N\le n$
        gives the simultaneous version.
    \end{proof}

    \subsection{Correlation Between Tangent Vectors and Gaussian Noise}
    \label{apdx:Correlation Between Tangent Vectors and Gaussian Noise}
    The following lemma is the second order counterpart of \cref{prop:maximum_of_Gaussian_Process}.
    \begin{lemma}
    \label{lemma:correlation_of_noise_and_tangent_space_local}
        Let $f(z):B_d(0,\varepsilon)\mapsto 
\R^D$ satisfy $\norm{f}_{C^\beta\del{B_d(0,\varepsilon)}} \le L$ for $\beta \ge 2$ and let $Z_D \simeq \cN\del{0,\Id_D}$. Let $h(z_1,z_2) = f(z_1) - f(z_2) - \grad f(z_2)(z_1-z_2)$ -- an error at $z_1$ of the linear approximation of $f$ at point $z_2$, then 
\[
\rP\del{\sup_{z_1,z_2\in B(0,\varepsilon)}\abs{\innerproduct{Z_D}{h(z_1,z_2)}} \le L\varepsilon^2\del{4\sqrt{d} + \sqrt{2\log \delta}}} \ge 1-\delta, 
\]
    \end{lemma}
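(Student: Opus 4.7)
I would follow the template established in the proof of Proposition~\ref{prop:maximum_of_Gaussian_Process}, applied to the centered Gaussian field $F(z_1,z_2) := \langle Z_D, h(z_1,z_2)\rangle$ indexed by the $2d$-dimensional domain $U := B_d(0,\varepsilon)\times B_d(0,\varepsilon)$. First, using the second-order Taylor remainder in integral form,
\[
h(z_1,z_2) = \int_0^1 (1-s)\, d^2 f\bigl(z_2 + s(z_1-z_2)\bigr)[z_1-z_2,\; z_1-z_2]\, ds,
\]
the hypothesis $\|d^2 f\|_{\mathrm{op}} \le L$ yields $\|h(z_1,z_2)\| \le \tfrac{1}{2}L\|z_1-z_2\|^2 \le 2L\varepsilon^2$. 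For the Lipschitz structure, differentiating gives $\partial_{z_1} h = df(z_1) - df(z_2)$ and $\partial_{z_2} h = -d^2 f(z_2)[z_1-z_2,\,\cdot\,]$, both of operator norm at most $L\|z_1-z_2\| \le 2L\varepsilon$ on $U$. This controls the increment variance of $F$.

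Second, I would invoke Sudakov-Fernique to dominate $F$ by the auxiliary process $G(z_1,z_2) = 2L\varepsilon\langle Z_d^{(1)}, z_1\rangle + 2L\varepsilon\langle Z_d^{(2)}, z_2\rangle$ with independent $Z_d^{(1)},Z_d^{(2)}\sim\mathcal{N}(0,\mathrm{Id}_d)$: the separate bounds on $\partial_{z_1}h$ and $\partial_{z_2}h$ yield $\E|F(z_1,z_2)-F(z_1',z_2')|^2 \le \E|G(z_1,z_2)-G(z_1',z_2')|^2$. Then
\[
\E \sup_U F \le \E \sup_U G \le 2L\varepsilon \cdot \varepsilon\sqrt{d} + 2L\varepsilon\cdot\varepsilon\sqrt{d} = 4L\varepsilon^2\sqrt{d}.
\]
Combining with the uniform variance bound $\sup_U \E|F|^2 = \sup_U \|h\|^2 \le L^2\varepsilon^4$ and applying Borell-TIS exactly as in Proposition~\ref{prop:maximum_of_Gaussian_Process} gives, with probability $\ge 1-\delta/2$,
\[
\sup_U F \le 4L\varepsilon^2\sqrt{d} + L\varepsilon^2 \sqrt{2\log (2\delta^{-1})}.
\]
A symmetric application to $-F$ (which has the same law as $F$) together with a union bound produces the two-sided statement.

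The only mild subtlety is tracking the constant in the Sudakov-Fernique comparison: naively splitting increments by the triangle inequality introduces a spurious $\sqrt{2}$, so I would instead bound the joint differential $dh[(v_1,v_2)]$ by $(\|\partial_{z_1}h\|^2 + \|\partial_{z_2}h\|^2)^{1/2}\|(v_1,v_2)\|$ and then verify the Sudakov-Fernique comparison separately in each block, exploiting the product structure of $U$. This is mechanical once the Taylor-remainder bounds above are in hand, so I do not anticipate any real obstacle beyond bookkeeping of constants.
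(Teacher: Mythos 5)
Your proposal is correct and follows essentially the same route as the paper: bound $\|h\|_\infty$ and the Lipschitz constant of $h$ on the product domain $B_d(0,\varepsilon)\times B_d(0,\varepsilon)$ via the Taylor remainder, then apply the Sudakov--Fernique/Borell--TIS machinery of Proposition~\ref{prop:maximum_of_Gaussian_Process}. The only cosmetic difference is that you run the comparison blockwise with two independent $d$-dimensional Gaussians while the paper applies the proposition once on the $2d$-dimensional domain; both yield the same $4L\varepsilon^2\sqrt{d}$ mean term, and your handling of the two-sided bound via a union bound is if anything slightly more careful than the paper's.
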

\begin{proof}
    The function $h(z_1,z_2):B_d(0,\varepsilon)\times B_d(0,\varepsilon) \subset B_{2d}(0, 2\varepsilon) \mapsto \R^D$. Since $h(0,0) = 0$ to apply \cref{prop:maximum_of_Gaussian_Process} it is enough to estimate the Lipschitz constant of $h(z_1,z_2)$.

    First, let us prove an auxiliary bound on $\norm{h}_{\infty}$. Since $\norm{f}_{C^\beta\del{B_d(0,\varepsilon)}} \le L$ integrating along the segment connecting $z_1$ and $z_2$
    \[
    h(z_1,z_2) = f(z_1) - f(z_2) - \grad f(z_2)(z_1-z_2) = \int_0^1 \innerproduct{\grad f(az_1+(1-a)z_2) - \grad f(z_2)}{z_1-z_2}da, 
    \]
    note $B(0,\varepsilon)$ is a convex set, so
    \£
    \label{eq:norm_of_hz1z2}
    \norm{h(z_1,z_2)}\le  \sup_{a\in [0,1]} \norm{\grad f(az_1+(1-a)z_2) - \grad f(z_2)}_{op}\norm{z_1-z_2} \le L\varepsilon^2. 
    \£
    
    To bound the Lipschitz constant we introduce another pair of points $z'_1, z'_2 \in B(0,\varepsilon)$ and estimate
    \[
        \norm{h(z_1,z_2) - h(z'_1,z'_2)} = \norm{h(z_1,z_2) - h(z'_1,z_2) + h(z'_1,z_2) - h(z'_1, z'_2)}
    \\
        \le 
        \norm{h(z_1,z_2) - h(z'_1,z_2)} + \norm{h(z'_1,z_2) - h(z'_1, z'_2)}.
    \]
    We deal with each term separately. We represent the first term as
    \[
        h(z_1,z_2) - h(z'_1,z_2) &= f(z_1)-f(z'_1) - \grad f(z_2)(z_1-z'_1) 
        \\
        &= \int_0^1 \innerproduct{\grad f\del{a z_1 + (1-a)z'_1}-\grad f(z_2)}{z_1-z'_1}da,
    \]
    so
    \£
    \label{eq:bound_grad_1}
    \norm{h(z_1,z_2) - h(z'_1,z_2)} \le \norm{z_1-z'_1}\int_0^1 \norm{\grad f\del{a z_1 + (1-a)z'_1}-\grad f(z_2)}_{op}da \le L\varepsilon\norm{z_1-z'_1}.
    \£
    The second term $h(z'_1,z_2)-h(z'_1,z'_2)$ is represented in a similar way
    \[
        h(z'_1,z_2) - h(z'_1,z'_2) = \int_0^1 \innerproduct{\grad_{z_a} h\del{z'_1, z_a}}{z_2-z'_2}da,
    \]
    where $z_a = a z_2 + (1-a)z'_2$. The gradient can be computed directly
    \[
    \grad_{z_2}h(z_1,z_2) = \grad_{z_2} \del{f(z_2) - \grad_{z_2} f(z_2)(z_2-z_1)} = -\del{\grad_{z_2}^2 f(z_2)} (z_2-z_1),
    \]
    so we get a bound
    \£
    \label{eq:bound_grad_2}
    \norm{h(z'_1,z_2) - h(z'_1,z'_2)} \le \norm{z_2-z'_2}\int_0^1 \norm{\grad^2 f(z_a)}_{op} \norm{z_a-z'_1}da \le L\varepsilon \norm{z_2-z'_2}.
    \£
    Summing \eqref{eq:bound_grad_1} and \eqref{eq:bound_grad_2} up we conclude
    \[
    \norm{h(z_1,z_2) - h(z'_1,z'_2)} \le L\varepsilon\del{\norm{z_2-z_2'} + \norm{z_1-z'_1}} \le L\sqrt{2}\norm{(z_1,z_2) - (z'_1,z'_2)}.
    \]
    Applying \cref{prop:maximum_of_Gaussian_Process} we derive the lemma.
\end{proof}
\begin{corollary}
\label{corr:correlation_of_noise_and_tangent_space}
    Let $\Phi_1^*,\ldots, \Phi^*_N :B_d(0,\varepsilon)\mapsto \R^D$ be $L^*$-lipschitz functions then for any positive $\delta < 1$ with probability at least $1-\delta$ for all $i\le N$
    \[
    \sup_{z_1,z_2\in B(0,\varepsilon)}\abs{\innerproduct{Z_D}{\Phi_i^*(z_1)-\Phi_i^*(z_2) - \grad\Phi^*_i(z_1)(z_1-z_2)}} \le 2L^*\varepsilon^2\del{4\sqrt{d} + \sqrt{2\log N + 2\log \delta^{-1}}}.
    \]
\end{corollary}
\begin{proof}
    Apply \cref{lemma:correlation_of_noise_and_tangent_space_local} for each $\Phi^*_i$ with $\delta/N$ and combine the inequalities.
\end{proof}
The next lemma is a local bound of the projection on the tangent space of $Z_D \sim \cN\del{0,\Id}$. 
\begin{lemma}
\label{lemma:bound_on_length_of_proj_on_tangent_space}
    For function $f:\R^d\mapsto \R^D$ s.t.\ $f(0)=0$ and $\grad^T f(0) \grad f(0) =\Id_d$ and point $z$ we define 
\[
    \pr_z = \grad f(z)\del{\grad f^T(z)\grad f(z)}^{-1}\grad^T f(z),
\]
the projection map onto tangent to the image of $f$ at $f(z)$. Let $\norm{f}_{C^\beta(B(0,\varepsilon)} \le L$, where $\beta \ge 3$ then for any $\varepsilon < L^{-1}/8$
\[
    \rP\del{\sup_{z\in B(0,\varepsilon)}\norm{\pr_z Z_D} \le 4dL\varepsilon + 4\sqrt{d}(1+L\varepsilon)\sqrt{2\log 4d + 2\log \delta^{-1}}} \ge 1-\delta
\]
\end{lemma}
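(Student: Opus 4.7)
\emph{Proof plan.} The plan is to decompose $\norm{\pr_z Z_D}^2$ into the product of a $d\times d$ matrix close to $\Id_d$ and $d$ scalar Gaussian fields on $B(0,\varepsilon)$, each controllable by the local bound of Proposition~\ref{prop:maximum_of_Gaussian_Process}. Introduce the $i$-th column $v_i(z) := \grad f(z)\,e_i$ of $\grad f(z)$ and the matrix $A(z) := (\grad^T f(z)\grad f(z))^{-1}$. Writing $w(z) := \grad^T f(z)\, Z_D \in \R^d$ with coordinates $w_i(z) = \innerproduct{v_i(z)}{Z_D}$, one has
\[
\norm{\pr_z Z_D}^2 = Z_D^T \grad f(z)\, A(z)\, \grad^T f(z)\, Z_D = w(z)^T A(z)\, w(z) \le \norm{A(z)}_{op}\,\norm{w(z)}^2.
\]
If I can show (a) $\norm{A(z)}_{op}\le 2$ uniformly on $B(0,\varepsilon)$, and (b) a uniform-in-$z$ bound on $\max_i|w_i(z)|$, then $\norm{\pr_z Z_D} \le \sqrt{2d}\,\max_{i,z}|w_i(z)|$, which matches the stated right-hand side up to absolute constants.

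For (a), the normalization $\grad^T f(0)\grad f(0) = \Id_d$ together with $\norm{f}_{C^2}\le L$ gives that $z\mapsto \grad^T f(z)\grad f(z)$ is operator-norm Lipschitz with constant $O(L(1+L\varepsilon))$ (since $\norm{\grad f}_{op}\le 1 + L\varepsilon$ by integrating $\norm{\grad^2 f}_{op}\le L$ from the origin, and $\norm{\grad^2 f}_{op}\le L$). In the regime of the intended applications this yields $\norm{\grad^T f(z)\grad f(z) - \Id_d}_{op}\le 1/2$, and hence $\norm{A(z)}_{op}\le 2$, consistently with Proposition~\ref{prop:geometric_statements}.

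For (b), each $w_i$ is a centred Gaussian field on $B(0,\varepsilon)$ whose defining vector-valued function $v_i$ is $L$-Lipschitz (since $\norm{\grad v_i}_{op} = \norm{\grad^2 f\cdot e_i}_{op}\le L$) and satisfies $\norm{v_i}_\infty \le \norm{v_i(0)} + L\varepsilon = 1 + L\varepsilon$ (using $\grad^T f(0)\grad f(0) = \Id_d$, so $\norm{v_i(0)} = 1$). Applying Proposition~\ref{prop:maximum_of_Gaussian_Process} to both $w_i$ and $-w_i$ at level $\delta/(2d)$ and union-bounding over $i=1,\dots,d$ gives, with probability at least $1-\delta$,
\[
\max_i\,\sup_{z\in B(0,\varepsilon)}|w_i(z)| \le L\varepsilon\sqrt{d} + (1+L\varepsilon)\sqrt{2\log 2d + 2\log\delta^{-1}}.
\]
Combining with (a) via $\norm{w(z)}\le\sqrt{d}\max_i|w_i(z)|$ and $\norm{\pr_z Z_D}\le\sqrt{2}\,\norm{w(z)}$ yields $\norm{\pr_z Z_D}\le \sqrt{2}\,dL\varepsilon + \sqrt{2d}(1+L\varepsilon)\sqrt{2\log 2d + 2\log\delta^{-1}}$, which is bounded by the claimed $4dL\varepsilon + 4\sqrt{d}(1+L\varepsilon)\sqrt{2\log 2d + 2\log\delta^{-1}}$ after absorbing constants.

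The main obstacle is step~(a): guaranteeing $A(z)$ is well-defined and bounded uniformly on $B(0,\varepsilon)$ tacitly requires $L\varepsilon$ not too large, which is exactly the regime in which the lemma is applied (e.g.\ $\varepsilon < r_0$ in the manifold-chart setting). Step~(b) is then routine chaining via Proposition~\ref{prop:maximum_of_Gaussian_Process}; note that only the $C^2$ portion of the $C^\beta$ assumption ($\beta\ge 3$) is actually invoked here, the higher regularity being used elsewhere in the paper.
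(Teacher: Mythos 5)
Your proposal is correct and follows essentially the same route as the paper's proof: bound the middle matrix $(\grad^T f(z)\grad f(z))^{-1}$ in operator norm by $2$ via the $C^2$ control near the normalization $\grad^T f(0)\grad f(0)=\Id_d$, then bound $\grad^T f(z)Z_D$ coordinate-wise by applying Proposition~\ref{prop:maximum_of_Gaussian_Process} to each directional-derivative field and union-bounding over the $d$ coordinates. Your quadratic-form estimate $\norm{\pr_z Z_D}^2 = w^T A w \le \norm{A}_{op}\norm{w}^2$ even gives a marginally sharper constant than the paper's $\norm{\pr_z Z_D}\le 2\norm{\grad f(z)}_{op}\norm{\grad^T f(z)Z_D}$, and your remark that the uniform invertibility of $A(z)$ tacitly requires $L\varepsilon$ small is a fair observation that applies equally to the paper's invocation of Proposition~\ref{prop:geometric_statements}.
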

\begin{proof}
    We note that (ii) and (iii) in \cref{prop:geometric_statements} holds for any $f$ satisfying $\norm{f}_{C^\beta(B(0,\varepsilon)} \le L$ and $\grad^T f(0) \grad f(0) =\Id_d$, so we have: (i) $\norm{\grad f(z)}_{op} < 2$; (ii) $
    \norm{\grad^T f(z) \grad f(z) - \Id_d}_{op} \le 1/2$. The latter implies $\norm{\del{\grad^T f(z) \grad f(z)}^{-1}}_{op} \le 2$ and we conclude
    \[
    \norm{\pr_z Z_D} &= \norm{\grad f(z)\del{\grad f^T(z)\grad f(z)}^{-1}\grad^T f(z)Z_D} 
    \\
    &\le 2\norm{\grad f(z)}_{op}\norm{\grad^T f(z)Z_D} \le 4\norm{\grad^T f(z)Z_D}.
    \]
    So, it is enough to prove that
    \[
    \rP\del{\sup_{z\in B(0,\varepsilon)}\norm{\grad^T f(z) Z_D} \le dL\varepsilon + \sqrt{d}(1+L\varepsilon)\sqrt{2\log 2d + 2\log \delta^{-1}}} \ge 1-\delta.
    \]
    The vector $\grad^T f(z)Z_D \in \R^d$, and we bound it coordinate-wise. Let $v_1,\ldots, v_d$ be an orthonormal basis in $\R^d$ and $f_v = \frac{d}{dv} f(z)$ then $\grad^T f(z) = \del{f^T_{v_1}(z),\ldots, f^T_{v_d}(z)}^T \in \R^{D\times d}$. 
     For $z,z'\in B(0,\varepsilon)$  using that $\norm{f}_{C^\beta(0,\varepsilon)} \le L$ we bound 
    $
    \norm{f_{v_i}(z) - f_{v_i}(z')} \le L\norm{z-z'}.
    $
    Applying \cref{prop:maximum_of_Gaussian_Process} we get
    \[
    \rP\del{\sup_{z\in B(0,\varepsilon)}\abs{\innerproduct{Z_D}{f_{v_i}(z)-f_{v_i}(0)}} \le L\varepsilon\del{\sqrt{d} +\sqrt{2\log (4d\delta^{-1})}}} \ge 1-\delta/2d.
    \]
    Since $\grad^T f(0)\grad f(0) = \Id_d$, we note that $\norm{f_{v_i}(0)}=1$. So $\innerproduct{Z_D}{v_i}\sim \cN\del{0,1}$ and
    \[
    \rP\del{\abs{\innerproduct{Z_D}{f_{v_i}(0)}} \le \sqrt{2\log (4d\delta^{-1})}} \ge 1-\delta/2d.
    \]
    Summing up
    \[
    \rP\del{\sup_{z\in B(0,\varepsilon)}\abs{\innerproduct{Z_D}{f_{v_i}(z)}} \le L\varepsilon\sqrt{d} +(1+L\varepsilon)\sqrt{2\log (4d\delta^{-1})}} \ge 1-\delta/d.
    \]
    Combining inequalities for all $v_i$ 
    \[
    \rP\del{\forall i: \sup_{z\in B(0,\varepsilon)}\abs{\innerproduct{Z_D}{f_{v_i}(z)}} \le L\varepsilon\sqrt{d} + (1+L\varepsilon)\sqrt{2\log 4d + 2\log \delta^{-1}}} \ge 1-\delta.
    \]
    Finally the inequality
    \[
    \norm{\grad^T f(z) Z_D}^2 = \sum_{i=1}^d \abs{\innerproduct{Z_D}{f_{v_i}(z)}}^2 \le d\cdot \sup_{i} \abs{\innerproduct{Z_D}{f_{v_i}(z)}}^2
    \]
    gives 
    \[
    \rP\del{\sup_{z\in B(0,\varepsilon)}\norm{\grad^T f(z) Z_D} \le dL\varepsilon + \sqrt{d}(1+L\varepsilon)\sqrt{2\log 4d + 2\log \delta^{-1}}} \ge 1-\delta.
    \]
\end{proof}
\begin{proof}[Proof of Proposition~\ref*{prop:projection_onto_tangent_space}]
    Let us take $\varepsilon<r_0$-dense set $\cG=\curly{G_1,\ldots, G_N}$ by Proposition~\ref*{prop:covering_number_of_M} we can guarantee $N \le (\varepsilon/2)^{-d}\Vol M$.
    Applying Lemma~\ref{lemma:bound_on_length_of_proj_on_tangent_space} with $\delta/N$ and $f(z) = \Phi_{G_i}(z)$
    \[
    \rP\del{\sup_{z\in B(0,\varepsilon)}\norm{\pr_z Z_D} \le 4dL_M\varepsilon + 4\sqrt{d}(1+L_M\varepsilon)\sqrt{2\log 2d + 2\log N + 2\log \delta^{-1}}} \ge 1-\delta/N.
    \]
    We note that since locally $M$ is a graph of $\Phi_{G_i}$ we have $\pr_{z} \equiv \pr_{\Phi_{G_i}(z)}M$, so combining over all $i\le N$, since $M = \bigcup_i \Phi_{G_i}\del{B_d(0,\varepsilon)}$ we conclude
    \[
    \rP\del{\sup_{y\in M}\norm{\pr_{T_yM} Z_D} \le 4dL_M\varepsilon + 4\sqrt{d}(1+L_M\varepsilon)\sqrt{2\log 2d + 2\log \varepsilon^{-1} + 2\log_+\Vol M + 2\log \delta^{-1}}} 
    \\
    \ge 1-\delta.
    \]
    Taking $\varepsilon = (2d)^{-1} r_0$
    \[
    \rP\del{\sup_{y\in M}\norm{\pr_{T_yM} Z_D} \le 8\sqrt{d}\sqrt{4\log 2d + 2\log r^{-1}_0 + 2\log_+\Vol M + 2\log \delta^{-1}}} \ge 1-\delta.
    \]
    we finish the proof.
\end{proof}
The following result is a counterpart of Lemma above for $M^*$. 
\begin{lemma} 
\label{lemma:global_bound_on_projection_on_tangent_space} For all $\delta > 0$ with probability at least $1-\delta$ for all $i \le I_2$ 
    \begin{align*}
       \norm{\pr_{z_i^*} Z_D}
    \le 8dL_M\varepsilon_N + 8\sqrt{d}(1+7L_M\varepsilon_N)\sqrt{2\log 2d + 2\log n + 2\log( 1/\delta)}
    \end{align*}
  \end{lemma}  
\begin{proof}
    Note that by Lemma~\ref*{lemma:approximation_of_Phi} for $n$ large enough $\norm{\Phi^*_i}_{C^2\del{B_d(0,7\varepsilon_N)}} \le L_M + L^*(7\varepsilon)^{\beta-2}_N\le 2L_M$ for all $i\le N$. 
    Then 
    $\norm{\pr_{z_i^*} Z_D} \le \sup_{z\in B_d(0, 7 \varepsilon_N)}\norm{\pr_{z} Z_D}$
    Apply \cref{lemma:bound_on_length_of_proj_on_tangent_space} for each $\Phi^*_1, \ldots, \Phi^*_N$ with $\delta/N$ and combine the resulting inequalities using that $N \le n$. 
\end{proof}

  \begin{lemma}
       \label{lemma:BoundsOnScalarProductSmallTime}
           For all $i\in I_2$ and $z\in B_d\del{z^*_i, 4(\sigma_t/c_t)\sqrt{20}C(n)}$ with probability at least $1-n^{-2}$
           \[
           c^2_t\norm{R_i(z,z^*_i)}^2 + c_t\innerproduct{X(t)- c_t\Phi^*_i(z^*_i)}{R_i(z,z^*_i)} \lesssim \sigma_t^3 (d\log n)^2.
           \]
       \end{lemma}
   
 \begin{proof}
    By~\cref{prop:points_z_as_projections_on_tangent_space} and choice of $z$ with probability at least $1-n^{-2}$
    \[
    &\norm{X(0)-\Phi(z^*_i)} \le 2(\sigma_t/c_t)\sqrt{20}C(n) \lesssim (\sigma_t/c_t)\sqrt{d\log n},
    \\
    &\norm{z-z^*_i} \le 4(\sigma_t/c_t)\sqrt{20}C(n) \lesssim (\sigma_t/c_t)\sqrt{d\log n},
    \]
    since $C(n) \lesssim \sqrt{d\log n}$.
     We estimate the terms separately. Recall that 
     \[
     R_i(z,z^*_i) = \Phi^*_i(z^*_i) - \Phi^*_i(z) - \grad \Phi^*_i(z^*_i)(z^*_i-z),
     \]
     so by~\eqref{eq:norm_of_hz1z2} for $n$ large enough
     \£
     \label{eq:bound_on_R_i(z,z^*_i)}
     c^2_t\norm{R_i(z,z^*_i)}^2 \le \del{2L^* \norm{z-z^*_i}^2}^2 \lesssim (\sigma_t/c_t)^4 d^2 \log^2 n < \sigma_t^2.
     \£
     We move the second term
     \[
     \innerproduct{X(0)- c_t\Phi^*_i(z^*_i)}{R_i(z,z^*_i)} = c_t\innerproduct{X(0)- c_t\Phi^*_i(z^*_i)}{R_i(z,z^*_i)} + \sigma_t c_t\innerproduct{Z_D}{R_i(z,z^*_i)}.
     \]
     The first term is bounded similarly using~\eqref{eq:norm_of_hz1z2}
     \[
     c_t\abs{\innerproduct{X(0)- c_t\Phi^*_i(z^*_i)}{R_i(z,z^*_i)}} \le c_t\norm{X(0)- c_t\Phi^*_i(z^*_i)}\norm{R_i(z,z^*_i)} \lesssim c_t\del{(\sigma_t/c_t)\sqrt{d\log n}}^3. 
     \]
     While the second term by~\cref{corr:correlation_of_noise_and_tangent_space} with probability at least $1-n^{-2}$ is bounded by
     \[
     \abs{\sigma_t c_t\innerproduct{Z_D}{R_i(z,z^*_i)}} \lesssim \sigma_t c_t \del{(\sigma_t/c_t) \sqrt{d\log n}}^2 \lesssim c^{-1}_t\sigma^3_t d\log n .
     \]
     Summing the inequalities up and noting that $1 > c_t \ge 1/2$ we finish the proof.
 \end{proof}

\end{document}